  \let\newcounter\@gobble
  \let\setcounter\@gobbletwo
  \let\c@loadepth\@ne
\let\l@algorithm\l@algorithms
\renewcommand{\listofalgorithms}{\begingroup
  \tocfile{\listalgorithmname}{loa}
\endgroup}
\newcounter{thmcounter}
\newtheorem{theorem}{Theorem}
\newtheorem{lemma}{Lemma}[thmcounter]
\newcommand{\textc}[1]{\mathtt{#1}}
\newcommand{\mnew}{\textrm{new}}
\newcommand{\mmin}{\mathrm{min}}
\newcommand{\mprog}{\mathrm{prog}}
\newcommand{\mnext}{\mathrm{next}}
\newcommand{\mprev}{\mathrm{prev}}
\newcommand{\mcol}{\mathrm{col}}
\newcommand{\mcrn}{\mathrm{crn}}
\newcommand{\mstart}{\mathrm{start}}
\newcommand{\mgoal}{\mathrm{goal}}
\newcommand{\mmnr}{\mathrm{mnr}}
\newcommand{\mmjr}{\mathrm{maj}}
\newcommand{\mthird}{\mathrm{thd}}
\newcommand{\mitx}{\mathrm{itx}}
\newcommand{\medge}{\varepsilon}
\newcommand{\mbest}{b}
\newcommand{\mtrace}{d}
\newcommand{\mlinks}{\mathbb{E}}
\newcommand{\mDelta}{\mathbf{\Delta}}
\newcommand{\mreal}{\mathbb{R}}
\newcommand{\mthres}{\mathbf{\varepsilon}}
\newcommand{\mnode}{n}
\newcommand{\mnodes}{\mathbb{N}}
\newcommand{\mlink}{l}
\newcommand{\mltype}{y_\mlink}
\newcommand{\mntype}{\eta}
\newcommand{\mtdir}{\kappa}
\newcommand{\mfalse}{\textc{False}}
\newcommand{\mtrue}{\textc{True}}
\newcommand{\mback}{\textc{Back}}
\newcommand{\mfront}{\textc{Front}}
\newcommand{\mnvy}{\textc{Vy}}
\newcommand{\mnvu}{\textc{Vu}}
\newcommand{\mnsy}{\textc{Sy}}
\newcommand{\mnsu}{\textc{Su}}
\newcommand{\mnty}{\textc{Ty}}
\newcommand{\mntu}{\textc{Tu}}
\newcommand{\mney}{\textc{Ey}}
\newcommand{\mneu}{\textc{Eu}}
\newcommand{\mnun}{\textc{Un}}
\newcommand{\mntm}{\textc{Tm}}
\newcommand{\mnoc}{\textc{Oc}}
\newcommand{\mnph}{\textc{Ph}}
\newcommand{\mray}{\lambda}
\newcommand{\mrtype}{\rho}
\newcommand{\mx}{\mathbf{x}}
\newcommand{\mxr}{\mx_\mtdir}
\newcommand{\mxs}{\mx_S}
\newcommand{\mxss}{\mx_{SS}}
\newcommand{\mxt}{\mx_T}
\newcommand{\mxtt}{\mx_{TT}}
\newcommand{\mxcol}{\mx_\mcol}
\newcommand{\mxstart}{\mx_\mathrm{start}}
\newcommand{\mxgoal}{\mx_\mathrm{goal}}
\newcommand{\mv}{\mathbf{v}}
\newcommand{\mvr}{\mv_\mtdir}
\newcommand{\mvrr}{\mv_{\mtdir\mtdir}}
\newcommand{\mvs}{\mv_S}
\newcommand{\mvss}{\mv_{SS}}
\newcommand{\mvt}{\mv_T}
\newcommand{\mvtt}{\mv_{TT}}
\newcommand{\mvray}{\mv_\mathrm{ray}}
\newcommand{\mvprog}{\mv_\mprog}
\newcommand{\mvprogs}{\mv_{\mprog,S}}
\newcommand{\mvprogt}{\mv_{\mprog,T}}
\newcommand{\mvnext}{\mv_\medge}
\newcommand{\mvprev}{\mv_\medge'}
\newcommand{\mvcrn}{\mv_\mcrn}
\newcommand{\mside}{\sigma}
\newcommand{\msidetrace}{\mside_\mtrace}
\newcommand{\mqtype}{y_\mquery}
\newcommand{\mqcast}{\textc{Cast}}
\newcommand{\mqtrace}{\textc{Trace}}
\newcommand{\mdot}{\boldsymbol{.}}
\DeclareMathOperator{\fsgn}{sgn}
\DeclareMathOperator{\ffloor}{floor}
\DeclareMathOperator{\fceil}{ceil}
\DeclareMathOperator{\fatantwo}{atan2}
\newcommand{\mora}[1]{\protect\overrightarrow{#1}}
\renewcommand{\llceil}{\left\lceil\kern-3.5pt\left\lceil} 
\renewcommand{\rrceil}{\right\rceil\kern-3.5pt\right\rceil}
\renewcommand{\llfloor}{\left\lfloor\kern-3.5pt\left\lfloor} 
\renewcommand{\rrfloor}{\right\rfloor\kern-3.5pt\right\rfloor}
\newcommand{\mafloor}[1]{\llfloor #1 \rrfloor}
\newcommand{\maceil}[1]{\llceil #1 \rrceil}
\newabbreviation{stree}{$S$-tree}{source-tree}
\newabbreviation{ttree}{$T$-tree}{target-tree}
\newabbreviation{los}{LOS}{line-of-sight}
\newabbreviation{ocgrid}{oc-grid}{occupancy grid}
\newabbreviation{dda}{DDA}{Digital Differential Analyzer}
\newabbreviation{ltrace}{$L$-trace}{left trace}
\newabbreviation{rtrace}{$R$-trace}{right trace}
\newcommand{\rs}{RayScan}
\newcommand{\rsp}{RayScan+}
\newcommand{\rtwo}{R2}
\newcommand{\rtwop}{R2+}
\renewcommand{\ALG@beginalgorithmic}{\footnotesize}
\algnewcommand{\IfThenElse}[3]{
  \State \algorithmicif\ #1\ \algorithmicthen\ #2\ \algorithmicelse\ #3}
\algnewcommand{\IfThen}[2]{
  \State \algorithmicif\ #1\ \algorithmicthen\ #2}
\algnewcommand{\Break}{\textbf{break\ }}
\algnewcommand{\Continue}{\textbf{continue\ }}
\algnewcommand{\An}{\textbf{and\ }}
\algnewcommand{\Or}{\textbf{or\ }}
\algnewcommand{\Not}{\textbf{not\ }}
\algnewcommand\algorithmicforeach{\textbf{for each}}
\renewcommand*\Call[2]{\textproc{#1}(#2)} 
\def\u{0.25cm}
\def\ul{0.5cm}
\def\uss{0.1cm}
\def\um{0.15cm}
\colorlet{swatch_blue} {yellow!10!cyan!80!blue}
\colorlet{swatch_obs} {white!30!lightgray}
\colorlet{swatch_bluegray}{blue!30!gray}
\colorlet{swatch_stree} {white!10!magenta!50!red}
\colorlet{swatch_ttree} {black!60!green}
\tikzset {
    blue pt/.style n args={2}{minimum size=2mm, inner sep=0, outer sep=0, circle, fill=swatch_blue, label={[swatch_blue, #1] #2}},
    blue pt/.default={}{},
    blue circ/.style n args={2}{minimum size=2mm, inner sep=0, outer sep=0, circle, draw=swatch_blue, fill=white, thick, label={[swatch_blue, #1] #2}},
    blue circ/.default={}{},
    black pt/.style n args={2}{minimum size=2mm, inner sep=0, outer sep=0, circle, fill=black, label={[black, #1] #2}},
    black pt/.default={}{},
    cross pt/.style n args={2}{minimum size=2mm, inner sep=0, outer sep=0, cross out, draw, label={[black, #1] #2}},
    cross pt/.default={}{},
    adhoc pt/.style n args={2}{minimum size=3mm, inner sep=0, circle, draw=violet, ultra thick, label={[violet, #1] #2}},
    adhoc pt/.default={}{},
    trace/.style n args={0}{line width=1mm, orange, {Triangle Cap[]}-{Triangle Cap[]}},
    trace2/.style n args={0}{line width=0.8mm, swatch_blue, -{Triangle Cap[]}},
    prune/.style n args={0}{cross out, draw, magenta!80!black, thick, minimum size=3mm, inner sep=0mm},
    bgframe/.style n args={0}{background rectangle/.style={draw=gray, dotted}, framed, tight background},
    vy pt/.style n args={3}{minimum size=2mm, inner sep=0, outer sep=0, circle, draw=black!60!#3, fill=#3, label={[#3, #1] #2}},
    vy pt/.default={}{}{swatch_blue},
    vu pt/.style n args={3}{minimum size=2mm, inner sep=0, outer sep=0, circle, fill=white, draw=#3, thick, label={[#3, #1] #2}},
    vu pt/.default={}{}{swatch_blue},
    ey pt/.style n args={3}{minimum size=2.5mm, inner sep=0, outer sep=0, diamond, draw=black!60!#3, fill=#3, label={[#3, #1] #2}},
    ey pt/.default={}{}{swatch_blue},
    eu pt/.style n args={3}{minimum size=2.5mm, inner sep=0, outer sep=0, diamond, draw=#3, fill=white, thick, label={[#3, #1] #2}},
    eu pt/.default={}{}{swatch_blue},
    tm pt/.style n args={3}{minimum size=2mm, forbidden sign, inner sep=0, outer sep=0, draw=#3, fill=white, thick, label={[#3, #1] #2}},
    tm pt/.default={}{}{swatch_ttree},
    trtm pt/.style n args={3}{tm pt={#1}{#2}{#3}},
    trtm pt/.default={}{}{},
    oc pt/.style n args={3}{minimum size=1.8mm, inner sep=0, rectangle, outer sep=0, draw=#3, fill=white, thick, label={[#3, #1] #2}},
    oc pt/.default={}{}{swatch_ttree},
    un cross/.style={path picture={ 
      \draw[#1] (path picture bounding box.south east) -- (path picture bounding box.north west) (path picture bounding box.south west) -- (path picture bounding box.north east);
    }},
    un pt/.style n args={3}{minimum size=2mm, inner sep=0, outer sep=0, circle, un cross={#3}, draw=#3, fill=white, thick, label={[#3, #1] #2}},
    un pt/.default={}{}{swatch_ttree},
    any pt/.style n args={3}{circle, inner sep=0, outer sep=0, minimum size=3mm, draw=#3, dash=on 1pt off 1pt phase 0, label={[#3, #1] #2}},
    any pt/.default={}{}{},
    svy pt/.style n args={2}{vy pt={#1}{#2}{swatch_stree}},
    svy pt/.default={}{},
    svu pt/.style n args={2}{vu pt={#1}{#2}{swatch_stree}},
    svu pt/.default={}{},
    sey pt/.style n args={2}{ey pt={#1}{#2}{swatch_stree}},
    sey pt/.default={}{},
    seu pt/.style n args={2}{eu pt={#1}{#2}{swatch_stree}},
    seu pt/.default={}{},
    tvy pt/.style n args={2}{vy pt={#1}{#2}{swatch_ttree}},
    tvy pt/.default={}{},
    tvu pt/.style n args={2}{vu pt={#1}{#2}{swatch_ttree}},
    tvu pt/.default={}{},
    tey pt/.style n args={2}{ey pt={#1}{#2}{swatch_ttree}},
    tey pt/.default={}{},
    ttm pt/.style n args={2}{tm pt={#1}{#2}{swatch_ttree}},
    ttm pt/.default={}{},
    toc pt/.style n args={2}{oc pt={#1}{#2}{swatch_ttree}},
    toc pt/.default={}{},
    tun pt/.style n args={2}{un pt={#1}{#2}{swatch_ttree}},
    tun pt/.default={}{},
    separate/.style n args={2}{rounded rectangle, draw=black, fill=white, minimum size=3.5mm, inner xsep=3.5mm, inner ysep=0mm, outer sep=1pt, label={[#1] #2}}, 
    separate/.default={}{},
    link/.style n args={1}{draw=#1, -{Latex[#1, length=2mm, width=1.5mm]}, dash=on 2pt off 1pt phase 0pt, thick}, 
    link/.default={},
    qlink/.style n args={1}{link={#1}, ultra thick, solid},
    qlink/.default={},
    xlink/.style n args={1}{link={#1}, {}-{}},
    xlink/.default={},
    sxlink/.style={xlink={swatch_stree}},
    txlink/.style={xlink={swatch_ttree}},
    slink/.style={link={swatch_stree}},
    tlink/.style={link={swatch_ttree}},
    sqlink/.style={qlink={swatch_stree}},
    tqlink/.style={qlink={swatch_ttree}},
    rayl/.style={draw=black, solid, -{Triangle[angle'=90, open, left] Triangle[angle'=90, open, left]}},
    rayr/.style={draw=black, solid, -{Triangle[angle'=90, open, right] Triangle[angle'=90, open, right]}},
    rayprog/.style={draw=black, solid, -{Stealth[angle'=60] Stealth[angle'=60]}},
    merge/.style={double distance=1pt, thick}, 
    obs/.style={swatch_obs, line cap=rect, line join=miter, line width=\ul},
}
\tikzset{
    pics/merge grp/.style n args={6}{ code={ 
        \node [separate={#1}{#2}] at (0, 0) {};
        \draw [merge] (-\um, 0) -- (\um, 0);
        \node (#3) [#4, xshift=-\um] at (0, 0) {};
        \node (#5) [#6, xshift=\um] at (0, 0) {};
    }}
}
\tikzset{
    pics/trace grp/.style n args={4}{ code={ 
        \node [separate={#1}{#2}, inner xsep=3mm] at (0,0) {};
        \node (#3) [trtm pt, xshift=-\uss] at (0,0) {};
        \node (#4) [trtm pt, xshift=\uss] at (0,0) {};
    }}
}
\begin{document}
\pagenumbering{gobble}
\begin{titlepage}
\begin{center}
    \vspace*{1cm}

    \singlespacing
    \Large
    \textbf{\textsc{Rapid Vector-based Any-angle Path Planning with Non-convex Obstacles}}\\
    
    \large
    \vspace*{3cm}   

    \textbf{\textsc{Lai Yan Kai}}\\
    \textit{(B.Eng, Electrical Engineering, 2019)}
    \doublespacing

    \vspace*{1cm}   
    \textbf{\textsc{A thesis submitted for the degree of\\
    Doctor of Philosophy}}\\
    \textbf{\textsc{Department of Electrical and Computer Engineering\\
    National University of Singapore}}
    
    \vspace*{3cm}   
    2024

    \singlespacing
    Supervisor:\\
    Associate Professor Prahlad Vadakkepat\\
    \vspace* {1cm}
    Examiners: \\
    Assistant Professor Zhao Lin \\
    Associate Professor Chew Chee Meng

\end{center}
\end{titlepage}

\singlespacing
\normalsize

\chapter*{Abstract}
Vector-based algorithms belong to a nascent class of optimal any-angle path planners that prioritizes searches along the straight line between two queried points, and moving around any obstacles that lie along the straight line.
By searching obstacle contours, much free-space can be bypassed, and vector-based algorithms can find paths faster than conventional planners that search the free-space, like A* and Theta*.
Current vector-based planners are unable to navigate non-convex obstacles efficiently. Planners such as \rsp{} can conduct many undesirable line-of-sight checks from jagged contours, and planners that delay line-of-sight checks can severely underestimate path costs and branch exponentially.
The thesis aims to resolve the problems by introducing novel methods and concepts.
By using an angular counter, the target-pledge method allows searches to leave the contour of obstacles, and the source-pledge method places turning points at the perimeter of obstacles’ convex hulls.
The source progression method improves upon the source-pledge method by monitoring the maximum angular deviation and avoiding angular measurements.
The target progression method extends the source progression method for nodes leading to the goal point, and places phantom points, which are imaginary turning points, at non-convex corners.
The progression methods are combined to form the best hull, which is the smallest, inferable convex hull of a searched obstacle. The best hull enables path cost estimates to increase monotonically even as line-of-sight checks are delayed.
The progression methods are adapted to the optimal vector-based planners \rtwo{} and \rtwop{} that delay line-of-sight checks. The algorithms further rely on the sector and overlap rules, which discard undesirable searches based on geometrical reasoning. \rtwop{} improves upon \rtwo{} by simplifying and discarding more searches. \rtwo{} and \rtwop{} are much faster than state-of-the-art when paths are expected to have few turning points, regardless of path length.
A novel versatile multi-dimensional ray tracer is described, along with novel ideas for future work, such as a three-dimensional angular sector.

\chapter*{Declaration}
I hereby declare that this thesis is my original work and it has been
written by me in its entirety. I have duly acknowledged all the sources
of information which have been used in the thesis.
This thesis has also not been submitted for any degree in any university
previously.
\vspace*{\fill}

\noindent\underline{\hspace{5cm}} \\
Lai Yan Kai \\

\chapter*{Dedication}
\textit{To my grandparents},\\
\textit{parents}, \\
\textit{Mr. Pang, Prof. Prahlad, Prof. Lee}, and \textit{Prof. Xiang}.

\chapter*{Acknowledgements}
I want to thank my grandparents and parents for their unconditional love and care. 
Without them, I would not be able to last more than a few months in my studies, endeavors, and the challenging doctoral program.

Mr. Pang Hai Chet, for providing the environment to make mistakes to succeed, for propelling me to greater heights in my endeavors in robotics, and for guiding me to become a more caring technical leader.

Prof. Lee Tong Heng, for guiding in my senior years in undergraduate, sparing no effort to defend my application to the doctoral program, and in trusting in my potential.

Prof. Prahlad Vadakkepat, for supervising my work, 
providing me with opportunities to cultivate my potential, 
mentoring me in my character, and for fixing some of my worst mistakes with me.

Prof. Xiang Cheng, who along with Prof. Prahlad, for guiding me in paper writing, and providing me with opportunities.

The people listed here are key enablers to my works.
It is a privilege to have met these people, and in gratitude, I pledge to give back to the wider community with my work.

\newpage
\tableofcontents

\clearpage
\pagenumbering{roman}
\setcounter{page}{1}

\newpage
\listoffigures
\newpage
\listoftables
\newpage
\listofalgorithms

\doublespacing
    \setcounter{secnumdepth}{4}
    \fancyhead[RE]{\chaptername~\thechapter}
    \chapter{Introduction}
\pagenumbering{arabic}
\setcounter{page}{1}

Path planning is a mature field, and a wide variety of solutions exists to find paths in maps with any number of dimensions.
Two-dimensional planners, such as Anya \cite{bib:anya} and \rsp{} \cite{bib:rayscanp} are able to return the shortest Euclidean paths, unconstrained by the discrete map representations that the algorithms rely on.
Multi-dimensional planners, such as RRT* \cite{bib:rrtstar}, are able to return feasible, sub-optimal paths quickly in high dimensions.
While the algorithms can find paths in reasonable time, the algorithms tend to rely on searching the free space to yield solutions.
As paths turn around obstacles, and the number of obstacle edges and corners tend to be much smaller than the amount of free-space in maps, searches can be prioritised to search obstacle contours instead of free space to accelerate searches.
By prioritizing searches along contours, an algorithm that searches the contours can potentially be much faster than existing methods.

Two classes of algorithms prioritize searches along contours to find paths.
The algorithms will try to move toward the destination in a straight line, and turn around any obstructing obstacles.
One class of algorithms are bug algorithms \cite{bib:bug, bib:tangentbug}. 
Bug algorithms are early local planners that guide robots around nearby obstacles, and are unable to find optimal paths around non-convex obstacles.
The other class of algorithms are \textit{vector-based algorithms} that attempt to find paths. Vector-based algorithms are any-angle path planners that return Euclidean shortest paths unconstrained to the geometry of the underlying grid (any-angle), and relies on contour searching to find paths. 
Vector-based algorithms are recent, with the earliest, Ray Path Finder \cite{bib:rpf}, published in 2017.

As of writing, only four vector-based algorithms, not introduced by this thesis, exists.
The algorithms are Ray Path Finder \cite{bib:rpf}, \rs{} \cite{bib:rayscan}, \rsp{} \cite{bib:rayscanp}, and Dual Pathfinding Search \cite{bib:dps}.
Ray Path Finder delays \gls{los} checks to prioritize searches along the straight line between two queried points (the start and goal points), has exponential time complexity in the worst case, and may be interminable. 
Ray Path Finder is fast on maps with convex obstacles, and can be slow if there are many obstacles.
\rs{} and \rsp{} find paths by recursively conducting \gls{los} checks whenever a potential turning point is found.
\rs{} and \rsp{} are prone to conducting undesirable \gls{los} checks along jagged contours, and has polynomial time complexity.
The algorithms are fast in dense maps with many obstacles, and may be slow in large maps with much free space and obstacles with jagged contours.
Dual Pathfinding Search attempts \gls{los} checks in two directions, one from each  queried point, and delays \gls{los} checks. \gls{los} checks can be conducted based on different edge selection policies, depending on the map types.
The algorithm cannot be implemented for maps with non-convex obstacles.
The vector-based algorithms outperform state-of-the-art free space planners such as Anya \cite{bib:anya} and Polyanya \cite{bib:polyanya}, and are promising research directions that aim to improve the speed of path planning.

The aforementioned vector-based algorithms can only find two-dimensional paths.
While the research focus can be shifted to extending vector-based algorithms to multiple dimensions, there are still areas of improvement for the two-dimensional case.
Delaying \gls{los} checks can help to eliminate unnecessary checks in \rs{} and \rsp{} and bypass contours that are unlikely to yield solutions.
However, delaying \gls{los} checks would mean that searches cannot be immediately discarded, and searches would multiply exponentially.
In addition, to ensure admissibility before \gls{los} checks can be conducted, a path's cost has to be estimated by assuming \gls{los} between nodes, even if the path passes through an obstacle. 
A combination of node pruning and admissible cost estimation can cause a path to be severely underestimated, as is the case for Ray Path Finder \cite{bib:me}.

To resolve the challenges of vector-based algorithms in two-dimensions, several novel methods and algorithms are introduced in this thesis.
Novel methods to navigate non-convex obstacles for vector-based planners that delay \gls{los} checks are introduced, and strategies to hasten computation are described.
The methods ensure that searches can navigate non-convex contours, while ensuring that path costs can increase monotonically as the searches progress.
Two novel algorithms, \rtwo{} and \rtwop{}, are introduced that leverages the novel methods and incorporates several concepts from other vector-based algorithms.
\rtwo{} and \rtwop{} are vector-based planners that delay \gls{los} checks, eliminating undesirable checks within the convex hulls of obstacles. 
The algorithms ameliorate the problems of interminability and severely underestimated costs, ensuring that non-convex obstacles can be navigated and optimal paths can be found.

\section{Synopsis of the Thesis}
In Chapter \ref{chap:litrev}, a literature review of path planning concepts and path planning algorithms are presented.
In Chapter \ref{chap:line}, a novel and versatile multi-dimensional ray tracer, which can be extended to any number of dimensions is introduced.
In addition, concepts involving collisions in the occupancy grid are introduced.
In Chapter \ref{chap:ncv}, several novel methods to navigate non-convex obstacles for vector-based algorithms that delay \gls{los} checks are described, and proven.
\textit{Phantom points}, which are imaginary future turning points, and the best-hull, which is the smallest known convex hull of an obstacle, are introduced in the chapter.
In Chapter \ref{chap:r2}, the algorithm \rtwo{} is introduced, which combines the methods in Chapters \ref{chap:line} and \ref{chap:ncv}.
Concepts from other algorithms are combined into \rtwo{}, and the proofs and results for \rtwo{} are described.
In Chapter \ref{chap:r2p}, the algorithm \rtwop{} is introduced. \rtwop{} is an evolved version of \rtwo{}, and includes proofs and results for the algorithm.
Chapter \ref{chap:conc} describes future works, and contains the conclusion.

Appendix \ref{chap:suppterm} provides brief descriptions on commonly used terms by the thesis.
Appendix \ref{chap:suppr2} and \ref{chap:suppr2p} describe the \rtwo{} and \rtwop{} algorithms in detail respectively.

\section{Contributions of the Thesis}
The thesis contributes to the field of vector-based, any-angle path planning.
As delaying line-of-sight checks in path planning can accelerate searches, the thesis introduces novel strategies to delay line-of-sight checks while searching in non-convex obstacles.
As of writing, the only vector-based planner to incorporate delayed line-of-sight checks is Dual Path Finding Search \cite{bib:dps}, but the algorithm can only work on maps with convex obstacles and limited non-convex obstacles. \rs{} and \rsp{} can work with non-convex obstacles, but are susceptible to undesirable searches along jagged contours.

The novel strategies include the \textit{phantom points} and the \textit{best-hull}, and the \textit{source-pledge} algorithm. 
The best-hull supersedes the pledge algorithms due to simplicity in implementation.
Phantom points are imaginary turning points placed on non-convex corners to obtain admissible convex hulls (\textit{best-hull}) while searching, leading to monotonically increasing cost estimates in algorithms with delayed line-of-sight checks.
The \textit{target-pledge algorithm} is described in \cite{bib:rpf} and the thesis provides proofs for the algorithm. 
The source-pledge algorithm is a novel concept that prevents turning points from being placed in convex hulls of obstacles.

Building upon the strategies, the algorithms \textit{\rtwo{}} and \textit{\rtwop{}} are introduced. \rtwo{} and \rtwop{} are the first in the field to delay line-of-sight checks and be able to return the shortest Euclidean paths. 
The algorithms rely on several novel concepts such as the \textit{progression rule}, \textit{sector rules}, and \textit{overlap rule} to discard repeated searches, especially as delaying line-of-sight checks can result in exponential search times.

By combining the novel concepts in \rtwo{} and \rtwop{} with the best-hull, \rtwo{} and \rtwop{} are superior to other any-angle algorithms if the shortest path solution has few turning points.
While having exponential time-complexity in the worst case with respect to the number of collided line-of-sight checks, the best case is linear in time complexity if the shortest path has at least one turning point.
If the shortest path is a straight line, there are no collisions, and the path is rapidly returned.

Other contributions of the thesis include a novel, versatile ray tracer for multiple dimensional occupancy grids, and a description of three-dimensional angular sector for extension to three-dimensional vector-based path planning. 
The ray tracer is based on symmetric ray tracers that eliminates the dependence on a driving axis while ray tracing and returns all intermediate cells unlike the Bresenham line algorithm \cite{bib:bresenham}. 
The ray tracer can additionally process lines that start and end at non-integer coordinates, and accounts for ambiguous situations when the line travels on or passes through cell boundaries.

\section{Applications of the Thesis}
The novel strategies for navigating non-convex obstacles with delayed line-of-sight checks can be applied to mobile robotics, especially with point-to-point global planning where a path is to be found between the robot and a destination that is on the other side of the map. 

As vector-based path planners that delay line-of-sight checks, the novel algorithms \rtwo{} and \rtwop{} introduced in the thesis significantly improves the search time for path planning, particularly if the optimal path has few turning points and the path is long.
In practical use cases, the free operating space is large and sparse to account for fine robot motion and sufficient obstacle representations, as evident in occupancy grid maps of indoor locations such as offices and shopping malls, or outdoor locations such as farms or urban areas.
As the free operating space is large, the number of turning points on a shortest path solution is significantly smaller than the amount of free space. 
In discrete maps such as occupancy grids, the amount of free space corresponds to the number of free cells, and in randomly sampled space, the amount of free space corresponds to the number of random samples. 
Commonly used planners in the literature, such as RRT* \cite{bib:rrtstar}, A* \cite{bib:astar}, and Theta* \cite{bib:thetastar}, search the free space extensively to find a solution, even if there is line-of-sight between the robot and the destination, or if the path turns around a few obstacles.
In such cases, \rtwo{} and \rtwop{} will outperform the planners.

In view of the recent advancements of artificial intelligence methods in path planning and motion planning, a non-expert may assume that conventional global planning can be superseded by methods such as deep learning and deep reinforcement learning.
Global planning is necessary especially in environments with \textit{non-convex obstacles}, even in works involving the methods.
Deep learning methods \cite{bib:dl1, bib:next} and deep reinforcement methods \cite{bib:drl5} that mimic global planning require learning over a predefined map, requiring re-learning every time the map has changed.
The time to re-learn is significantly longer than the time to replan the path on a new map by conventional planners such as A* \cite{bib:astar}, Theta* \cite{bib:thetastar}, and vector-based algorithms introduced in the thesis.
Moreover, deep reinforcement learning methods generally consider the local window for obstacle avoidance \cite{bib:drl3} and is incapable of global planning \cite{bib:drl1, bib:drl4}.
A widely cited work \cite{bib:drl7} in deep reinforcement learning claims that motion planning in an unknown environment can be done with reinforcement learning and without a map, as the model considers the sensory inputs directly to generate motion. 
Without a map, no global planning is done, and a non-expert may arrive at the conclusion that deep reinforcement learning can completely replace global planning.
The map-free claim is misleading as the work fails to consider instances where the environment contains highly non-convex obstacles.
By utilizing a reward function which rewards actions that lead the robot closer to the goal, the robot can get stuck in the convex hull of a non-convex obstacle, such as a G-shaped wall, by repeatedly following the walls in its local surroundings \cite{bib:rpf}.
As the model does not recall past obstacle information (map-less), the robot would not know if it is located within a non-convex obstacle, and would be unable to traverse out of it.
To the best understanding of the thesis' author, no known reinforcement learning methods exist that allow robots to navigate non-convex obstacles. As such, the concepts developed in the thesis may be able to aid the development of such methods.

    \chapter{Literature Review} 
\label{chap:litrev}
A broad overview of path planning will be presented in this chapter.
In section The described literature includes concepts in path planning, map representations, and different types of path planners such as grid-based algorithms, topological algorithms, sampling-based algorithms, artificial potential fields, deep reinforcement learning algorithms, and a novel class of vector-based algorithms.

\section{Path Planning Concepts}
\subsection{Optimality and Completeness}
Path planners must be \textit{complete} -- a path is returned if it exists, otherwise none is returned.
A \textit{resolution complete} planner finds a path if it exists when the formulation of the world is fine enough.
For an occupancy grid, this means the cell size is small enough \cite{bib:rescomprrt},
or for a topological planner, sufficient nodes and edges are generated.
a \textit{probabilistically complete} planner finds a path given enough enough samples
\cite{bib:rrtstar}. This applies only to sampling based methods.
If the planner is not complete, the algorithm may not terminate in finite time \cite{bib:lavalle,bib:karman&frazzoli}.

An \textit{optimal} planner finds a path that is shortest given the representation. 
An \textit{asymptotically optimal} planner finds an optimal path given infinite samples -- 
i.e. the probability that an optimal path is found converges to unity with infinite samples \cite{bib:rrtstar,bib:bitstar}.
This concept is applicable only to \textit{anytime} algorithms which are also sampling based methods \cite{bib:lavalle,bib:karman&frazzoli}.

\subsection{World Representations}
Path planning use artificial world representations to find paths, which influence their search strategies heavily \cite{bib:lavalle}.
Grid-based planners use \textbf{occupancy grids} to find paths. These grids subdivide the world into discrete cells, which are usually
hypercubic (i.e. square for 2D, cubes for 3D). Each cell has a cost of traversal, depending on the difficulty of accessing the real region represented,
but is typically \textbf{free} (can be traversed) or \textbf{occupied} (obstructed and cannot be traversed). 
Grids with only these two costs are called \textbf{binary occupancy grids} \cite{bib:probrob}.
Hierarchical planners may rely on a grid with multiple layers of resolutions to find paths \cite{bib:hpastar}.
Grids are very easy to implement and are used widely in low-dimensional settings \cite{bib:lavalle,bib:karman&frazzoli}.

While grid planners subdivide the world evenly, \textbf{topological} planners represent the world as sparser graphs.
A common approach is to use polygons to represent obstacles for the 2D case \cite{bib:vg,bib:lavalle}.
Since paths must turn around convex corners of these polygons, they form nodes on the graph.
Pairs of nodes that can reach each other unobstructed have \textbf{line-of-sight} (LOS), and an edge can be drawn between them on the graph.
However, path planning becomes very complicated \cite{bib:vg3d0} and even intractable for higher dimensions. 
Another common approach draws nodes and edges that are far away from the obstacles \cite{bib:voronoi0,bib:lavalle}. See Section \ref{sec:litrevtopo}.

In high dimensional settings, it is computationally intractable to use occupancy grids and inefficient to 
get topological representations.
Thus, collision detection modules are used to detect collisions between the agent and obstacles \cite{bib:lavalle,bib:karman&frazzoli}.
These are commonly known as \textbf{continuous} approaches.

\subsection{Curse of Dimensionality}
The curse of dimensionality refers to the quickly intractable problem of grid-based approaches in high dimensional path planning \cite{bib:lavalle,bib:karman&frazzoli,bib:petrovic}.
Specifically, any hypercubic occupancy grid has at most $3^D - 1$ adjacent cells, 
where $D$ is the number of dimensions.
The number of adjacent cells in a hypercubic occupancy grid can be proven inductively. For the two dimensional case ($D=2$), a cell with coordinate $\mathbf{x} = [x_0,x_1]^\top$ can have an adjacent cell that is at $\mathbf{x}_a = [x_0 + \Delta_0, x_1 + \Delta_1]^\top$ where $\Delta_d \in \{-1,0,1\}$ for $d \in \{0, 1\}$, and $\mathbf{x} \ne \mathbf{x}_a$. 
As there are three sets of values for each axis, and the combinations cannot result in the current cell, the total number of adjacent cells has to be $3^D - 1$.
For the two-dimensional case, there is at most 8 neighbours, and 26 for the three-dimensional case. 
The number of adjacent cells blows up quickly for a small change in $D$.

Take for example, a grid implementing the configuration space of a typical 6 degree-of-freedom manipulator. Each cell will have 728 neighbouring cells. 
The cell size has to be small to accommodate smoother, realistic paths. 
Suppose the grid is subdivided into a coarse resolution of 1 degree,
to a total of 360 degrees for each degree-of-freedom. 
$360^6\approx2^{15}$ cells are needed for the entire configuration space,
which easily exceeds the memory capabilities of any current computing device.

In a topological 2D space with polygonal obstacles, the exact, optimal planning problem 
was found to be at least PSPACE-complete \cite{bib:2dpolyphard}.
In a similar 3D space, the same problem is at least NP-hard \cite{bib:3dpolynphard,bib:3dpolynphard2}.
While the lower bounds of space and time complexities are discouragingly high,
it is possible to circumvent these bounds by designing algorithms in new ways that are probabilistically complete and resolution complete \cite{bib:lavalle}. 

Instead of relying on occupancy grids, high dimensional planners often use collision modules 
to find obstacles on demand. \cite{bib:lavalle,bib:karman&frazzoli}.

\section{Types of Path Planners}
\subsection{Grid-based Planners} \label{sec:litrev:grid}
Occupancy grids are a popular choice of world representation that discretizes the world into grids \cite{bib:probrob}.
Each cell on the grid can be implemented with a cost to indicate movement penalties into them or between adjacent cells.
Popular algorithms like \textit{A*} \cite{bib:astar} and \textit{Dijkstra} \cite{bib:dijkstra}
find optimal paths by considering costs along the grids.
Earlier methods like \textit{Breadth First Search} \cite{bib:bfs}, and \textit{Depth First Search} \cite{bib:dfs} do not use these costs.

The grid's resolution is a balance between computational resources and path quality -- 
while smaller sizes cell sizes may result in smoother paths, both time and memory requirements increase.
Some algorithms use quad-tree implementations to reduce the cell size near obstacle boundaries 
and increase them around sparse, equal-cost regions \cite{bib:multires0,bib:multires1} 
to speed up searches while improving path quality.
The grid may also be broken into multiple hierarchies of different resolutions to speed up searches, 
albeit at the cost of optimality. Examples are \textit{HPA*} \cite{bib:hpastar} and \textit{Block A*} \cite{bib:blockastar}.

Early algorithms using occupancy grids are constrained by the direction of adjacent cells as paths are found by incremental searches along adjacent cells.
For $D$ dimensions, every cell has $3^D-1$ adjacent cells, causing a path to be constrained to at most $3^D-1$ directions. 
Post processing techniques are usually applied to smooth paths and form practical trajectories. 
Even with post-processing, the resulting path is not likely to be optimal when measured with the Euclidean metric, and post processing is an extra step that slows path acquisition \cite{bib:anya,bib:fielddstar}.
\textit{Field D*} is an early attempt to overcome the constrained angular problem by interpolating costs and allowing paths to traverse over grid vertices and grid cells \cite{bib:fielddstar}. 
Field D* was used for navigation for the Mars rovers Spirit and Opportunity \cite{bib:fielddstarmars}.

In robotics, path planning regularly deals with regions that are have two states, accessible or obstructed.
A \textbf{free cell} and an \textbf{occupied cell} in a binary occupancy grid  corresponds respectively to a traversable area and a non-traverable area in the mapped environment.
For mobile ground robots, a small area of ground that is free of obstructions can be represented by a free cell;
or for robotic manipulators, reachable regions in configuration-space that are free of singularities.
A realistically optimal path in a binary occupancy grid can wrap around some obstacles, 
and convex corners on the obstacles form turning points of a path \cite{bib:anya}. 
In a binary cost grid, an optimal path would have straight path segments instead of a curved segments like a multiple-cost grid.
Such a path is an \textbf{any-angle} path, with straight path segments that can point in any direction, and turning points located at convex corners that are at grid vertices.

An \textbf{any-angle path planner} finds the shortest any-angle paths on binary occupancy grids without post-processing. 
These planners, like Field D*, finds paths along edges and vertices instead of cell centres. 
Early and well-known examples of any-angle path planners are \textit{Theta*} \cite{bib:thetastar} and \textit{Lazy-Theta*} \cite{bib:lazythetastar}, and can be easily extended to binary occupancy grids.
However, the algorithms do not always find optimal paths \cite{bib:thetastar,bib:lazythetastar}.
Anya finds optimal paths by considering the underlying computational structure of the map by using \textit{interval} and \textit{cone} nodes, and is formulated only for two dimensions.

The curse of dimensionality is a well-known problem for occupancy grids -- as the number of dimensions increase,
the number of cells increase exponentially. As such, grid-based planners are often discarded in higher dimensional situations
in favor of continuous world representations that uses collision-detection modules (See Sec. \ref{sec:litrev:continuous}).

\subsection{Topological Planners} 
\label{sec:litrevtopo}
Topological planners simplify the world representation to graphs where each node represents a path to take or points to turn.

From a binary-cost, simple-polygonal representation of the world, a \textit{visibility graphs} is a graph of convex corners with \gls{los} \cite{bib:vg, bib:ohleong}.
A node in a visibility graph is a convex corner, and two nodes are connected if their corners have \gls{los}.
An algorithm like A* and Dijkstra is then used over the graph to find the shortest path between two points.

Unlike two-dimensional visibility graphs, three-dimensional visibility graphs are nodes along one-dimensional edges which may be partially occluded from other edges. 
As there can be uncountably infinite number of points along an edge, its is difficult to design a path planner that can account for all positions.
While it may be possible to find the shortest paths with a convex optimizer, the problem becomes intractable in environments with non-convex obstacles \cite{bib:vg3d1}.
as such, very few works on 3D visibility graph currently exists.
A three-dimensional visibility graph may be implemented approximately as cross sectional two-dimensional planes \cite{bib:vg3d1} or subdivided points along one-dimensional edges \cite{bib:vg3d2}.

As of writing, no works on higher dimensional visibility graphs exist. 
For a 2D VG, the exact path planning problem is expected to be at least polynomial-space hard \cite{bib:2dpolyphard}, 
while for a 3D VG, it is NP-hard \cite{bib:vg3d0,bib:3dpolynphard,bib:3dpolynphard2}.

\textit{Sub-goal graphs} are hierarchical algorithms that adapt the visibility graphs to two-dimensional binary occupancy grids \cite{bib:uras,bib:subgoal}. It first pre-processes
the map to find the nodes on a visibility graph counterpart, called \textit{subgoals}. When a query between two points occur, both points are connected to subgoals with \gls{los}, and a \textit{high-level} graph search begins along the subgoals. 
Next, the low level search finds the shortest paths between the identified subgoals using grid planners, concatenating the paths together to return a solution. 
As a hierarchical algorithm, the path may not be any-angle optimal \cite{bib:uras}.



\subsection{Sampling-based Planners} \label{sec:litrev:continuous}
Due to the large number of adjacent cells in high-dimensional occupancy grids, standard planners like A* and Dijkstra becomes intractable slow.
Any-angle algorithms that exploit the geometry of high dimensional spaces do not exist,
since it is inefficient to calculate the shapes of all obstacles.
Rather than expanding adjacent cells, sampling based algorithms find paths by sampling the free space and expanding a search tree towards the sampled points.
Sampling based algorithms rely on on-demand collision detection modules to efficiently locate obstructions \cite{bib:lavalle,bib:karman&frazzoli,bib:rrt}. 
The algorithms find approximate solutions quickly, 
and are \textbf{any-time} -- a path is first rapidly found, and becomes more optimal with more samples and iterations \cite{bib:rrtconnect,bib:karman&frazzoli,bib:rrtstar}.
Sampling based algorithms are \textbf{probabilistically complete} and \textbf{asymptotically optimal}, meaning that a path will be found and the path will be optimal by the time an infinite number of samples are considered \cite{bib:karman&frazzoli,bib:bitstar}.

\textit{Probabilistic Roadmaps} (PRM) \cite{bib:prm}
find paths by first sampling the free space for points, and subsequently attempts to connect nearby points that have \gls{los}. The resulting graph is called a  \textbf{roadmap}. 
A graph planner like A* and Dijkstra is then run on the roadmap to find a path quickly.
\textit{PRM*} improves upon PRM by scaling the radius to identify points with respect to the number of already connected points. 
The scaling prevents clustering of points in local regions and increases the number of
connections between distant points \cite{bib:karman&frazzoli}. 

\textit{Rapidly-Expanding Random Trees} (RRT) find paths by sampling points in free space, and growing the tree towards the sampled point incrementally \cite{bib:rrt}.
\textit{RRT*} evolves RRT by considering the cost of the nodes, and reconnecting new nodes on the tree to find straighter paths\cite{bib:rrtstar}.

\textit{Batch Informed Trees} (BIT*) is an algorithm that introduces heuristic costs  used in A*, the cost-to-come and cost-to-go, to random sampling.
The free space is first sampled to form a batch of sampled points, which includes the start and goal points.
Points that have the least heuristic costs are prioritized for connection and \gls{los} checks.
As an any-time algorithm, the algorithms stops once a path is found, or continues to find a smaller cost path.
When the algorithm continues, 
a new, denser batch is resampled, and points that will result in a longer path than the prior path will not be added to the newer batch.
The process repeats, and the path that is found will be shorter or the same length as the prior path.
Tests with the algorithm show that BIT* is faster and produces shorter paths than the aforementioned sampling methods \cite{bib:bitstar}.

\subsection{Artificial Potential Fields}
Artificial Potential Fields (APF) are local, reactive planners created for real-time collision avoidance \cite{bib:apf}.
The goal point forms a basin of attraction while obstacles generates repelling potential fields.
The robot then travels along minimum potential valleys that are free of collisions.
This idea is further extended into optimal path planning by incorporating heuristic costs \cite{bib:apfplan}
, gradient descent \cite{bib:apfplan1} or others. 
However, APF methods generally suffer from local minima issues that affects the optimality and completeness of algorithms,
and limits practical use in high-dimensional settings \cite{bib:karman&frazzoli}.

\subsection{Deep Reinforcement Learning}
Path planning algorithms hinging on learning methods are popular research topics
at the time of writing. 

\textit{Deep Reinforcement learning} (DRL) are the most popular algorithms and they focus primarily
on the overall integration of raw environmental input to the found path instead of replacing global planners.
They are generally local path planning strategies to avoid collisions, 
satisfy constraints, or adapt to dynamic situations
\cite{bib:drl0,bib:drl1,bib:drl2,bib:drl3,bib:drl4}. 
These approaches still require the aforementioned algorithms to plot global paths.

Some approaches focus on replacing path planning altogether.
\cite{bib:oneshot} uses a trained \textit{convolutional neural network} to find paths in small 2D and 3D maps.
\cite{bib:drl5} finds preliminary results on global path planning using DRL.
\textit{NEXT} uses DRL and remembers past expansions to find paths \cite{bib:next}.
Recognising the slow training of DRL in high dimensional spaces, 
\cite{bib:drl6} alleviates this by using a \textit{soft-actor critic} to explore high-dimensional spaces better 
and \textit{hindsight experience replay} to avoid sparse rewards in these spaces \cite{bib:drl6}. 

While highly adaptable to different input and problems, such methods generally suffer from the need to train
when different scenarios are shown. In addition, 
problem sets used in the high dimensional settings tend to be very sparsely populated with obstacles,
and few non-convex obstacles exist.

\subsection{Vector-based Algorithms} \label{sec:litrevvector}
Vector-based algorithms are \textit{any-angle path planners} (see Sec. \ref{sec:litrev:grid}) that find the shortest any-angle paths using vector-based searches. 
A vector-based algorithm tries to move toward a desired point \textbf{cast} as much as possible, and around any obstruction by moving along the obstacle's contour \textbf{trace}.
Such search strategies are well-known and used in reactive local planners such as Bug1 and Bug2 \cite{bib:bug}, and Tangent Bug \cite{bib:tangentbug}.
As local planners, Bug1, Bug2, and Tangent Bug do not find optimal paths.
By incorporating heuristic costs to vector-based strategies, vector-based algorithms can find paths significantly more quickly than non-vector-based approaches, especially if the shortest path has few turning points. 
Vector-based algorithms are a nascent class of path planners and four such algorithms currently exist: \textit{Ray Path Finder} \cite{bib:rpf} and \textit{\rs{}} \cite{bib:rayscan}, \textit{\rsp{}} \cite{bib:rayscanp}, and Dual Pathfinding Search \cite{bib:dps}, not including the author's algorithms \textit{\rtwo{}} \cite{bib:r2} and \textit{\rtwop{}} \cite{bib:r2p}.

Ray Path Finder is the first vector-based any-angle algorithm, and delays \gls{los} checks to find paths quickly.
By moving towards the goal point and ignoring points that lie far from the straight line between the start and goal points, Ray Path Finder's search complexity is largely invariant to the distance between points.
As such, Ray Path Finder can find paths rapidly if the start and goal points are far apart, provided that few convex obstacles lie between the points.
While fast for such cases, Ray Path Finder's search complexity is largely dependent on the number of collided casts, which is exponential in the worst case. 
Ray Path Finder may severely underestimate path costs, and may not be terminable \cite{bib:me}.
Nevertheless, Ray Path Finder has introduced an approach (the target-pledge algorithm, see Sec. \ref{ncv:sec:tpldg}) to navigate non-convex obstacles for vector-based algorithms that delay \gls{los} checks, and is the first to introduce several concepts that influence the works of \rtwo{} and \rtwop{} in this thesis.

\rs{} is an early vector-based algorithm that finds successive turning points by recursively casts to suitable convex corners.
\rs{} is optimal, and is fast in maps with highly non-convex obstacles that overlap each other.
Due to the recursive casts, \rs{} is likely to be slow on obstacles with many convex corners, such as slanted obstacles that will contain jagged contours when rasterized to binary occupancy grids.
As projections of \gls{los} checks are required for completeness, \rs{} is likely to be slow on large sparse maps, even if there are few obstacles between the start and goal points.
To improve the performance of \rs{}, jagged contours that represent $45^\circ$ diagonal lines before rasterization are conducted to a single $45^\circ$ edge during the search process.
While the number of corners to recursively cast to are reduced, the algorithm is still susceptible to larger jagged structures, or slanted edges that are not oriented to multiples of $45^\circ$ before rasterization.

\rsp{} is an evolved prototype of \rs{} that improves the speed of \rs{} by introducing extra rules. 
For example, the convex extension to ignore convex corners where paths would lead into the convex hull of obstacles; and further extensions to reduce the number of recursive casts on jagged contours.

An important concept from \rs{} and \rsp{} that influenced the works in this thesis is the angular sector.
Angular sectors are conical areas that originate from a turning point.
A subsequent turning point that lies in the angular sector of a turning point will result in a taut path.
A secondary function of an angular sector is to constrain searches from an expanded turning point to within the point's angular sector.
By constraining the searches, repeated searches that find more expensive paths can be discarded.
Angular sectors are bounded by rays that represent recursive casts. 
The concept is adapted to the works in the thesis to discard repeated searches.

Dual Pathfinding Search is a recent algorithm that attempts to find paths by searching from the start and goal points simultaneously.
Like Ray Path Finder, Dual Pathfinding Search does not test for \gls{los} immediately once a potential turning point is discovered.
Instead, a pair of consecutive turning points on a potential path is subsequently tested for \gls{los} based on a set of edge selection policies.
As such, like Ray Path Finder, it generally unable to discard paths that overlap each other during the search process, and has exponential complexity in the worst case with  respect to the number of collided casts.


\section{Summary}
In this chapter, concepts of path planning are introduced, along with world representations and path planner types.
The concepts include \textit{completeness}, \textit{optimality}, and the \textit{curse of dimensionality}.

Different world representations include occupancy grids that uniformly subdivide the world, and can have different costs associated to each grid cell depending on the difficulty of traversing an area.
A topological map represents a world as a graph, and costs are assigned to the edges between the different graph nodes.
In multi-dimensional settings, topological and grid maps are not computationally efficient, and collision detection modules are implemented instead.

The different types of path planners described in the chapter are as follows.
Grid-based planners are widely used algorithms that operate on occupancy grids. Any-angle planners are grid-based planners that find any-angle paths.
Topological algorithms are algorithms that search along a graph-based representation of the world to find paths.
Sampling based algorithms find paths by sampling points in free space, and joining the sampled points to find paths. 
Sampling based algorithms rely on collision modules to detect collisions, and are suited for high dimensional settings where grid-based and topological planners are inefficient or intractable.
Artificial Potential Fields are local planners that rely on forming attractive or repulsive fields to find feasibly short paths between two points, and are susceptible to getting trapped in local extrema.
Deep reinforcement learning algorithms rely on learning an optimal path policy on a static map to find optimal paths or avoid collisions, and may not be suited to global path planning situations on a rapidly changing map.
Vector-based algorithms are recent, any-angle algorithms that find paths by moving toward a destination in a straight line and around any obstruction.
Vector-based algorithms are potentially faster than free-space planners in finding paths, as the search space are the contours of obstacles and the line of cells between corners of obstacles.

The contributions of the thesis are on non-convex vector-based path planning with delayed line-of-sight checks, a versatile multi-dimensional ray-tracer, and the three dimensional angular-sector concept. The concepts can be applied to global planning and even guide deep learning and deep reinforcement learning motion planning methods in non-convex obstacles.
    \setcounter{algorithm}{0}
\renewcommand{\thealgorithm}{\thechapter.\arabic{algorithm}}

\chapter{Ray Tracers for Binary Occupancy Grid}
\label{chap:line}

A ray tracer is a line algorithm that finds collisions in free space.
Ray tracers are used extensively in graphic rendering, mapping, and any-angle path planners, and many designs exist.
A ray tracer can traverse a discrete or continuous space, with costs that can be discrete or continuous, and may also have to account for hierarchical subdivisions such as octrees \cite{bib:octree2017}.
As the focus of this thesis in on non-probabilistic any-angle path planners, ray tracers that can find collisions on a uniform, binary occupancy grid are discussed.

\section{Bresenham Line Algorithm and Digital Differential Analyzer}
For a binary occupancy grid, the most common ray tracers are the Bresenham algorithm \cite{bib:bresenham} and the \gls{dda} \cite{bib:rayama, bib:3ddda}.
The ray tracers find cells that lie are intersected by a ray.
If one of the cells is occupied, a collision is detected.

Both algorithms are similar in that iterations depend on a driving axis, which is the axis that has the longest projection of the drawn ray. 
For every unit length along the driving axis, the cell coordinate along the axis is incremented.
At each interval, a diagonal line can lie in between two adjacent cells along a shorter axis.
The Bresenham algorithm chooses the cell that is closer, while \gls{dda} chooses the cell by rounding the short coordinate.
As both algorithms choose only one cell, and that the diagonal cell can cross both cells between two intervals, a cell that is crossed by the ray can be skipped.

By using the Bresenham algorithm and \gls{dda}, a planner may find a path that passes through an obstacle.
To ensure that the paths are correct, any-angle planners that use the algorithms implicitly modify the algorithms to account for the missing cells.
In \rs{} \cite{bib:rayscan}, \rsp{} \cite{bib:rayscanp}, and Theta* \cite{bib:thetastar,bib:nashthesis} and its variants \cite{bib:lazythetastar, bib:phistar}.
A work from this thesis, \rtwo{} \cite{bib:r2}, uses the modifications as well.

\section{Symmetric Ray Tracing}
Basing the calculations off a preferred axis will require conditional statements that depend on the axis. 
By determining the intersections of the ray with the boundaries of a cell, the conditional statements can be removed, and a ray tracing algorithm can be simplified.
In such an algorithm, the incremental calculations along each axis does not depend on another axis, and are thus \textit{symmetric} for all axes \cite{bib:rayslater}.

A widely cited symmetric algorithm is described by Amanatides and Woo in \cite{bib:rayama}, and independently again by Cleary and Wyvill in \cite{bib:raycleary}. 
In general, the algorithm finds a generalized distance $k\in(0, 1)$ where a ray intersects a boundary, such that $k=0$ is at the start of the ray, and $k=1$ is at the destination.
The algorithm is defined for up to 3 dimensions in both works, but can be extended to more dimensions.
In general, consider a \textbf{$d$-axis hyperplane} that has a normal parallel to the $d$-axis, where $d=\{0,1,\cdots,D-1\}$ in a $D$-dimensional space.
The hyperplane is a cell boundary, and is located at integer units along the axis.
During the initialization, the intersection of the ray with the next hyperplane is identified for every axis, and the corresponding generalized distance $k$ is calculated.
The algorithm enters the main loop, and for every iteration, the smallest $k$ is picked, and the next hyperplane along the corresponding $d$-axis is identified.
The algorithm is simple, as $k$ can be incrementally determined by adding $1/\Delta_d$ where $\Delta_d$ is the difference between the start and goal points along the $d$-axis.

\def\clipper{\clip (-0.5*\ul, -0.5*\ul) rectangle ++(11*\ul, 5*\ul)}
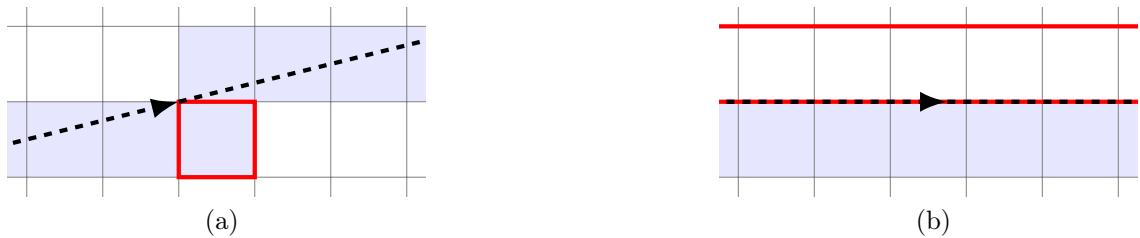
\begin{figure}[!ht]
\centering
\subfloat[\label{line:fig:symray:a} ] {%
    \centering
    \begin{tikzpicture}[]
        \clipper;
        \path
            (-2*\ul, 0.5*\ul) coordinate (xs) 
            ++(6*\ul, 1.5*\ul) coordinate (xt)
            ;

        \draw [white!90!blue, line cap=rect, line join=miter, line width=2*\ul]
            (-1*\ul, 1*\ul) -- ++(0:6*\ul)
            (5*\ul, 3*\ul) -- ++(0:8*\ul)
            ;

        \draw [gray, ultra thin, step=2*\ul] (-2*\ul, -2*\ul) grid ++(16*\ul, 10*\ul);
        \draw [red, ultra thick] (4*\ul, 0) rectangle ++(2*\ul, 2*\ul);

        \draw [ultra thick, dashed, -Latex] (xs) -- (xt);
        \draw [ultra thick, dashed, -Latex] (xt) -- ($(xs)!4!(xt)$);
        
    \end{tikzpicture}
} \hfill
\subfloat[\label{line:fig:symray:b}] {%
    \centering
    \begin{tikzpicture}[]
        \clipper;
        \path
            (-2*\ul, 2*\ul) coordinate (xs) 
            ++(7.5*\ul, 0*\ul) coordinate (xt)
            ;

        \draw [white!90!blue, line cap=rect, line join=miter, line width=2*\ul]
            (-1*\ul, 1*\ul) -- ++(0:14*\ul)
            ;
      
        \draw [gray, ultra thin, step=2*\ul] (-2*\ul, -2*\ul) grid ++(16*\ul, 10*\ul);
        \draw [red, ultra thick] (-2*\ul, 2*\ul) rectangle ++(14*\ul, 2*\ul);

        \draw [ultra thick, dashed, -Latex] (xs) to (xt);
        \draw [ultra thick, dashed, -Latex] (xt) to ($(xs)!2!(xt)$);
    \end{tikzpicture}
}
\caption[Problems with symmetric ray tracing algorithm.]{
   Problems with symmetric ray tracing algorithm. 
   (a) a line that passes through a corner may cause an extra cell to be identified (red bordered blue cell).
   (b) a line that travels along a grid line may miss out cells on one side of the line (red bordered cells) and only return the other (blue cells).
}
\label{line:fig:symray}
\end{figure}

While the algorithm is able to identify all cells traversed by a ray in most cases, it is not clear for the cases where $k$ from multiple axes are the same, or if a ray is travelling in between the cells.
In both cases, the algorithm may misidentify cells. 
When $k$  from multiple axes are the same, the path would have crossed the intersection between multiple hyperplanes, and caused a simultaneous increment along all affected axes.
In the two-dimensional case, the ray would have crossed a corner of a cell; and for the three-dimensional case, an edge or corner of a cell.
The algorithm increments each axis independently instead, and can some extra cells to be identified (see Fig. \ref{line:fig:symray:a}).
If a ray travels along the boundaries of the cell, the algorithm may only identify cells along one side of the boundary, and a ray may be prematurely determined to be collided
(see Fig. \ref{line:fig:symray:b}).
In the context of path planning, both cases can cause a collision to be identified even when none occurs.
To remedy the problem, an algorithm that accounts for both cases is designed in the next section.

\section{Extending Symmetric Ray Tracing}
The ray tracer described in the section enhances symmetric ray tracer by (i) extending the algorithm described in \cite{bib:rayama} and \cite{bib:raycleary} to a $D$-dimensional binary occupancy grid; (ii) accounting for the special cases where a ray travels along and intersects multiple cell boundaries; and (iii) accepting arbitrary coordinates which are not integers.

The pseudocode of the algorithm is in Alg. \ref{alg:raynd}, and supporting functions are in Alg. \ref{alg:raynd:init}, \ref{alg:raynd:getfront}, and \ref{alg:raynd:getcell}.
The algorithm makes use of a priority queue $\mathbb{Q}$ to sort the values of $k$. 
The queue is at most $D$ long, and a complex sorting mechanism should be avoided if $D$ is small.
The root vertex $\mx_\mathrm{root}$ is the coordinate incrementally adjusted by the algorithm, and is located at a vertex.
$\mathbb{F}$ stores the directional vectors of adjacent cells in front of the root vertex, and is constructed in Alg. \ref{alg:raynd:getfront}.
In cases where the ray does not travel along a boundary, $\mathbb{F}$ contains only one directional vector.
The number of directional vectors in $\mathbb{F}$ is $2^n$, where $n$ is the number of boundaries that the ray travels on.
For every interval, the cells in front of the root vertex are determined by the directional vectors in Alg. \ref{alg:raynd:getcell}, and if all cells are occupied, the ray would have collided.

\begin{algorithm}[!ht]
\begin{algorithmic}[1]
\caption{Extended Symmetric Ray Tracer}
\label{alg:raynd}
\Function{RayND}{$\mx_\mathrm{from}, \mx_\mathrm{to}$}
    \Comment{$\mx_\mathrm{from} \in \mreal^D, \mx_\mathrm{to} \in \mreal^D$}
    \State $(\mDelta, \mDelta_\mathrm{sgn}, \mx_\mathrm{root}, \mathbb{Q}) \gets $ \Call{Init}{$\mx_\mathrm{from}, \mx_\mathrm{to}$}
    \State $\mathbb{F} \gets $ \Call{GetFront}{$0, \mDelta_\mathrm{sgn}, \mx_\mathrm{from} - \mx_\mathrm{root}$} 
    
    \While {$\mathbb{Q} \ne \{\}$}
        \Comment{\underline{For each smallest distance $k$ in queue $\mathbb{Q}$,}}
        \State $\mathbb{D}_\mathrm{sameK} \gets \{\}$
        \Comment{\underline{find all $d$ where, at $k$, line crosses an integer $d$-axis hyperplane.}}
        \State $(k_{\min}, d) \gets \mathbb{Q}[0]$ 
        \DoWhile
            \State Push $d$ into $\mathbb{D}_\mathrm{sameK}$.
            \State Remove $\mathbb{Q}[0]$ from $\mathbb{Q}$.
            \If {$\mathbb{Q} = \{\}$}
                \State \Break
            \EndIf
            \State $(k, d) \gets \mathbb{Q}[0]$ 
        \EndDoWhile{$k_{\min} = k$} 

        \For {$d \in \mathbb{D}_\mathrm{sameK}$}
            \Comment{\underline{Increment root vertex along all affected $d$-axes.}}
            \State $\mx_\mathrm{root}[d] \gets \mx_\mathrm{root}[d] + \mDelta_\mathrm{sgn}[d]$
        \EndFor

        \For {$d \in \mathbb{D}_\mathrm{sameK}$}
            \Comment {\underline{Queue next $k$ for all affected $d$-axes, if next $k < 1$.}}
            \State $k \gets (\mx_\mathrm{root}[d] + \mDelta_\mathrm{sgn}[d] - \mx_\mathrm{from}[d]) / \mDelta[d]$
            \If{$k \ge 1$} 
                \State \Continue
            \EndIf
            \State Queue $(k, d)$ to $\mathbb{Q}$, with smallest $k$ at front of $\mathbb{Q}$.
        \EndFor

        \If {\Call{GetCell}{$\mx_\mathrm{root}, \mathbf{f}$} is occupied for all $\mathbf{f} \in \mathbb{F}$}
            \State \Return $\mtrue$
            \Comment{\underline{Collision detected when all front cells occupied.}}
        \EndIf
        \Comment Collision coordinate is $\mx_\mathrm{from} + k \mDelta$. 
    \EndWhile
    \State \Return $\mfalse$ \Comment{Reached $\mx_\mathrm{to}$}
\EndFunction
\end{algorithmic}
\end{algorithm}

\begin{algorithm}[!ht]
\begin{algorithmic}[1]
\caption{Initialize variables.}
\label{alg:raynd:init}
\Function{Init}{$\mx_\mathrm{from}, \mx_\mathrm{to}$}
    \State $\mDelta \gets \mx_\mathrm{to} - \mx_\mathrm{from}$
    \State $\mDelta_\mathrm{sgn} \gets \fsgn(\mDelta)$
    \Comment{Values close to zero are rounded to zero.}
    \State $\mx_\mathrm{root} \gets \mathbf{0}_D$
    \For {$d \in \{0,1,\cdots,D-1\}$}
        \If {$\mDelta[d] \ge 0$}
            \State $\mx_\mathrm{root}[d] \gets \ffloor(\mx_\mathrm{from}[d])$
        \Else
            \State $\mx_\mathrm{root}[d] \gets \fceil(\mx_\mathrm{from}[d])$
        \EndIf
    \EndFor
    \State $\mathbb{Q} \gets \{\}$ \Comment{Note: $\mathbb{Q}$ has at most $D$ elements.}
    \For {$d = \{0, 1, \cdots, D-1\}$}
        \If {$\mDelta_\mathrm{sgn} = 0$} 
            \State \Continue
        \EndIf
        \State Push $(0, d)$ into $\mathbb{Q}$
        \State $\mx_\mathrm{root}[d] \gets \mx_\mathrm{root}[d]  - \mDelta_\mathrm{sgn}[d]$
    \EndFor

    \State \Return $(\mDelta, \mDelta_\mathrm{sgn}, \mx_\mathrm{root}, \mathbb{Q})$
\EndFunction
\end{algorithmic}
\end{algorithm}

\begin{algorithm}[!ht]
\begin{algorithmic}[1]
\caption{Gets directional vectors pointing to front, adjacent cells of root vertex.}
\label{alg:raynd:getfront}
\Function{GetFront}{$d, \mathbf{f}, \mx_\mathrm{err}$}
    \If {$d \ge D$}     
        \State \Return $\{\mathbf{f}\}$
    \EndIf
    \State $\mathbb{F} \gets \{\}$
    \State $\mathbb{I} \gets \{\mathbf{f}[d]\}$
    \If {$\lvert \mx_\mathrm{err}[d]\rvert < \mthres$ \An $\mathbf{f}[d] = 0$}
        \Comment{Will travel along cell boundaries}
        \State $\mathbb{I} \gets \{-1, 1\}$
    \EndIf
    \For {$i \in \mathbb{I}$}
        \State $\mathbf{f}_\mnew \gets \mathbf{f}$
        \State $\mathbf{f}_\mnew[d] \gets i$
        \State $\mathbb{F}_\mnew \gets $ \Call{GetFront}{$d+1, \mathbf{f}_\mnew, \mx_\mathrm{err}$}
        \State Append $\mathbb{F}_\mnew$ to back of $\mathbb{F}$.
    \EndFor
    \State \Return $\mathbb{F}$
\EndFunction
\end{algorithmic}
\end{algorithm}

\begin{algorithm}[!ht]
\begin{algorithmic}[1]
\caption{Returns the cell in direction $\mathbf{f}$ of the root vertex.}
\label{alg:raynd:getcell}
\Function{GetCell}{$\mx_\mathrm{root}, \mathbf{f}$}
    \State $\mx_\mathrm{cell} \gets \mx_\mathrm{root} + \min(\mathbf{f}, \mathbf{0}_D)$
    \Comment{element-wise minimum.}
    \State \Return cell at $\mx_\mathrm{cell}$
\EndFunction
\end{algorithmic}
\end{algorithm}

\section{Occupancy Grid Collisions}
Only two-dimensional collisions are considered in this thesis. A future work can extend the collision to n-dimensions.
The concepts in this section are used in the planners \rtwo{} and \rtwop{}, and for the rest of the thesis.

\subsection{The Contour Assumption}
\label{line:sec:contour}
A trace travels along the grid lines, and along an obstacle contour.
While sharing the trace walks the same coordinates as the traced contour, the contour can be assumed to lie an infinitesimal distance away from the grid (see Fig. \ref{line:fig:collision}).
This will be termed as the \textbf{contour assumption}.

A path planner takes in two coordinates, the \textbf{start point}, and \textbf{goal point}, and finds a path between the points.
If the start or goal point lies on an obstacle contour, it can be likewise assumed to lie an infinitesimal distance away from the contour.
The assumption ensures that comparisons between directions are not ambiguous in a vector-based planner, especially when traces are coincident with sector rays and progression rays in \rtwo{} and \rtwop{}, and when the start point lies at a corner.

\subsection{Line-of-sight Collisions}
\def\clipper{\clip (-1*\ul, -1*\ul) rectangle ++(8*\ul, 8*\ul)}
\def\difl{3*\ul}
\def\dif{\difl/sqrt(2)}
\def\difs{4mm}

\tikzset{
    pics/linecollision grid/.style n args={0}{ code={ 
        \clipper;
        \draw [white!50!gray, step=\difl] (-\difl, -\difl) grid ++(4*\difl, 4*\difl);
    }},
}
\tikzset{
    pics/linecollision col/.style args={(#1) #2}{ code={ 
        \clipper;
        \path (#1) coordinate (xcol);
        \path (xcol) -- ++(#2:2*\difl) coordinate (xs);
        \draw [gray] 
            (xcol) -- (xs) ;
        \draw [dotted]
            (xcol) -- ($(xs)!2!(xcol)$);
        \node [cross pt] at (xcol) {};
    }},
}
\tikzset{
    pics/linecollision ncv/.style n args={0}{ code={ 
        \clipper;
        \path (\difl, \difl) coordinate (xcol);
        
        \draw [swatch_obs, line cap=rect, line join=miter, line width=3*\ul, shift={(-\difs,\difs)}]
            (2.5*\difl, 1.5*\difl) coordinate (xa0)
            -- ++(180:{2*\difl}) coordinate (xa1)
            -- ++(-90:{2*\difl}) coordinate (xa2)
            ;
        \pic at (0,0) {linecollision grid};
        \draw [->] (xcol) -- ++(-\difl, \difl);
        
    }},
}
\tikzset{
    pics/linecollision cv/.style n args={0}{ code={ 
        \clipper;
        \path (\difl, \difl) coordinate (xcol);
        
        \draw [swatch_obs, line cap=rect, line join=miter, line width=3*\ul, shift={(-\difs, \difs)}]
            (0.5*\difl, 2.5*\difl) coordinate (xa0)
            -- ++(-90:{\difl}) coordinate (xa1)
            -- ++(180:{\difl}) coordinate (xa2)
            ;
        \pic at (0,0) {linecollision grid};
        \draw [->] (xcol) -- ++(-\difl, \difl);
    }},
}
\tikzset{
    pics/linecollision wall/.style n args={0}{ code={ 
        \clipper;
        
        \draw [swatch_obs, line cap=rect, line join=miter, line width=3*\ul, shift={(-\difs, \difs)}]
            (0.5*\difl, 2.5*\difl) coordinate (xa0)
            -- ++(-90:{3*\difl}) coordinate (xa1)
            ;
        \pic at (0,0) {linecollision grid};
    }},
}
\tikzset{
    pics/linecollision left/.style n args={1}{ code={ 
        \node (n) [circle, draw=swatch_blue, minimum size=2mm, line width=0.5mm, inner sep=0, outer sep=0] at (0, 0) {};
        \draw [swatch_blue, ->, line width=0.5mm] (n) -- ++(#1:\ul);
    }},
}
\tikzset{
    pics/linecollision right/.style n args={1}{ code={ 
        \node (n) [circle, draw=orange, minimum size=3mm, line width=0.5mm, inner sep=0, outer sep=0] at (0, 0) {};
        \draw [orange, ->, line width=0.5mm] (n) -- ++(#1:\ul);
    }},
}

\begin{figure}[!ht]
\centering
\subfloat[\label{ncv:fig:collision:a} ] {%
    \centering
    \begin{tikzpicture}[]
        \clipper;
        \pic at (0,0) {linecollision ncv};
        \pic at (xcol) {linecollision left={180}};
        \pic at ($(xcol) + (0:\difl)$) {linecollision right={0}};
        \pic at (0,0) {linecollision col={(xcol) -60}};

        \node at ($(xcol) + ({-\difs*tan(30)}, {\difs})$) [cross pt, scale=0.7] {};
        
    \end{tikzpicture}
}\hfill
\subfloat[\label{ncv:fig:collision:b} ] {%
    \centering
    \begin{tikzpicture}[]
        \clipper;        
        \pic at (0,0) {linecollision ncv};
        \pic at ($(xcol) + (-90:\difl)$) {linecollision left={-90}};
        \pic at (xcol) {linecollision right={90}};
        \pic at (0,0) {linecollision col={(xcol) -30}};

        \node at ($(xcol) + ({-\difs}, {\difs*tan(30)})$) [cross pt, scale=0.7] {};
    
    \end{tikzpicture}
}\hfill
\subfloat[\label{ncv:fig:collision:c} ] {%
    \centering
    \begin{tikzpicture}[]
        \clipper;
        \pic at (0,0) {linecollision ncv};
        \pic at (xcol) {linecollision left={-90}};
        \pic at (xcol) {linecollision right={0}};
        \pic at (0,0) {linecollision col={(xcol) -45}};

        \node at ($(xcol) + ({-\difs}, {\difs})$) [cross pt, scale=0.7] {};
    \end{tikzpicture}
} \\
\subfloat[\label{ncv:fig:collision:d} ] {%
    \centering
    \begin{tikzpicture}[]
        \clipper;
        \pic at (0,0) {linecollision cv};
        \pic at (xcol) {linecollision left={-90}};
        \pic at ($(xcol) + (90:\difl)$) {linecollision right={90}};
        \pic at (0,0) {linecollision col={(xcol) -60}};

        \node at ($(xcol) + (-\difs, {\difs/tan(30)})$) [cross pt, scale=0.7] {};
        
    \end{tikzpicture}
}\hfill
\subfloat[\label{ncv:fig:collision:e} ] {%
    \centering
    \begin{tikzpicture}[]
        \clipper;        
        \pic at (0,0) {linecollision cv};
        \pic at ($(xcol) + (180:\difl)$) {linecollision left={180}};
        \pic at (xcol) {linecollision right={0}};
        \pic at (0,0) {linecollision col={(xcol) -30}};

        \node at ($(xcol) + ({-\difs/tan(30)}, {\difs})$) [cross pt, scale=0.7] {};
    
    \end{tikzpicture}
}\hfill
\subfloat[\label{ncv:fig:collision:f} ] {%
    \centering
    \begin{tikzpicture}[]
        \clipper;
        \pic at (0,0) {linecollision cv};
        \pic at (xcol) {linecollision left={180}};
        \pic at (xcol) {linecollision right={90}};
        \pic at (0,0) {linecollision col={(xcol) -45}};

        \node at ($(xcol) + ({-\difs}, {\difs})$) [cross pt, scale=0.7] {};
    \end{tikzpicture}
} \\
\subfloat[\label{ncv:fig:collision:g} ] {%
    \centering
    \begin{tikzpicture}[]
        \clipper;

        \pic at (0,0) {linecollision wall};
        
        \path (\difl, 1.5*\difl) coordinate (xcol);
        \pic at ($(xcol) + (-90:0.5*\difl)$) {linecollision left={-90}};
        \pic at ($(xcol) + (90:0.5*\difl)$) {linecollision right={90}};
        \pic at (0,0) {linecollision col={(xcol) -45}};

        \node at ($(xcol) + ({-\difs}, {\difs})$) [cross pt, scale=0.7] {};
        
    \end{tikzpicture}
}\hfill
\subfloat[\label{ncv:fig:collision:h} ] {%
    \centering
    \begin{tikzpicture}[]
        \clipper;        

        \pic at (0,0) {linecollision wall};
        
        \path (\difl, \difl) coordinate (xcol);
        \pic at ($(xcol)$) {linecollision left={-90}};
        \pic at ($(xcol) + (90:\difl)$) {linecollision right={90}};
        \pic at (0,0) {linecollision col={(xcol) -45}};

        \node at ($(xcol) + ({-\difs}, {\difs})$) [cross pt, scale=0.7] {};
    \end{tikzpicture}
}\hfill
\subfloat[\label{ncv:fig:collision:i} ] {%
    \centering
    \begin{tikzpicture}[]
        \clipper;        

        \pic at (0,0) {linecollision wall};
        
        \path (\difl, \difl) coordinate (xcol);
        \pic at ($(xcol)$) {linecollision left={-90}};
        \pic at ($(xcol)$) {linecollision right={90}};
        \pic at (0,0) {linecollision col={(xcol) 0}};

        \node at ($(xcol) + ({-\difs}, {0})$) [cross pt, scale=0.7] {};
    \end{tikzpicture}
}
\caption[Contour assumption and line-of-sight collisions.]{
The contour assumption reduces ambiguity when a ray collides with an obstacle.
A blue circle and arrow represents the left trace's starting position and direction respectively, and an orange circle and arrow represents the right trace.
The black arrow is the bisecting vector $\mvcrn$.
Under the contour assumption, the obstacle lies an infinitesimal distance away from the grid lines (exaggerated in illustration). 
A cast that collides with a corner, and that points slightly to the right of $\mvcrn$ (a, d), or to the left (b, e), will be treated to have collided at the right or left edge respectively.
The collided corner will be the first corner for a left trace in (a, d), or for a right trace in (b, e).
The collided corner will be the first corner for (c, f), and the trace directions depend on the implementation, (\rtwop{} is shown).
As there is no corner ambiguity for (g, h, i), the adjacent vertices can lie on any adjacent vertex.
By eliminating the ambiguity, the sector rule for \rtwo{} and \rtwop{} can infer that a ray has been crossed. A ray is crossed when the trace is located at the first corner found after the ray collides.
}
\label{line:fig:collision}
\end{figure}
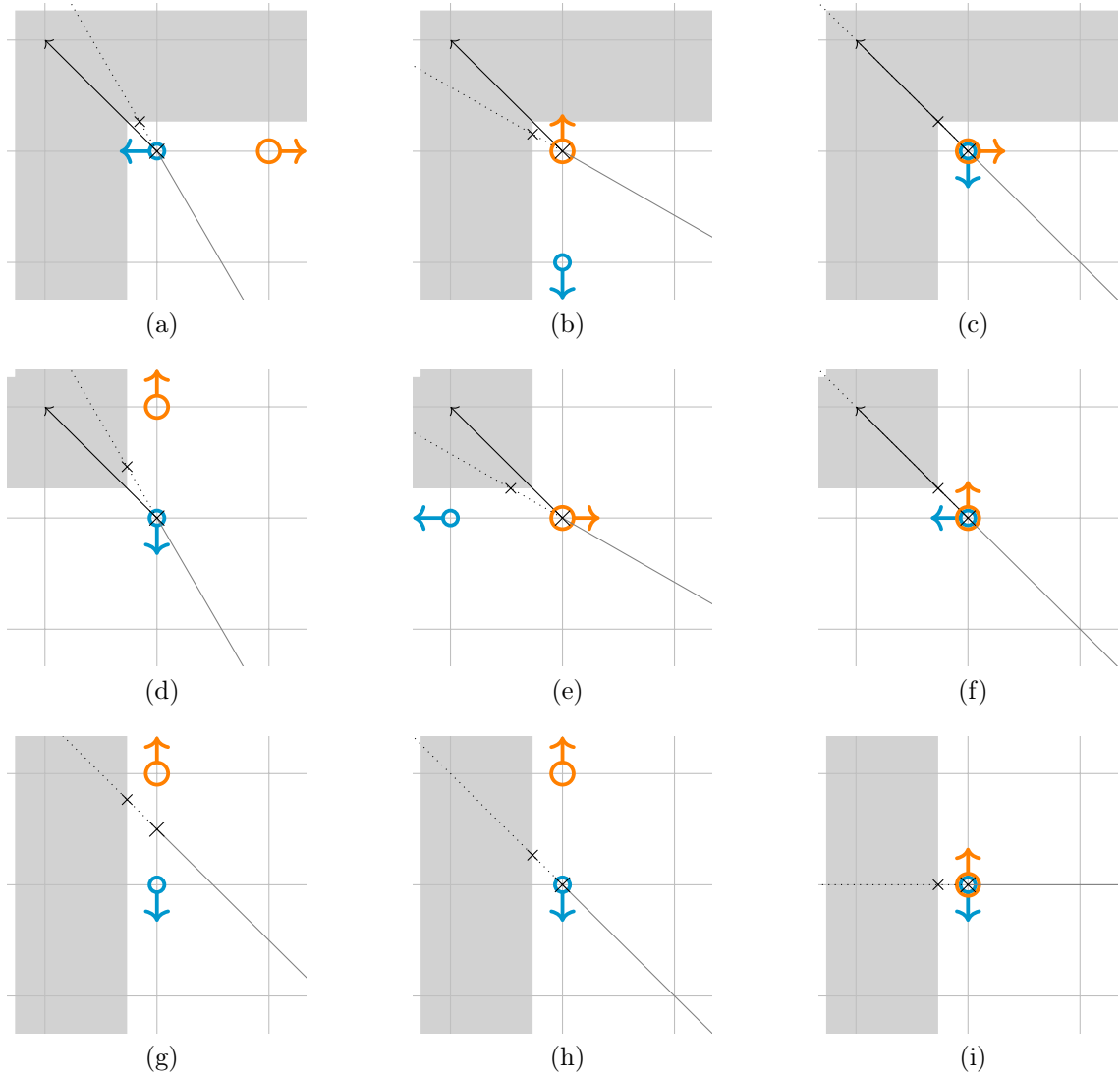
The collision point of a \gls{los} check, also called a \textbf{cast}, rarely occurs at a grid vertex.
In an optimal planner, it may be more accurate to use integer forms instead of floats to avoid ambiguity in directional comparisons.
To do so, it is possible to augment vectors and coordinates to store the fractional form of numbers. One extra value is required to store the common denominator between the original values, but doing so will incur additional calculations, and store very large numbers in the numerator.

An alternative is to resolve the coordinates of the vertices adjacent to the collision and along the contour.
With the contour assumption, the adjacent vertices can be found by comparing the direction of the ray with the \textbf{bisecting vector} of a corner.
The bisecting vector is a directional vector that bisects the interior angle of a corner.
Let $\mvcrn$ represent the smallest integer bisecting vector.
For a binary occupancy grid, $\mvcrn=[a,b]^\top$ points in the ordinal directions (north-west, north-east, etc.), where $a=\{-1,1\}$ and $b=\{-1,1\}$.

To transition to a trace, the trace direction has to be known after finding the adjacent vertices.
The adjacent vertices and trace direction for all cases are listed in Fig. \ref{line:fig:collision}.

After finding the adjacent vertices, a trace is performed to find the first corner from the collision point.
By associating the first corners on each side to a ray that represents the cast, an algorithm can compare against the positions of the corners to determine if a ray has been crossed, especially if a trace's current position lies exactly on a ray.
while the bisecting vector can be used to break ties, comparing against the bisecting vector may require slightly more expensive calculations.
Comparing against the bisecting vector requires the two-dimensional cross product, which is slightly slower than comparing against the pair of coordinates defining the first corners.




\section{Conclusion}
Fundamental to an any-angle planner is the ray tracer, or line algorithm, which detects collision along a line between two points. 
To ensure that a correct and optimal path can be found in an occupancy grid, a ray tracer has to identify all cells intersected by a line.

The Bresenham algorithm and Digital Differential Analyzer are widely used ray tracers that rely on a driving axis to detect cells.
However, the algorithms are unable to detect all intersected cells.
Symmetric ray tracers, described by Amanatides and Woo, and Cleary and Wyvill,  rely on the distance of a line to detect cells.
A symmetric ray tracer can identify all cells, and 
by parameterizing calculations based on the distance instead of a driving axis, a symmetric ray tracer is simpler to implement than the Bresenham algorithm and Digital Differential Analyzer.

A symmetric ray tracer may not account for instances where a line lies along the grid and between the cells, resulting in missed collisions. 
A novel multi dimensional symmetric ray tracer is introduced to identify all cells adjacent to such a line. 
To allow for versatility, the ray tracer is able to accept any real number coordinate from within an occupancy grid, and not just the integer coordinates.

    \chapter{Navigating Non-convex Obstacles for Vector-based Planners}
\label{chap:ncv}

A na\"ive vector-based algorithm moves to the goal point greedily, and is prone to getting trapped in non-convex obstacles \cite{bib:rpf,bib:bug}.
An example is a `G'-shaped non-convex obstacle, where a point from within the obstacle attempts to reach a point outside the obstacle (see Fig. \ref{ncv:fig:gshape}).
The chapter lists several methods that can allow vector-based algorithms to move around non-convex obstacles.

\input{chap_ncv/fig_gshape}

\section{Pledge Algorithm}
The Pledge algorithm is a well-known method of exiting a non-convex maze in one direction \cite{bib:turtle}.
While tracing an obstacle's contour, the algorithm monitors the angular displacement from the desired direction, leaving the contour only when the total displacement becomes zero again.
In a highly non-convex obstacle, like a spiral, the angular displacement can be winded to more than $360^\circ$.
If the interior angles of all obstacle corners are multiples of each other, the angles can be discretized, and a discrete counter can be used to monitor the displacement.

While the Pledge algorithm is able to navigate a maze of non-convex obstacles, the algorithm is not suitable for optimal path planning.
The algorithm can only leave a maze in a desired direction, and cannot reach a desired point within the maze.
Nevertheless, the angular counter serves as a good starting point for designing methods to navigate non-convex obstacles.

\section{Placement and Pruning Methods in Vector Based Algorithms}
\label{ncv:sec:placeandprune}
A vector-based algorithm that does not verify line-of-sight may find turning points that appear to form part of a taut path when first found.
As the algorithm progresses, the turning points may stop being part of a taut path, and have to be pruned.
The pruning method is first described in \cite{bib:rpf}.
\input{chap_ncv/fig_prune}

Before pruning can be described, the placement of points have to be explained.
A $\mside$-sided trace, where $\mside\in\{L,R\}$, and $L=-1$ for a left trace and $R=1$ for a right trace, can only place $\mside$-sided nodes (turning points).
The path on the $\mside$-side of the turning point leads to the goal point, while the path on the $\mside$-side leads to the start point.

Consider the simplest case where a turning point is found immediately after the initial edge is traced, at $\mx_S$, in Fig. \ref{ncv:fig:prune:a}.
At $\mx$, the path segment $(\mx_{SS}, \mx_S, \mx)$ stops being taut around $\mx_S$. Since the trace is a left trace, the source point at $\mx_S$ is left-sided, and a necessary condition to prune can be inferred to be
\begin{equation}
    {isTautSrc}:= \mside_S (\mv_S \times \mv_{SS}) < 0,
    \label{ncv:eq:prune:istautsrc}
\end{equation}
where a prune occurs if ${isTautSrc}$ evaluates to $\mfalse$.
$\mside_S$ is the side of a node at $\mx_S$, and $\mside_S = \mside$ in this example. The $\times$ operator is the two-dimensional cross product, and
$\mv_S = \mx - \mx_{S}$, and $\mv_{SS} = \mx_S - \mx_{SS}$.
Eq. (\ref{ncv:eq:prune:istautsrc}) can be extended to turning points found on other obstacles, and is used by \rtwo{} and \rtwop{}.

An algorithm that implements the pruning method has to work around the angular constraints of Eq. (\ref{ncv:eq:prune:istautsrc}).
Eq. (\ref{ncv:eq:prune:istautsrc}) assumes that $\mv_S$ and $\mv_{SS}$ has not rotated by more than $\pi$ radians with respect to each other. 
If so, Eq. (\ref{ncv:eq:prune:istautsrc}) would break down as the cross-product is only valid for angles that are between $-\pi$ and $\pi$ radians.
If the prune does not occur immediately after a path segment stops being taut, a trace that walks around a non-convex obstacle may cause $\mv_S$ to be rotated by more than half a round with respect to $\mv_{SS}$ (see Fig. \ref{ncv:fig:prune:c}).

An algorithm that delays \gls{los} checks may begin to verify \gls{los} by examining the parts of a path that are closer to the start point.
As such, a turning point can lie in the target direction of the current position and be pruned.
The pruning method is extended to a target turning point at $\mx_T$ with
\begin{equation}
    {isTautTgt}:= \mside_T (\mv_{TT} \times \mv_T) < 0,
    \label{ncv:eq:prune:istauttgt}
\end{equation}
where $\mside_T$ is the side of the target point at $\mx_T$, and $\mv_T = \mx - \mx_T$ and $\mv_{TT} = \mx_T - \mx_{TT}$ (see Fig. \ref{ncv:fig:prune:b}).
The point in the target direction of the target point at $\mx_{T}$ lies at $\mx_{TT}$.
Like Eq. (\ref{ncv:eq:prune:istautsrc}), Eq. (\ref{ncv:eq:prune:istauttgt}) is constrained to $-\pi$ and $\pi$ radians, and is used by \rtwo{} and \rtwop{}.

\section{Target-pledge method}
\label{ncv:sec:tpldg}
The target-pledge method is first described and used by Ray Path Finder \cite{bib:rpf}.
The algorithm is adapted from the Pledge algorithm, and instead measuring the angular displacement with respect to a static direction, the displacement is measured with respect to a direction that always points to the destination (\textbf{target}).
In \cite{bib:rpf}, the target-pledge method is discrete because an occupancy grid is used by Ray Path Finder.
The algorithm can be generalized to a continuous range of angles for it to be extended to a map of polygonal obstacles \cite{bib:me}.

While the pledge algorithm can be thought of as a spring powered toy with one knob winding a spring, the target-pledge method can be thought of as another toy that has one knob for each side of the spring.
At every corner visited by a trace, both knobs are adjusted.
For both algorithms, the interior angle of the corner winds or unwinds one knob.
For the target-pledge method, the change in direction of the target point adjusts the second knob.

\subsection{Target-pledge Update Equations}
Let $\theta_T$ be the angle the vector $\mora{\mx\mx_T}$  makes with the positive $x$-axis, where $\mx$ is the current traced position and $\mx_T$ is the target's position.
Let $\theta_\medge$ be the angle the next trace direction makes with the positive $x$-axis at $\mx$, and $\mvnext$ is the vector pointing from $\mx$ to the subsequent corner.
The angles are
\begin{align}
    \hat{\theta}_t &= \fatantwo(\mx_T - \mx) \label{ncv:eq:tpldg:t} \\
    \hat{\theta}_\medge &= \fatantwo(\mvnext).
    \label{ncv:eq:tpldg:e}
\end{align}
Let the change in angles be
\begin{align}
    \vartheta_t &= \llfloor \hat{\theta}_t - \hat{\theta}_t' \rrfloor \label{ncv:eq:tpldg:tchange} \\ 
    \vartheta_\medge &= \llfloor \hat{\theta}_\medge - \hat{\theta}_\medge' \rrfloor \label{ncv:eq:tpldg:echange}
\end{align}
where $\hat{\theta}_t'$ and $\hat{\theta}_\medge'$ are the angles defined at the previous trace position for Eq. (\ref{ncv:eq:tpldg:t}) and (\ref{ncv:eq:tpldg:e}) respectively.
The operator $\llfloor\cdot\rrfloor$ constrains its angular operand to $[-\pi,\pi)$ radians.
The \textbf{target-pledge} at $\mx$ is the angle
\begin{equation}
    \theta_T = \theta_T' + \vartheta_t - \vartheta_\medge
    \label{ncv:eq:tpldg}
\end{equation}
where $\theta_T'$ is the target-pledge at the previous trace position.
The trace begins at $\mx_0$ where a cast collides with an obstacle, and the initial target-pledge is
\begin{equation}
    \theta_{T,0} = \llfloor \fatantwo(\mx_T - \mx_0) - \hat{\theta}_{\medge,0} \rrfloor
    \label{ncv:eq:tpldg:init}
\end{equation}
where $\hat{\theta}_{\medge,0}$ is the angle of the initial trace direction.

Eq. (\ref{ncv:eq:tpldg:t}--\ref{ncv:eq:tpldg:init}) describes the target pledge for a single obstacle.
Let the side of a trace be $\mside\in\{L,R\}$ where $L=-1$ and $R=1$.
The trace can leave the contour at $\mx$ if
\begin{equation}
    \mside\theta_T \le 0
\end{equation}

\subsection{Corners in a Target-pledge Method}
By examining how the target pledge evolves as a trace walks along a contour, four types of corners can be identified.
The corners are identified based on their convexity, and whether passing through the corner will cause the angular direction of the trace to reverse.
The angular direction is examined from the target point, and a trace can be counter-clockwise or clockwise when viewed from the point.

The corners can be identified by considering the angular half the initial target-pledge $\theta_{T,0}$ lies in, and deriving the possible cases thereafter.
To simultaneously derive for an $L$-sided and $R$-sided trace, the generalized angle $\mside\theta$ is used, where $\mside=\{L,R\}$ and $L=-1$ and $R=1$.
By considering Eq. \ref{ncv:eq:tpldg:init}, $\mside\theta_{T,0} \in [0, \pi)$.
From the initial pledge, the trace walks to the first corner.
Let the first corner be at $\mx$, and the previous target-pledge be the initial pledge, such that $\mside\theta_T' = \mside\theta_{T,0}$. Suppose that the angular range of $\mside\theta_T'$ can be generalized into the first angular half (\textbf{first-half}), such that
\begin{equation}
    \pi k_T < \mside\theta_T' \le \pi(k_T+1), \qquad k_T \in \{\cdots,-2,0,2,4,\cdots\}.
    \label{ncv:eq:th1}
\end{equation}
From Eq. (\ref{ncv:eq:th1}), the angular range of $\mside\theta_T$ is derived.
As the initial edge faces away from the target point, the angular range of $\mside\vartheta_t$ for the initial edge can be derived as
\begin{equation}
    0 \le \mside\vartheta_t <  \pi.
    \label{ncv:eq:th1:tchange}
\end{equation}
After considering all cases,
Eq. (\ref{ncv:eq:th1:tchange}) is correct for all edges when $\mside\theta_T'$ lies in the first-half.

To determine the upper limit of $\mside\vartheta_t$, the initial edge can be extended to infinity from the collision point.
As such, $\mside\vartheta_t$ cannot be larger than $\pi - (\mside\theta_T' - \pi k_T)$ radians, and 
\begin{equation}
    \mside(\theta_T' + \vartheta_t) < \pi(k_T+1)
    \label{ncv:eq:th1:infedge}
\end{equation}
Adding the constraints in Eq. (\ref{ncv:eq:th1}) and (\ref{ncv:eq:th1:tchange}), and intersecting with Eq. (\ref{ncv:eq:th1:infedge}), the \textbf{first-half prior constraint} can be obtained where
\begin{equation}
    \pi k_T < \mside(\theta_T' + \vartheta_t) < \pi(k_T+1)
    \label{ncv:eq:th1:prior}
\end{equation}

Eq. (\ref{ncv:eq:th1} - \ref{ncv:eq:th1:prior}) are shown in Table \ref{ncv:tab:th1}.
The final range of $\mside\theta_T$ can be determined by considering the four types of corners.
In Table \ref{ncv:tab:th1},
the \textbf{convexity} constraint is the range of angles that are allowed for $\mside\theta_\medge$ depending on the convexity of the corner at $\mx$.
The \textbf{angular reversal} constraint is the range of angles allowed for $\mside\theta_\medge$ depending on whether the subsequent edge from $\mx$ causes the trace to reverse its angular direction when viewed from a target point.
By adding the convexity constraint to the first-half prior constraint, and intersecting the resulting range with the angular reversal constraint, the final constraint of $\mside\theta_T$ can be determined for each of the four cases.

\def\clipper{\clip (-\u, 0) rectangle ++(9*\ul, 8*\ul)}
\tikzset{
    pics/th1/.style n args={4}{ code={ 
        \clipper;
        \path (0*\ul, 4*\ul) coordinate (xprev);
        \path (4.5*\ul, 4*\ul) coordinate (x);
        \path (5*\ul, 7.5*\ul) coordinate (xt);

        \pgfmathsetmacro\angdiff{180-#1}
        \pgfmathsetmacro\anghinge{\angdiff/2 + #1}
        \path ($(x) + (\anghinge:{\ul/sin(\anghinge)})$) coordinate (xhinge);
        \fill [swatch_obs] (xprev) -- (x) -- ++(#1:8*\ul) coordinate (xnext) -- ($(xnext) + ({#1+90}:\ul)$) -- (xhinge) -- ($(xprev) + (135:{\ul/sin(135)})$) -- ++(225:2*\ul) -- ++(315:\ul) coordinate (xpprev);

        \draw [trace] (xpprev) -- (xprev) -- (x) -- ++(#1:\u);

        \draw [dotted] (xt) -- ($(xt)!3!(x)$) coordinate (tf);
        \draw [dotted, -{To[gray,reversed] To[gray,reversed]}] (xt) -- ($(xt)!0.5!(xprev)$) coordinate (tprevarr);
        \draw [dotted] (tprevarr) -- ($(xt)!3!(xprev)$) coordinate (tprevf);

        \path ($(x) - (xprev)$) coordinate (transt);
        \path ($(xt) + (transt)$) coordinate (tprev2i);
        \path ($(tprevf) + (transt)$) coordinate (tprev2f);
        \path ($(tprev2i)!0.5!(tprev2f)$) coordinate (tprev2arr);
        \draw [dotted, -{To[gray,reversed] To[gray,reversed]}] (tprev2i) -- (tprev2arr);
        \draw [dotted] (tprev2arr) -- (tprev2f);

        \draw [dotted] (x) -- ++(0:6*\ul) coordinate (ef);

        \node [black pt={shift={(#2)}}{center:$\mx_T$}] at (xt) {};
        \node [black pt={shift={(#3)}}{center:$\mx$}] at (x) {};
        \node [black pt={shift={(#4)}}{center:$\mx'$}] at (xprev) {};
    }},
}

\begin{table}[!ht]
\centering
\caption{Four corner cases when $\mside\theta_T'$ lies in first-half.}
\setlength{\tabcolsep}{3pt}
\renewcommand{\arraystretch}{1.5}
\begin{tabular}{ c c r p{6.5cm} }
\hline
\multicolumn{1}{r}{First-half:} & \multicolumn{3}{l}{$\pi k_T < \mside \theta_T' \le \pi (k_T+1) \text{ s.t. } k_T \in \{\cdots, -2,0,2,4,\cdots\}$} \\
\hline
\multicolumn{1}{r}{$\mside\vartheta_t$ Range:} & \multicolumn{3}{l}{$0 \le \mside\vartheta_t < \pi$} \\
\hline
\multicolumn{1}{r}{Edge Constraint:} & \multicolumn{3}{l}{$\pi - (\mside\theta_T' - \pi k_T) > \mside\vartheta_t$} \\
\hline
\multicolumn{1}{r}{First-half Prior:} & \multicolumn{3}{l}{$\pi k_T < \mside (\theta_T' + \vartheta_t) < \pi (k_T+1)$} \\
\hline
\hline

\multicolumn{2}{l}{
    \multirow{3}{*}{
        \begin{tikzpicture}[]
            \clipper;
            \pic at (0,0) {th1={60}{-4mm, 0}{-2mm, -2mm}{2mm, -2mm}};
            \pic [draw, ->, angle eccentricity=2, angle radius=2*\ul] {angle=ef--x--xnext};
            \node at ($(x) + (33:1.2*\ul)$) {\footnotesize $\mside\vartheta_\medge$};
            
            \pic [draw, ->, angle eccentricity=2, angle radius=2.5*\ul] { angle=ef--x--tprev2i};     
            \node [rotate=-70] at ($(x) + (5:3.3*\ul)$) {\footnotesize $\mside\theta_T'- \pi k_T$};
            
            \pic [draw, ->, angle eccentricity=2, angle radius=3*\ul] { angle=tprev2i--x--xt};
            \node at ($(x) + (60:3.5*\ul)$) {\footnotesize $\mside\vartheta_t$};
        \end{tikzpicture} 
    }} & \multicolumn{2}{c}{\textbf{Corner C1}} \\
\cline{3-4}
\multicolumn{2}{l}{} & Convex: &  $0 < \mside\vartheta_\medge < \pi$ \\
\cline{3-4}
\multicolumn{2}{l}{} & No Reversal: & 
    \marginbox{0 3pt 0 3pt}{$\begin{aligned}
        \mside(\hat{\theta}_\medge \times & \hat{\theta}_t) > 0 \\
        \implies & \mside\theta_T' - \pi k_T + \mside\vartheta_t > \mside\vartheta_\medge \\
        \implies & \pi k_T < \mside \theta_T
    \end{aligned}$} \\
\cline{3-4}
\multicolumn{2}{l}{} & Final: & $\pi k_T < \mside \theta_T < \pi(k_T+1)$ \\
\hline
\hline

\multicolumn{2}{l}{
    \multirow{3}{*}{
        \begin{tikzpicture}[]
            \clipper;
            \pic at (0,0) {th1={135}{-4mm, 0}{-2mm, -2mm}{2mm, -2mm}};
            \pic [draw, ->, angle eccentricity=2, angle radius=2*\ul] {angle=ef--x--xnext};
            \node at ($(x) + (60:1*\ul)$) {\footnotesize $\mside\vartheta_\medge$};
            
            \pic [draw, ->, angle eccentricity=2, angle radius=2.5*\ul] { angle=ef--x--tprev2i};     
            \node [rotate=-70] at ($(x) + (5:3.3*\ul)$) {\footnotesize $\mside\theta_T'- \pi k_T$};
            
            \pic [draw, ->, angle eccentricity=2, angle radius=3*\ul] { angle=tprev2i--x--xt};
            \node at ($(x) + (60:3.5*\ul)$) {\footnotesize $\mside\vartheta_t$};
        \end{tikzpicture} 
    }} & \multicolumn{2}{c}{\textbf{Corner C2}} \\
\cline{3-4}
\multicolumn{2}{l}{} & Convex: &  $0 < \mside\vartheta_\medge < \pi$ \\
\cline{3-4}
\multicolumn{2}{l}{} & Reversal: & 
    \marginbox{0 3pt 0 3pt}{$\begin{aligned}
        \mside(\hat{\theta}_\medge \times & \hat{\theta}_t) \le 0 \\
        \implies & \mside\theta_T' - \pi k_T + \mside\vartheta_t \le \mside\vartheta_\medge \\
        \implies & \mside \theta_T \le \pi k_T
    \end{aligned}$} \\
\cline{3-4}
\multicolumn{2}{l}{} & Final: & $\pi (k_T-1) < \mside \theta_T \le \pi k_T$ \\
\hline
\hline

\multicolumn{2}{l}{
    \multirow{3}{*}{
        \begin{tikzpicture}[]
            \clipper;
            \pic at (0,0) {th1={-45}{-4mm, 0}{-2mm, 2mm}{0mm, 3mm}};
            \pic [draw, <-, angle eccentricity=2, angle radius=1.5*\ul] {angle=xnext--x--ef};
            \node at ($(x) + (-21:2.2*\ul)$) {\footnotesize $\mside\vartheta_\medge$};
            
            \pic [draw, ->, angle eccentricity=2, angle radius=1.5*\ul] { angle=xprev--x--tprev2f};     
            \node [rotate=0] at ($(x) + (192:3.1*\ul)$) {\footnotesize $\mside\theta_T'- \pi k_T$};
            
            \pic [draw, ->, angle eccentricity=2, angle radius=2*\ul] { angle=tprev2f--x--tf};
            \node at ($(x) + (235:2.5*\ul)$) {\footnotesize $\mside\vartheta_t$};
        \end{tikzpicture} 
    }} & \multicolumn{2}{c}{\textbf{Corner C3}} \\
\cline{3-4}
\multicolumn{2}{l}{} & Non-convex: &  $-\pi < \mside\vartheta_\medge < 0$ \\
\cline{3-4}
\multicolumn{2}{l}{} & No Reversal: & 
    \marginbox{0 3pt 0 3pt}{$\begin{aligned}
        \mside(\hat{\theta}_\medge \times & \hat{\theta}_t) \ge 0 \\
        \implies & \pi - (\mside\theta_T' - \pi k_T + \mside\vartheta_t) \ge -\mside\vartheta_\medge \\
        \implies & \mside \theta_T \le \pi (k_T+1)
    \end{aligned}$} \\
\cline{3-4}
\multicolumn{2}{l}{} & Final: & $\pi k_T < \mside \theta_T \le \pi(k_T+1)$ \\
\hline
\hline

\multicolumn{2}{l}{
    \multirow{3}{*}{
        \begin{tikzpicture}[]
            \clipper;
            \pic at (0,0) {th1={240}{-4mm, 0}{-2mm, 2mm}{0mm, 3mm}};
            \pic [draw, <-, angle eccentricity=2, angle radius=1.5*\ul] {angle=xnext--x--ef};
            \node at ($(x) + (-21:2.2*\ul)$) {\footnotesize $\mside\vartheta_\medge$};
            
            \pic [draw, ->, angle eccentricity=2, angle radius=1.5*\ul] { angle=xprev--x--tprev2f};     
            \node [rotate=0] at ($(x) + (192:3.1*\ul)$) {\footnotesize $\mside\theta_T'- \pi k_T$};
            
            \pic [draw, ->, angle eccentricity=2, angle radius=2*\ul] { angle=tprev2f--x--tf};
            \node at ($(x) + (235:2.5*\ul)$) {\footnotesize $\mside\vartheta_t$};
        \end{tikzpicture} 
    }} & \multicolumn{2}{c}{\textbf{Corner C4}} \\
\cline{3-4}
\multicolumn{2}{l}{} & Non-convex: &  $-\pi < \mside\vartheta_\medge < 0$ \\
\cline{3-4}
\multicolumn{2}{l}{} & Reversal: & 
    \marginbox{0 3pt 0 3pt}{$\begin{aligned}
        \mside(\hat{\theta}_\medge \times & \hat{\theta}_t) < 0 \\
        \implies & \pi - (\mside\theta_T' - \pi k_T + \mside\vartheta_t) < -\mside\vartheta_\medge \\
        \implies & \pi (k_T+1) < \mside\theta_T
    \end{aligned}$} \\
\cline{3-4}
\multicolumn{2}{l}{} & Final: & $\pi (k_T+1) < \mside \theta_T < \pi(k_T+2)$ \\
\hline
\end{tabular}
\label{ncv:tab:th1}
\end{table}
\def\clipper{\clip (-\u, 0) rectangle ++(9*\ul, 8*\ul)}
\tikzset{
    pics/th2/.style n args={4}{ code={ 
        \clipper;
        \path (8*\ul, 4*\ul) coordinate (xprev);
        \path (4*\ul, 4*\ul) coordinate (x);
        \path (3.5*\ul, 7.5*\ul) coordinate (xt);

        \pgfmathsetmacro\angdiff{360-#1} 
        \pgfmathsetmacro\anghinge{\angdiff/2 + #1}
        \path ($(x) + (\anghinge:{\ul/sin(\anghinge-180)})$) coordinate (xhinge);
        \fill [swatch_obs] (xprev) -- (x) -- ++(#1:8*\ul) coordinate (xnext) -- ($(xnext) + ({#1+90}:\ul)$) -- (xhinge) -- ($(xprev) + (-45:{\ul/sin(45)})$) -- ++(45:2*\ul) -- ++(135:\ul) coordinate (xpprev);

        \draw [trace] (xpprev) -- (xprev) -- (x) -- ++(#1:\u);

        \draw [dotted] (xt) -- ($(xt)!3!(x)$) coordinate (tf);
        \draw [dotted, -{To[gray, reversed] To[gray, reversed]}] (xt) -- ($(xt)!0.8!(xprev)$) coordinate (tprevarr);
        \draw [dotted] (tprevarr) -- ($(xt)!3!(xprev)$) coordinate (tprevf);

        \path ($(x) - (xprev)$) coordinate (transt);
        \path ($(xt) + (transt)$) coordinate (tprev2i);
        \path ($(tprevf) + (transt)$) coordinate (tprev2f);
        \path ($(tprev2i)!0.6!(tprev2f)$) coordinate (tprev2arr);
        \draw [dotted, -{To[gray, reversed] To[gray, reversed]}] (tprev2i) -- (tprev2arr);
        \draw [dotted] (tprev2arr) -- (tprev2f);

        \draw [dotted] (x) -- ++(180:6*\ul) coordinate (ef);

        \node [black pt={shift={(#2)}}{center:$\mx_T$}] at (xt) {};
        \node [black pt={shift={(#3)}}{center:$\mx$}] at (x) {};
        \node [black pt={shift={(#4)}}{center:$\mx'$}] at (xprev) {};
    }},
}

\begin{table}[!ht]
\centering
\caption{Four corner cases when $\mside\theta_T'$ lies in second-half.}
\setlength{\tabcolsep}{3pt}
\renewcommand{\arraystretch}{1.5}
\begin{tabular}{ c c r p{6.5cm} }
\hline
\multicolumn{1}{r}{Second-half:} & \multicolumn{3}{l}{$\pi k_T < \mside \theta_T' \le \pi (k_T+1) \text{ s.t. } k_T \in \{\cdots, -3,-1,1,3,\cdots\}$} \\
\hline
\multicolumn{1}{r}{$\mside\vartheta_t$ Range:} & \multicolumn{3}{l}{$-\pi \le \mside\vartheta_t \le 0$} \\
\hline
\multicolumn{1}{r}{Edge Constraint:} & \multicolumn{3}{l}{$\mside\theta_T' - \pi k_T > -\mside\vartheta_t$} \\
\hline
\multicolumn{1}{r}{Second-half Prior:} & \multicolumn{3}{l}{$\pi k_T < \mside (\theta_T' + \vartheta_t) \le \pi (k_T+1)$} \\
\hline
\hline

\multicolumn{2}{l}{
    \multirow{3}{*}{
        \begin{tikzpicture}[]
            \clipper;
            \pic at (0,0) {th2={225}{-4mm, 0}{2mm, 2mm}{0mm, 3mm}};
            \pic [draw, ->, angle eccentricity=2, angle radius=2*\ul] {angle=ef--x--xnext};
            \node [rotate=0] at ($(x) + (210:1.2*\ul)$) {\footnotesize $\mside\vartheta_\medge$};
            
            \pic [draw, ->, angle eccentricity=2, angle radius=2.5*\ul] {angle=ef--x--tprev2f};     
            \node [rotate=-45] at ($(x) + (-135:3*\ul)$) {\footnotesize $\mside\theta_T'- \pi k_T$};
            
            \pic [draw, <-, angle eccentricity=2, angle radius=3*\ul] { angle=tf--x--tprev2f};
            \node  at ($(x) + (-60:3.5*\ul)$) {\footnotesize $\mside\vartheta_t$};
        \end{tikzpicture} 
    }} & \multicolumn{2}{c}{\textbf{Corner C1}} \\
\cline{3-4}
\multicolumn{2}{l}{} & Convex: &  $0 < \mside\vartheta_\medge < \pi$ \\
\cline{3-4}
\multicolumn{2}{l}{} & No Reversal: & 
    \marginbox{0 3pt 0 3pt}{$\begin{aligned}
        \mside(\hat{\theta}_\medge \times & \hat{\theta}_t) < 0 \\
        \implies & \mside\theta_T' - \pi k_T + \mside\vartheta_t > \mside\vartheta_\medge \\
        \implies & \pi k_T < \mside \theta_T
    \end{aligned}$} \\
\cline{3-4}
\multicolumn{2}{l}{} & Final: & $\pi k_T < \mside \theta_T < \pi(k_T+1)$ \\
\hline
\hline

\multicolumn{2}{l}{
    \multirow{3}{*}{
        \begin{tikzpicture}[]
            \clipper;
            \pic at (0,0) {th2={300}{-4mm, 0}{2mm, 2mm}{0mm, 3mm}};
            \pic [draw, ->, angle eccentricity=2, angle radius=2*\ul] {angle=ef--x--xnext};
            \node [rotate=0] at ($(x) + (-120:1.2*\ul)$) {\footnotesize $\mside\vartheta_\medge$};
            
            \pic [draw, ->, angle eccentricity=2, angle radius=2.5*\ul] {angle=ef--x--tprev2f};     
            \node [rotate=-45] at ($(x) + (-135:3*\ul)$) {\footnotesize $\mside\theta_T'- \pi k_T$};
            
            \pic [draw, <-, angle eccentricity=2, angle radius=3*\ul] { angle=tf--x--tprev2f};
            \node  at ($(x) + (-60:3.5*\ul)$) {\footnotesize $\mside\vartheta_t$};
        \end{tikzpicture} 
    }} & \multicolumn{2}{c}{\textbf{Corner C2}} \\
\cline{3-4}
\multicolumn{2}{l}{} & Convex: &  $0 < \mside\vartheta_\medge < \pi$ \\
\cline{3-4}
\multicolumn{2}{l}{} & Reversal: & 
    \marginbox{0 3pt 0 3pt}{$\begin{aligned}
        \mside(\hat{\theta}_\medge \times & \hat{\theta}_t) \ge 0 \\
        \implies & \mside\theta_T' - \pi k_T + \mside\vartheta_t \le \mside\vartheta_\medge \\
        \implies & \mside \theta_T \le \pi k_T
    \end{aligned}$} \\
\cline{3-4}
\multicolumn{2}{l}{} & Final: & $\pi (k_T-1) < \mside \theta_T \le \pi k_T$ \\
\hline
\hline

\multicolumn{2}{l}{
    \multirow{3}{*}{
        \begin{tikzpicture}[]
            \clipper;
            \pic at (0,0) {th2={120}{4mm, 0}{2mm, -2mm}{0mm, -3mm}};
            \pic [draw, <-, angle eccentricity=2, angle radius=2*\ul] {angle=xnext--x--ef};
            \node [rotate=0] at ($(x) + (150:1.2*\ul)$) {\footnotesize $\mside\vartheta_\medge$};
            
            \pic [draw, ->, angle eccentricity=2, angle radius=2.5*\ul] {angle=xprev--x--tprev2i};     
            \node [rotate=-30] at ($(x) + (40:3.2*\ul)$) {\footnotesize $\mside\theta_T'- \pi k_T$};
            
            \pic [draw, <-, angle eccentricity=2, angle radius=3*\ul] { angle=xt--x--tprev2i};
            \node  at ($(x) + (120:3.7*\ul)$) {\footnotesize $\mside\vartheta_t$};
        \end{tikzpicture} 
    }} & \multicolumn{2}{c}{\textbf{Corner C3}} \\
\cline{3-4}
\multicolumn{2}{l}{} & Non-convex: &  $-\pi < \mside\vartheta_\medge < 0$ \\
\cline{3-4}
\multicolumn{2}{l}{} & No Reversal: & 
    \marginbox{0 3pt 0 3pt}{$\begin{aligned}
        \mside(\hat{\theta}_\medge \times & \hat{\theta}_t) \le 0 \\
        \implies & \pi - (\mside\theta_T' - \pi k_T + \mside\vartheta_t) \ge -\mside\vartheta_\medge \\
        \implies & \mside \theta_T \le \pi (k_T+1)
    \end{aligned}$} \\
\cline{3-4}
\multicolumn{2}{l}{} & Final: & $\pi k_T < \mside \theta_T \le \pi(k_T+1)$ \\
\hline
\hline

\multicolumn{2}{l}{
    \multirow{3}{*}{
        \begin{tikzpicture}[]
            \clipper;
            \pic at (0,0) {th2={45}{4mm, 0}{2mm, -2mm}{0mm, -3mm}};
            \pic [draw, <-, angle eccentricity=2, angle radius=2*\ul] {angle=xnext--x--ef};
            \node [rotate=0] at ($(x) + (120:1.2*\ul)$) {\footnotesize $\mside\vartheta_\medge$};
            
            \pic [draw, ->, angle eccentricity=2, angle radius=2.5*\ul] {angle=xprev--x--tprev2i};     
            \node [rotate=-30] at ($(x) + (40:3.2*\ul)$) {\footnotesize $\mside\theta_T'- \pi k_T$};
            
            \pic [draw, <-, angle eccentricity=2, angle radius=3*\ul] { angle=xt--x--tprev2i};
            \node  at ($(x) + (120:3.7*\ul)$) {\footnotesize $\mside\vartheta_t$};
        \end{tikzpicture} 
    }} & \multicolumn{2}{c}{\textbf{Corner C4}} \\
\cline{3-4}
\multicolumn{2}{l}{} & Non-convex: &  $-\pi < \mside\vartheta_\medge < 0$ \\
\cline{3-4}
\multicolumn{2}{l}{} & Reversal: & 
    \marginbox{0 3pt 0 3pt}{$\begin{aligned}
        \mside(\hat{\theta}_\medge \times & \hat{\theta}_t) > 0 \\
        \implies & \pi - (\mside\theta_T' - \pi k_T + \mside\vartheta_t) < -\mside\vartheta_\medge \\
        \implies & \pi (k_T+1) < \mside\theta_T
    \end{aligned}$} \\
\cline{3-4}
\multicolumn{2}{l}{} & Final: & $\pi (k_T+1) < \mside \theta_T < \pi(k_T+2)$ \\
\hline
\end{tabular}
\label{ncv:tab:th2}
\end{table}

From Table \ref{ncv:tab:th1}, 
if a subsequent edge causes a reversal in the trace's angular direction, the target-pledge moves to another half angular range.
Suppose that the trace has moved to the next corner for such a case, and $\mx$ is now the next corner. $\mside\theta_T$ becomes the new $\mside\theta_T'$, which resides in the second angular half (\textbf{second-half}) where
\begin{equation}
    \pi k_T < \mside\theta_T' \le \pi (k_T+1), \qquad k_T \in \{\cdots,-3,-1,1,3,\cdots\}.
    \label{ncv:eq:th2}
\end{equation}
As the angular direction has reversed over the previous corner, the range of $\vartheta_t$ has to be 
\begin{equation}
    -\pi \le \mside\vartheta_t < 0.
    \label{ncv:eq:th2:tchange}
\end{equation}
By considering the limit when the subsequent edge extends to infinity, the second-half prior constraint can be derived as
\begin{equation}
    \pi k_T < \mside(\theta_T' + \vartheta_t) \le \pi (k_T+1)
    \label{ncv:eq:th2:prior}
\end{equation}

By considering the four types of corners and deriving in the same way as the first-half, the final angular ranges of $\mside\theta_T$ can be found.
The ranges are listed in Table \ref{ncv:tab:th2}.

From Tables \ref{ncv:tab:th1} and \ref{ncv:tab:th2}, the cases can be found to lead to each other.
Let $k_T$ be the target-pledge \textbf{winding counter}, which monitors the angular half the target-pledge lies in.
When $k_T$ is even, the target-pledge lies in the first-half. If $k_T$ is odd, the target-pledge lies in the second-half.
Corners C1 and C3 does not change $k_T$, while convex corner C2 causes $k_T$ to unwind, and non-convex corner C4 causes $k_T$ to wind.

\subsection{Casting From a Trace}
The target-pledge method generates two traces when a cast to the target point collides.
When a trace begins, the target-pledge winding counter $k_T$ is zero, such that $0 < \mside\theta_T < \pi$.
Consider the simplest obstacle with only two C2 corners (see Fig. \ref{ncv:fig:tproof:a}), where a trace encounters one of the C2 corners.
For the algorithm to be complete and reach the target point, a cast has to occur at the corner.
At this corner, the condition
\begin{equation}
    \mside\theta_T \le 0
    \label{ncv:eq:tsuccess}
\end{equation}
is satisfied, and $k_T$ unwinds from $0$ to $-1$.
Since the obstacle is the simplest obstacle, the condition to cast when Eq. (\ref{ncv:eq:tsuccess}) is met is \textit{necessary} for the target-pledge method to be complete.

\subsection{Proof of Completeness}
We now show that Eq. \ref{ncv:eq:tsuccess}  is sufficient for the target-pledge method to be complete.
\begin{theorem}
    In an unbounded map, or bounded map with a convex boundary, the target-pledge method can find a path to the target point if a path exists, provided that a cast occurs from a trace at the first corner that satisfies  $\mside\theta_T \le 0$. All traces and casts have to be simultaneously examined.
    \label{ncv:thm:tpldg}
\end{theorem}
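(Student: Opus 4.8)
The plan is to decompose completeness into three claims: \textbf{(A)} every trace that runs along a bounded obstacle is forced, after finitely many corners, to reach a corner with $\mside\theta_T \le 0$ and hence to cast again; \textbf{(B)} consequently the tree of casts and traces explored simultaneously contains only finitely many distinct casts, so the breadth‑first exploration halts; and \textbf{(C)} when a path exists, this exploration actually reaches the target. Throughout I would use the contour assumption and the collision‑case analysis of Chapter~\ref{chap:line} to fix, once and for all, the convention that a $\mside$‑trace keeps the obstacle on its $\mside$‑side and is launched in the corresponding direction when a cast collides; and I would note that the path the method outputs is not a shortest path $P^\star$ but the concrete walk obtained by concatenating each collided cast segment with the contour segment traced afterwards — this is collision‑free, which is all the theorem asks for.

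For \textbf{(A)} — the central lemma — I would run a winding / exterior‑angle count around one full traversal of the obstacle's contour (a simple closed curve, by the contour assumption). The accumulated trace turning satisfies $\sum \vartheta_\medge = \mside\,2\pi$ by the exterior‑angle theorem in the direction fixed by the launch convention, while the accumulated change of target direction satisfies $\sum \vartheta_t = 0$: over a single edge the target ray turns by less than $\pi$ (the target lies in free space, hence off every contour edge), so no wrapping is lost in Eqs.~(\ref{ncv:eq:tpldg:tchange})–(\ref{ncv:eq:tpldg:echange}), and the closed contour of a bounded obstacle does not enclose the target. Feeding these into the update Eq.~(\ref{ncv:eq:tpldg}) gives a net change of $\mside\theta_T$ by $-2\pi$ per loop, i.e.\ a net decrease of the winding counter $k_T$ by $2$. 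Since $k_T$ starts at $0$ and changes by $0$ or $\pm1$ at each corner (Tables~\ref{ncv:tab:th1}–\ref{ncv:tab:th2}), a discrete intermediate‑value argument forces $k_T=-1$, equivalently $\mside\theta_T\le 0$, at some corner already within the first loop; and since the only transition producing $k_T\colon 0\to-1$ is a convex (C2) corner, the pledge‑cast is always launched from a genuine convex corner. A trace along a convex map boundary has only convex corners, so $k_T$ cannot wind up and the conclusion is immediate; a spiralling contour that happens to enclose the target changes $\sum \vartheta_t$ by $\pm2\pi$, which is absorbed by the trace of the \emph{opposite} side — this is exactly why both traces must be examined simultaneously; and an unbounded obstacle in an unbounded map either separates the start from the target (so no path exists and the statement is vacuous) or lets the "wrong" trace escape harmlessly to infinity.

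For \textbf{(B)}, a cast always points at the current target, and by \textbf{(A)} every cast after the first is launched from a convex obstacle corner, so the cast from a given corner is uniquely determined; memoising corners already cast from then bounds the number of casts by one plus the number of convex corners, each spawning two traces and each trace contributing at most one further cast, so the breadth‑first search terminates.

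The remaining and hardest part is \textbf{(C)}. I would fix a shortest path $P^\star$ (it exists by hypothesis, is taut, and turns only at finitely many convex corners) and induct on its number $m$ of turning points: for $m=0$ the initial cast has \gls{los} and succeeds; for $m\ge1$ the initial cast collides with an obstacle $O$ around which $P^\star$ must pass, and the trace along $O$ on the side $P^\star$ uses performs its pledge‑cast from a convex corner $c$ of $O$ from which a shortest path to the target turns at strictly fewer corners, whereupon the induction hypothesis applies to the subproblem rooted at $c$. The real obstacle is this monotone‑progress claim — that the first corner with $\mside\theta_T\le0$ is reached \emph{no earlier} than the corner at which $P^\star$ leaves $O$, so that $O$ no longer obstructs the subsequent cast — and it is here that the geometric meaning of $\mside\theta_T\le0$ (that the trace has rounded the obstacle onto the side facing the target, which the four‑corner case analysis of Tables~\ref{ncv:tab:th1}–\ref{ncv:tab:th2} makes precise) must be converted into a statement comparing the trace to $P^\star$. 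Once that claim is in hand, \textbf{(A)}–\textbf{(C)} combine to give the theorem.
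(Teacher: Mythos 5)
Your step \textbf{(A)} is a genuinely different route to the paper's intermediate conclusion that a cast must eventually occur: the paper derives it from the corner taxonomy of Tables~\ref{ncv:tab:th1}--\ref{ncv:tab:th2} (only C2 corners unwind $k_T$, only C4 corners wind it) together with an explicit construction of the ``simplest'' obstacle and its extrusions, whereas you get it in one stroke from the exterior-angle theorem: one full traversal of a closed contour not enclosing the target changes $\mside\theta_T$ by $-2\pi$, so the discrete counter $k_T$, starting at $0$ and moving by at most $1$ per corner, must pass through $-1$. This is clean, and you correctly isolate the same exceptional cases the paper does (contour enclosing the target, no path, unbounded separation). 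Step \textbf{(B)} is extra relative to the theorem --- the paper does not claim termination (it explicitly concedes the method may be interminable when no path exists) and relies on simultaneous FIFO examination rather than memoisation --- but it does no harm.

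The gap is in \textbf{(C)}, and it is exactly the part you flag as unproven; unfortunately it is the substance of the theorem. Your induction on the number of turning points of a shortest path $P^\star$ does not go through as set up: the corner at which the pledge fires is determined by the counter, not by $P^\star$, so it need not be a turning point of $P^\star$, and the cast launched from it may collide \emph{again with the same obstacle} (the paper's Case~1.3, Fig.~\ref{ncv:fig:tproof:c}--\ref{ncv:fig:tproof:d}), in which case the ``subproblem rooted at $c$'' has no smaller measure and the induction stalls. The missing idea is the paper's reset argument: when the pledge-cast from $\mx_b$ re-collides with the same obstacle, the new trace starts with $k_T=0$, which is exactly the value the original trace would have carried had it continued past that edge with the extrusion deleted; hence the post-collision trace is behaviourally identical to the continued trace and still reaches the exit corner $\mx_a$. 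Combined with recursion on casts that hit a \emph{different} obstacle (Case~2) and the observation that a trace fails to cast only if it is walking the interior boundary of an obstacle enclosing the target or if no path exists --- situations that cannot befall every trace simultaneously when a path exists --- this is what actually delivers completeness. Without that argument (or an equivalent comparison of the trace to $P^\star$), your proposal establishes only that casts keep happening, not that they converge on the target.
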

\input{chap_ncv/fig_tproof}
\begin{proof}
The target-pledge method  generates a trace on each side of a collided cast.
Suppose that the map contains only the simplest obstacle in Fig. \ref{ncv:fig:tproof}.
Two traces would occur on the obstacle, one on each side.
A cast occurs from a trace when the condition in Eq. \ref{ncv:eq:tsuccess} is met at $\mx_a$ in Fig. \ref{ncv:fig:tproof:a}.
In \textbf{Case 1.1}, a straight-line obstacle is extended to contain at most two C2 corners and no C4 corners. If the obstacle does not enclose the starting or target points, a C2 corner (at $\mx_a$ in Fig. \ref{ncv:fig:tproof}) can be found and the algorithm is complete.

In \textbf{Case 1.2} (see Fig. \ref{ncv:fig:tproof:b}), consider the trace that leads to $\mx_a$, and suppose that the obstacle is extruded between the initial point and $\mx_a$. The extrusion causes C1 and C3 corners to be introduced.
The extrusion may cause $k_T$ to be wound to $k_T \ge 1$ along a non-convex corner. 
To cast, it is a necessary to  unwound $k_T$ to $-1$, or else it will not be able to escape a `G' shaped extrusion , or any highly non-convex extrusion, before being able to cast from $\mx_a$.

In \textbf{Case 1.3} (see Fig. \ref{ncv:fig:tproof:c} and \ref{ncv:fig:tproof:d}),
suppose that $k_T$ is unwound to $-1$ along the non-convex extrusion, causing a cast to occur at $\mx_b$ and before the trace reaches the initial non-extruded part of the obstacle.
When the cast collides with the same obstacle, the trace would proceed as if a part or all of the extrusion never existed, and the trace continues to cast from $\mx_a$. 
This is due to $k_T$ being zero when a new trace occurs from the newly collided edge, and $k_T$ being zero if a cast never occurs and the original trace continues to the same edge.
$k_T$ would be zero for the continued trace regardless of how the contour is extruded after $\mx_b$ and before the edge, provided that any extrusion does not intersect the new cast.
As such, the trace would reach $\mx_a$, and the algorithm is complete.

In \textbf{Case 2} (see Fig. \ref{ncv:fig:tproof:e} and \ref{ncv:fig:tproof:f}), suppose that a cast from Cases 1.1 to 1.3 collides with a contour that does not belong to the same obstacle.
The new traces along the new obstacle can be treated with Cases 1.1 to 1.3, and there must exist a trace that can eventually cast to the target point.
The only way where a trace will not cast is if the trace reaches the interior boundary of an obstacle enclosing the target point and the casts (Fig. \ref{ncv:fig:tproof:g}), there is no path (Fig. \ref{ncv:fig:tproof:h}), or if the trace stops at the map boundary. 
Since the target point is reachable, the enclosing obstacle cannot separate the casts and the target point, and there cannot be an obstacle where both traces would arrive at a convex map boundary.
As such, at least one trace will be able to cast to and reach the target point.

A trace that reaches the interior boundary of an obstacle enclosing the target point and casts will not terminate. As such, it is necessary for all casts and traces to be simultaneously examined by the algorithm in order for the algorithm to terminate.
This can be done by using a first-in-first-out queue that queues a trace at every corner.
The algorithm maybe interminable if no path can be found.
\end{proof}


\subsection{Pledge Update After Pruning}
In a vector-based algorithm that delays \gls{los} checks, a search may have to re-examine the part of a path that is closer to the start point. 
As such, the target point may not be the goal point. For example, the target point may be part of a path $(\cdots, \mx, \mx_T, \mx_{TT}, \cdots, \mx_G)$,
where $\mx_G$ is the goal point.

To keep the path taut and admissible, $\mx_T$ may be pruned during a trace, at the first traced corner $\mx$ where the path stops being taut around $\mx_T$.
When a prune occurs, the path becomes $(\cdots, \mx, \mx_{TT}, \cdots, \mx_G)$, and the new target point becomes $\mx_{TT}$.
As the target-pledge is defined with respect to $\mx_T$, it has to be re-defined with respect to $\mx_{TT}$.
The new target-pledge at $\mx$ is
\begin{equation}
    \theta_{TT} = \theta_T + \hat{\theta}_{tt} - \hat{\theta}_t,
    \label{ncv:eq:tpldg:prune}
\end{equation}
where $\theta_T$ is the target-pledge as if the target point at $\mx_T$ is not pruned, and $\hat{\theta}_{tt} = \fatantwo(\mx_{TT} - \mx)$. 
$\theta_{TT}$ would be reused as $\theta_T'$ at the next corner traced.

\input{chap_ncv/fig_tprune}
\begin{lemma}
    Suppose a trace reaches a corner at $\mx$ with an expanded path 
    \linebreak
    $(\cdots,\mx,\mx_T,\mx_{TT},\cdots)$.
    The path was taut for all previous corners walked by the trace, and stops being taut around the segment $(\mx,\mx_T,\mx_{TT})$ when it reaches $\mx$.
    The target-pledge can be correctly updated with Eq. (\ref{ncv:eq:tpldg:prune}).
    \label{ncv:lem:tprune}
\end{lemma}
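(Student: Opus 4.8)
The plan is to prove the lemma in two stages: first determine the winding state of the target-pledge at the corner where $\mx_T$ is pruned, and then show that in that state the update in Eq. (\ref{ncv:eq:tpldg:prune}) re-references the pledge from $\mx_T$ to $\mx_{TT}$ exactly. Throughout, let $L$ be the line through $\mx_T$ and $\mx_{TT}$; by Eq. (\ref{ncv:eq:prune:istauttgt}), the hypothesis that $(\mx',\mx_T,\mx_{TT})$ is taut at every corner $\mx'$ visited before $\mx$ is exactly the statement that all those corners lie on one (the \emph{taut}) side of $L$.

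First I would show that $\mx_T$ cannot be pruned while the winding counter $k_T \ge 1$, i.e.\ while the trace lies in a non-convex extrusion relative to $\mx_T$. Let $\mx_{\max}$ be the most recent corner at which $k_T$ increased from $0$ to $1$ (necessarily a C4 corner, by Tables \ref{ncv:tab:th1} and \ref{ncv:tab:th2}); the path was taut there, so $\mx_{\max}$ is on the taut side of $L$. The ``extend the entry edge to infinity'' bounds that yield the first- and second-half prior constraints (Eq. (\ref{ncv:eq:th1:infedge}) and its second-half analogue in the tables) confine the direction from the trace to $\mx_T$, for as long as $k_T \ge 1$, to the sub-$\pi$ angular wedge it occupied at $\mx_{\max}$. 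Since that wedge, pointed from $\mx_T$, stays within the half-circle of directions corresponding to the taut side of $L$, every corner visited while $k_T \ge 1$ is on the taut side and no prune can fire. Hence the prune corner $\mx$ satisfies $k_T = 0$, equivalently $0 < \mside\theta_T < \pi$.

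Next, because the path was taut at the corner preceding $\mx$ and fails at $\mx$, the last traced edge crosses $L$ at a point $\mx_\mitx$; the direction of the tautness failure, together with the contour assumption, places $\mx_\mitx$ on the ray from $\mx_{TT}$ through $\mx_T$ beyond $\mx_T$, so $\mx_T$ and $\mx_{TT}$ lie in the same direction from $\mx_\mitx$ and $\fatantwo(\mx_T - \mx_\mitx) = \fatantwo(\mx_{TT} - \mx_\mitx)$. Now observe that the target-pledge accumulates (change in the target angle) minus (change in the edge angle), and the edge-angle part does not depend on which target point is used; therefore the gap between a pledge referenced to $\mx_{TT}$ and one referenced to $\mx_T$ equals the accumulated gap of the two target angles, which is $0$ at $\mx_\mitx$ by the collinearity. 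From $\mx_\mitx$ to $\mx$ the trace moves along one straight edge, along which the angle subtended at the moving point by the segment $[\mx_T,\mx_{TT}]$ stays in $(-\pi,\pi)$; so the gap changes by exactly $\fatantwo(\mx_{TT}-\mx) - \fatantwo(\mx_T-\mx)$, and since $k_T = 0$ this shift stays in the principal range and introduces no spurious winding. Thus $\theta_{TT} = \theta_T + \hat{\theta}_{tt} - \hat{\theta}_t$ is exactly the value the pledge with respect to $\mx_{TT}$ would hold at $\mx$, with winding counter $0$, which is what Eq. (\ref{ncv:eq:tpldg:prune}) asserts.

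I expect the first stage to be the real obstacle: it is where the purely angular bookkeeping of the winding counter (Tables \ref{ncv:tab:th1} and \ref{ncv:tab:th2}) must be converted into the geometric claim that the trace cannot reach the far side of $L$ before $k_T$ returns to $0$. The delicate points are verifying that the wedge swept by the direction-to-$\mx_T$ inside an extrusion is both narrower than a half-round and contained in the taut half-circle about $\mx_T$ (the ``entry edge extended to infinity'' argument), and handling the degenerate configurations — $\mx$ coinciding with $\mx_{\max}$, or the trace entering and leaving several nested extrusions before $\mx$. Once $k_T = 0$ and the collinearity at $\mx_\mitx$ are established, the second stage is just a telescoping identity for the pledge.
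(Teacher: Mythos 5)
Your proposal follows essentially the same route as the paper's proof: first establish that the prune can only fire when the winding counter $k_T=0$ (via the observation that the maximum angular deviation with respect to $\mx_T$ is attained when $k_T$ first winds at $\mx_{\max}$, and the trace cannot reach the far side of the line through $\mx_T$ and $\mx_{TT}$ while inside the extrusion), and then use the intersection point $\mx_\mitx$ — collinear with $\mx_T$ and $\mx_{TT}$ — where the two pledges coincide, so that the single-edge update from $\mx_\mitx$ to $\mx$ contributes exactly $\hat{\theta}_{tt} - \hat{\theta}_t$ within the principal range. The argument is correct and matches the paper's decomposition; your phrasing of stage one in terms of a sub-$\pi$ wedge of directions is just a restatement of the paper's "angular deviation only increases when $k_T=0$" claim.
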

\begin{proof}
To prove Eq. (\ref{ncv:eq:tpldg:prune}), pruning is first shown to occur only when the winding counter $k_T=0$.
As a trace casts when $k_T=-1$, $k_T \ge 0$ in a trace.
At the collision point, $k_T=0$.
Suppose that the angular deviation is zero at the collision point when viewed from the target point.
For every subsequent, consecutive corner where $k_T=0$, the angular deviation of the corner increases.
When the trace first arrives at a non-convex corner where $k_T$ winds to one ($\mx_{\max}$ in Fig. \ref{ncv:fig:tprune}), the trace will be at a local maximum deviation. 
For non-intersecting obstacles, every subsequent corner where $k_T > 0$ has to lie at a smaller angular deviation than the local maximum.
When $k_T > 0$, the trace has entered a non-convex extrusion like Case 1.2 in Theorem \ref{ncv:thm:tpldg}.
For the angular deviation to increase again, $k_T$ would have to unwind back to zero.

A prune occurs only when the line colinear to $\mx_T$ and $\mx_{TT}$ is crossed by a trace.
Assuming that the path is taut at the collision point, a trace has to deviate far enough from the collision point to cross the line.
Since the angular deviation can only increase when $k_T=0$, a prune can only occur when $k_T=0$.

Consider the edge immediately before reaching $\mx$. Let $\mx_{\mitx}$ be the intersection of the edge with the line colinear to $\mx_T$ and $\mx_{TT}$.
As $k_T=0$ at a collision point, a new cast that tries to reach $\mx_{TT}$ can be assumed to have collided at $\mx_{\mitx}$.
Since $\mx_{\mitx}$, $\mx_T$, and $\mx_{TT}$ are colinear, the target-pledge at $\mx_{\mitx}$ for the new trace has to be the same as the target-pledge for the old trace.

Let $\theta_{TT}$ be the target-pledge at $\mx$ for the new trace, and $\theta_T$ be the target-pledge for the old trace. 
Since both target-pledges are affected by the same $\vartheta_\medge$ at $\mx$, and that $k_T=0$ for both target-pledges at $\mx_{\mitx}$, the difference only lies in the directions of their target points.
As $\mx$ has a higher angular deviation than $\mx$, and that $\mx_{\mitx}$, $\mx_T$, and $\mx_{TT}$ are colinear, the angular difference between $\hat{\theta}_{tt} - \hat{\theta}_t$ cannot exceed $\pi$ radians.
Therefore, Eq. (\ref{ncv:eq:tpldg:prune}) is correct.
\end{proof}

\subsection{Target-pledge Angular Discretization} \label{ncv:sec:tpldg:discrete}
In an occupancy grid, the cardinal directions south, east, north, west correspond to the headings $-\pi$, $-\pi/2$, $0$, and $\pi/2$ radians, respectively.
As an obstacle's edge is parallel to the cardinal directions, the ordinal directions can be defined as the \textit{angular ranges} between the cardinal directions.
As such, the angles in the target-pledge can be discretized with 
\begin{equation}
    \mathrm{z}(\theta) = \begin{cases}
    \frac{4}{\pi} \llfloor \theta \rrfloor + 4 & \text{if } \llfloor \theta \rrfloor \in \{-\frac{\pi}{2}, 0, \frac{\pi}{2}, \pi\} \\
        2\left\lfloor \frac{2}{\pi}\llfloor \theta \rrfloor \right\rfloor + 5 & \text{otherwise}
    \end{cases}.
    \label{ncv:eq:tpldg:discrete}
\end{equation}
\input{chap_ncv/fig_tdis}

In Eq. (\ref{ncv:eq:tpldg:discrete}), south, east, north, and west, are discretized to 0, 2, 4, and 6, respectively (see Fig \ref{ncv:fig:tdis}).
The angular ranges between and not including south and east, east and north, north and west, west and south, are discretized to 1, 3, 5, and 7, respectively.
The discretizer has to uniquely assign values to the cardinal directions, and the values must be monotonically increasing or decreasing as the angular parameter rotates one round from a cardinal direction.

Discretizing $\hat{\theta}_t$ in Eq. (\ref{ncv:eq:tpldg:t}) and $\hat{\theta}_\medge$ in Eq. (\ref{ncv:eq:tpldg:e})  removes the need to calculate the computationally expensive $\fatantwo$ function.
To avoid double counting angles within an angular range,
Eq. (\ref{ncv:eq:tpldg:tchange}) and Eq. (\ref{ncv:eq:tpldg:echange}) has to be discretized to
\begin{align}
    \mathrm{z}(\vartheta_t) &= \mathrm{z}(\hat{\theta}_t) - \mathrm{z}(\hat{\theta}_t') \\
    \mathrm{z}(\vartheta_\medge) &= \mathrm{z}(\hat{\theta}_\medge) - \mathrm{z}(\hat{\theta}_\medge').
\end{align}
The initial target-pledge at the collision point $\mx_0$ is
\begin{equation}
    \mathrm{z}(\theta_{T,0}) = \mathrm{z}(\mx_T - \mx_0) - \mathrm{z}(\hat{\theta}_{\medge,0}),
\end{equation}
where $\mathrm{z}(\mx_T - \mx_0)$ is the discrete heading of $\mx_T$ from $\mx_0$.
The update equation from Eq. (\ref{ncv:eq:tpldg}) is adjusted to
\begin{equation}
    \mathrm{z}(\theta_T) = \mathrm{z}(\theta_T') + \mathrm{z}(\vartheta_t) - \mathrm{z}(\vartheta_\medge),
\end{equation}
and an $\mside$-sided trace can leave the contour and cast to the target point at $\mx_T$ if
\begin{equation}
    \mside\mathrm{z}(\theta_T) < 0.
\end{equation}
When the target point at $\mx_T$ is pruned, Eq. (\ref{ncv:eq:tpldg:prune}) can be adjusted to
\begin{equation}
    \mathrm{z}(\theta_{TT}) = \mathrm{z}(\theta_T) + \mathrm{z}(\hat{\theta}_{tt}) - \mathrm{z}(\hat{\theta}_t).
\end{equation}

\clearpage
\section{Source-pledge Method}
The source-pledge method is a repurposed target-pledge method that is used for placing turning points, by examining the heading with respect to a source point.
A source point leads to the start point of a query along a path.

The algorithm prevents turning points from being placed within the convex hull of an obstacle.
Points that are placed within an obstacle's convex hull will not lead to the shortest path, unless there is another obstacle that lies partially within the convex hull.
Finding the other obstacle is not the concern of the target-pledge method, but the planner that utilizes the algorithm.
As such, it is sufficient for the algorithm to consider only the traced obstacle when placing a turning point.

\subsection{Source-pledge Update Equations}
Let $\hat{\theta}_s$ be the angle the vector $\mora{\mx_S\mx}$ makes with the positive $x$-axis, where $\mx$ is the current traced position and $\mx_S$ is the source point's position.
The source point is a turning point that leads to the start point, and can be the start point.
$\hat{\theta}_\medge$ is the angle that the next traced direction makes with the positive $x$-axis.
The angles are
\begin{align}
    \hat{\theta}_s &= \fatantwo(\mx - \mx_S) \label{ncv:eq:spldg:s} \\
    \hat{\theta}_\medge &= \fatantwo(\mvnext).
    \label{ncv:eq:spldg:e}
\end{align}
The change in angles are
\begin{align}
    \vartheta_s &= \llfloor \hat{\theta}_s - \hat{\theta}_s' \rrfloor 
    \label{ncv:eq:spldg:schange} \\
    \vartheta_\medge &= \llfloor \hat{\theta}_\medge - \hat{\theta}_\medge' \rrfloor
    \label{ncv:eq:spldg:echange}
\end{align}
where $\hat{\theta}_s'$ and $\hat{\theta}_\medge'$ are the angles defined at the previous trace position for Eq. (\ref{ncv:eq:spldg:s}) and (\ref{ncv:eq:spldg:e}) respectively.
The \textbf{source-pledge} at $\mx$ is
\begin{equation}
    \theta_S = \theta_S' + \vartheta_s - \vartheta_\medge,
    \label{ncv:eq:spldg}
\end{equation}
where $\theta_S'$ is the source-pledge at the previous traced position.
The trace begins at $\mx_0$ where a cast collides with an obstacle, and the initial source-pledge is
\begin{equation}
    \theta_{S,0} = \llfloor \fatantwo(\mx_0 - \mx_S) - \hat{\theta}_{\medge,0} \rrfloor
    \label{ncv:eq:spldg:init}
\end{equation}
where $\hat{\theta}_{\medge,0}$ is the direction of the trace from the collision point. A point can be placed at $\mx$  for an $\mside$-sided trace if
\begin{equation}
    \mside\theta_S < 0,
    \label{ncv:eq:spldg:place}
\end{equation}
where $\mside\in\{L,R\}$ is the side of the trace, and $L=-1$ and $R=1$.

\subsection{Corners in a Source-pledge Method} \label{ncv:sec:spldg:corners}
As a trace walks along a contour, the source-pledge will fall into two angular-half ranges, like the target-pledge.
The \textbf{source-pledge winding counter} $k_S$ determines the angular-half which the source-pledge lies in.
Like the target-pledge, four types of corners can be derived.

Consider the generalized source-pledge $\mside\theta_S$ for an $\mside$-sided trace, where $\mside\in\{L,R\}$ and $L=-1$ and $R=1$.
Let the first corner be at $\mx$.
As a collision can only occur on an edge facing the source point, $\mside\theta_S'$ has to lie in the angular range $[0, \pi)$ radians, or in the \textbf{first-half} angular range. 
In general, if $\theta_S'$ lies in the first-half,
\begin{equation}
    \pi k_S \le \mside\theta_S' < \pi(k_S + 1),\qquad k_S \in \{\cdots,-2,0,2,4,\cdots\}.
    \label{ncv:eq:firsthalf}
\end{equation}
The range of $\mside\vartheta_s$ for the initial edge is $-\pi \le \mside\vartheta_s < 0$.
By considering subsequent edges where the source point can intersect the edge, the range can be generalized to
\begin{equation}
    -\pi \le \mside\vartheta_s \le 0.
    \label{ncv:eq:sh1:schange}
\end{equation}
Since the initial edge is a straight line, the lower bounds of $\mside\vartheta_s$ can be determined by extending the edge to infinity. As such,
\begin{equation}
    \mside(\theta_S' + \vartheta_s) > \pi k_S.
    \label{ncv:eq:sh1:smax}
\end{equation}
Adding Eq. (\ref{ncv:eq:firsthalf}) is added to (\ref{ncv:eq:sh1:schange}). 
The resulting range is intersected with Eq. (\ref{ncv:eq:sh1:smax}) to obtain the \textbf{first-half prior constraint} where,
\begin{equation}
    \pi k_S < \mside(\theta_S' + \vartheta_s) < \pi(k_S+1).
    \label{ncv:eq:sh1:prior}
\end{equation}

Four cases can occur, depending on the convexity of the corner at $\mx$, and whether the next edge causes a change in angular direction when viewed from the source point. 
The cases are described in Table \ref{ncv:tab:sh1}, and the final constraints of $\mside\theta_S$ are shown.

\input{chap_ncv/tab_sh1}

From Table \ref{ncv:tab:sh1}, 
if the subsequent edge does not cause a reversal in angular direction, the source pledge remains in the first-half, and the subsequent edge faces the source point.
If the angular direction reverses, the source pledge moves into the second-half, and the subsequent edge faces away from the source point.

Suppose that the angular direction reverses, and the trace proceeds to the subsequent corner. $\mside\theta_S'$ will now lie in the \textbf{second-half} angular range such that
\begin{equation}
    \pi k_S \le \mside\theta_S' < \pi (k_S+1), \qquad k_S \in \{\cdots,-3,-1,1,3,\cdots\}.
    \label{ncv:eq:sh2}
\end{equation}
Deriving in the same way as the first-half prior constraint, the \textbf{second-half prior constraint} is
\begin{equation}
    \pi k_S \le \mside(\theta_S' + \vartheta_s) < \pi (k_S+1).
    \label{ncv:eq:sh2:prior}
\end{equation}
Listing the same four cases as Table \ref{ncv:tab:sh1}, the final constraints of $\mside\theta_S$ are shown in Table \ref{ncv:tab:sh2}.
\input{chap_ncv/tab_sh2}

The cases in Tables \ref{ncv:tab:sh1} and \ref{ncv:tab:sh2} lead to each other, and no other cases exist.
Like the target pledge, the source-pledge shifts from one angular-half to another if the subsequent edge from a corner causes a reversal in the trace's angular direction.

\subsection{Turning Point Placement}
A turning point can be placed if the source-pledge satisfies Eq. (\ref{ncv:eq:spldg:place}).
For the trace to continue, the source-pledge has to be recalculated with respect to the new point at the current traced position $\mx$.
Eq. (\ref{ncv:eq:spldg:place}) is satisfied only when $k_S$ unwinds from 0 to -1, indicating that the source-pledge winding is not winded more than half a round when the point is placed.
As there is no additional winding, any position along the next edge can be treated like a point of collision from a cast that originates at $\mx$, where the source pledge is $0$ from Eq. (\ref{ncv:eq:spldg:init}).
As such, when a new turning point is placed, the new source pledge is
\begin{equation}
    \theta_{S,\mnew} = 0,
    \label{ncv:eq:spldg:placeadjust}
\end{equation}
where $\theta_{S,\mnew}$ becomes $\theta_S'$ at the subsequent traced edge (Fig. \ref{ncv:fig:splace}).
\input{chap_ncv/fig_splace}

\subsection{Source-pledge Update After Pruning}
The source point at $\mx_S$ can be part of a longer path $(\cdots,\mx_{SS},\mx_S,\mx,\cdots)$, where $\mx$ is the current corner traced.
The source point can be pruned if the path segment $(\mx_{SS}, \mx_S, \mx)$ is not taut, exposing $\mx_{SS}$ as the new source point.
The new source pledge with respect to the new source point at $\mx_{SS}$ is
\begin{equation}
    \theta_{SS} = \theta_S + \hat{\theta}_{ss} - \hat{\theta}_s,
    \label{ncv:eq:spldg:prune}
\end{equation}
where $\theta_S$ is the source-pledge calculated at $\mx$ as if the source point at $\mx_S$ is not pruned. 
\input{chap_ncv/fig_sprune}

From Eq. (\ref{ncv:eq:spldg:place}) and (\ref{ncv:eq:spldg:placeadjust}),
if a turning point can be placed with respect to the new source point such that $\mside\theta_{SS} < 0$, $\theta_{SS}$ is changed to $0$.
At the subsequent corner,
$\theta_{SS}$  becomes $\theta_{S}'$ in Eq. (\ref{ncv:eq:spldg}).
As a prune can only occur when $k_S=0$ and outside of a non-convex extrusion,
the proof from Lemma \ref{ncv:lem:tprune} is applicable to the prune (see Fig. \ref{ncv:fig:sprune}).

\subsection{Source-pledge Angular Discretization}
In an occupancy grid, the source-pledge can be discretized in a similar manner as described in Sec. \ref{ncv:sec:tpldg:discrete} and Eq. (\ref{ncv:eq:tpldg:discrete}).
Discretization eliminates the computationally expensive $\fatantwo$ function from calculations.
Eq. (\ref{ncv:eq:spldg:schange}) and Eq. (\ref{ncv:eq:spldg:echange}) are respectively discretized to
\begin{align}
    \mathrm{z}(\vartheta_s) &= \mathrm{z}(\hat{\theta}_s) - \mathrm{z}(\hat{\theta}_s') \\
    \mathrm{z}(\vartheta_\medge) &= \mathrm{z}(\hat{\theta}_\medge) - \mathrm{z}(\hat{\theta}_\medge').
\end{align}
The initial source-pledge at the collision point $\mx_0$ is
\begin{equation}
    \mathrm{z}(\theta_{S,0}) = \mathrm{z}(\mx_0 - \mx_S) - \mathrm{z}(\hat{\theta}_{\medge,0}),
\end{equation}
where $\mathrm{z}(\mx_0 - \mx_S)$ is the discrete heading of $\mx_0$ from $\mx_S$.
The update equation from Eq. (\ref{ncv:eq:spldg}) is adjusted to
\begin{equation}
    \mathrm{z}(\theta_S) = \mathrm{z}(\theta_S') + \mathrm{z}(\vartheta_s) - \mathrm{z}(\vartheta_\medge),
    \label{ncv:eq:sdis}
\end{equation}
When a prune occurs, Eq. (\ref{ncv:eq:spldg:prune}) is adjusted to
\begin{equation}
    \mathrm{z}(\theta_{SS}) = \mathrm{z}(\theta_S) + \mathrm{z}(\hat{\theta}_{ss}) - \mathrm{z}(\hat{\theta}_s).
    \label{ncv:eq:sdis:prune}
\end{equation}
For a $\mside$-sided trace, a turning point can be placed at $\mx$ if
\begin{equation}
    \mside\mathrm{z}(\theta_S) < 0,
    \label{ncv:eq:sdis:place}
\end{equation} 
and $\mathrm{z}(\theta_S)$ is assigned a zero value.

Once a trace reaches $\mx$, prune checks have to be conducted before a turning point can be placed.
If a prune has occurred, $\mathrm{z}(\theta_{SS})$ from Eq. (\ref{ncv:eq:sdis:prune}) becomes $\mathrm{z}(\theta_S)$ in Eq. (\ref{ncv:eq:sdis:place}).
Otherwise, $\mathrm{z}(\theta_{SS})$ becomes $\mathrm{z}(\theta_S')$ at the subsequent corner for Eq. (\ref{ncv:eq:sdis}).

\clearpage
\section{Source Progression}
\label{ncv:sec:sprog}

The \textbf{source angular deviation}, or \textbf{source deviation}, is the angular deviation of a trace's position from its initial position, when viewed from a source point.
A source point leads to the start point along a path.
A trace will have \textbf{source angular progression}, or \textbf{source progression}, if the source deviation is at the maximum so far.
An algorithm that utilizes the source progression method places turning points at convex corners where there is source progression, and only at the perimeter of the convex hull known so far by the trace of the traced obstacle.

\input{chap_ncv/fig_sprog}

The source progression method is superior to the target-pledge method, as
(i) the method eliminates angular measurements by comparing only directional vectors, and (ii) is less likely to place turning points within the true convex hull of the traced obstacle.
Consider $\mx$ in Fig. \ref{ncv:fig:sprog}. A point can be placed at $\mx$ by the target-pledge method as the source-pledge winding counter $k_S$ is unwinded from $0$ to $-1$, which is within the convex hull of the traced obstacle.
By avoiding any placements when the trace has a smaller angular deviation than the maximum, a placement at $\mx$ can be avoided by the source progression method.
As such, unlike the target-pledge method, the method is guaranteed to avoid placing turning points within the (i) convex hull of the trace, and (ii) the convex hull known so far of the traced obstacle.

\subsection{Source Progression Update Equations}
The source progression method relies on a \textbf{source progression ray} to record the maximum angular deviation. The ray points from the source point, and can be quantified with a vector $\mvprogs$.
When a cast from a source point at $\mx_S$ collides, the source progression ray is initialized to
\begin{equation}
    \mv_{\mprog,S,0} = \mx_0 - \mx_S,
    \label{ncv:eq:sprog:init}
\end{equation}
where $\mx_0$ is the collision point.
Consider an edge traced after a collision and the source deviation has been increasing.
A $\mside$-sided trace lies ahead of the previous progression ray $\mvprogs'$ at its current position $\mx$ if
\begin{equation}
    {isFwdSrc} := \mside(\mvprogs' \times \mv_S) \le 0
    \label{ncv:eq:sprog:isfwdsrc}
\end{equation}
is $\mtrue$.
$\mv_S = \mx - \mx_S$, and the $\times$ operator is the two-dimensional cross product.
If ${isFwdSrc}$ is $\mtrue$, the source deviation at $\mx$ stays the same or is increasing.
If ${isFwdSrc}$ is $\mfalse$, the source deviation at $\mx$ would have decreased. 
Comparing vectors using the cross-produce eliminates angular measurements, especially the computationally expensive $\fatantwo$ function.

Due to the cross-product, ${isFwdSrc}$ breaks down if both vectors' true rotation with respect to each other is more than half a round.
In a highly non-convex obstacle, the source deviation can increase by more than half a round, and subsequently decrease by at most the same amount.
When the source deviation decreases, $\mv_S$ can be rotated by more than half a round with respect to $\mvprogs'$.

To ensure correct comparisons, the \textbf{source progression winding counter}, $w_S$, is introduced.
$w_S$ is initialized to zero.
$w_S$ is changed only if ${isFwdSrc}=\mfalse$. 
When $w_S$ is changed, $\mvprogs'$ is reversed, and $w_S$ is incremented (winded) or decremented (unwinded).
The winding depends on the intersection of the source progression ray with the edge leading to $\mx$.
Let the scalar $i$ indicate the direction of the intersection along the ray. The intersection can be found by solving the vector equation
\begin{equation}
    \mx_S + i \mvprogs' = \mx + i_p(\mvprev)
    \label{ncv:eq:sprog:itxfrom}
\end{equation}
If $i > 0$, the intersection lies in the direction of the ray from $\mx_S$, and if $i < 0$, the intersection lies in the opposite direction.
Since only the sign of $i$, $\fsgn(i)$, is interesting, Eq. (\ref{ncv:eq:sprog:itxfrom}) can be solved to find
\begin{align}
    {windSrc} & := \fsgn(i) > 0 \\
        & := \fsgn(\mv_S \times \mvprev) \fsgn(\mvprev \times \mvprogs') > 0.
    \label{ncv:eq:sprog:windsrc}
\end{align}
$\mvprev$ is the vector pointing from the previous trace position to $\mx$.
If the intersection lies in the direction of the ray, ${windSrc} = \mtrue$ and $w_S$ is incremented.
If the intersection lies in the opposition direction, ${windSrc} = \mfalse$ and $w_S$ is decremented.

There may be cases where $i=0$ for some traces. For example, when the source progression ray begins from the start point, and the start point lies at a corner or on the obstacle edge that is being traced. 
Using the contour assumption in Sec. \ref{line:sec:contour} when $i=0$, $\fsgn(i)$ can be re-evaluated by re-considering the position of the current or previous traced corner.
For example, if the start point lies at the previous traced corner, a coordinate $\mx'$ can be found that adds the previous corner's coordinate to its bisecting vector $\mvcrn$.
As the Chebyshev distance of the bisecting vector is one, and the width of an obstacle in an occupancy grid is non-zero, the previous traced direction can be reconsidered as $\mx-\mx'$. 
By reconsidering the trace direction, $i$ in Eq. (\ref{ncv:eq:sprog:windsrc}) will no longer evaluate to zero.

To summarize, the source progression method first determines if the winding counter needs to be changed, such that
\begin{equation}
    w_S = w_S' + \begin{cases}
         0 & \text{if } {isFwdSrc} \\
         1 & \text{if } \neg {isFwdSrc} \land {windSrc} \\
         -1 & \text{if } \neg {isFwdSrc} \land \neg {windSrc} 
    \end{cases},
    \label{ncv:eq:sprog:wind}
\end{equation}
where $w_S'$ is the value of the winding counter at the previous traced position.
The source progression ray is flipped when $w_S$ changes, or updated to point to $\mx$ from the source point when $w_S$ remains the same, such that
\begin{equation}
    \mvprogs = \begin{cases}
        -\mvprogs' & \text{if } w_S \ne 0 \land w_S \ne w_S' \\
        \mvprogs' & \text{if } w_S \ne 0 \land w_S = w_S' \\
        \mv_S & \text{if } w_S = 0
    \end{cases},
    \label{ncv:eq:sprog:flip}
\end{equation}
The trace has source progression at $\mx$ if $w_S=0$, such that
\begin{equation} 
    {isProgSrc} := (w_S = 0)
    \label{ncv:eq:sprog:isprogsrc}
\end{equation}
evaluates to $\mtrue$. An example is provided in Fig. \ref{ncv:fig:sprog:formulation}
\def\clipper{\clip (0*\ul, 0*\ul) rectangle ++(9*\ul, 9*\ul)}
\tikzset{
    pics/sprogformulation/.style n args={1}{ code={ 
        \clipper;
        \path 
            (5*\ul, 4*\ul) coordinate (xs)
            (2*\ul, 8.5*\ul) coordinate (xt)
            ($(xs)!6/9!(xt)$) coordinate (xcol)
            ;
            
        \draw [obs]
            (3.5*\ul, 9.5*\ul) coordinate (xa0)
            -- ++(-90:2*\ul) coordinate (xa1)
            -- ++(180:3*\ul) coordinate (xa2)
            -- ++(-90:7*\ul) coordinate (xa3)
            -- ++(0:8*\ul) coordinate (xa4)
            -- ++(90:7.5*\ul) coordinate (xa5)
            -- ++(180:2*\ul) coordinate (xa6)
            -- ++(-90:5.5*\ul) coordinate (xa7)
            -- ++(180:4*\ul) coordinate (xa8)
            -- ++(90:3*\ul) coordinate (xa9)
            ;

        \draw [gray] (xs) -- (xcol);
        \draw [dotted] (xcol) -- (xt);
        \path
            ($(xa2)+(-45:{sqrt(2)*\u})$) coordinate (xt1)
            ($(xa3)+(45:{sqrt(2)*\u})$) coordinate (xt2)
            ($(xa4)+(135:{sqrt(2)*\u})$) coordinate (xt3)
            ($(xa5)+(-135:{sqrt(2)*\u})$) coordinate (xt4)
            ($(xa6)+(-45:{sqrt(2)*\u})$) coordinate (xt5)
            ($(xa7)+(-45:{sqrt(2)*\u})$) coordinate (xt6)
            ($(xa8)+(-135:{sqrt(2)*\u})$) coordinate (xt7)
            ($(xa9)+(135:{sqrt(2)*\u})$) coordinate (xt8)
            ($(xa9)+(45:{sqrt(2)*\u})$) coordinate (xt9)
            ($(xa8)+(45:{sqrt(2)*\u})$) coordinate (xt10)
            ($(xa7)+(135:{sqrt(2)*\u})$) coordinate (xt11)
            ($(xa6)+(135:{sqrt(2)*\u})$) coordinate (xt12)
            ;
        \draw [trace]
            (xcol) -- (xt1) -- (xt2) -- (xt3) -- (xt4) -- (xt5) 
            #1
            ;

        \node [black pt={shift={(-4mm, 0mm)}}{center:$\mx_T$}] at (xt) {};
        \node [black pt={shift={(-4mm, 0mm)}}{center:$\mx_S$}] at (xs) {};
        \node [cross pt] at (xcol) {};
    }},
}

\begin{figure}[!ht]
\centering
\subfloat[\label{ncv:fig:sprog:formulation:a} ] {%
    \centering
    \begin{tikzpicture}[]
        \clipper;
        \pic at (0,0) {sprogformulation={
            -- ++(-90:\u)
        }};

        \draw [dotted] (xt5) -- ($(xs)!2!(xt5)$);
        \node (nt5) [black pt={shift={(0mm, 3mm)}}{center:$\mx_{\max}$}] at (xt5) {};
        \draw [rayprog] (xs) -- (nt5)
            node [pos=0.5, sloped, above] {\scriptsize $w_S=0$};
        
    \end{tikzpicture}
} \hfill
\subfloat[\label{ncv:fig:sprog:formulation:b}] {%
    \centering
    \begin{tikzpicture}[]
        \clipper;
        \pic at (0,0) {sprogformulation={
            -- (xt6) -- ++(180:\u)
        }};

        \draw [dotted] (xs) -- ($(xs)!2!(xt5)$);
        \draw [dotted] ($(xt5)!2!(xs)$) -- ($(xt5)!3!(xs)$);
        \node [black pt={shift={(0mm, 3mm)}}{center:$\mx_{\max}$}] at (xt5) {};
        \node [black pt={shift={(-2.5mm, 2.5mm)}}{center:$\mx$}] at (xt6) {};
        \draw [rayprog] (xs) -- ($(xt5)!2!(xs)$)
            node [pos=0.4, sloped, below] {\scriptsize $w_S=1$};

    \end{tikzpicture}
} \hfill
\subfloat[\label{ncv:fig:sprog:formulation:c}] {%
    \centering
    \begin{tikzpicture}[]
        \clipper;
        \pic at (0,0) {sprogformulation={
            -- (xt6) -- (xt7) -- ++(90:\u)
        }};

        \draw [dotted] (xt5) -- ($(xs)!2!(xt5)$);
        \draw [dotted] (xs) -- ($(xt5)!3!(xs)$);
        \node (nt5) [black pt={shift={(0mm, 3mm)}}{center:$\mx_{\max}$}] at (xt5) {};
        \node [black pt={shift={(2.5mm, 2.5mm)}}{center:$\mx$}] at (xt7) {};
        \draw [rayprog] (xs) -- (nt5)
            node [pos=0.5, sloped, above] {\scriptsize $w_S=2$};

    \end{tikzpicture}
} \\
\subfloat[\label{ncv:fig:sprog:formulation:d}] {%
    \centering
    \begin{tikzpicture}[]
        \clipper;
        \pic at (0,0) {sprogformulation={
            -- (xt6) -- (xt7) -- (xt8) -- (xt9) -- (xt10) -- (xt11) -- ++(90:\u)
        }};

        \draw [dotted] (xs) -- ($(xs)!2!(xt5)$);
        \draw [dotted] ($(xt5)!2!(xs)$) -- ($(xt5)!3!(xs)$);
        \node [black pt={shift={(0mm, 3mm)}}{center:$\mx_{\max}$}] at (xt5) {};
        \node [black pt={shift={(2.5mm, -2.5mm)}}{center:$\mx$}] at (xt11) {};
        \draw [rayprog] (xs) -- ($(xt5)!2!(xs)$)
            node [pos=0.5, sloped, below] {\scriptsize $w_S=1$};

    \end{tikzpicture}
} 
\hspace{1cm}
\subfloat[\label{ncv:fig:sprog:formulation:e}] {%
    \centering
    \begin{tikzpicture}[]
        \clipper;
        \pic at (0,0) {sprogformulation={
            -- (xt6) -- (xt7) -- (xt8) -- (xt9) -- (xt10) -- (xt11) -- (xt12) -- ++(0:\u)
        }};

        \draw [dotted] (xt12) -- ($(xs)!2!(xt12)$);
        \draw [dotted] (xs) -- ($(xs)!2!(xt5)$);
        \node (nt12) [black pt={shift={(2.5mm, -2.5mm)}}{center:$\mx$}] at (xt12) {};
        \draw [rayprog] (xs) -- (nt12)
            node [pos=0.6, sloped, above] {\scriptsize $w_S=0$};

    \end{tikzpicture}
}
\caption[Illustration of source progression ray and winding counter.]{
An example illustrating how the source progression ray (double tipped arrow) changes with the winding counter.
(a) A trace reaches the maximum source deviation at $\mx_{\max}$.
(b) At the subsequent corner, the ray flips and the source progression winding counter $w_S$ winds to $1$ from $0$.
(c) Like (b), the trace crosses the ray in the direction of the ray from $\mx_S$, causing $w_S$ to wind to $2$, and the ray to be flipped.
(d) The trace crosses the ray in the opposite direction of the ray, causing $w_S$ to be unwinded to $1$, and the ray to be flipped.
(e) Like (d), $w_S$ is winded to $0$, and the ray to be flipped. As the source deviation has increased, the ray is updated to $\mv_S = \mx - \mx_S$.
}
\label{ncv:fig:sprog:formulation}
\end{figure}
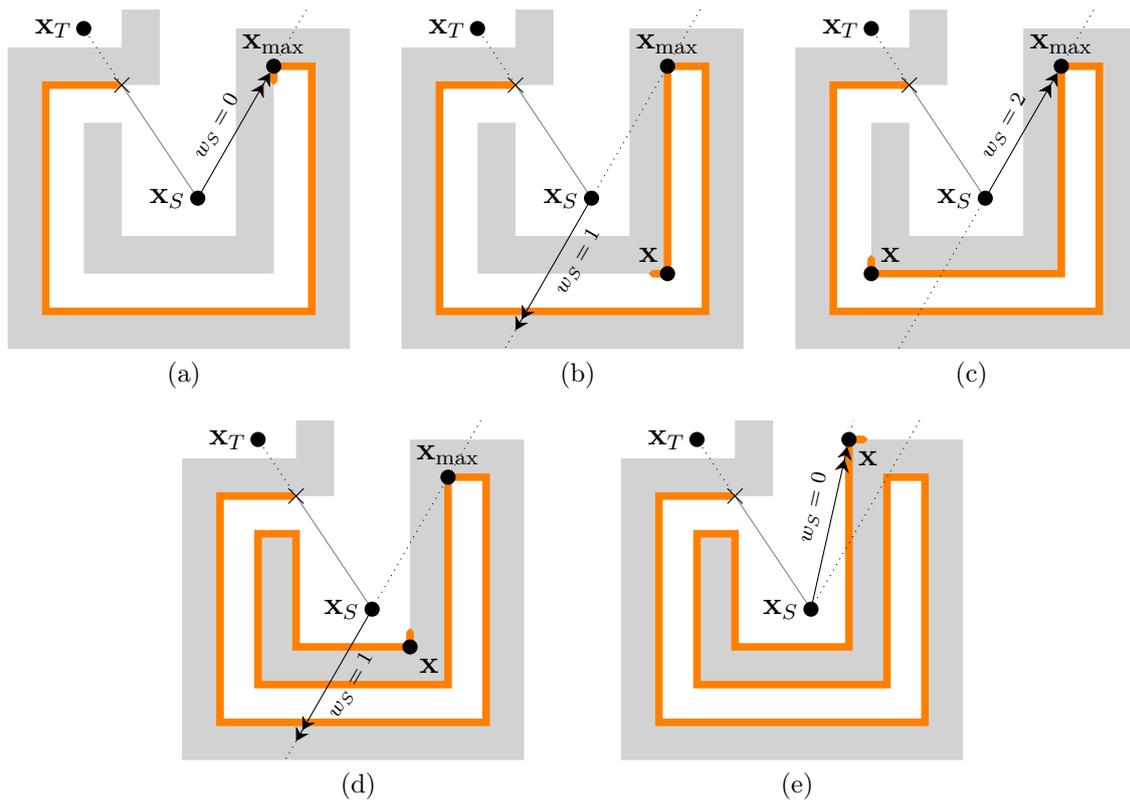

By reversing the source progression ray and changing the source progression winding counter $w_S$ when ${isFwdSrc}=\mtrue$, the cross-product comparison remains valid for any obstacle contour.
Unlike the target-pledge method, the source progression method relies on vector comparisons to bypass expensive angular measurements.

\subsection{Source Progression Update After Pruning}
The source point at $\mx_S$ may be part of a longer path $(\cdots, \mx_{SS}, \mx_S, \mx, \cdots)$, and the source point may be pruned when the path segment $(\mx_{SS}, \mx_S, \mx)$ is not taut. When a prune occurs, $\mx_{SS}$ becomes the new source point.
\input{chap_ncv/fig_sprog_prune}

Reusing ${isTautSrc}$ from Eq. (\ref{ncv:eq:prune:istautsrc}) to check for tautness,
and ${isProgSrc}$ from Eq. (\ref{ncv:eq:sprog:isprogsrc}) to check for source progression,
a source point at $\mx_S$ is prunable if
\begin{equation}
    isPrunableSrc := isProgSrc \land \neg isTautSrc
    \label{ncv:eq:sprog:isprunablesrc}
\end{equation}
evaluates to $\mtrue$.
The source progression check is necessary to avoid undesirable prunes in highly non-convex obstacles, when $\mv_S$ can rotate more than half a round around $\mv_{SS}$ in Eq. (\ref{ncv:eq:prune:istautsrc}), and cause ${isTautSrc}$ to evaluate to $\mfalse$.
${isPrunableSrc}$ can be used multiple times at $\mx$ until the new path segment around the new source point is taut.

When a prune occurs, the source progression ray has to be re-adjusted from $\mx_S$ to $\mx_{SS}$ (see Fig. \ref{ncv:fig:sprog:prune}). 
The source progression ray is updated to
\begin{equation}
    \mvprogs = \mx - \mx_{SS}.
    \label{ncv:eq:sprog:prune}
\end{equation}

\begin{lemma}
    After adjusting for the progression ray at $\mx$,
    suppose that the path segment $(\mx_{SS}, \mx_S, \mx)$ first stops being taut when a trace reaches $\mx$, and there is source progression.
    The source point at $\mx_S$ can be pruned, and
    the source progression ray can be updated with Eq. (\ref{ncv:eq:sprog:prune}).
    \label{ncv:lem:istautsrc}
\end{lemma}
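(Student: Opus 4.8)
The plan is to follow the structure of the proof of Lemma~\ref{ncv:lem:tprune}, adapted to the source progression method. Two things must be shown: (i) the prune flagged by ${isPrunableSrc}$ in Eq.~(\ref{ncv:eq:sprog:isprunablesrc}) is legitimate, i.e. the segment $(\mx_{SS},\mx_S,\mx)$ is genuinely not taut and the cross-product test ${isTautSrc}$ of Eq.~(\ref{ncv:eq:prune:istautsrc}) is being applied in a regime where it is valid; and (ii) after the prune, the vector $\mx-\mx_{SS}$ records the maximum source deviation from the re-exposed source point $\mx_{SS}$, so the update in Eq.~(\ref{ncv:eq:sprog:prune}) is correct. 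Throughout I use that, when the lemma's hypothesis ``after adjusting for the progression ray at $\mx$'' holds together with ${isProgSrc}$, the progression ray with respect to $\mx_S$ is exactly $\mv_S=\mx-\mx_S$ (the $w_S=0$ branch of Eq.~(\ref{ncv:eq:sprog:flip})), so at the instant of the prune only the reference point needs to be swapped.

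For (i), the key observation is that ${isPrunableSrc}$ requires ${isProgSrc}$, i.e. $w_S=0$. As in Lemma~\ref{ncv:lem:tprune}, the source deviation from $\mx_S$ can wind past half a round only while $w_S$ is wound away from $0$, and it can only return to its running maximum after $w_S$ unwinds back to $0$; hence, with $w_S=0$, the trace is outside any non-convex extrusion of the sub-path ending at $\mx_S$, and $\mv_S=\mx-\mx_S$ spans less than $\pi$ radians relative to $\mv_{SS}=\mx_S-\mx_{SS}$ (the source-progression analogue of the ``first-half'' ranges derived for the source-pledge in Tables~\ref{ncv:tab:sh1} and \ref{ncv:tab:sh2}). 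The two-dimensional cross product in Eq.~(\ref{ncv:eq:prune:istautsrc}) is then meaningful and correctly decides whether the turn at $\mx_S$ is taut, so the ${isProgSrc}$ conjunct is precisely the guard that prevents spurious prunes when $\mv_S$ has wrapped more than half a round around $\mv_{SS}$.

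For (ii), I would show that $\mx$ attains the largest source deviation from $\mx_{SS}$ over every corner traced since $\mx_{SS}$ became (or is again) the source point, splitting the history at the corner where $\mx_S$ was placed. Since turning points are placed only where there is source progression, the placement corner of $\mx_S$ realised the running maximum source deviation from $\mx_{SS}$ up to that instant, so all earlier corners have deviation at most that of $\mx_S$ measured from $\mx_{SS}$. From $\mx_S$'s placement onward, $\mx$ is by hypothesis the first corner where $(\mx_{SS},\mx_S,\mx)$ stops being taut, i.e. the first corner at which the trace reaches the ray from $\mx_{SS}$ through $\mx_S$; there the source deviation from $\mx_{SS}$ equals that of $\mx_S$. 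Were some intermediate corner to have strictly larger deviation from $\mx_{SS}$, it would already lie beyond that ray, making $(\mx_{SS},\mx_S,\cdot)$ non-taut there and contradicting that $\mx$ is the first such corner. Combining the two halves, the maximum source deviation from $\mx_{SS}$ is realised at $\mx$, so the new progression ray is $\mvprogs=\mx-\mx_{SS}$, which is Eq.~(\ref{ncv:eq:sprog:prune}); iterating the lemma then handles a chain of prunes $\mx_S,\mx_{SS},\dots$ as in Fig.~\ref{ncv:fig:sprog:prune}.

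The main obstacle is the winding bookkeeping across the change of reference point: the running maximum of the source deviation from $\mx_{SS}$, and the validity of every half-round-bounded cross-product comparison, must be argued to stay well-defined when the deviation measured from $\mx_{SS}$ may itself have wound more than half a round inside an intervening non-convex extrusion. As in Lemma~\ref{ncv:lem:tprune}, the resolution is to establish that a prune can occur only while the relevant winding counter is zero and the trace is outside any non-convex extrusion; making ``outside any non-convex extrusion'' precise and uniform for an arbitrary non-self-intersecting contour is the step that needs the most care.
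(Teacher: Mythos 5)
Your part (i) is essentially the paper's first step: both arguments invoke Lemma~\ref{ncv:lem:tprune} to conclude that the segment can first stop being taut only where the deviation is increasing (so ${isProgSrc}$ holds, the trace is outside any non-convex extrusion, and the half-round-bounded cross product in Eq.~(\ref{ncv:eq:prune:istautsrc}) is applied in its valid regime). For part (ii), however, you take a genuinely different and heavier route than the paper. The paper's argument is purely local: it introduces $\mx_\mitx$, the intersection of the edge leading to $\mx$ with the line through $\mx_{SS}$ and $\mx_S$, notes that by colinearity a hypothetical cast from $\mx_{SS}$ colliding at $\mx_\mitx$ would have a progression ray coincident with the actual ray from $\mx_S$ there, and then observes that along the single straight edge from $\mx_\mitx$ to $\mx$ the deviation increases, so Eq.~(\ref{ncv:eq:sprog:flip}) delivers $\mvprogs=\mx-\mx_{SS}$ for the reference-swapped trace. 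You instead try to prove the stronger global statement that $\mx$ attains the maximum source deviation from $\mx_{SS}$ over all corners visited since $\mx_S$ was placed, splitting the history at $\mx_S$'s placement corner.

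The soft spot is precisely that global half. The source deviation is defined per trace and is reset at each collision point (Eq.~(\ref{ncv:eq:sprog:init})), and in the generic setting of the lemma (e.g.\ Fig.~\ref{ncv:fig:sprog:prune}) $\mx_S$ is the casting source of the \emph{current} trace and was placed by a \emph{different}, earlier trace; so ``the running maximum source deviation from $\mx_{SS}$ up to the placement of $\mx_S$'' is not a quantity the formalism tracks, and your claim that all earlier corners have deviation at most that of $\mx_S$ conflates deviations measured across distinct traces. Fortunately none of this is needed: correctness of the update only requires that the current trace's state at $\mx$ (winding counter and ray) agree with what a trace referenced to $\mx_{SS}$ would carry, and your own observation that the deviations from $\mx_S$ and $\mx_{SS}$ coincide at the first crossing of the ray through $\mx_S$ is exactly the paper's colinearity step; restricting your first-crossing/contradiction argument to the single straight edge leading to $\mx$ recovers the paper's proof and drops the ill-defined history. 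With that restriction your argument is sound.
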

\begin{proof}
Consider a prune that occurs at the initial edge.
From Lemma \ref{ncv:lem:tprune}, the path segment will first stop being taut along an edge where there is increasing angular deviation. 
Suppose that the path was not pruned before in the trace, $\mx$ has to be the first corner where the path segment stops being taut.
In such a case,
${isProgSrc}$ from Eq. (\ref{ncv:eq:sprog:isprogsrc}) will evaluate to $\mtrue$, and ${isTautSrc}$ from Eq. (\ref{ncv:eq:prune:istautsrc}) will evaluate to $\mfalse$.

Let $\mx_\mitx$ be the point of intersection between the edge leading to $\mx$, and the line colinear to $\mx_{SS}$ and $\mx_S$. 
As the edge is straight, the angular deviation has to be increasing at $\mx_\mitx$, implying that $w_S=0$ at $\mx_\mitx$.
This would have the same effect as casts from $\mx_S$ and $\mx_{SS}$ colliding at $\mx_\mitx$. The source progression rays from $\mx_S$ and $\mx_{SS}$ for both casts would be coincident.
Consider the trace with $\mx_{SS}$ as the source point. When the trace reaches $\mx$, the source progression ray would have been updated to Eq. (\ref{ncv:eq:sprog:prune}) by Eq. (\ref{ncv:eq:sprog:flip}). 
As $\mx_{SS}$ is the source point, 
Eq. (\ref{ncv:eq:sprog:prune}) is treated as $\mx - \mx_{SS}$ for this trace.
\end{proof}

\subsection{Turning Point Placement}
The source progression method places a turning point at a convex corner where there is source progression, and where the subsequent traced edge would cause the source deviation to decrease.
A turning point can only be placed after all source prunes at $\mx$ are processed. 
Compared to the target-pledge method, the source progression method places turning points at C2 corners where the angular deviation is at the maximum.
Provided that $w_S=0$, the subsequent edge from $\mx$ reverses source progression for a $\mside$-sided trace if
\begin{equation}
    {isRevSrc} := \mside(\mvnext \times \mv_S) < 0
    \label{ncv:eq:sprog:isrevsrc}
\end{equation}
evaluates to $\mtrue$.
$\mvnext$ is the vector pointing from $\mx$ to the next corner.
To get the convexity of the corner at $\mx$, let
\begin{equation}
    {isConvex} := \begin{cases}
        \mtrue & \text{if corner at $\mx$ is convex} \\
        \mfalse & \text{otherwise}
    \end{cases}.
    \label{ncv:eq:sprog:isconvex}
\end{equation}
A turning point can be placed at $\mx$ if
\begin{equation}
    {isPlaceableSrc} := {isProgSrc} \land {isConvex} \land {isRevSrc}
    \label {ncv:eq:sprog:isplaceablesrc}
\end{equation}
evaluates to $\mtrue$.
\input{chap_ncv/fig_sprog_place}

The turning point becomes the new source point of the trace, and $\mvprogs$ is updated to point to the next corner (see Fig. \ref{ncv:fig:sprog:place}), such that
\begin{equation}
    \mvprogs = \mvnext.
    \label{ncv:eq:sprog:place}
\end{equation}
Eq. (\ref{ncv:eq:sprog:place}) is correct as a subsequent turning point that lies at the perimeter of a true convex hull of the traced obstacle can be found.

\section{Target Progression and Phantom Points}
\label{ncv:sec:tprog}
The \textbf{target angular deviation}, or \textbf{target deviation}, is the angular deviation of a trace's position from its initial position, when viewed from a target point.
A target point leads to the goal point along a path.
A trace will have \textbf{target angular progression}, or \textbf{target progression}, if the target deviation is at the maximum so far (see Fig. \ref{ncv:fig:tprog}).
An algorithm that utilizes the target progression method places phantom points at non-convex corners where the trace has target progression, and only at the perimeter of the convex hull known so far by the trace of the traced obstacle.
A \textbf{phantom point} is an imaginary, future turning point that becomes the new target point of the trace when placed.
\input{chap_ncv/fig_tprog}

Like the source progression method, the target progression method is superior to its pledge algorithm counterpart as angular measurements are made by comparing directional vectors.
Additionally, casts are less likely to occur within the true convex hulls of obstacles, and the casts are more likely to be guided out of the convex hulls due to the phantom points.

Unlike the target-pledge method, the target progression method places a phantom point instead of casting to the target point.
The cast is managed by an external method instead.

\subsection{Phantom Points as Imaginary Future Turning Points}
A phantom point is an imaginary, future turning point that guides searches around the convex hull of a non-convex obstacle.
The smallest convex hull of an obstacle can be inferred by a trace by placing phantom points and turning points at the perimeter of an obstacle.
By assuming that the traced contour is part of a zero-width obstacle, a non-convex corner encountered by a trace is a convex corner on the other side of the contour.
The non-convex corner would be a vertex of the smallest possible convex hull known so far of the obstacle (the \textbf{best-hull}), and a phantom point is placed at the non-convex corner to mark the largest extent of the best-hull (see Fig. \ref{ncv:fig:tprog:phantom}).
As a phantom point lies in the obstacle, a phantom point is pruned before it can be reached by the trace that placed it. 
\input{chap_ncv/fig_tprog_phantom}

The phantom point is placed only if a path has to pass through the corner to reach the target point under the zero-width assumption.
When viewed from a source point or target point along a path, a trace's angular direction across any turning point would result reverse.
As such, a phantom point is placed only if the target deviation is at a maximum, and if the target deviation would decrease over the subsequent edge.

When an algorithm uses both the source progression and target progression method,
the best-hull of a trace is formed by the phantom points and turning points that are placed by a trace.
The best-hull provides monotonically increasing path cost estimates as a trace progresses along a traced contour, and prevents severe underestimates of path costs in vector-based planners with delayed line-of-sight checks (see Sec. \ref{ncv:sec:besthull}).

\subsection{Target Progression Update Equations}
The target progression method is adapted from the source progression method, and relies on a \textbf{target progression ray} to record the maximum angular deviation with respect to the target point at $\mx_T$. 
Let the previous target progression ray point from the target point with the directional vector $\mvprogt'$.
When a cast from a source point to the target point collides at $\mx_0$, the target progression ray is initialized to
\begin{equation}
    \mv_{\mprog,T,0} = \mx_0 - \mx_T.
    \label{ncv:eq:tprog:init}
\end{equation}
Let
\begin{equation}
    {isFwdTgt} := \mside(\mv_T \times \mvprogt') \le 0,
    \label{ncv:eq:tprog:isfwdtgt}
\end{equation}
where 
$\mv_T = \mx - \mx_T$, and the $\times$ operator is the two-dimensional cross product.
Let $w_T$ be the \textbf{target progression winding counter} to ensure that the cross product remains valid when the angular deviation decreases by more than half a round, and
$w_T$ is initialized to zero.
Let
\begin{align}
    {windTgt} & := \fsgn(\mv_T \times \mvprev) \fsgn(\mvprev \times \mvprogt') > 0.
    \label{ncv:eq:tprog:windtgt}
\end{align},
where $\mvprev$ is the vector pointing from the previous trace position to $\mx$. 
By considering the contour assumption like the source progression method, the intersection of the ray with the previous traced edge will lie away from the target point, and the $\fsgn$ functions in Eq. (\ref{ncv:eq:tprog:windtgt}) will not evaluate to zero.

The target progression method first determines if the winding counter needs to be changed, such that
\begin{equation}
    w_T = w_T' + \begin{cases}
         0 & \text{if } {isFwdTgt} \\
         1 & \text{if } \neg {isFwdTgt} \land {windTgt} \\
         -1 & \text{if } \neg {isFwdTgt} \land \neg {windTgt} 
    \end{cases},
    \label{ncv:eq:tprog:wind}
\end{equation}
where $w_T'$ is the value of the winding counter at the previous traced position.
The target progression ray is updated according to any change in $w_T$, such that
\begin{equation}
    \mvprogt = \begin{cases}
        -\mvprogt' & \text{if } w_T \ne 0 \land w_T \ne w_T' \\
        \mvprogt' & \text{if } w_T \ne 0 \land w_T = w_T' \\
        \mv_T & \text{if } w_T = 0
    \end{cases},
    \label{ncv:eq:tprog:flip}
\end{equation}
The trace has target progression at $\mx$ if $w_T=0$, such that
\begin{equation} 
    {isProgTgt} := (w_T = 0)
    \label{ncv:eq:tprog:isprogtgt}
\end{equation}
evaluates to $\mtrue$.

\subsection{Target Progression Update After Pruning}
The target progression method prunes target points in a similar way as the source progression method.
The target point at $\mx_T$ may be part of a longer path $(\cdots, \mx_{TT}, \mx_T, \mx, \cdots)$, and the target point may be pruned when the path segment $(\mx_{TT}, \mx_T, \mx)$ is not taut.
As a phantom point mimics a turning point, the pruned target point can be a phantom point or a turning point.
\input{chap_ncv/fig_tprog_prune}

Reusing ${isTautTgt}$ from Eq. (\ref{ncv:eq:prune:istauttgt}) to check for tautness,
and ${isProgTgt}$ from Eq. (\ref{ncv:eq:tprog:isprogtgt}) to check for target progression,
a target point at $\mx_T$ is prunable if
\begin{equation}
    isPrunableTgt := isProgTgt \land \neg isTautTgt
    \label{ncv:eq:sprog:isprunabletgt}
\end{equation}
evaluates to $\mtrue$.
After the prune has occurred, the target progression ray changes direction to 
\begin{equation}
    \mvprogt = \mx - \mx_{TT},
    \label{ncv:eq:tprog:prune}
\end{equation}
and points from $\mx_{TT}$ (see Fig. \ref{ncv:fig:tprog:prune}).

\subsection{Phantom Point Placement}
The target progression method places a phantom point at a non-convex corner where there is target progression, and where the subsequent traced edge would cause the target deviation to decrease.
A phantom point can only be placed after all target prunes at $\mx$ are processed, and a $\mside$-sided trace places a $\mside$-sided phantom point, which becomes the new target point of the trace.
Provided that $w_T=0$, the subsequent edge from $\mx$ reverses target progression for a $\mside$-sided trace if
\begin{equation}
    {isRevTgt} := \mside(\mv_T \times \mvnext) < 0
    \label{ncv:eq:tprog:isrevtgt}
\end{equation}
evaluates to $\mtrue$, and where $\mvnext$ is the vector pointing from $\mx$ to the next corner.
Using the definition of ${isConvex}$ from Eq. (\ref{ncv:eq:sprog:isconvex}), a turning point can be placed at $\mx$ if
\begin{equation}
    {isPlaceableTgt} := {isProgTgt} \land \neg{isConvex} \land {isRevTgt}
    \label {ncv:eq:tprog:isplaceabletgt}
\end{equation}
evaluates to $\mtrue$.
\def\clipper{\clip (0*\ul, -0.5*\ul) rectangle ++(10*\ul, 7*\ul)}
\tikzset{
    pics/tprogplace/.style n args={1}{ code={ 
        \clipper;
        \path
            (7*\ul, 6*\ul) coordinate (xt)
            (8*\ul, 0*\ul) coordinate (xs)
            ($(xs)!4/6!(xt)$) coordinate (xcol)
            ;
        
        \draw [obs]
            (10.5*\ul, 4.5*\ul) coordinate (xa0)
            -- ++(180:9*\ul) coordinate (xa1)
            -- ++(-90:4*\ul) coordinate (xa2)
            -- ++(0:4*\ul) coordinate (xa3)
            ;

        \path
            ($(xa1) + (-45:{sqrt(2)*\u})$) coordinate (xt1)
            ($(xa2) + (45:{sqrt(2)*\u})$) coordinate (xt2)
            ;

        \draw [trace]
            (xcol)  -- (xt1) #1;

        \node [cross pt] at (xcol) {};
        \node (ns) [black pt={shift={(4mm, 0)}}{center:$\mx_{S}$}] at (xs) {};
        \node (nt) [black pt={shift={(4mm, 0)}}{center:$\mx_{T}$}] at (xt) {};
    
    }},
}
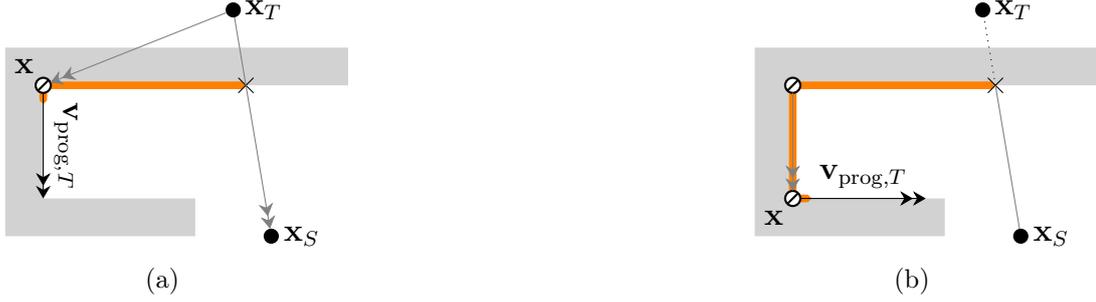
\begin{figure}[!ht]
\centering
\subfloat[\label{ncv:fig:tprog:place:a}] {%
    \begin{tikzpicture}[]
        \clipper;
        \pic at (0,0) {tprogplace={
            -- ++(-90:\u)
        }};
    
        \node (nt1) [tm pt={shift={(-2.5mm, 2.5mm)}}{center:$\mx$}{}] at (xt1) {};
        
        \draw [rayprog, gray] (nt) -- (nt1);
        \draw [rayprog, gray] (nt) -- (ns);
        \draw [rayprog] (nt1) -- ++(-90:3*\ul)
            node [pos=0.5, sloped, above] {$\mvprogt$};
    \end{tikzpicture}

} \hfill
\subfloat[\label{ncv:fig:tprog:place:b}] {%
    \begin{tikzpicture}[]
        \clipper;
        \pic at (0,0) {tprogplace={
            -- (xt2) -- ++(0:\u)
        }};

        \draw [dotted] (xcol) -- (nt);
        \draw [gray] (ns) -- (xcol);
        \node (nt1) [tm pt={}{}{}] at (xt1) {};
        \node (nt2) [tm pt={shift={(-2.5mm, -2.5mm)}}{center:$\mx$}{}] at (xt2) {};
    
        \draw [rayprog, gray] (nt1) -- (nt2);
        \draw [rayprog] (nt2) -- ++(0:3.5*\ul)
            node [pos=0.5, sloped, above] {$\mvprogt$};
    \end{tikzpicture}
}
\caption[Placing a phantom point in the best hull.]{
Before a corner is checked for phantom point placement, the target progression is checked, and the target point is checked for pruning.
If a phantom point is placeable, a $\mside$-sided phantom point will be placed by a $\mside$-sided trace. The phantom point becomes the new target point, and 
the target progression ray is updated to point in the next trace direction from the new point.
}
\label{ncv:fig:tprog:place}
\end{figure}

The phantom point becomes the new target point of the trace, and $\mvprogt$ is updated to point to the next corner (see Fig. \ref{ncv:fig:tprog:place}), such that
\begin{equation}
    \mvprogt = \mvnext.
    \label{ncv:eq:tprog:place}
\end{equation}

\subsection{Casting from a Trace}
The trace leaves the contour and casts to a target point at $\mx_T$ when the target point becomes castable.
As a taut path has to go around a convex corner, the trace can only leave the contour at a convex corner. 
The target point is potentially visible at the convex corner at $\mx$ if it does not point into the obstacle at the convex corner.
Let 
\begin{equation}
    {isVis} := \mside(\mv_T \times \mvnext)
    \label{ncv:eq:tprog:isvis}
\end{equation}
find the potential visibility of a point by considering the subsequent edge of a convex corner. $\mvnext$ is the directional vector of the $\mside$-sided trace along the subsequent edge, and $\mv_T = \mx - \mx_T$.

To ensure that a cast does not point into the best-hull, the trace at the convex corner has to have target progression. The necessary condition for casting is thus
\begin{equation}
    {isCastable} := {isConvex} \land {isProgTgt} \land {isVis},
    \label{ncv:eq:tprog:iscastable}
\end{equation}
where ${isConvex}$ is from Eq. (\ref{ncv:eq:sprog:isconvex}), and ${isProgTgt}$ is from Eq. (\ref{ncv:eq:tprog:isprogtgt}).

Due to the readjustment of the target progression ray when phantom points are placed, 
if a cast occurs for the first time Eq. (\ref{ncv:eq:tprog:iscastable}) is satisfied for all traces, a trace will always have target progression.
As such, ${isProgTgt}$ is no longer required in Eq. (\ref{ncv:eq:tprog:iscastable}). 
However, if the reader chooses to design a vector-based algorithm that avoids placing phantom points, or continue tracing once Eq. (\ref{ncv:eq:tprog:iscastable}) is satisfied, ${isProgTgt}$ becomes necessary. For example, the reader may choose to avoid a cast once a phantom point placed by the same trace becomes castable.

By continuing to trace from a castable convex corner if the target point is a phantom point placed by the same trace, the number of collided casts and subsequent searches can be reduced.
While it may seem beneficial, the subsequent interactions with the path planning algorithm has to be considered.
A trace that continues instead of casting may have to navigate an extremely long contour of a highly non-convex obstacle, and phantom points that lie on a different best-hull as the casting trace has to be identified, which can complicate and slow the algorithm.
A phantom point that lies on a different best-hull can appear as a target point of a trace if a prior trace is interrupted, such as in \rtwo{} and \rtwop{}.
An interruption is necessary to avoid lengthy traces around highly non-convex obstacles, and to generate recursive traces to ensure calculations involving the two-dimensional cross product are valid.

\section{Best-Hulls and Monotonically Increasing Costs}
\label{ncv:sec:besthull}
Combining the source progression method, target progression method, and pruning, the smallest convex hull known of a traced obstacle can be inferred by a trace.
The smallest convex hull is termed as the \textbf{best-hull}.
The best-hull expands in size as a trace progresses, and is formed by turning points and phantom points placed by the trace (see Fig. \ref{ncv:fig:besthull}).
\def\clipper{\clip (-0.25*\ul, -0.5*\ul) rectangle ++(5.5*\ul, 7*\ul)}
\tikzset{
    pics/besthull/.style n args={1}{ code={ 
        \clipper;
        \path
            (0, 3*\ul) coordinate (xs)
            (5*\ul, 0) coordinate (xt)
            ($(xs)!1/3!(xt)$) coordinate (xcol)
            ;
        \draw [obs]
            (1.5*\ul, -0.5*\ul) coordinate (xa0)
                -- ++(90:2*\ul) coordinate (xa1)
                -- ++(0:2.5*\ul) coordinate (xa2)
                -- ++(90:3*\ul) coordinate (xa3)
                -- ++(180:2.5*\ul) coordinate (xa4)
                -- ++(-90:1*\ul) coordinate (xa5)
                (xa4)
                -- ++(90:1*\ul) coordinate (xa6)
            ;
        \path
            ($(xa2) + (135:{sqrt(2)*\u})$) coordinate (xt1)
            ($(xa3) + (-135:{sqrt(2)*\u})$) coordinate (xt2)
            ($(xa4) + (-45:{sqrt(2)*\u})$) coordinate (xt3)
            ($(xa5) + (-45:{sqrt(2)*\u})$) coordinate (xt4)
            ($(xa5) + (-135:{sqrt(2)*\u})$) coordinate (xt5)
            ($(xa6) + (135:{sqrt(2)*\u})$) coordinate (xt6)
            ($(xa6) + (45:{sqrt(2)*\u})$) coordinate (xt7)
            ;

        \draw[trace]
            (xcol) -- (xt1)
            #1
            ; 

        \draw [gray] (xs) -- (xcol);
        \draw [dotted] (xcol) -- (xt);
        \node [cross pt] at (xcol) {};
        
        \node (ns) [black pt={shift={(1mm, -3.5mm)}}{center:$\mx_S$}] at (xs) {};
        \node (nt) [black pt={shift={(-3.5mm, -1mm)}}{center:$\mx_T$}] at (xt) {};
    }},
}
\begin{figure}[!ht]
\centering
\subfloat[\label{ncv:fig:besthull:a} ]
{%
    \centering
    \begin{tikzpicture}[]
    \clipper;
    \pic at (0, 0) {besthull={
         -- ++(90:\u)
    }};
    
    \end{tikzpicture}
}\hfill
\subfloat[\label{ncv:fig:besthull:b}]
{%
    \centering
    \begin{tikzpicture}[]
    \clipper;
    
    \pic at (0, 0) {besthull={
         -- (xt2) -- ++(180:\u)
    }};

    \node (nt2) [tm pt={}{}{}] at (xt2) {};
    \draw [dashed] (ns) -- (nt2) -- (nt);
    
    \end{tikzpicture}
}\hfill
\subfloat[\label{ncv:fig:besthull:c} ]
{%
    \centering
    \begin{tikzpicture}[]
    \clipper;
    \pic at (0, 0) {besthull={
         -- (xt2) -- (xt3) -- ++(-90:\u)
    }};

    \node (nt2) [tm pt={}{}{}] at (xt2) {};
    \node (nt3) [tm pt={}{}{}] at (xt3) {};
    \draw [dashed] (ns) -- (nt3) -- (nt2) -- (nt);
    
    \end{tikzpicture}
    
}\hfill
\subfloat[\label{ncv:fig:besthull:d} ]
{%
    \centering
    \begin{tikzpicture}[]
    \clipper;
    \pic at (0, 0) {besthull={
         -- (xt2) -- (xt3) -- (xt4) -- (xt5) -- (xt6) -- ++(0:\u)
    }};

    \node (nt2) [tm pt={}{}{}] at (xt2) {};
    \node (nt6) [black pt] at (xt6) {};
    \draw [dashed] (ns) -- (nt6) -- (nt2) -- (nt);
    
    \end{tikzpicture}
}\hfill
\subfloat[\label{ncv:fig:besthull:e} ]
{%
    \centering
    \begin{tikzpicture}[]
    \clipper;
    \pic at (0, 0) {besthull={
         -- (xt2) -- (xt3) -- (xt4) -- (xt5) -- (xt6) -- (xt7) -- ++(-90:\u)
    }};
    
    \node (nt2) [tm pt={}{}{}] at (xt2) {};
    \node (nt6) [black pt] at (xt6) {};
    \node (nt7) [black pt] at (xt7) {};
    \draw [dashed] (ns) -- (nt6) -- (nt7) -- (nt2) -- (nt);
    
    \end{tikzpicture}
}

\caption[Best hulls allow path cost estimates to increase monotonically.]{
Turning points and phantom points form the smallest convex hull (best hull) that a trace knows so far. 
The dashed line represents the path, which has a cost estimate that increases monotonically as the trace progresses.
}
\label{ncv:fig:besthull}
\end{figure}
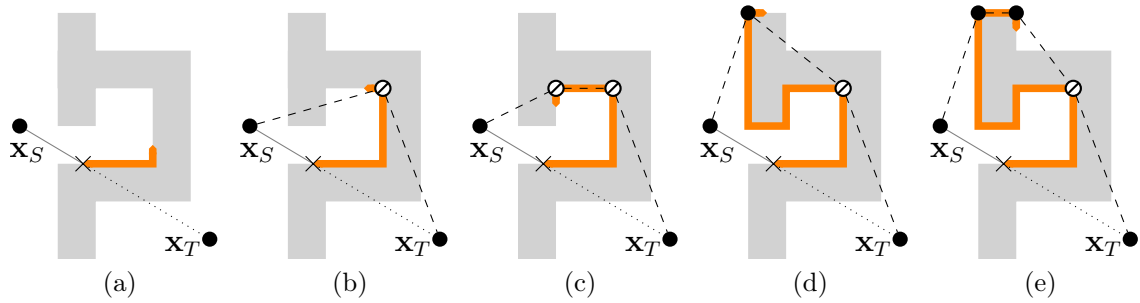

\def\clipper{\clip (0*\ul, 0*\ul) rectangle ++(6*\ul, 8*\ul)}
\tikzset{
    pics/besthull2/.style n args={1}{ code={ 
        \clipper;
        \path
            (2*\ul, 2*\ul) coordinate (xs)
            (2*\ul, 4.5*\ul) coordinate (xt)
            ($(xs)!2/5!(xt)$) coordinate (xcol)
            ;
        \draw [obs]
            (0.5*\ul, 5.5*\ul) coordinate (xa0)
                -- ++(0:3*\ul) coordinate (xa1)
                -- ++(-90:2*\ul) coordinate (xa2)
                -- ++(180:3*\ul) coordinate (xa3)
                -- ++(-90:3*\ul) coordinate (xa4)
                -- ++(0:5*\ul) coordinate (xa5)
                -- ++(90:7*\ul) coordinate (xa6)
                -- ++(180:5*\ul) coordinate (xa7)
            ;
        \path
            ($(xa3) + (-45:{sqrt(2)*\u})$) coordinate (xt1)
            ($(xa4) + (45:{sqrt(2)*\u})$) coordinate (xt2)
            ($(xa5) + (135:{sqrt(2)*\u})$) coordinate (xt3)
            ($(xa6) + (-135:{sqrt(2)*\u})$) coordinate (xt4)
            ;

        \draw[trace]
            (xcol) -- (xt1)
            #1
            ; 

        \draw [gray] (xs) -- (xcol);
        \draw [dotted] (xcol) -- (xt);
        \node [cross pt] at (xcol) {};
        
        \node (ns) [black pt={shift={(2mm, -2.7mm)}}{center:$\mx_S$}] at (xs) {};
        \node (nt) [black pt={shift={(-4mm, 0mm)}}{center:$\mx_T$}] at (xt) {};
    }},
}
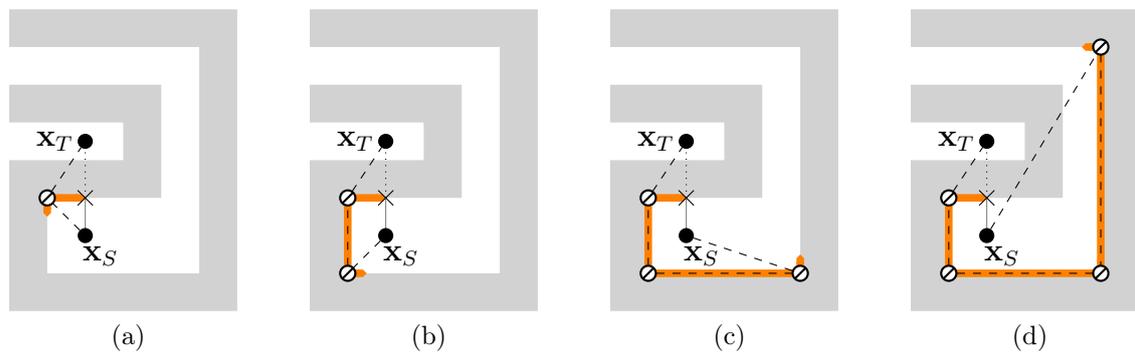
\begin{figure}[!ht]
\centering
\subfloat[\label{ncv:fig:besthull2:a} ]
{%
    \centering
    \begin{tikzpicture}[]
    \clipper;
    \pic at (0, 0) {besthull2={
         -- ++(-90:\u)
    }};

    \node (nt1) [tm pt={}{}{}] at (xt1) {};
    \draw [dashed] (ns) -- (nt1) -- (nt);
    
    \end{tikzpicture}
}\hfill
\subfloat[\label{ncv:fig:besthull2:b}]
{%
    \centering
    \begin{tikzpicture}[]
    \clipper;
    
    \pic at (0, 0) {besthull2={
         -- (xt2) -- ++(0:\u)
    }};

    \node (nt1) [tm pt={}{}{}] at (xt1) {};
    \node (nt2) [tm pt={}{}{}] at (xt2) {};
    \draw [dashed] (ns) -- (nt2) -- (nt1) -- (nt);
    
    \end{tikzpicture}
}\hfill
\subfloat[\label{ncv:fig:besthull2:c} ]
{%
    \centering
    \begin{tikzpicture}[]
    \clipper;
    \pic at (0, 0) {besthull2={
         -- (xt2) -- (xt3) -- ++(90:\u)
    }};

    \node (nt1) [tm pt={}{}{}] at (xt1) {};
    \node (nt2) [tm pt={}{}{}] at (xt2) {};
    \node (nt3) [tm pt={}{}{}] at (xt3) {};
    \draw [dashed] (ns) -- (nt3) -- (nt2) -- (nt1) -- (nt);
    
    \end{tikzpicture}
    
}\hfill
\subfloat[\label{ncv:fig:besthull2:d} ]
{%
    \centering
    \begin{tikzpicture}[]
    \clipper;
    \pic at (0, 0) {besthull2={
         -- (xt2) -- (xt3) -- (xt4) -- ++(180:\u)
    }};

    \node (nt1) [tm pt={}{}{}] at (xt1) {};
    \node (nt2) [tm pt={}{}{}] at (xt2) {};
    \node (nt3) [tm pt={}{}{}] at (xt3) {};
    \node (nt4) [tm pt={}{}{}] at (xt4) {};
    \draw [dashed] (ns) -- (nt4) -- (nt3) -- (nt2) -- (nt1) -- (nt);
    
    \end{tikzpicture}
}

\caption[Best hulls prevent path costs from being severely underestimated.]{
In non-convex contours, traces may not be able to place turning points.
If the cost-to-go is estimated like the A* algorithm, which is the distance between the target point at $\mx_T$ and the current trace position, the total path cost will be severely underestimated.
To improve cost-to-go and total path cost estimates, phantom points are placed at non-convex corners. 
The best hull enlarges as a result, allowing the total path cost to increase monotonically as the trace progresses.
}
\label{ncv:fig:besthull2}
\end{figure}
For a vector-based algorithm that delays \gls{los} checks, cost-to-come can only be estimated admissibly by assuming line-of-sight between the placed turning points.
It is not possible to place turning points on some traced contours, and the cost-to-come has to be estimated from the straight line between the current position to a distant source point, severely underestimating the cost-to-come (see Fig. \ref{ncv:fig:besthull2}).

By placing phantom points and forming the best-hull, total cost estimates are improved by enabling more reliable cost-to-go estimates.
Phantom points are imaginary future turning points that guide traces and casts around an obstacle.
Like how placing a turning point improves cost-to-come estimates by deviating the path around a traced obstacle, a phantom point improves cost-to-go estimates.

The best-hull enlarges as a trace progresses around an obstacle, allowing the total path cost estimate to increase monotonically for a vector-based algorithm that delays line-of-sight checks.

\begin{theorem}
    Let $\mathbb{X} = (\mx_1, \mx_2, \cdots, \mx_m)$ represent a sequence of corners reached by the trace where there is source progression and target progression.
    Let the path be $(\mxstart, \cdots, \mx_i, \cdots, \mxgoal)$ at each $\mx_i$ where $1 \le i \le m$. 
    The path includes all the taut nodes (turning points and phantom points) placed by the source progression method and target progression method before reaching $\mx_i$, and does not include all nodes that were pruned by the methods.
    The total cost $f_i$ of the path increases monotonically such that $f_{i-1} \le f_i$ for all  $2 \le i \le m$.
    \label{ncv:thm:fcost}
\end{theorem}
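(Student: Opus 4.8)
The plan is to show that the path cost $f_i$ at each corner $\mx_i$ decomposes into a cost-to-come component (governed by the source progression method) and a cost-to-go component (governed by the target progression method), and that each component is individually non-decreasing as the trace moves from $\mx_{i-1}$ to $\mx_i$. Writing $f_i = g_i + h_i$, where $g_i$ is the length of the taut polyline from $\mxstart$ through the source-side turning points to $\mx_i$, and $h_i$ is the length of the taut polyline from $\mx_i$ through the target-side phantom/turning points to $\mxgoal$, it suffices to prove $g_{i-1} \le g_i$ and $h_{i-1} \le h_i$ separately. The proof is then symmetric in the two halves, so I would prove the source-side claim carefully and remark that the target-side follows by the mirror argument (exactly the symmetry already exploited between the source and target progression sections).

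The core of the source-side argument is geometric. Between consecutive source-progression corners $\mx_{i-1}$ and $\mx_i$, the trace may (a) simply advance while the source deviation increases, (b) place a new turning point when ${isPlaceableSrc}$ fires, or (c) prune one or more source points when ${isPrunableSrc}$ fires. In case (a), $g$ is the length of a fixed polyline plus a final segment from the last source node to the current position; since there is source progression, the current position lies on or ahead of the source progression ray, and I would argue that moving the endpoint of that final segment in a direction that does not decrease the angular deviation about the source node cannot decrease the polyline length — this is essentially the statement that the foot of the polyline is "unrolling" monotonically, which is the defining property the source progression winding counter $w_S$ was built to guarantee (via Lemma \ref{ncv:lem:istautsrc} and Eq. (\ref{ncv:eq:sprog:flip})). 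In case (b), placing a turning point at a convex corner where ${isRevSrc}$ holds: by the triangle inequality the new bent polyline through the placed point is at least as long as the straight segment it replaces, and Eq. (\ref{ncv:eq:sprog:place}) resets the ray so monotonicity continues. In case (c), a prune removes a node, but by Lemma \ref{ncv:lem:istautsrc} a prune only happens when the removed node was no longer taut, i.e. the three consecutive nodes were "concave" toward the path; replacing two segments through a non-taut vertex by the single segment across it — while the current trace position is still ahead in deviation — does not decrease $g$, again by the unrolling property together with the colinearity argument at $\mx_\mitx$ used in Lemma \ref{ncv:lem:istautsrc}.

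The main obstacle I anticipate is handling the winding correctly when the trace walks more than half a round around a source (or target) node inside a highly non-convex extrusion. The naive "foot unrolls monotonically" picture is only literally true for the cumulative angle, not for the raw cross-product comparisons, and the prune/placement operations interact with the winding counter. So the crucial lemma to state and lean on is that at any corner with source progression ($w_S = 0$), the situation is geometrically equivalent to a fresh collision from the current source node with zero initial source-pledge/deviation — this is exactly what Lemmas \ref{ncv:lem:tprune} and \ref{ncv:lem:istautsrc} establish, and it lets me reduce the monotonicity argument between two progression corners to the elementary convex case, avoiding any reasoning about intermediate corners where $w_S \ne 0$. Once that reduction is in place, the remaining steps are the triangle inequality and the unrolling estimate, both routine.

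Concretely, the proof would proceed: (1) fix $i$ and split into the three cases above for both the source and target sides; (2) invoke the "$w=0$ equivalence lemma" to treat each side as a locally convex unrolling; (3) apply the triangle inequality for placements and the unrolling monotonicity for advances and prunes to get $g_{i-1}\le g_i$ and $h_{i-1}\le h_i$; (4) sum to conclude $f_{i-1} = g_{i-1} + h_{i-1} \le g_i + h_i = f_i$, and induct on $i$ from $2$ to $m$. I expect step (2) — making the equivalence rigorous for the target side, where phantom points are *inside* the obstacle and get pruned before being reached — to need the most care, since one must confirm that the best-hull vertices genuinely form a convex chain about the current target point whenever $w_T = 0$.
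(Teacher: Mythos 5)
There is a genuine gap, and it sits at the very first step of your plan. You decompose $f_i = g_i + h_i$ and claim it suffices to show that the cost-to-come $g_i$ and the cost-to-go $h_i$ are each individually non-decreasing along the trace. That separated claim is false: between placements and prunes the only varying term of $g_i$ is $\lVert \mx_i - \mxs \rVert$, and source progression (increasing \emph{angular} deviation about $\mxs$) does not imply increasing \emph{distance} to $\mxs$. The paper's own illustration of the forward--forward case exhibits exactly this: with $\mxs$ below and to the left of $\mx_i$ and the next edge pointing up and to the left, the point moves angularly forward about $\mxs$ while its Euclidean distance to $\mxs$ strictly decreases, so $g$ decreases along that edge even though $f$ increases. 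Your ``unrolling'' estimate for the final segment is precisely the step that fails. The theorem is only true for the sum: an edge that progresses with respect to \emph{both} the source and target nodes increases $\lVert \mx - \mxs\rVert + \lVert \mx - \mxt\rVert$ (the point exits the ellipse with foci $\mxs,\mxt$) even though either term alone may shrink. This is why the paper's proof classifies each edge by its joint progression status (f-f, f-r, r-f, r-r) and argues about the total cost on f-f edges, using the placement rule to re-point the progression rays so that every cost-relevant edge between consecutive corners of $\mathbb{X}$ becomes f-f with respect to the then-current source and target nodes; your decomposition discards exactly the coupling that makes the statement true.

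Your instinct to reduce everything to the $w=0$ case via Lemmas \ref{ncv:lem:tprune} and \ref{ncv:lem:istautsrc} is sound and matches the paper's treatment of prunes and of the non-convex extrusions (the paper's Cases 1.4 and 2.1--2.3 use the same colinear intersection point $\mx_j$ on the previous edge). But to repair the argument you would need to replace steps (1)--(3) with a joint analysis: show that at every corner where either deviation would reverse, a turning point or phantom point is placed so that the subsequent edge is forward with respect to both the (possibly new) source and target nodes, and then prove the single inequality $\mathbf{e}\cdot(\hat{\mv}_S + \hat{\mv}_T) \ge 0$ for an f-f edge direction $\mathbf{e}$, rather than the two separate inequalities you propose.
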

\begin{proof}
    Consider the subsequent edge traced at corner $\mx_i$. 
    If the trace at  subsequent corner progresses for both the source and target nodes, it is a forward-forward (f-f) edge. 
    If only the source node progresses, it is forward-reverse (f-r). 
    If only the target node progresses, it is a reverse-forward (r-f), and if none progresses, it is reverse-reverse (r-r).

    Fig. \ref{ncv:fig:fcost1} and \ref{ncv:fig:fcost2} illustrate the cases for this theorem.
    \textbf{Case 1.1} examines a sequence of consecutive f-f edges. 
    Assuming no pruning occurs, the corner following each f-f edge will result in a larger path cost than the path cost at the previous corner.
    
\input{chap_ncv/fig_fcost1}

    \textbf{Case 1.2} and \textbf{Case 1.3} respectively examines an f-r and r-f edge that follows an f-f edge.
    A non-convex corner can occur at $\mx_i$ if the subsequent edge is not r-f, and a convex corner can occur at $\mx_i$ if the subsequent edge is not f-r. 
    Phantom points are placed at $\mx_i$ if the subsequent edge is an f-r, and turning points are placed at $\mx_i$ if the subsequent edge is an r-f.
    When a node is placed, the progression ray points in the direction of the trace along the subsequent edge, causing the subsequent f-r and r-f edge to become f-f. 
    Since the edge before $\mx_i$ is f-f, from Case 1.1, the  path at $\mx_i$ has a higher cost than the path at $\mx_{i-1}$.
    As the subsequent edge is an f-f edge, the total path cost increases at the subsequent corner, and subsequent next corner is evaluated as Cases 1.1, 1.2, 1.3 and 1.4.

    \textbf{Case 1.4} examines an r-r edge following an f-f edge.
    If the subsequent edge is r-r and $\mx_i$ is non-convex, a phantom point is placed, converting the subsequent edge to r-f. 
    If $\mx_i$ is convex, a turning point is placed, converting the edge to an f-r. The convex case can be ignored as the target is castable and the trace stops.
    For the non-convex case, the source progression ray stops updating at $\mx_i$.

    From Lemma \ref{ncv:lem:tprune}, the trace has began tracing a non-convex extrusion.
    By examining the sequence of edges, the only way the trace crosses the source progression ray is when the ray reaches $\mx_{i+1}$ and the previous edge is f-f.
    The source progression ray does not determine the placement of phantom points, and phantom points can be generated on the non-convex extrusion, within the best-hull.
    All phantom points created after $\mx_i$ on the non-convex extrusion are pruned before the trace reaches $\mx_{i+1}$.
    The source progression ray points to the phantom point, and let the intersection of the ray with the f-f edge be at $\mx_j$. 
    As the phantom point at $\mx_i$ is the target point, $f_j = f_i$.
    Since f-f edges increase the total path cost, $f_{i+1} > f_{j} \implies f_{i+1} > f_i$.

    Cases where the previous edge is not f-f occur within the best-hull and on a non-convex extrusion. The cases can be ignored as the source progression ray does not change and no cost calculations occur.

\input{chap_ncv/fig_fcost2}
    Cases 2.1, 2.2 and 2.3  show that the path cost estimate increases when nodes are pruned.
    In \textbf{Case 2.1}, the source point at $\mx_S$ is pruned when the trace reaches $\mx_i$, exposing a new source point at $\mx_{SS}$.
    As the trace has progressed at $\mx_i$, the previous edge is f-r or f-f. 
    When pruning occurs, the trace crosses $\mv_{SS}=\mx_S - \mx_{SS}$ at the previous edge.
    Let the intersection of the previous edge and $\mv_{SS}$ be $\mx_{j}$, and the source node is pruned because $\mx_j$, $\mx_S$ and $\mx_{SS}$ are colinear. 
    The edge between $\mx_j$ and $\mx_i$ is f-f, and $f_i > f_j$.
    
    When multiple ancestor nodes are pruned while reaching $\mx_i$, the path cost estimate increases. 
    As the source node is placed at a point that is progressed with respect to its ancestor, the trace at $\mx_i$ will  progress with respect to the ancestor. 
    The edge before $\mx_i$ is an f-f edge for the ancestor, and the path cost increases. 
    
    The analysis in Case 2.1 can be applied to target nodes in \textbf{Case 2.2}. 
    The difference lies in the previous edge being r-f or f-f. 
    r-f edges occur when the trace is tracing a non-convex extrusion.
    Since the trace does not progress at a non-convex extrusion, no corners are added to $\mathbb{X}$, and cases involving r-f edges can be ignored.

    Consider case \textbf{Case 2.3} where, by reaching $\mx_i$, both source and target nodes are pruned. The same analyses from Case 2.1 and Case 2.2 can be applied to show that the path cost increases.

    By considering all possible cases, the path-cost is shown to increase monotonically in an algorithm that uses the source and  target progression method.
\end{proof}

\section{Conclusion}
To enhance the speed of path finding, vector-based searches that delay line-of-sight checks do not verify \gls{los} between turning points immediately, 
and can become trapped in non-convex obstacles without appropriate search strategies.
The section introduces several novel methods and concepts for such planners to navigate non-convex obstacles. 
Novel concepts include the phantom point, which is an imaginary future turning point, and the best-hull, which is the smallest convex hull that can be inferred of a traced obstacle.

The novel methods include the target-pledge and source-pledge methods, and the source progression and target progression methods.
The \textbf{target-pledge} method is first described in Ray Path Finder \cite{bib:rpf}, and developed in this thesis to include a proof of completeness and update equations when pruning. 
The \textbf{source-pledge} method is a novel method to place turning points using an angular counter to reduce the number of points placed in the convex hulls of obstacles.
As the source-pledge and target-pledge methods rely on expensive angular measurements for any polygonal obstacle, and occupancy grids contain only rectangular obstacles, angles are discretized for the algorithms in occupancy grids to improve the speed of calculations.
The \textbf{source progression} method compares against a ray that records the maximum angular deviation of a trace with respect to a source point, allowing an algorithm to be more effective than the source-pledge algorithm at placing turning points away from the convex hull of obstacles.
The \textbf{target progression} method records the maximum angular deviation of a trace with respect to a target point. 
The method places phantom points, which are imaginary future turning points, at non-convex corners to guide searches around non-convex obstacles.
By combining the source progression and target progression methods, the best-hull of a traced obstacle can be obtained, and path cost estimates can increase monotonically despite delayed \gls{los} checks.

    \setcounter{algorithm}{0}
\renewcommand{\thealgorithm}{\thechapter.\arabic{algorithm}}

\chapter{\rtwo{}: a Novel Vector-Based Any-angle Algorithm with Delayed Line-of-sight Checks}
\label{chap:r2}

`R' in two-dimensions (\textbf{\rtwo{}}), is a novel vector-based path planner that delays \gls{los} checks to expand the most promising turning points. 
The promising turning points are those that deviate the least from the straight line between the start and goal points.
\rtwo{} builds upon the best-hull from Sec. \ref{ncv:sec:besthull}, which combines the source progression method from Sec. \ref{ncv:sec:sprog} and target progression method from Sec. \ref{ncv:sec:tprog}.
The methods will be combined and expanded upon in the subsequent sections.
\rtwo{} borrows the concept of \textit{angular sector} from \rsp{} and \rs{} to prevent repeated searches.

Like A*, the `R' in \rtwo{} is simply an alphabet and is inspired the word `Ray', as it relies heavily on rays in its calculations. 
As the vector-based concept can be extended to three or more dimensions in future works, `R' is appended with the number `2' to reflect the two-dimensional aspect of the current algorithm.

\section{Overview of \rtwo{}} 
\label{r2:sec:overview}
\input{chap_r2/fig_overview}

\rtwo{} has two query phases -- \textbf{casting} and \textbf{tracing}. The casting phase attempts to test line-of-sight between two nodes and the tracing phase searches along the obstacle contours to find nodes. A \textbf{query} is an intermediate search that is in either phase, which is polled from or queued into the open-list. 

A node is in the \textbf{source} direction if it leads to the start node, and is in the \textbf{target} direction if it leads to the goal node. A node can be a turning point that is placed at a convex corner or a phantom point that is placed at a non-convex corner. 
A query’s source node stores cumulative cost and visibility information to the start node, and to the goal node if it is a target node. 
A node has \textbf{cumulative visibility} to another node if all pairs of nodes lying between both nodes have line-of-sight. 

The node tree branches depending on the cumulative visibility and node direction, behaving like a sparse and optimistic visibility graph where the edges may not have line-of-sight. 
A query's source node has one source node and multiple target nodes, while a target node can have multiple source nodes and multiple target nodes.
A target node that has cumulative visibility to the goal node has one target node. 

During the casting phase, a ray is cast from a source node to a target node, checking the line-of-sight between both nodes. If there is line-of-sight, \rtwo{} queues casting queries depending on the cumulative visibility of the target node to the goal node. A casting query between the source node and its source is queued if the the target node has cumulative visibility, otherwise a casting query is queued from the target node to its target.

If the ray collides and there is no line-of-sight between the source and target nodes, a tracing query towards the left ($L$-trace) and another to the right ($R$-trace) of the collision point are generated. An additional \textbf{third} trace from the source node will be generated if the target node is the goal node.

Six rules are observed during tracing – the progression, pruning, placement, overlap, angular-sector and occupied-sector rules. The rules allow a query to identify and update the best-hull, and infer an admissible path cost estimate from the hull.
As the tracing query proceeds along an obstacle’s contour, the path deviates from the collided ray and the best-hull increases in size, allowing the estimated path cost to increase monotonically. 

The \textbf{progression rule} monitors the angle the path has deviated from a node after collision. The \textbf{pruning rule} prunes nodes that are not taut. The \textbf{placement rule} places nodes at suitable corners. If the query overlaps with another query and multiple nodes are placed at the same location, the \textbf{overlap rule} interrupts the trace and checks line-of-sight to verify cost-to-come. The \textbf{angular-sector rule} discards repeated traces and generates a recursive trace to allow \rtwo{} to be complete. The \textbf{occupied-sector rule} generates a recursive trace from a source node if the current trace can only be reached by the recursive trace. Detailed explanations of the rules are given in the subsequent subsections.

When a placed turning point is potentially visible to a target node, a casting query is queued. If a number of turning points are created and the target node is not potentially visible, the trace is interrupted and queued.
A simple run of \rtwo{} is shown in Fig. \ref{r2:fig:overview}.

\subsection{Progression Rule and Winding}
\label{r2:sec:prog}
The progression rule of \rtwo{} combines the source progression method (see Sec. \ref{ncv:sec:sprog}) and the target progression method (see Sec. \ref{ncv:sec:tprog}).
The rule ensures that trace operations occur only when there is source progression or target progression, barring placement and pruning when the trace has no progression.

\input{chap_r2/fig_trace}

Let $\msidetrace \in \{L,R\}$ be the side of the trace, where $R=1$ (right trace) and $L=-1$ (left trace). 
Let $\mx$ be the current corner expanded by the trace, $\mx_\mtdir$ be the location of a source or target node, and $\mv_\mtdir = \mx - \mx_\mtdir$. 
Let $\mtdir \in \{S,T\}$ where $S=-1$ (source direction) and $T=1$ (target direction).
Fig. \ref{r2:fig:trace} illustrates the contour information.
Let $w_\mtdir$ be the winding counter is used to monitor the number of progression ray flips.
Let $\mtrace$ encapsulate all the information described above.

The progression ray $\mv_{\mprog,\mtdir}$ at the collision point is initialized to $\mx - \mtdir$,
and the winding counter $w_\mtdir$ is initialized to zero.
Let
\begin{equation}
    {isFwd}(\mtrace, \mtdir) := \msidetrace \mtdir (\mv_\mtdir \times \mv_{\mprog,\mtdir}') \le 0,
\end{equation}
and
\begin{equation}
    {wind}(\mtrace, \mtdir) := \fsgn(\mv_\mtdir \times \mvprev) \fsgn(\mvprev \times \mv_{\mprog,\mtdir}') > 0,
\end{equation}
where $\mvprev$ is a directional vector indicating the direction of the trace immediately before reaching $\mx$.

The progression rule first determines if the winding counter needs to be changed, such that
\begin{equation}
    w_\mtdir = w_\mtdir' + \begin{cases}
         0 & \text{if } {isFwd}(\mtrace, \mtdir) \\
         1 & \text{if } \neg {isFwd}(\mtrace, \mtdir) \land {wind}(\mtrace, \mtdir) \\
         -1 & \text{if } \neg {isFwd}(\mtrace, \mtdir) \land \neg {wind}(\mtrace, \mtdir) 
    \end{cases},
    \label{r2:eq:wind}
\end{equation}
where $w_\mtdir'$ is the value of the winding counter at the previous traced position.
The progression ray is flipped when $w_\mtdir$ changes, or is updated to point to $\mx$ from the source or target node at $\mx_\mtdir$ when $w_\mtdir$ remains the same:
\begin{equation}
    \mv_{\mprog,\mtdir} = \begin{cases}
        -\mv_{\mprog,\mtdir}' & \text{if } w_\mtdir \ne 0 \land w_\mtdir \ne w_\mtdir' \\
        \mv_{\mprog,\mtdir}' & \text{if } w_\mtdir \ne 0 \land w_\mtdir = w_\mtdir' \\
        \mv_{\mprog,\mtdir} & \text{if } w_\mtdir = 0
    \end{cases},
    \label{r2:eq:flip}
\end{equation}
and $\mv_{\mprog,\mtdir}'$ is the progression ray at the previous traced position.
The trace has progression at $\mx$ if $w_\mtdir=0$, and
\begin{equation} 
    {isProg}(\mtrace, \mtdir) := (w_\mtdir = 0)
    \label{r2:eq:isprog}
\end{equation}

When a prune occurs (see Sec. \ref{r2:sec:prune}), the node at $\mx_\mtdir$ is pruned, exposing a node at $\mx_{\mtdir\mtdir}$.
The progression ray has to be re-adjusted after pruning to 
\begin{equation}
    \mv_{\mprog,\mtdir} = \mx - \mx_{\mtdir\mtdir}.
\end{equation}

When a point is placed (see Sec. \ref{r2:sec:place}), the progression ray is updated to 
\begin{equation}
    \mv_{\mprog,\mtdir} = \mvnext,
\end{equation}
where $\mvnext$ is the directional vector of the subsequent trace direction.

\subsection{Pruning Rule}
\label{r2:sec:prune}
\input{chap_r2/fig_prune}
The pruning rule ensures that paths formed by nodes are taut and the estimated costs are admissible.
The rule is adapted from Sec. (\ref{ncv:sec:placeandprune}).
The rule checks the path segment $(\mx,\mx_\mtdir,\mx_{\mtdir\mtdir})$ for tautness, where $\mx_{\mtdir\mtdir}$ is the position of the source node's source node, or the position of the target node's target node. 
Let $\mv_{\mtdir\mtdir} = \mx_\mtdir - \mx_{\mtdir\mtdir}$ and $\mside_\mtdir$ be the side of the source or target node. 
$\mside_\mtdir$ is identical to the side of the trace $\msidetrace$ that placed the node. 

The path segment $(\mx, \mx_\mtdir, \mx_{\mtdir\mtdir})$ is taut if
\begin{equation}
    {isTaut}(\mtrace, \mtdir) := 
    \begin{cases}
        \mv_\mtdir \cdot \mv_{\mtdir\mtdir} \ge 0  & \text{if } \mv_{\mtdir\mtdir} \times \mv_\mtdir = 0 \\
        \mside_\mtdir \mtdir (\mv_{\mtdir\mtdir} \times \mv_\mtdir) < 0  & \text{otherwise} 
    \end{cases},
\end{equation}
where $\cdot$ denotes the dot product. 
The source or target node at $\mx_\mtdir$ can be pruned if there is progression:
\begin{equation}
    {isPrunable}(\mtrace, \mtdir) := isProg(\mtrace, \mtdir) \land \neg isTaut(\mtrace, \mtdir).
\end{equation}
The dot product prevents pruning and incorrect cost reductions if $\mv_\mtdir$ and $\mv_{\mtdir\mtdir}$ point in opposite directions.
If $\mv_\mtdir$ and $\mv_{\mtdir\mtdir}$ point in the same direction, pruning occurs as the cost estimate is unchanged.

\subsection{Placement Rule} 
\label{r2:sec:place}
\input{chap_r2/fig_placement} 
The rule places phantom points or turning points on corners where the angular progression of the trace reverses. 
If the angular progression reverses at a convex corner when viewed from a source node, a turning point is placed. 
If the angular progression reverses at a non-convex corner when viewed from a target node, a phantom point is placed.

Let $\mvnext$ be the direction of the trace along the subsequent edge from $\mx$. The angular progression reverses on the subsequent corner if
\begin{equation}
    {isRev}(\mtrace, \mtdir) := \msidetrace \mtdir (\mv_\mtdir \times \mvnext) < 0.
\end{equation}
Let ${isCrn}$ represent the convexity requirement for placing a turning point or phantom point,
\begin{equation}
    {isCrn}(\mtrace, \mtdir) := (\mtdir=1 \land \neg {isConvex}) \lor (\mtdir=-1 \land {isConvex}), 
\end{equation}
where ${isConvex}$ checks the convexity of the corner at $\mx$ (see Eq. (\ref{ncv:eq:sprog:isconvex})).

The placement rule is
\begin{equation}
    {isPlaceable}(\mtrace, \mtdir) := {isProg}(\mtrace, \mtdir) \land {isCrn}(\mtrace, \mtdir) \land {isRev}(\mtrace, \mtdir)
    \label{r2:eq:isplaceable}.
\end{equation}

\subsection{Casting from a Trace}
A trace  leaves the contour at $\mx$ as a cast when the target node at $\mx_T$ is potentially visible, and when a turning point is placed at $\mx$.
A target node is potentially visible when
\begin{equation}
    {isVis}(\mtrace) := \msidetrace(\mv_T \times \mvnext)
\end{equation}
evaluates to $\mtrue$.
The condition for casting is
\begin{equation}
    {isCastable}(\mtrace) := {isPlaceable}(\mtrace, S) \land {isProg}(\mtrace, T) \land isVis(\mtrace).
    \label{r2:eq:iscastable}
\end{equation}
When ${isCastable}$ evaluates to $\mtrue$, a casting query is queued between the current node at $\mx$ and the target node.
Note that casting requires that source progression and target progression at $\mx$ with respect to the previous source node at $\mx_S$ and current target node at $\mx_T$.

As multiple target nodes may be examined during a tracing query, 
the castable node is queued and discarded from the current trace, and the trace continues for the other non-castable target nodes.

\subsection{Occupied-sector rule} \label{r2:sec:edge}

\input{chap_r2/fig_sectors}
\input{chap_r2/fig_occsecrule}

Let the \textbf{occupied sector} of a corner at $\mxr$ be the sector bounded by the obstacle edges adjacent to the corner, but not including the edges. A point $\mx$ lies in the obstacle sector if $\mx-\mxr$ points into the obstacle. Fig. \ref{r2:fig:sectors} illustrates the occupied and angular sectors.

A recursive \textbf{occupied-sector trace} occurs if $\mvs$ points into the occupied sector at the source node. 
Occupied-sector traces allow the pruning rule for source nodes to remain valid by ensuring that $\mvs$ is not greater than a $180^\circ$ with respect to $\mvss$.
The occupied-sector trace begins from the source node, continuing in the same direction as the trace that found the source node. 
The trace stops when it can cast to the current position of the calling trace (see Fig. \ref{r2:fig:occsecrule}).

The occupied-sector rule cannot be implemented for target nodes due to phantom points. 
As phantom points do not form part of any path, occupied-sector traces from phantom points can generate wrong turning points, causing the algorithm to be incomplete.
While the rule can be implemented for target nodes that are convex turning points, it cannot be generalized to all target nodes and the pruning rule would continue to be invalid for phantom points.
To address the problem, ad hoc points are introduced.

\subsection{Ad hoc Points as Temporary Target Turning Points}
\input{chap_r2/fig_adhocbc}
\input{chap_r2/fig_adhoca}

ad hoc points $\mnode_{\mathrm{ad},b}$ and $\mnode_{\mathrm{ad},c}$ allow the pruning rule to remain valid for target nodes by re-pointing $\mvtt$ and $\mvt$ so that the angle between them is less than $180^\circ$.
The ad hoc points are placed once the trace begins to travel more than a half-circle around a target node, where $\mvt$ becomes larger than $180^\circ$ from $\mvtt$. 
If the path has to detour at $\mx$ to reach the target node's target  at $\mxtt$ from the target node at $\mxt$, $\mnode_{\mathrm{ad},b}$ is placed at $\mx$ (e.g. Fig. \ref{r2:fig:adhocb}).
If the path has to detour $\mxtt$ to reach $\mxt$, $\mnode_{\mathrm{ad},c}$ is placed at $\mxtt$. $\mnode_{\mathrm{ad},c}$ becomes the new target node, while the old target node becomes the target of $\mnode_{\mathrm{ad},c}$ (e.g. Fig. \ref{r2:fig:adhocc}).
As $\mvtt$ is reoriented for both ad hoc points to no more than $180^\circ$, the pruning rule remains valid for target nodes.
A query that reaches $\mnode_{\mathrm{ad},b}$ or $\mnode_{\mathrm{ad},c}$ can be rejected as its path may have intersected or looped with itself.

A secondary function of ad hoc points is to ensure angular progression in a third-trace. The third-trace does not begin from the collision point, and will not progress with respect to the target node. Placing an ad hoc point $\mnode_{\mathrm{ad},a}$ allows a loop around the source node's obstacle to be identified, and the angular progression to progress with respect to the target nodes (see Fig. \ref{r2:fig:adhoca}).

While ad hoc points solve two problems, two additional problems arise.
The first problem involves \textbf{chases} where trace queries loop around obstacles trying to reach $\mnode_{\mathrm{ad}, b}$ and $\mnode_{\mathrm{ad},c}$. The chases occur when ad hoc points are placed on the same contour as the traces but on opposite sides of the obstacle.
Due to the chases, \rtwo{} terminates only if traces can be interrupted and if a path exist between the start and goal points. If no path exists, \rtwo{} can become interminable.

The second problem occurs when a trace finds a path that is not taut after entering a target node's occupied sector and exiting from the other side.
A $-\mside_T$ trace may enter the occupied sector of a target node of side $\mside_T$ while tracing on the same contour as the node. 
When the trace exits the obstacle sector, the target pruning rule is satisfied but the path may not be taut.
To reject the non-taut paths, a tautness check is implemented during casts. The searches can be rejected as a $\mside_T$ trace would have been generated that finds a taut path to the target.
For this problem, an occupied-sector trace is not a viable solution as  chases can be generated.

Regardless of the problems, \rtwo{} is correct and optimal if a path exists.
The problems with the ad hoc points are addressed in \rtwop{} (see Chapter \ref{r2p:sec:tgtocsec}).

\subsection{Angular-sector Rule} \label{r2:sec:sector}
\input{chap_r2/fig_sector}

\textbf{Angular sectors} allow \rtwo{} to terminate and run faster by rejecting repeated searches. 
The angular sectors are adapted from \rs{} and \rsp{} and are bounded on at least one side by \textbf{sector-rays}.
When a trace exits the angular sector of the source node, the \textbf{sector rule} determines the actions taken by the trace.

A sector-ray represents a ray that was cast from a source node at $\mx_s$ to a target node at $\mx_t$. The sector-ray  $\mray$ is
\begin{equation}
    \mray = ( \mrtype,\mx_s,\mx_t,\mxcol )
\end{equation}
where, $\mrtype$ indicates the visibility between the nodes.
If the ray can reach $\mxt$ from $\mxs$, the ray can be \textbf{projected} from $\mxt$ in the direction $\mvray = \mxt - \mxs$ \cite{bib:rayscanp}. 
$\mxcol$ is the collision point when the ray collides with an obstacle.

A turning point node may contain an angular sector bounded by a left sector-ray  $\mray_L$, a right sector-ray $\mray_R$, or both.
If the ray on one side does not exist, the sector is unbounded on this side. 

Rays are assigned to a node $\mnode$ after every cast with the function \textsc{MergeRay}$(\mside, \mnode, \mray)$. 
If replacing the $\mside$-side sector-ray of the node with $\mray$ causes the angular sector to shrink, \textsc{MergeRay} replaces the $\mside$-side sector-ray with $\mray$. By shrinking the angular sector, repeated searches can be terminated.

The sector-ray assignment depends on the line-of-sight between the source and target node and the cumulative visibility of the source node to the start node. 
If the cast reaches the target node and the source node has cumulative visibility to the start node, the ray that is cast becomes a sector-ray for both nodes. For an $\mside$-sided target node, the cast ray is merged to the target node's $\mside$-side sector-ray, and to the source node's $-\mside$-side sector-ray.
If the cast collides, the source node is duplicated for the $L$ and $R$-traces, and the cast ray becomes the $R$-side sector-ray for the $L$-trace's source node, and $L$-side sector-ray for the $R$-trace's source node.

During an $\mside$-trace, the $\mside$-side sector-ray of the source node is examined.
When a trace crosses a sector-ray, the sector rule determines whether the trace can continue or a \textbf{recursive angular-sector trace} is called. 
If the sector-ray is able to reach the trace, no recursive call is made (e.g. Fig. \ref{r2:fig:sectorc} and \ref{r2:fig:sectord}). 
If the sector-ray is unable to reach the trace (e.g. Fig. \ref{r2:fig:sectora} and \ref{r2:fig:sectorb}), a recursive angular-sector trace is called from the sector-ray's collision point. 
The angular-sector trace traces in the opposite direction (with side $-\msidetrace$) to the calling trace (has side $\msidetrace$) from the ray's collision point, and attempts to reach the calling trace.

If the trace crosses a sector-ray that ends at the source node, the source node is pruned and the trace continues (Fig. \ref{r2:fig:sectorb} and \ref{r2:fig:sectord}). 
The trace becomes a separate trace from any recursive angular-sector trace (Fig. \ref{r2:fig:sectorb}) as it can be visible from an earlier source node.
If the trace crosses a sector-ray that begins from the source node, the calling trace is terminated as the trace is repeated (Fig. \ref{r2:fig:sectora} and \ref{r2:fig:sectorc}). Any recursive angular-sector trace (Fig. \ref{r2:fig:sectora}) will attempt to reach nodes on the terminated trace and continue it.

The angular sector for the start node is a full circle \cite{bib:rayscan}.
As the cross-product is used to compare against the rays and is valid up to $180^\circ$, the angular sector for the start node is split into two $180^\circ$ angular sectors in \rtwo{}.

\subsection{Overlap Rule and Discarding Expensive Nodes} \label{r2:sec:enc}

Delaying line-of-sight checks enables \rtwo{} to return queries rapidly if the shortest path has few turning points. As the cumulative visibility of a source node to the start node cannot be determined immediately, queries that discover the same turning points cannot be discarded, causing \rtwo{} to be exponential with respect to the number of casts.

To improve average search times, the overlap rule verifies the cumulative visibility of the source nodes once a tracing query places a turning point at a corner where turning points from other queries already exist. 
The tracing query is interrupted by the rule, and for all turning points at the corner with no cumulative visibility to the start node, the rule searches along their respective paths toward the start node. For each path, a source node $\mnode_{S,m}$ that has cumulative visibility to the start node is identified. 
Before reaching $\mnode_{S,m}$, a search will have to reach its target node $\mnode_{S,m-1}$ first. 
As the node tree may branch to multiple target nodes from $\mnode_{S,m-1}$ and any of its target nodes, queued queries examining the target nodes in these branches are discarded to avoid data races. A casting query is finally queued from $\mnode_{S,m}$ and $\mnode_{S,m-1}$ to verify cumulative visibility.

The verification is extended beyond the overlap rule to casting queries.
When the casting query reaches a target node and the source node has cumulative visibility to the start node, the cost-to-come is tested at the target node. If the target node has a more expensive cost-to-come than the minimum recorded at the corner so far, it is marked as an \textbf{expensive} node. 
By ensuring cumulative visibility to the start node, the cost-to-come can be verified, and  queries can be discarded to improve average search times.

Casting queries are discarded if a ray from an expensive source node reaches the target node.
If the target node has a side $-\mside$ that is opposite to the expensive source node's side $\mside$, the query can be discarded as the source node can no longer be pruned by future queries. 
If the sides are the same, casting query(s) are queued normally from the target node.

Tracing queries are discarded if a ray from an expensive source node collides. For a $\mside$-sided expensive source node, only the $\mside$-sided trace will be generated, as the source node can never be pruned from a future query resulting from a $-\mside$-sided trace or a third trace. 
Turning points placed by the $\mside$-sided trace will be marked as expensive.
The trace continues until the target node is castable, and instead of queuing a casting query to the target node, \rtwo{} finds the earliest expensive node and queues a casting query from the expensive node to verify line-of-sight.

Expensive nodes are discarded if reaching the target node results in the cheapest cost-to-come at the target node's corner. 
All other nodes $\mnode_\mathrm{ex} \in \mathbb{N}_\mathrm{ex}$ at the target node's corner, which has cumulative visibility to the start node, are identified. 
Every $\mnode_\mathrm{ex}$ is guaranteed to have a larger cost-to-come than the target node, and the node tree of every $\mnode_\mathrm{ex}$ are subsequently searched in the target direction from $\mnode_\mathrm{ex}$. 
If there is a source-target node pair that has cumulative visibility to the start node and the nodes have different sides, the nodes and corresponding queries are discarded, as all nodes between the pair and $\mnode_\mathrm{ex}$ are expensive.

\section{\rtwo{} Algorithm and Proofs}
\begin{algorithm}
\caption{R2's main algorithm.}
\label{r2:alg:r2a}
\begin{algorithmic}[1]
\Function{Run}{$\mnode_\mstart$, $\mnode_\mgoal$} \label{r2:alg:run}
    \State \Call{Caster}{$\mnode_\mstart$, $\mnode_\mgoal$} \Comment{From start $\mnode_\mstart$ to goal $\mnode_\mgoal$}
    \While {open-list $\ne \varnothing$ \An path $\,= \varnothing$}
        \State Poll query $(\mnode_S, \mnode_T)$ from open-list.
        \If {query is interrupted trace}
            \State $\msidetrace \gets $ side of $\mnode_T$. 
            \State $\mnodes_T \gets $ target nodes of $\mnode_T$.
            \State $\mx \gets $ corner at $\mnode_T$.
            \State \Call{Tracer}{$\msidetrace$, $\mx$, $\{\mnode_S\}$, $\mnodes_T$}
        \Else
            \IfThen{\Call{Caster}{$\mnode_S$, $\mnode_T$} returns path}{\Break}
        \EndIf
    \EndWhile
    \State \Return path.
\EndFunction
\end{algorithmic}
\end{algorithm}

\begin{algorithm}
\caption{R2's Caster for casting queries: ray casting and collision handling.}
\label{r2:alg:r2b}
\begin{algorithmic}[1]
\Function{Caster}{$\mnode_S$, $\mnode_T$}
    \If {$\mnode_S$ reached $\mnode_T$} \Comment{Cast from $\mnode_S$ reached $\mnode_T$}
        \If {${CV}(\mnode_T, \mnode_\mgoal)$ \An ${CV}(\mnode_S, \mnode_\mstart)$} 
            \State \Return path. 
        \ElsIf{${CV}(\mnode_T, \mnode_\mgoal)$ \An $-{CV}(\mnode_S, \mnode_\mstart)$}
            \State Queue cast query $(\mnode_{SS},\mnode_S)$ and return $\varnothing$.
        \ElsIf{$-{CV}(\mnode_T, \mnode_\mgoal)$ \An ${CV}(\mnode_S, \mnode_\mstart)$}
            \If {$\mnode_T$ and $\mnode_S$ are expensive}
                \State Return $\varnothing$ if $\mnode_S$'s side $\ne$ and $\mnode_T$'s side.
                \State Merge rays to $\mnode_S$ and $\mnode_T$.
            \ElsIf {$\mnode_T$ is cheapest}
                \State Get $\mathbb{N}_\mathrm{ex}$ and discard expensive target nodes and queries.
                \State Update min. cost-to-come at $\mnode_T$. 
            \EndIf
            \State Merge rays to $\mnode_S$ and $\mnode_T$.
        \EndIf
        \State Queue cast query $(\mnode_T,\mnode_{TT})$ for each target node $\mnode_{TT}$ of $\mnode_T$.
    \Else       \Comment{Cast Collided}
        \State Duplicate $\mnode_S$ to nodes $\mnode_S{i}$ and $\mnode_S{j}$.
        \State Merge ray of cast to the new nodes.
        \State \Call{Tracer}{$-\mside_s$, $\mxcol$, $\mnode_S{i}$, $\mnode_T$}
        \State \Call{Tracer}{$\mside_s$, $\mxcol$, $\mnode_S{j}$, $\mnode_T$}
        \If {$\mnode_S\ne\mnode_\mstart$  \An $\mnode_T=\mnode_\mgoal$} \Comment{Third-trace} \label{r2:alg:thirdtrace}
            \State Create new node $\mnode_S{k}$ at $\mxs$ with side $\mside_s$.
            \State Merge ray of cast to $\mnode_S{k}$
            \State \Call{Tracer}{$\mside_s$, $\mxs$, $\mnode_S{k}$, $\mnode_T$}
        \EndIf
    \EndIf
\EndFunction
\end{algorithmic}
\end{algorithm}

\begin{algorithm}
\caption{R2's Tracer for tracing queries: tracing around an obstacle's contour.}
\label{r2:alg:r2c}
\begin{algorithmic}[1]
\Function{Tracer}{$\msidetrace$, $\mx$, $\mnodes_S$, $\mnodes_T$}
    \While {$\mx$ is in map} \Comment {For all $\mx$, $|\mnodes_S| = 1$}
        \ForEach{ $\mnodes \in \{\mnodes_S, \mnodes_T\} $ }
            \ForEach{$\mnode \in \mnodes$} \Comment{$\mnodes$ is modified by the rules.}
                \IfThen{traced to $\mnode$} {\Return}
                \If{not progressed for $\mnode$} 
                    \State \Continue
                \EndIf
                \If{$\mnode \in \mnodes_S$}
                    \State Do \underline{Angular-sector rule}.
                    \State Do \underline{Occupied-sector rule}.
                \Else
                    \State Try placing ad-hoc point $\mnode_{\mathrm{ad},b}$ or $\mnode_{\mathrm{ad},c}$ if $\mnode \in \mnodes_T$.
                \EndIf
                \State Do \underline{Pruning rule}.
            \EndFor
        \EndFor
        \State Do \underline {Placement rule}, creating $\mnode_\mathrm{new}$ if new node placed.
        \State Do \underline {Overlap rule} if new turning point $\mnode_\mathrm{tp}=\mnode_\mathrm{new}$ is placed, or if $\mnode_\mathrm{tp}$ is expensive and any $\mnode_T \in \mnodes_T$ is castable. 
        \State Queue casting query $(\mnode_\mathrm{tp}, \mnode_T)$ for all castable $\mnode_T$.
        \State Queue tracing query $(\mnode_S, \mnode_\mathrm{new})$ if $>m$ nodes placed.
        \State $\mx \gets$ subsequent $\msidetrace$ corner
    \EndWhile
\EndFunction
\end{algorithmic}
\end{algorithm}

The pseudocode for \rtwo{} is shown in Algs. \ref{r2:alg:r2a}, \ref{r2:alg:r2b} and \ref{r2:alg:r2c}. In the pseudocode, nodes $\mnode_a$ and $\mnode_b$ have cumulative visibility if ${CV}(\mnode_a, \mnode_b)$ returns true.
More comprehensive pseudocodes, that delve into the implementation, are shown in Appendix \ref{chap:suppr2}.

Theorem \ref{r2:thm:complete} shows that \rtwo{} is complete, and Theorem \ref{r2:thm:optimal} shows that \rtwo{} is optimal.

\input{chap_r2/fig_complete}

\begin{theorem}
    \rtwo{} is complete.
    \label{r2:thm:complete}
\end{theorem}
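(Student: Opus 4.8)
The plan is to prove completeness of \rtwo{} by showing that at least one search query will always propagate a path from the start node toward the goal node, mirroring the structure of the target-pledge completeness argument (Theorem \ref{ncv:thm:tpldg}) but now accounting for the interplay of source-side tracing, target-side tracing, the third-trace, and the interrupting rules (overlap, angular-sector, occupied-sector). Since \rtwo{} is a best-first search over a finite collection of turning points and phantom points that are placed only at grid corners, the core argument is: (i) every intermediate query either terminates, places a node, spawns a recursive trace, or queues a cast; (ii) no ``useful'' query is discarded by the pruning/sector/overlap rules unless an equivalent or dominating query already exists; and (iii) therefore the frontier of explored turning points monotonically expands until a path with cumulative visibility from start to goal is found.

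First I would set up the obstacle-topology case analysis indicated by Fig. \ref{r2:fig:complete}. Fixing an arbitrary source node at $\mxs$ and target node at $\mxt$ with a collided cast between them, I would classify any obstacle lying across the cast by the angular progression of a trace with respect to both $\mxs$ and $\mxt$, exactly as Cases 1.1, 1.2, 2.1, 3.1 in the figure. Case 1.1 (only two convex $\mxt$-facing corners, $\mx_a$ and $\mx_b$) is the base case: by the placement rule (Eq. (\ref{r2:eq:isplaceable})) and the casting condition (Eq. (\ref{r2:eq:iscastable})), one of the two traces reaches a convex corner where the target node is potentially visible and source progression holds, hence a cast is queued that makes progress toward $\mxt$; invoking Theorem \ref{ncv:thm:tpldg}'s completeness of the underlying pledge/progression machinery, the trace cannot get stuck in a non-convex extrusion. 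Cases 1.2, 2.1, 3.1 reduce to Case 1.1 after the trace winds/unwinds and possibly after a recursive trace is spawned — here I would appeal to Theorem \ref{ncv:thm:fcost} (monotone cost) to argue the recursion and interruptions cannot cycle forever, and to the angular-sector rule to argue that a recursive angular-sector trace always reattaches to and continues the interrupted trace.

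The second block of the proof handles the \rtwo{}-specific complications that are absent from the pure target-pledge setting: the third-trace (line \ref{r2:alg:thirdtrace} of Alg. \ref{r2:alg:r2b}) and the ad hoc points. I would argue that the ad hoc point $\mnode_{\mathrm{ad},a}$ restores target progression for the third-trace so the progression/placement machinery applies verbatim; that $\mnode_{\mathrm{ad},b}$ and $\mnode_{\mathrm{ad},c}$ keep $\mvtt$ within $180^\circ$ of $\mvt$ so the pruning rule stays valid (as stated in Sec.~\ref{r2:sec:prune} and the ad hoc discussion); and crucially that any query which is \emph{rejected} because it reached $\mnode_{\mathrm{ad},b}$, $\mnode_{\mathrm{ad},c}$, or $\mnode_{\mathrm{ad},a}$, or was discarded as ``expensive'' by the overlap rule, is dominated — a surviving query finds a taut path of no greater cost to the same target. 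This ``no useful query is lost'' lemma is the heart of completeness and must be checked rule-by-rule: pruning removes only non-taut detours, the overlap rule only discards branches after queuing a verification cast from a cumulatively-visible ancestor, and the expensive-node rule only discards strictly-costlier alternatives. Finally, I would close by noting finiteness: there are finitely many corners, each supports finitely many $L$/$R$/phantom nodes with bounded angular sectors, sectors only shrink, so the search either returns a path or exhausts its (finite) frontier — and since the paper only claims completeness (``a path is returned if it exists'') and already concedes interminability when \emph{no} path exists, the theorem follows.

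The hard part will be step three: rigorously certifying that every discard performed by the overlap, angular-sector, occupied-sector, expensive-node, and ad hoc rules is safe, i.e. that it never removes the unique carrier of progress toward the goal. Each rule interacts with tracing in a subtly different way — occupied-sector traces cannot be used for phantom points, angular-sector recursion flips trace side, the overlap rule discards whole sub-branches of the node tree — so the domination argument has to be stated carefully for each, and the bookkeeping of which node is ``the surviving witness'' is where the real work lies. The topology case analysis of Fig. \ref{r2:fig:complete} and the finiteness/termination argument are comparatively routine given Theorems \ref{ncv:thm:tpldg} and \ref{ncv:thm:fcost}.
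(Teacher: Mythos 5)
Your plan shares the paper's starting point---the obstacle-topology case analysis of Fig.~\ref{r2:fig:complete}---but it defers the step that actually carries the proof. You reduce completeness to a ``no useful query is lost'' lemma that must be certified rule-by-rule for the pruning, overlap, angular-sector, occupied-sector, expensive-node, and ad hoc discards, and you explicitly leave that lemma unproven (``where the real work lies''). That is a genuine gap: without it your argument is a proof outline, not a proof. The paper avoids this difficulty by taking a different route. It introduces \emph{end-point convex corners}---the corners on target-facing edges where traces stop and cast---shows by a tautness contradiction that any shortest path must pass through them, and then shows these end-points are reachable using only the searches \rtwo{} actually performs: the two traces from a collision, casts between consecutive turning points (Cases 4.1--4.3), and the third-trace. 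The crux is the observation that the only omitted search, the ``continued-trace'' from $\mx_a$ needed to reach $\mx_c$, can be dispensed with because either the cast from $\mx_a$ reaches $\mxt$ (making the path via $\mx_c$ longer) or the collided obstacles $\mathcal{O}_{a'd}$, $\mathcal{O}_{s1}$, $\mathcal{O}_{ij}$ cannot intersect the original cast, so the continued-traces condense exactly to the third-traces \rtwo{} implements. Your handling of Cases 2.1 and 3.1 (``reduce to Case 1.1 after the trace winds/unwinds'') glosses over precisely this reduction.

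Two smaller issues: you invoke Theorem~\ref{ncv:thm:tpldg} for the base case, but \rtwo{} uses the progression method rather than the target-pledge method, so that theorem does not apply directly; and you invoke Theorem~\ref{ncv:thm:fcost} to rule out infinite recursion, whereas the paper uses that result only for optimality (Theorem~\ref{r2:thm:optimal}), not completeness. Your instinct that the discard rules need safety justification is not wrong---the paper's treatment of the overlap and expensive-node rules is indeed light---but as written your proposal identifies the hard part without supplying it, while missing the end-point/third-trace mechanism that lets the paper's proof close.
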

\begin{proof}
    Without loss of generality, consider all possible topologies for an obstacle $\mathcal{O}_{st}$, where a cast from  the source node at $\mxs$ to the target node at $\mxt$ collides. 
    The topologies can be derived as the progression rule ignores any intermittent reverses of angular progression.
    
    From the topologies, \textbf{end-point convex corners} are identified.
    The end-point corners lie on edges facing the target, and traces that reach the end-points will stop and cast to the target node at $\mxt$.
    Taut paths from the source node to the target node have to pass through  the end-points.
    The shortest path can be shown by contradiction to pass through the end-points.
    If the shortest path does not pass through the end-points, it is not taut. A non-taut path has to be longer than a taut path around an obstacle, and cannot be the shortest path \cite{bib:rayscan}.
    By showing that \rtwo{} finds paths to the end-points, the proof can be applied inductively to all collided casts to show that \rtwo{} is complete.

    Fig. \ref{r2:fig:complete} contains examples of the cases described below.
    In \textbf{Case 1.1}, the collided obstacle results in two end-points,  $\mx_a$ and $\mx_b$, lying on the side facing the target.
    In \textbf{Case 2.1}, the obstacle extends beyond $\mora{\mxt\mx_a}$ and behind $\mx_a$, resulting in a new end-point  $\mx_c$.
    In \textbf{Case 3.1}, the obstacle crosses $\mora{\mxt\mx_a}$ in front of $\mx_a$, resulting in $\mx_a$ being obscured and a new end-point $\mx_d$.
    We consider Cases 1.2, 2.2 and 3.2 respectively from Cases 1.1, 2.1 and 3.1, with the obstacle wrapping around the source but not intersecting the cast. 
    For the collision and cast to occur, the wrapping cannot intersect the cast.
    Obstacles that wrap around the target are either Case 1.1 or Case 1.2.

    End-points are shown to be reachable from the source node at $\mxs$. 
    The collided cast between $\mx_s$ and $\mx_t$ generates two traces, each arriving at $\mx_a$ and $\mx_b$. 
    To arrive at $\mx_c$, the trace arriving at $\mx_a$ has to continue from $\mx_a$. The trace that continues from $\mx_a$ is similar to a \textit{third-trace}, and will be called a \textbf{continued-trace}.
    If the cast from $\mx_a$ reaches $\mxt$, the path via $\mx_c$ is more expensive, and the continued-trace can be ignored.
    
    To reach $\mx_d$, a cast occurs from $\mx_s$ to $\mx_d$, generating a trace that finds $\mx_{a'}$, which is $\mx_a$ or a subsequent corner. 
    The trace stops at $\mx_{a'}$, and a cast $\mathcal{C}_{a'd}$ occurs from $\mx_{a'}$ to $\mx_d$. $\mathcal{C}_{a'd}$ can be reconsidered as Cases 1.1, 1.2, 2.1 and 2.2. If $\mathcal{C}_{a'd}$ collides, let the collided obstacle be $\mathcal{O}_{a'd}$, with the end points $\mx_{a,a'd}$, $\mx_{b,a'd}$ and $\mx_{c,a'd}$. 
    $\mathcal{O}_{a'd}$ cannot intersect $\mathcal{O}_{st}$. The path via $\mx_{c,a'd}$ has to be longer than the path via $\mx_{a,a'd}$, and the continued-trace from $\mx_{a,a'd}$ can be ignored.

    When reaching end-points, turning points are generated. The turning points will not be shown to be reachable from $\mx_s$.
    In \textbf{Case 4.1}, the cast between $\mx_s$ and the first turning point is examined. 
    In \textbf{Case 4.2}, the casts between the turning points are examined.
    In \textbf{Case 4.3}, the cast from an end-point to $\mx_t$ is examined.
    
    In Case 4.1, a cast between $\mx_s$ and the first turning point may collide at an obstacle $\mathcal{O}_{s1}$. As $\mathcal{O}_{s1}$ cannot intersect the cast $\mathcal{C}_{st}$ between $\mx_s$ and $\mx_t$, $\mathcal{O}_{s1}$ can belong to Case 1.1, 1.2, 2.1 or 2.2.
    $\mathcal{O}_{s1}$ has three end-points $\mx_{a,s1}$, $\mx_{b,s1}$ and $\mx_{c,s1}$.
    As $\mathcal{O}_{s1}$ does not intersect $\mathcal{C}_{st}$, $\mx_{a,s1}$ and $\mx_{c,s1}$ lie on opposite sides of $\mathcal{C}_{st}$. 
    As the first turning point lies on the side closer to $\mx_{a,s1}$, $\mx_{a,s1}$ results in a shorter path than $\mx_{c,s1}$. 
    Since $\mx_{c,s1}$ is reached by a continued-trace, the continued-trace can be ignored.
    
    In Case 4.2, a cast between a turning point at $\mx_i$ and a subsequent turning point at $\mx_j$ may collide at an obstacle $\mathcal{O}_{ij}$. 
    $\mathcal{O}_{ij}$ cannot intersect $\mathcal{O}_{st}$ between the turning points. By applying the same analysis as Case 4.1, $\mx_{c,ij}$ and the continued-trace finding $\mx_{c,ij}$ can be ignored. 
    Case 4.3 can be reconsidered as Cases 1.1, 1.2, 2.1, 2.2, 3.1 and 3.2 where the end points are considered as new source nodes. 

    From Cases 1.1, 1.2, 2.1, 2.2, 3.1 and 3.2, turning points and the goal node can be reached from casts, traces generated from collisions, and continued-traces.
    From Cases 4.1, 4.2 and 4.3, turning points can be reached from casts and traces generated from collisions.
    As all turning points can be reached without continuing from $\mx_a$, the continued-traces can be condensed to third-traces, where the trace continues from the source point only if the target is a goal node and a cast from the source point collides.
    \rtwo{} is complete as the turning points and goal node can be reached, and the shortest path has to pass through the turning points.
\end{proof}

\begin{theorem}
    \rtwo{} is optimal.
    \label{r2:thm:optimal}
\end{theorem}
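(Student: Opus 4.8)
The plan is to treat \rtwo{} as a best-first search over an implicit, optimistic visibility graph and to run the classical A* optimality argument, once four ingredients are in place: (i) every query's estimated cost $f$ is admissible, i.e. never exceeds the length $C^\ast$ of an optimal path that is consistent with that query; (ii) the open-list polls queries in order of non-decreasing $f$; (iii) at every moment before termination there is a pending query whose $f$ lower-bounds $C^\ast$ and is a prefix of an optimal path; and (iv) a path is returned only after its entire cost has been verified by successful casts, so that the returned $f$ equals the true path length. Given (i)--(iv), the query that returns a path was polled with $f_\mathrm{ret}\le f_\mathrm{cover}\le C^\ast$, where $f_\mathrm{cover}$ is the $f$-value of the pending query covering an optimal path; since $f_\mathrm{ret}$ is the length of an actual collision-free polyline from $\mxstart$ to $\mxgoal$, we also have $f_\mathrm{ret}\ge C^\ast$, forcing $f_\mathrm{ret}=C^\ast$.

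For (i) I would combine Theorem~\ref{ncv:thm:fcost} with the pruning rule of Sec.~\ref{r2:sec:prune}. The pruning rule keeps every node-path taut, so the cost-to-come assembled from the straight segments between placed turning points is a lower bound on any realisable path through those corners (it assumes \gls{los} that has not yet been verified, hence it underestimates), and the cost-to-go built from the best-hull plus the Euclidean distance to the target is likewise a lower bound, because the best-hull is contained in the convex hull of the obstacle actually being traced. Theorem~\ref{ncv:thm:fcost} then guarantees that this $f$ is non-decreasing as a trace advances, so all queries spawned by a trace inherit an $f$ at least as large as the $f$ at which the trace began; together with the priority-queue discipline this yields (ii) and the usual monotone-expansion property.

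For (iii) I would lean on Theorem~\ref{r2:thm:complete}: every turning point of every taut path, in particular of an optimal path, is reached by some cast, some trace generated at a collision, or a third-trace, so the sequence of queries that incrementally reconstructs an optimal path is generated. The remaining work is to check that none of \rtwo{}'s discarding mechanisms ever deletes the \emph{last} such representation. Pruning only removes non-taut nodes, which cannot lie on a shortest path. The overlap rule and the expensive-node rule only discard a node when another node at the same corner has cumulative visibility to $\mnode_\mstart$ and a cost-to-come no larger, so the representative carrying the optimal cost-to-come is never the one discarded; here it is essential that the ``expensive'' label is assigned only after cumulative visibility has been established, so that every cost-to-come comparison is between true (not underestimated) values. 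The angular-sector and occupied-sector rules, adapted from \rsp{}, only terminate traces that are provably repeated (they cross a sector-ray beginning at the source node) while spawning a recursive trace that reaches the nodes of the terminated trace, so the optimal path's nodes remain reachable.

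The main obstacle is precisely this verification in (iii): \rtwo{} has an unusually large bestiary of rules (overlap, expensive nodes, the ad hoc points $\mnode_{\mathrm{ad},a}$, $\mnode_{\mathrm{ad},b}$, $\mnode_{\mathrm{ad},c}$, chases, the split $180^\circ$ start sectors), and I expect the delicate part to be showing that the interplay of delayed \gls{los} checks with the cumulative-visibility bookkeeping never lets an underestimated competitor cause the optimal path's query to be marked expensive or discarded, and that interrupting and re-queuing traces (needed for termination) preserves the covering invariant of (iii). Once each rule is shown to preserve ``there is always a pending query whose $f\le C^\ast$ that is a prefix of an optimal path,'' the A* argument of the first paragraph closes the proof, while item (iv) is immediate from Alg.~\ref{r2:alg:r2b}, which returns a path only when a cast reaches $\mnode_\mgoal$ with ${CV}(\mnode_T,\mnode_\mgoal)$ and ${CV}(\mnode_S,\mnode_\mstart)$, i.e. with every consecutive pair of path nodes mutually visible and the path cost fully verified.
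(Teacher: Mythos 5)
Your proposal takes essentially the same route as the paper's own proof: admissibility and monotone growth of the cost estimate via the best-hull (Theorem~\ref{ncv:thm:fcost}), coverage of the optimal path's turning points via completeness (Theorem~\ref{r2:thm:complete}), and a best-first/A*-style argument to conclude that the first fully verified path returned is optimal. Your items (iii) and (iv) — checking that the overlap, expensive-node, ad hoc, and sector rules never discard the last representative of an optimal path, and that the returned cost is fully verified — are in fact more explicit than the paper's proof, which subsumes them under the appeal to completeness.
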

\begin{proof}
    By casting a ray, \rtwo{} tries to draw a straight line between two points first, before splitting into two tracing queries around a collided obstacle.
    The path formed by each tracing query follows the smallest convex hull, the best-hull, known by the query, which increasingly deviates from the straight line path as the query proceeds along the obstacle's contour.
    
    The best-hull allows for admissible estimates of the path cost without overestimating them. The best-hull is inferred only from the contour that is traced, which future queries must go around. By resizing the best-hull based on only the traced contour, and by maintaining the hull's convexity with the tracing rules, the path cost is estimated admissibly. 
    
    From Theorem \ref{ncv:thm:fcost}, the best-hull increases in size, and the path cost increases monotonically when there is angular progression to all nodes. 
    By queuing queries only when the angular progression of the trace has increased with respect to all nodes, increasingly costlier queries are queued into the open-list.

    From Theorem \ref{r2:thm:complete}, all paths around obstacles can be found. By ensuring that the shortest possible (straight-line) solution is searched first, and by ensuring the path cost increases monotonically and admissibly between queues, \rtwo{} is able to find the optimal path between two points.
\end{proof}

\section{Methodology of Comparing Algorithms}
\rtwo{}, \rsp{}, Anya and Theta* are compared across benchmarks \cite{bib:bench}. The implementation of \rsp{} is obtained from \cite{bib:rayscanp} and Anya is obtained from \cite{bib:anya}.
Each \textit{scenario} in the benchmark is a pair of start and goal points where a path exists between them.
Comparisons are done by assuming that the map is unknown, and no cached information except for the occupancy grid exists before each scenario is run. 
As such, Polyanya \cite{bib:polyanya}, Sub-goal graphs \cite{bib:subgoal}, Sparse Visibility Graphs \cite{bib:ohleong} and Visibility graphs \cite{bib:vg} are not compared.

\rsp{} is run with the skip, bypass and block extensions, which is the fastest configuration for an unknown binary occupancy grid. 
The current implementation of \rsp{} scales the map twice and moves the start and goal points by one unit in both dimensions to avoid starting and ending on obstacle contours. The scaling and translation are required to prevent the start and goal points from occurring within obstacles due to on-the-fly smoothing of rasterized diagonal contours.
All algorithms are run on the same scaled maps and benchmark scenarios as \rsp{}. 

\rtwo{} does not smooth the contours, and is able to handle all scenarios except for scenarios where the start point is located at a checkerboard corner. A \textbf{checkerboard corner} is a non-convex corner lying in the center of a pair of diagonally opposite free cells and a pair of diagonally opposite occupied cells. A checkerboard corner occurs as a pair and its counterpart lies at the same location and faces the opposite direction.
A starting point lying on a pair of checkerboard corners is ambiguous, as it can lie on either corner, one of which may not lead to a solution. 
As such cases are unlikely to occur and are complicated to handle, scenarios beginning from checkerboard corners will not be solved by \rtwo{}.

The \rsp{} extension chosen for comparison relies on the occupancy grid and a hierarchy of obstacle polygons that are generated on-the-fly. 
\rtwo{} is similar, caching corners and rays within each scenario but the cached information is deleted after the scenario. 
For ease of implementation, \rtwo{} and Theta* relies on simple insertion sorts for the open-list. 
Anya relies on the Fibonacci heap, and \rsp{} relies on the pairing heap.
Future works can examine the effects of open-list sorting on the performance of \rtwo{} under various conditions.

All scenarios are run \textit{ten} times and the run-times averaged, except for Theta*.
Theta* is too slow for the comparisons with the other algorithms to be significant and runs longer than 5s are terminated. 

The scenarios are run on  Ubuntu 20.04 in Windows Subsystem for Linux 2 (WSL2) and on a single core of an Intel i9-11900H (2.5GHz), Turbo-boost disabled.
\rtwo{} is available at \cite{bib:my2dcode}.

\section{Results}
Specific results for selected maps are shown in Fig. \ref{r2:fig:results}.
The left column of Fig. \ref{r2:fig:results} show the thumbnails of the map, with black pixels indicating obstacles.

\begin{sidewaystable}[!ht]
\centering
\caption{Benchmark characteristics and average search time.}
\setlength{\tabcolsep}{3pt}
\begin{tabular}{ c c c c c c c c c }
\toprule
Map & $P$ & $G$ & $r$ & $\rho$ (\%) & R2 ($\mu$s) & Theta* ($\mu$s) \textsuperscript{\textdagger} & ANYA ($\mu$s) & RS+($\mu$s) \\
\hline
dao/arena & 5 & 100.5 & 0.433 & 1.315 & 5.235 & 161.135 & 104.238 & 2.787 \\
\hline
bg512/AR0709SR & 13 & 953.3 & 0.614 & 0.144 & 34.713 & 218115.116 & 220.163 & 55.426 \\
\hline
bg512/AR0504SR & 22 & 1019.0 & 0.793 & 0.570 & 153.183 & 292573.232 & 573.309 & 214.672 \\
\hline
bg512/AR0014SR & 21 & 969.4 & 0.852 & 0.611 & 135.715 & 228492.361 & 390.753 & 174.695 \\
\hline
bg512/AR0304SR & 16 & 1010.7 & 0.793 & 0.272 & 62.004 & 210728.144 & 235.891 & 73.662 \\
\hline
bg512/AR0702SR & 17 & 984.4 & 0.797 & 0.251 & 60.401 & 187307.605 & 277.864 & 80.442 \\
\hline
bg512/AR0205SR & 33 & 1441.5 & 0.923 & 0.697 & 519.853 & 371804.631 & 1237.524 & 393.564 \\
\hline
bg512/AR0602SR & 46 & 1880.4 & 0.952 & 2.072 & 1434.856 & 236938.560 & 1848.220 & 642.356 \\
\hline
bg512/AR0603SR & 42 & 2228.5 & 0.963 & 1.299 & 838.284 & 275366.995 & 1043.081 & 357.629 \\
\hline
street/Denver\_2\_1024 & 16 & 2835.8 & 0.770 & 0.028 & 87.307 & 2358568.305 & 937.458 & 424.931 \\
\hline
street/NewYork\_0\_1024 & 22 & 2834.8 & 0.687 & 0.310 & 288.919 & 2479575.838 & 1320.881 & 555.569 \\
\hline
street/Shanghai\_2\_1024 & 26 & 2885.7 & 0.622 & 0.404 & 471.062 & 2162738.816 & 1554.052 & 817.105 \\
\hline
street/Shanghai\_0\_1024 & 22 & 2816.5 & 0.513 & 0.258 & 248.099 & 2191778.202 & 996.741 & 283.793 \\
\hline
street/Sydney\_1\_1024 & 24 & 2844.5 & 0.696 & 0.128 & 148.724 & 2069735.549 & 984.123 & 380.839 \\
\hline
da2/ht\_mansion2b & 30 & 776.2 & 0.912 & 1.748 & 320.723 & 31907.571 & 495.497 & 156.186 \\
\hline
da2/ht\_0\_hightown & 18 & 1061.9 & 0.906 & 0.876 & 285.493 & 106234.498 & 1043.069 & 282.288 \\
\hline
dao/hrt201n & 31 & 905.8 & 0.942 & 2.751 & 442.453 & 46090.908 & 637.365 & 198.965 \\
\hline
random/random512-10-1 & 69 & 1373.0 & 0.954 & 8.667 & 28126.967 & 462856.631 & 9403.338 & 20139.733 \\
\hline
room/32room\_000 & 41 & 1579.2 & 0.989 & 0.272 & 1005.513 & 1890675.108 & 1645.235 & 579.428 \\
\hline
room/16room\_000 & 69 & 1477.7 & 0.992 & 1.065 & 5260.004 & 1694568.685 & 3757.778 & 1755.947 \\
\bottomrule
\multicolumn{9}{p{18cm}}{
\footnotesize
All scenarios for each map are run, and every scenario is solved for the shortest any-angle path.
$r$ is the correlation coefficient between the number of turning points and the path cost for all scenarios in the map. 
$\rho$ is the map density, which is the ratio between the number of corners to the number of free cells on the map. 
$P$ is the largest number of turning points and $G$ is the largest path cost among all scenarios. 
RS+ refers to \rsp.
All scenarios can solve for the shortest paths, except for Theta* which is sub-optimal. \newline 
\textsuperscript{\textdagger} For Theta*, scenarios taking longer than 5s are not solved but counted into the average running-time. Each map is run only one time as it is significantly slower than other algorithms.
}
\end{tabular}
\label{r2:tab:average}
\end{sidewaystable}

\begin{sidewaystable}[!ht]
\centering
\caption{Average speed-ups for 3, 10, 20, and 30 turning points.}
\setlength{\tabcolsep}{3pt}
\begin{tabular}{ c c c c | c c c  | c c c | c c c }
\hline
\multirow{2}{*}{Map} & \multicolumn{3}{c | }{3 Turning Pts.} & \multicolumn{3}{c | }{10 Turning Pts.} & \multicolumn{3}{c | }{20 Turning Pts.} & \multicolumn{3}{c}{30 Turning Pts.} \\
\cline{2-13}
& $g_{3}$ & R/A & R/P & $g_{10}$ & R/A & R/P & $g_{20}$ & R/A & R/P & $g_{30}$ & R/A & R/P \\
\hline
dao/arena & 69.5 & 10.1 & 0.605 & -- & -- & -- & -- & -- & -- & -- & -- & -- \\
\hline
bg512/AR0709SR & 418.2 & 9.72 & 2.68 & 663.2 & 4.13 & 1.26 & -- & -- & -- & -- & -- & -- \\
\hline
bg512/AR0504SR & 242.3 & 7.95 & 3.66 & 747.6 & 3.87 & 1.56 & 774.9 & 2.28 & 0.881 & -- & -- & -- \\
\hline
bg512/AR0014SR & 231.0 & 5.42 & 3.02 & 584.1 & 2.6 & 1.41 & -- & -- & -- & -- & -- & -- \\
\hline
bg512/AR0304SR & 283.5 & 6.22 & 2.34 & 743.9 & 3.33 & 1.11 & -- & -- & -- & -- & -- & -- \\
\hline
bg512/AR0702SR & 220.5 & 6.63 & 2.29 & 727.3 & 4.24 & 1.31 & -- & -- & -- & -- & -- & -- \\
\hline
bg512/AR0205SR & 155.1 & 5.72 & 2.47 & 515.2 & 2.93 & 1.45 & 1081.6 & 2.74 & 0.899 & 1247.5 & 1.91 & 0.543 \\
\hline
bg512/AR0602SR & 162.5 & 5.96 & 2.3 & 456.7 & 2.27 & 1.05 & 951.0 & 1.74 & 0.642 & 1396.0 & 1.15 & 0.388 \\
\hline
bg512/AR0603SR & 212.1 & 4.68 & 2.43 & 585.4 & 2.02 & 0.963 & 1218.6 & 1.77 & 0.613 & 1650.8 & 1.36 & 0.432 \\
\hline
street/Denver\_2\_1024 & 774.3 & 17.4 & 6.59 & 2329.0 & 9.79 & 4.77 & -- & -- & -- & -- & -- & -- \\
\hline
street/NewYork\_0\_1024 & 865.3 & 7.19 & 5.34 & 1839.4 & 4.62 & 2.06 & 2556.4 & 3.48 & 1.43 & -- & -- & -- \\
\hline
street/Shanghai\_2\_1024 & 1025.3 & 7.29 & 4.51 & 1889.8 & 3.53 & 1.93 & 1790.3 & 2.38 & 1.41 & -- & -- & -- \\
\hline
street/Shanghai\_0\_1024 & 1371.7 & 7.03 & 2.6 & 1529.5 & 3.28 & 1.08 & 2036.5 & 3.65 & 0.954 & -- & -- & -- \\
\hline
street/Sydney\_1\_1024 & 878.0 & 9.46 & 4.36 & 1996.8 & 5.96 & 2.5 & 2315.0 & 5.75 & 1.71 & -- & -- & -- \\
\hline
da2/ht\_mansion2b & 59.4 & 7.31 & 1.47 & 246.5 & 2.4 & 0.901 & 585.1 & 1.52 & 0.515 & 763.0 & 2.07 & 0.615 \\
\hline
da2/ht\_0\_hightown & 134.4 & 6.83 & 2.53 & 576.4 & 4.37 & 1.34 & -- & -- & -- & -- & -- & -- \\
\hline
dao/hrt201n & 81.7 & 6.64 & 1.63 & 288.2 & 1.58 & 0.742 & 645.1 & 1.45 & 0.478 & 848.7 & 1.84 & 0.47 \\
\hline
random/random512-10-1 & 41.3 & 8.96 & 20.3 & 215.0 & 2.7 & 7.73 & 522.8 & 1.26 & 3.72 & 734.2 & 0.852 & 2.15 \\
\hline
room/32room\_000 & 79.0 & 7.83 & 1.75 & 344.8 & 2.47 & 1.2 & 732.2 & 1.9 & 0.792 & 1137.8 & 1.73 & 0.644 \\
\hline
room/16room\_000 & 42.9 & 6.41 & 1.78 & 186.0 & 1.76 & 1.08 & 396.4 & 1.3 & 0.837 & 614.5 & 1.1 & 0.677 \\
\hline
\multicolumn{13}{p{18cm}}{
\footnotesize
$g_i$ refers to the average path cost for the shortest paths with $i$ turning points.
For the respective turning points, R/A is the ratio of ANYA's average run-time to R2's average run-time, and R/P is the ratio of \rsp's average run-time to R2's. The higher the ratio, the higher the speed-ups.
}
\end{tabular}
\label{r2:tab:speedups1}
\end{sidewaystable}

\begin{figure*}[!ht]
\centering
\includegraphics[width=\linewidth]{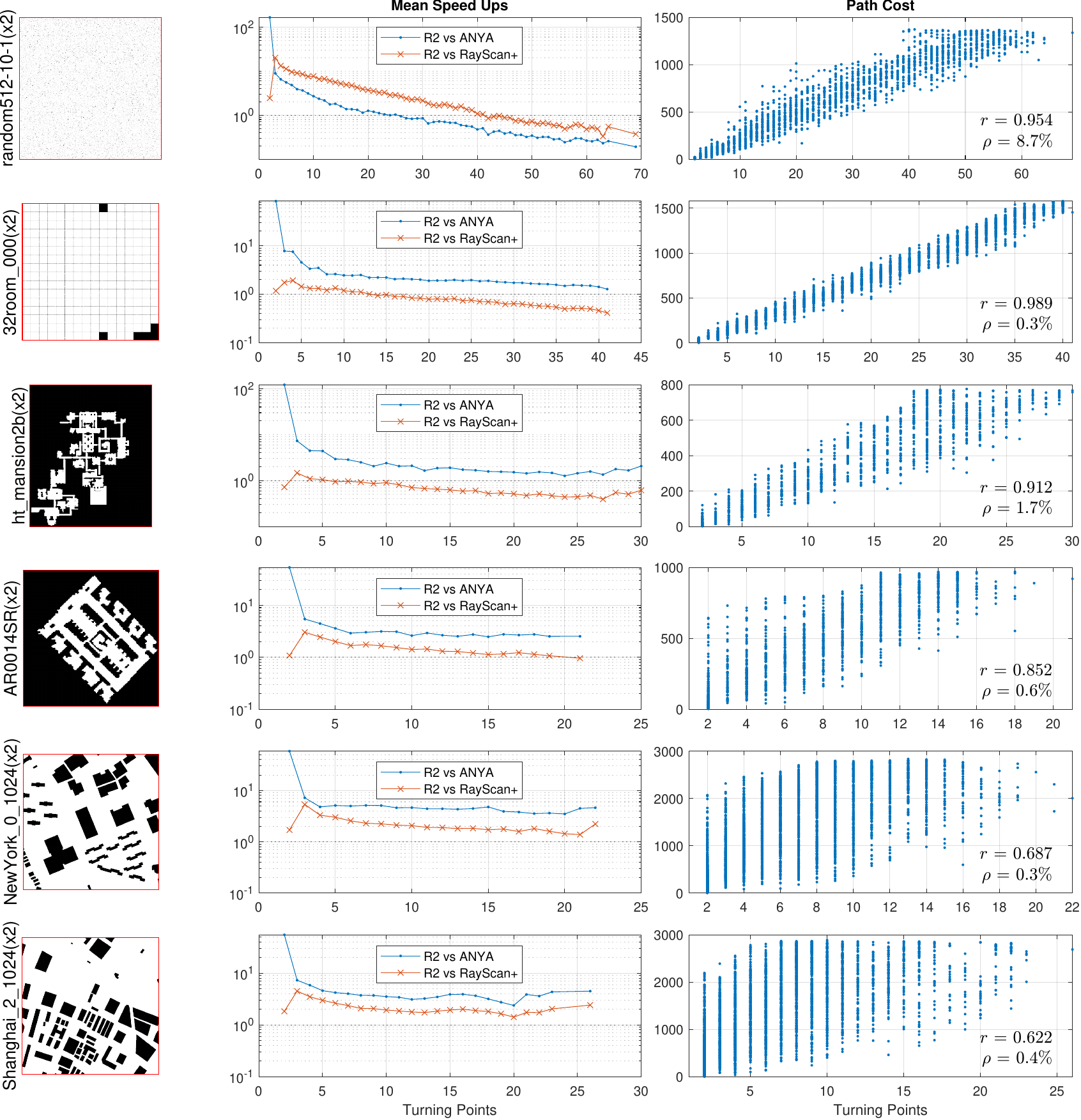}
\caption[Selected results for \rtwo{}.]
{
The thumbnails of the binary occupancy grid maps are shown on the left, with black pixels indicating obstacles.
The plots in the middle compare the speed-ups of \rtwo{} against the number of turning points in the shortest path. \rtwo{} runs faster if the speed-ups are larger than 1.
The plots on the right compare the path cost with the number of turning points in the shortest path for all scenarios. 
If the number of turning points and path cost correlates strongly, the map is likely to be dense and highly non-convex as points are less likely to have line-of-sight to points that are far away.
The start and goal points are considered turning points.
}
\label{r2:fig:results}
\end{figure*}
\clearpage

Plots in the middle column of Fig. \ref{r2:fig:results} indicate the performance of \rtwo{} with respect to \rsp{} and Anya for shortest path solutions with the same number of turning points.
The vertical axes indicate the \textbf{speed-up}, which is the number of times \rtwo{} is faster than \rsp{} or Anya, or the ratio of average run-time of \rsp{} or Anya to the average run time of \rtwo{}.
The speed-ups are averaged across scenarios where the shortest paths have the same number of turning points, regardless of path cost.
The start and goal points are considered turning points. 

Plots in the right column show the variation of shortest paths' costs with respect to the number of turning points the paths have, indicating how likely a shortest path solution will turn around obstacles in the map as the solution increases in length.
The ratio of corners to free cells ($\rho$) is indicated in the plots.

\rtwo{} is considerably faster than the other algorithms in sparse maps with few disjoint and non-convex obstacles.
The path costs and number of turning points correlate ($r$) strongly if points have line-of-sight to only its local neighborhood, as the path has to turn around more obstacles that block line-of-sight to farther points.
$r$ provides an indicator on how likely corners have line-of-sight to other corners, with a smaller $r$ indicating corners having line-of-sight to a larger number of other corners.
When corners have line-of-sight to other corners, the map contains fewer obstacles and obstacles are more likely to conform to their convex hull.
Collisions during line-of-sight checks are less likely to occur, decreasing the number of casts required in \rtwo{} before finding the shortest path, and causing the shortest path to contain fewer turning points.
As \rtwo{}'s run-time is exponential with respect to the number of casts, maps with fewer obstacles and non-convex obstacles makes \rtwo{} run faster than \rsp{}, especially since \rtwo{} delays line-of-sight checks to expand only the successors with the least angular deviation. 

The speed-up of \rtwo{} on sparse maps with few disjoint and non-convex obstacles is evident in Table \ref{r2:tab:average} and Table \ref{r2:tab:speedups1}.
Table \ref{r2:tab:average} shows the average run times in microseconds between different algorithms and the benchmark characteristics for selected maps, such as $r$, $\rho$.
Table \ref{r2:tab:speedups1} show the average speed-ups for scenarios with the same number of turning points (3, 10, 20 and 30). While the average run-time for all scenarios in the benchmark may be slower, \rtwo{} performs considerable faster than \rsp{} and Anya when the shortest path is less likely to turn around obstacles. Noteworthy is that the shortest path cost has little impact on the speed-up.

Vector-based planners \rtwo{} and \rsp{} outperform free-space planners like Anya as maps tends to have much fewer corners than free-space. 
The ratio is indicated by $\rho$ in Table \ref{r2:tab:average} and in Fig. \ref{r2:fig:results}.
By prioritising the shortest possible solution (a straight line), the speed-ups can be close to a hundred times if the start and goal points have line-of-sight, as indicated by the middle-column plots in Fig. \ref{r2:fig:results}.
$\rho$ is not linearly proportional to the speed-ups between Anya and the vector-based planners due to repeated searches along contours by the vector-based planners. 

\section{Conclusion}
A novel, any-angle and vector-based path planner \rtwo{} is introduced. 
\rtwo{} is optimal, complete and can work on maps with non-convex obstacles.
\rtwo{} delays line-of-sight checks to expand points that the least from the straight line between the start and goal points, and is much faster than the state-of-the-art algorithms Anya and \rsp{} when the optimal path is expected to have few turning points.
Such paths are more likely to occur on sparse maps with few disjoint and non-convex obstacles.

The best-hull is a novel mechanism to ensure that path costs increases monotonically and admissibly regardless of line-of-sight is introduced. Delayed line-of-sight checks can cause path costs to be severely underestimated due to pruning. To prevent the severe underestimate, \rtwo{} infers the smallest known convex hull (best-hull) from a traced contour. The best-hull informs queries about the past contour that was traced and the future contour to go around, allowing the path cost to be estimated admissibly without being too small.
As the best-hull increases in size as the trace progresses, the estimate increases monotonically regardless of line-of-sight.

The best-hull is constructed from turning points placed at convex corners, and phantom points placed at non-convex corners. Phantom points are imaginary turning points placed on a traced contour to guide future queries around the traced obstacle. Phantom points will not appear in the shortest path as they cannot be reached, and are always pruned after guiding the queries.

The best-hull increases in size and is kept convex by the progression, placement and pruning rules. The progression rule monitors the angular deviation of a path around a traced obstacle without measuring angles, activating the other rules only when the deviation increases. The placement rule places the turning and phantom points, and the pruning rule prunes points that lie within the best-hull.

While considerably faster than state-of-the-art planners for the aforementioned cases, \rtwo{} has limitations that future works can address.
Due to the delayed line-of-sight checks, \rtwo{} has exponential search times in the worst case with respect to the number of collided casts, and can be much slower than \rsp{} or Anya on maps with many non-convex or disjoint obstacles. 
To ensure that the pruning rule remain valid for target nodes, ad hoc points are introduced, but the points may produce interminable chases when a path does not exist.
    \setcounter{algorithm}{0}
\renewcommand{\thealgorithm}{\thechapter.\arabic{algorithm}}

\chapter{\rtwop{}: Simplifying and Speeding Up \rtwo{} in Dense Maps with Disjoint Obstacles.}

\label{chap:r2p}
\rtwo{} is a novel vector-based algorithm that delays \gls{los} checks to find paths.
\rtwo{}'s search complexity is largely dependent on the number of collided casts, and less dependent on the distance between the two search points.
If a path is expected to have few turning points, the path can be found very rapidly, regardless of the distance between the points.
Such paths are likely to occur in maps with few disjoint obstacles, and in maps with few highly non-convex obstacles.


As \rtwo{} has exponential search time in the worst case, \rtwop{} introduces new conditions to the overlap rule to reduce the number of expensive searches and improve the averages search time. Additionally, \rtwop{} simplifies the algorithm by (i) guaranteeing target and source progression at the start of the trace, which simplifies the progression rule and removes a complicated tracing phase in \rtwo{}; (ii) replacing the ad-hoc points  in \rtwo{} with a simple rule to limit recursive traces from target nodes; and (iii) replacing the fundamental search unit from the \textit{node} (placed at one point) to a \textit{link} (connects two points) to provide more clarity in the search process.




\section{Concepts in \rtwop{}}
\rtwop{} relies on casts and traces to find the shortest path. 
Like \rtwo{}, \rtwop{} delays line-of-sight checks to expand turning points with the least deviation from the straight line between the start and goal points.
\rtwop{} is an evolved algorithm of \rtwo{}, primarily focusing on improving search time in maps  many disjoint obstacles.
This section describes the nomenclature and structures used in \rtwop{}.

The \textbf{tree-direction} determines the direction of an object along a path from the start point to goal point.
Consider two objects $a$ and $b$. If $a$ lies in the \textbf{source} direction of $b$, $a$ leads to the start point from $b$. Conversely, $b$ lies in the \textbf{target} direction of $a$, as $b$ leads to the goal from $a$.

\rtwop{} relies on two search trees connected at their leaf nodes.
The \textbf{source-tree} ($S$-tree) is rooted at the start point, and the \textbf{target-tree} ($T$-tree) is rooted at the goal point. 
An edge connecting two points in the trees is called a \textbf{link}.
Links enables data like sector-rays, progression rays to be organized more neatly than points, and prevent unnecessary line-of-sight checks.

\input{chap_r2p/fig_tree.tex}

While connected to two points, each link is \textbf{anchored} to only one point.
The anchored point of a link in the $S$-tree is the target point, or the source point if the link is in the $T$-tree. 
The anchored point is the \textbf{leaf point} of the link, and the other point is the \textbf{root point}.
A \textbf{root link} for an $S$-tree link or $T$-tree link refers to the connected link in the source or target direction respectively, and a \textbf{leaf link} refers to the connected link in the opposite direction.

If a link has \textbf{cumulative visibility}, there is an unobstructed path from the anchored point to the start point or goal point.
If the link lies in the $S$-tree, there is an unobstructed path to the start point; if the link lies in the $T$-tree, there is an unobstructed path to the goal point.
To describe the state, the \textbf{link type} is used. The link types are explained in Table \ref{r2p:tab:nodetypes}.

\begin{table}[!ht]
\centering
\caption{Link types in \rtwop{}.}
\label{r2p:tab:nodetypes}
\setlength{\tabcolsep}{3pt}
\begin{tabular}{ c  c  p{5.5cm}}
\textbf{Type} & \textbf{Sm.} & \textbf{Description of Anchored Point.} \\
\hline
$\mnvy$ & \raisebox{0pt}{
    \tikz{
        \clip (-1.1mm, -1.1mm) rectangle ++(2.2mm, 2.2mm);
        \node [vy pt={}{}{}] at (0,0) {};
    }
} & Turning point with cumulative visibility. Path via link has cheapest verified cost known so far.  \\
\hline
$\mnvu$ & \raisebox{0pt}{
    \tikz{
        \clip (-1.1mm, -1.1mm) rectangle ++(2.2mm, 2.2mm);
        \node [vu pt={}{}{}] at (0,0) {};
    }
}  & Turning point with unknown cumulative visibility. \\
\hline
$\mney$ & \raisebox{0pt}{
    \tikz{
        \clip (-1.4mm, -1.4mm) rectangle ++(2.8mm, 2.8mm);
        \node [ey pt={}{}{}] at (0,0) {};
    }
}  & Turning point with cumulative visibility. Path via link is costlier than the cheapest known so far. \\
\hline
$\mneu$* & \raisebox{0pt}{
    \tikz{
        \clip (-1.4mm, -1.4mm) rectangle ++(2.8mm, 2.8mm);
        \node [eu pt={}{}{}] at (0,0) {};
    }
} & Turning point with unknown cumulative visibility that has an ancestor $\mney$ root link. \\
\hline
$\mntm$\textsuperscript{\textdagger} & \raisebox{0pt}{
    \tikz{
        \clip (-1.1mm, -1.1mm) rectangle ++(2.2mm, 2.2mm);
        \node [tm pt={}{}{}] at (0,0) {};
    }
} & A temporary point that is placed when a trace is interrupted. \\
\hline
$\mnun$\textsuperscript{\textdagger} & \raisebox{0pt}{
    \tikz{
        \clip (-1.1mm, -1.1mm) rectangle ++(2.2mm, 2.2mm);
        \node [un pt={}{}{}] at (0,0) {};
    }
}  & An unreachable phantom point that discards queries when reached. \\
\hline
$\mnoc$\textsuperscript{\textdagger} & \raisebox{0pt}{
    \tikz{
        \clip (-1.05mm, -1.05mm) rectangle ++(2.1mm, 2.1mm);
        \node [oc pt={}{}{}] at (0,0) {};
    }
} & A turning point or phantom point placed by a target recursive-angular sector trace. \\
\hline
\multicolumn{3}{p{7.5cm}}{
\footnotesize
The \textit{Sm.} column denotes the symbol used in figures.
*$\mneu$ links appear only in the $S$-tree.
\textsuperscript{\textdagger}$\mntm$, $\mnun$, and $\mnoc$ links appear only in the $T$-tree.
}
\end{tabular}
\end{table}

\begin{table}[!ht]
\centering
\caption{Legend of symbols used in figures.}
\label{r2p:tab:legend}
\setlength{\tabcolsep}{3pt}
\begin{tabular}{ c  p{5.5cm}}
\textbf{Sm.} & \textbf{Description} \\
\hline
\raisebox{-.5ex}{
    \tikz[]{
        \clip (-1.6mm, -1.6mm) rectangle ++(13.2mm, 3.2mm);
        \node (n1) [any pt] at (0,0) {\footnotesize 1};
        \node (n2) [any pt] at (1cm,0) {\footnotesize  2};
        \draw [link] (n1) -- (n2); 
    }
} & A link anchored at point 1, connected to links (not shown) anchored at point 2.  \\
\hline
\raisebox{-.5ex}{
    \tikz{
        \clip (-1.6mm, -1.6mm) rectangle ++(13.2mm, 3.2mm);
        \node (n1) [any pt] at (0,0) {\footnotesize 1};
        \node (n2) [any pt] at (1cm,0) {\footnotesize  2};
        \draw [qlink] (n1) -- (n2); 
    }
} & Same as above, and the link is associated with a queued query. \\
\hline
\raisebox{-5.5mm}{
    \tikz[]{
        \clip (-4.3mm, -4.5mm) rectangle ++(8.6mm, 9mm);
        \node (n1) [any pt, xshift=-2.5mm] at (0,0) {\footnotesize 1};
        \node (n2) [any pt, xshift=1.2mm, yshift=1.9mm] at (0,0) {\footnotesize  2};
        \node (n3) [any pt, xshift=1mm, yshift=-2mm] at (0,0) {\footnotesize  3};
        \draw [merge] (n1) -- (n2);
        \draw [merge] (n1) -- (n3);
    }
} & Links at (1) are connected to links at (2) and (3), and links at (2) are not connected to links at (3). Links in (2) have a different type as links in (3). \\
\hline
\raisebox{-5.5mm}{
    \tikz[]{
        \clip (-4.3mm, -4.5mm) rectangle ++(8.6mm, 9mm);
        \node [separate, inner xsep=3.9mm, inner ysep=4.2mm] at (0,0) {};
        \node (n1) [any pt, xshift=-2mm] at (0,0) {\footnotesize 1};
        \node (n2) [any pt, xshift=1mm, yshift=1.8mm] at (0,0) {\footnotesize  2};
        \node (n3) [any pt, xshift=0.9mm, yshift=-1.9mm] at (0,0) {\footnotesize  3};
    }
} & Links are anchored at the same corner, and with the example above. \\
\hline
\raisebox{-3mm}{
    \tikz[]{
        \pic at (0,0) {trace grp={}{}{}{}};
    }
} & A moving trace point that anchors disconnected $S$-tree and $T$-tree link. \\
\hline
\raisebox{-.5ex}{
    \tikz{
        \clip (-1.6mm, -1.6mm) rectangle ++(11.6mm, 3.2mm);
        \node (n1) [any pt] at (0,0) {\footnotesize 1};
        \draw [rayl] (n1) -- ++(0:1cm); 
    }
} & Left sector-ray of an angular-sector at point 1.\\
\hline
\raisebox{-.5ex}{
    \tikz{
        \clip (-1.6mm, -1.6mm) rectangle ++(11.6mm, 3.2mm);
        \node (n1) [any pt] at (0,0) {\footnotesize 1};
        \draw [rayr] (n1) -- ++(0:1cm); 
    }
} & Right sector-ray of an angular-sector at point 1.\\
\hline
\raisebox{-.5ex}{
    \tikz{
        \clip (-1.6mm, -1.6mm) rectangle ++(11.6mm, 3.2mm);
        \node (n1) [any pt] at (0,0) {\footnotesize 1};
        \draw [rayprog] (n1) -- ++(0:1cm); 
    }
} & Progression ray with respect to point 1.\\
\hline
\textcolor{swatch_stree}{$S$-tree} & $S$-tree objects are colored red. \\
\hline
\textcolor{swatch_ttree}{$T$-tree} & $T$-tree objects are colored green. \\
\end{tabular}
\end{table}

A \textbf{query} in \rtwop{} refers to a cast or trace. A query can be found for every connected pair of $S$-tree and $T$-tree links, at the leaf points of the $S$-tree and $T$-tree.

Tables \ref{r2p:tab:nodetypes} and \ref{r2p:tab:legend} illustrate the symbols used in figures. 
Fig. \ref{r2p:fig:tree} describes the trees with respect to nodes, links and queries. 

\section{Evolving \rtwo{} to \rtwop{}}

The following subsections describe the changes made to evolve \rtwo{} to \rtwop{}. 
In \rtwop{}, short occupied-sector traces from target nodes in \rtwop{} supersedes the ad hoc points from \rtwo{} (Sec. \ref{r2p:sec:tgtocsec});
the complicated tracing phase before a recursive trace from the source point in \rtwo{} is replaced by simpler corrective steps in \rtwop{} (Sec. \ref{r2p:sec:tgtprog});
the interrupt rule counts corners in \rtwop{} instead of nodes placed (Sec. \ref{r2p:sec:interrupt});
and the overlap rule is modified to include additional conditions to discard expensive paths (Sec. \ref{r2p:sec:overlap}).

\subsection{Limited, Target Recursive Occupied-Sector Trace} \label{r2p:sec:tgtocsec}

A limited recursive occupied-sector trace from target points is implemented in \rtwop{} in place of the ad hoc points in \rtwo{}. 
Ad hoc points are ad hoc solutions that attempt to address interminable traces that occur after a full recursive occupied-sector trace from target point takes place.
Ad hoc points require complicated conditions, while a limited recursive occupied-sector trace simply inserts a $\mnoc$ type link.
While both solutions do not fully address the interminability of \rtwop{} when no path exists, the limited recursive trace is much simpler to implement than ad hoc points.
Future works can address the interminability of \rtwop{} when no path can be found.

A limited recursive occupied-sector trace inserts a $T$-tree $\mnoc$ type link when a trace enters the occupied-sector of a target point. Fig. \ref{r2p:fig:tgtocsec} illustrates the $\mnoc$ link being inserted. 
The $\mnoc$ link prevents further recursive traces from occurring, and therefore prevents any interminable chases that occur when the recursive trace and the current trace try to reach each other.

\input{chap_r2p/fig_tgtocsec}





\subsection{Ensuring Target Progression} \label{r2p:sec:tgtprog}

The \textbf{angular progression} of a trace is the angle deviated from the collided cast that resulted the trace with respect to the source point or a target point of the trace.
The \textbf{target progression} refers to the angle deviated with respect to a target point, while \textbf{source} progression refers to the angle deviated with respect to the source point.

In \rtwo{} and \rtwop{}, angular progression is ensured at the start of a trace by ensuring progression with respect to the source and target points when a trace is interrupted.
Ensuring progression at the start of a trace is critical to ensure that the pruning, placement, and sector rules can function correctly.

In \rtwo{}, a complicated tracing phase is required to ensure target progression when a trace is interrupted for a recursive trace. 
\rtwop{} simplifies the problem by replacing the complicated phase with two solutions to avoid interruptions where there will not be target progression.

Additionally, both solutions ensure that the source and target progressions can never decrease by more than $180^\circ$. As such, the winding counter in the progression is no longer required and can be removed from implementation.

\input{chap_r2p/fig_tgtprog1}
\input{chap_r2p/fig_tgtprog2}


The first solution places an unreachable $\mnun$ link at the start of a recursive angular-sector trace. The $\mnun$ link allows for target progression when there is no target progression at the initial edge of a recursive angular-sector trace. The solution is illustrated in Fig. \ref{r2p:fig:tgtprog1}.

The second solution queues a cast when the source progression has decreased by more than $180^\circ$ in non-convex obstacles. 
When this occurs, a cast is queued between the source point and the target point of the trace. 
There is only one target point, as it is a phantom point at a non-convex corner where the source progression stops increasing. The solution is illustrated in Fig. \ref{r2p:fig:tgtprog2}.

\subsection{Interrupt Rule} \label{r2p:sec:interrupt}
The interrupt rule interrupts traces for queuing, so as to avoid expanding long, non-convex contours that are unlikely to find the shortest path..

In \rtwo{} a trace is interrupted and queued after several points are placed, and the check occurs within the placement rule.
To simplify the algorithm, \rtwop{} interrupts and queues the trace after several corners are traced instead. 
The check occurs before the placement rule, and is called the \textbf{interrupt rule}.

A trace that calls a recursive angular sector trace or recursive occupied sector trace will have to be interrupted, but it is not interrupted by the interrupt rule. The trace is interrupted by the angular sector rule and occupied sector rule respectively.

\subsection{Overlap Rule} \label{r2p:sec:overlap}
The overlap rule is a broad set of instructions dictating how \rtwo{} and \rtwop{} behave when paths from different queries overlap, depending on the \textbf{overlap conditions} being triggered.
The overlap rule greedily verifies line-of-sight for the overlapping paths, which provides more confidence for the algorithm to estimate the cost of the paths and discard paths that will not lead to the optimal solution.
The overlap rule plays a critical role in reducing the number of queries, which blows up exponentially in the worst case due to delayed line-of-sight checks.

\subsubsection{\rtwo{}'s Overlap Rule}
In \rtwo{} and \rtwop{}, a path that satisfies overlap conditions \textit{O1}, \textit{O2}, or \textit{O3} of the overlap rule will be discarded.
Conditions \textit{O4}, \textit{O5}, \textit{O6}, and \textit{O7} are introduced in \rtwop{} to be more effective at discarding paths.
Conditions \textit{O6} and \textit{O7} are similar to a path pruning rule in \cite{bib:dps}.

\input{chap_r2p/fig_overlap1}
Condition \textbf{O1} is triggered when overlapping paths are detected. 
Upon detection, the overlap rule shifts the queries down the affected paths, toward the start point of the $S$-tree, to verify line-of-sight.
The purpose of moving the queries down the $S$-tree is to verify the minimum cost-to-come of the affected links, so that expensive paths can be safely discarded, which improves search time. 

Fig. \ref{r2p:fig:overlap1} illustrates the shifting of the queries when condition O1 is satisfied.
$S$-tree links in the overlapping paths are converted to $T$-tree links, until the most recent $S$-tree link with cumulative visibility ($\mney$ or $\mnvy$ type) is reached.
A cast is subsequently queued in the affected target link of the $S$-tree $\mney$ or $\mnvy$ link.



\tikzset{
    pics/overlap2/.style={ code={ 
        
        \begin{pgfonlayer}{background}
            \fill [swatch_obs] 
                (1*\ul, 3.5*\ul) coordinate (xb1) -- ++(90:\ul) -- ++(0:2.5*\ul) coordinate (xb3) -- ++(-90:\ul);
            \fill [swatch_obs] 
                (2.5*\ul, 1*\ul) -- ++(90:\ul) -- ++(0:1.5*\ul) coordinate (xa3) -- ++(-90:\ul);
        \end{pgfonlayer}
        \path (4*\ul, 0) coordinate (xsrc);
        \path (0.5*\ul, 7*\ul) coordinate (xtgt);

        \node (nsrc) [svy pt] at (xsrc) {};
        \node (ntgt) [tvu pt] at (xtgt) {};

        \draw [slink] (nsrc) -- ++(-120:2*\ul);
        \draw [tlink] (ntgt) -- ++(180:2*\ul);
    }},
}
\tikzset{
    pics/overlap2i/.style={code={
        \begin{pgfonlayer}{background}
            \fill [swatch_obs] 
                (0, 5.5*\ul) -- ++(90:\ul) -- ++(0:1.5*\ul) -- ++(90:1*\ul) coordinate (xc1) -- ++(0:1*\ul) coordinate (xc2) -- ++(-90:2*\ul) coordinate (xc3);
        \end{pgfonlayer}
    }},
}
\tikzset{
    pics/overlap2ii/.style={code={
        \begin{pgfonlayer}{background}
            \fill [swatch_obs] 
                (0, 5.5*\ul) -- ++(90:\ul) -- ++(0:4*\ul) coordinate (xc2) -- ++(-90:\ul) coordinate (xc3);
        \end{pgfonlayer}
    }},
}

\begin{figure}[!ht]
\centering
\subfloat[\label{r2p:fig:overlap2a} ] {%
    \centering
    \begin{tikzpicture}[]
        \clip (0, -\u) rectangle ++(5*\ul, 8.5*\ul);
        \pic at (0, 0) {overlap2};
        \pic at (0, 0) {overlap2i};

        \pic at (xa3) {merge grp={shift={(-4mm, -2.5mm)}}{center:$\mx_2$}{na3 tvu}{tvu pt}{na3 sey}{sey pt}};
        
        \node (nb1) [tvu pt={shift={(-2mm, -2mm)}}{center:$\mx_3$}] at (xb1) {};
        \node (nb3) [tvu pt={shift={(2mm, 2mm)}}{center:$\mx_4$}] at (xb3) {};

        \draw [slink] (na3 sey) edge (nsrc);
        \draw [tqlink] (na3 tvu) edge (nb1);
        \draw [tlink] (nb1) -- ++(140:3*\ul);
        \draw [tqlink] (na3 tvu) edge (nb3);
        \draw [tlink](nb3) edge (ntgt);
    \end{tikzpicture}
} \hfill
\subfloat[\label{r2p:fig:overlap2b}] {%
    \centering
    \begin{tikzpicture}[]
        \clip (0, -\u) rectangle ++(5*\ul, 8.5*\ul);
        \pic at (0, 0) {overlap2};
        \pic at (0, 0) {overlap2i};

        \draw [dotted] (nb3) -- (ntgt)
            node (xcol) [coordinate, pos=2/5] {};
        \draw [trace] (xcol) -- (xc3) -- (xc2) -- (xc1) -- ++(-90:\u);
        \node [cross pt] at (xcol) {};
        
        \node (na3) [sey pt={shift={(3mm, -2mm)}}{center:$\mx_2$}] at (xa3) {};
        \node (nc2) [tvu pt={}{}] at (xc2) {};

        \pic at (xb3) {merge grp={shift={(6mm, 0mm)}}{center:$\mx_4$}{nb3 tvu}{tvu pt}{nb3 sey}{sey pt}};
        \node (nc1) [tvu pt] at (xc1) {};

        \draw [slink] 
            (nb3 sey) edge (na3)
            (na3) edge (nsrc);
        \draw [tqlink] (nb3 tvu) 
            edge[bend right=10] 
            (nc2.-45);
        \draw [tlink] 
            (nc2) edge[bend right=45] (nc1)
            (nc1) edge (ntgt);
            
    \end{tikzpicture}
} \hfill
\subfloat[\label{r2p:fig:overlap2c}] {%
    \centering
    \begin{tikzpicture}[]
        \clip (0, -\u) rectangle ++(5*\ul, 8.5*\ul);
        \pic at (0, 0) {overlap2};
        \pic at (0, 0) {overlap2ii};

        \draw [dotted] 
            (nsrc) -- (na3) -- (nb3) 
            (nb3) edge
                node (xcol) [coordinate, pos=2/5] {}
                (ntgt);
        \draw [trace] (xcol) -- (xc3) -- (xc2) -- ++(180:\u);
        \node [cross pt] at (xcol) {};
        
        \pic at (xc2) {merge grp={}{}{nc2 tvu}{tvu pt}{nc2 svu}{svu pt}};
        
        \draw [tqlink] 
            (nc2 tvu) edge (ntgt);
        \draw [slink]
            (nc2 svu) edge[bend left=5] (nsrc);

    \end{tikzpicture}
}    
\caption[Case O2 of \rtwo{}'s and \rtwop{}'s overlap rule.]
{
    Case O2 of the overlap rule handles queries with expensive cost-to-come paths. 
    (a) After a successful cast to $\mx_2$, the path is found to have a larger cost-to-come than the minimum at $\mx_2$, and the target node is replaced by an $S$-tree $\mney$ node.
    (b) If a cast from an $\mney$ node is successful, the target node is replaced by an $S$-tree $\mney$ node ($\mx_3, \mx_4$). 
    If consecutive $\mney$ nodes have different sides, the path is discarded ($\mx_3$).
    An unsuccessful cast will generate a trace with the same side as the $\mney$ node ($\mx_4$) and call Case O1 when the trace becomes castable.
    (c) The trace resumes normal behavior after all $\mney$ source nodes ($\mx_2, \mx_4$) are pruned from the path.
}
\label{r2p:fig:overlap2}
\end{figure}
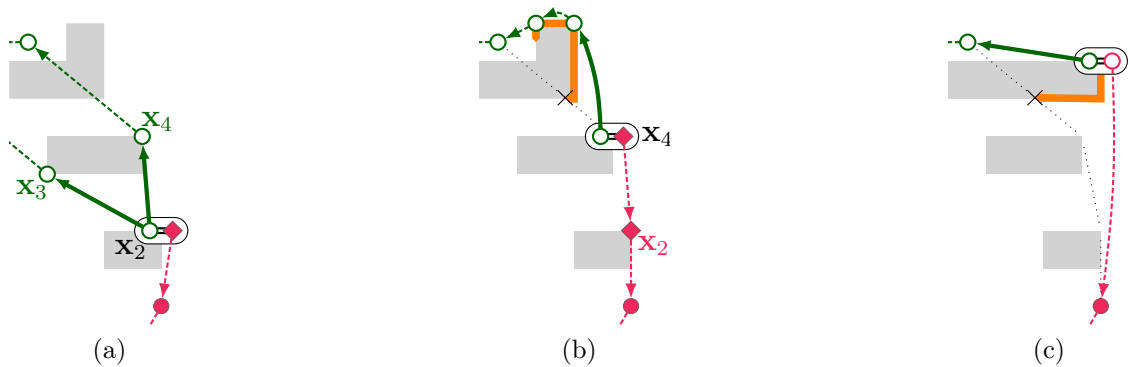
Condition \textbf{O2} is satisfied when a cast identifies a link as an $S$-tree link that is $\mney$ type. 
At the anchored point of the link, there is cumulative visibility, and the path described by the link has a more expensive cost-to-come than the cheapest known so far.
The overlap rule ensures that a trace with a different side from the point cannot be generated subsequently, as a path found by the trace will always be expensive.
In addition, a subsequent, same-sided trace can only place $\mneu$ turning points, and once the trace can cast again, condition O1 will be triggered to greedily verify line-of-sight.

Fig. \ref{r2p:fig:overlap2} illustrates the trace being generated when a cast over a target link of the $S$-tree $\mney$ link collides. 
The trace has the same side as the anchored point of the $S$-tree $\mney$ link, and the trace with a different side from the point is not generated.

The same sided-trace is necessary for the algorithm to find an optimal path.
A subsequent trace caused by the same-sided trace may prune the $\mney$ link, and the resulting path will no longer contain expensive links. 
By actively triggering condition O1 after a same-sided trace and verifying line-of-sight, the algorithm seeks to prune the $\mney$ link as soon as possible.

Conversely, a subsequent trace from the different-sided trace will never be able to prune the $\mney$ link due to the angular-sector rule and pruning rule, and should be discarded. 
Likewise, any subsequent query from a connected target link, which anchors a point with a different side from the anchor point of the $\mney$ link, will only result in an expensive path, and should be discarded.




\input{chap_r2p/fig_overlap3}

Condition \textbf{O3} is similar to condition \textit{O2}, except that a $S$-tree $\mnvy$ link is identified instead of an $\mney$ link.
There is cumulative visibility at the anchored point of the $\mnvy$ link, and the path described by the link has the cheapest known cost-to-come at the anchored point. 
If there are other, more expensive links anchored at the anchor point, condition \textit{O2} will be triggered for the expensive links. 
Fig. \ref{r2p:fig:overlap3} illustrates condition \textit{O2} being triggered by condition \textit{O3}.

\subsubsection{\rtwop{}'s Overlap Rule}
\rtwop{} extends \rtwo{}'s overlap rule with four additional conditions. Conditions \textit{O4} and \textit{O5} are extensions of \textit{O2} and \textit{O3} respectively, while conditions \textit{O6} and \textit{O7} are new.

Condition \textbf{O4} extends condition \textit{O2} for $T$-tree links and cost-to-go, and is satisfied when a $T$-tree $\mney$ link is identified. 
Unlike condition \textit{O2}, condition \textit{O4} does not trigger condition \textit{O1}.

Condition \textbf{O5} extends condition \textit{O3}, and is satisfied when a $T$-tree $\mnvy$ link is identified. If there are other expensive cost-to-go $T$-tree links at the anchor point, condition \textit{O4} will be triggered for these links.


\tikzset{
    pics/overlap4/.style n args={0}{ code={ 
        \begin{pgfonlayer}{background}
            \fill [swatch_obs] 
                (3*\ul, 1*\ul) coordinate (xa1) -- ++(90:\ul) -- ++(0:3*\ul) coordinate (xa3) -- ++(-90:\ul) coordinate (xa4);
            \fill [swatch_obs] 
                (1.5*\ul, 3.5*\ul) coordinate (xb1) -- ++(90:\ul) -- ++(0:4*\ul) coordinate (xb3) -- ++(-90:\ul);
            \fill [swatch_obs]
                (1*\ul, 5.5*\ul) coordinate (xc1) -- ++(90:\ul) -- ++(0:2*\ul) coordinate (xc3) -- ++(-90:\ul);
        \end{pgfonlayer}

        \path (1*\ul, 7.5*\ul) coordinate (xtgt);
        \path (7*\ul, 0) coordinate (xsrc);
        \node (nsrc) [svu pt] at (xsrc) {};
        \node (ntgt) [tvy pt] at (xtgt) {};
        \draw [tlink] (ntgt) -- ++(45:2*\ul);
        \draw [slink] (nsrc) -- ++(-90:2*\ul);
        
    }},
}

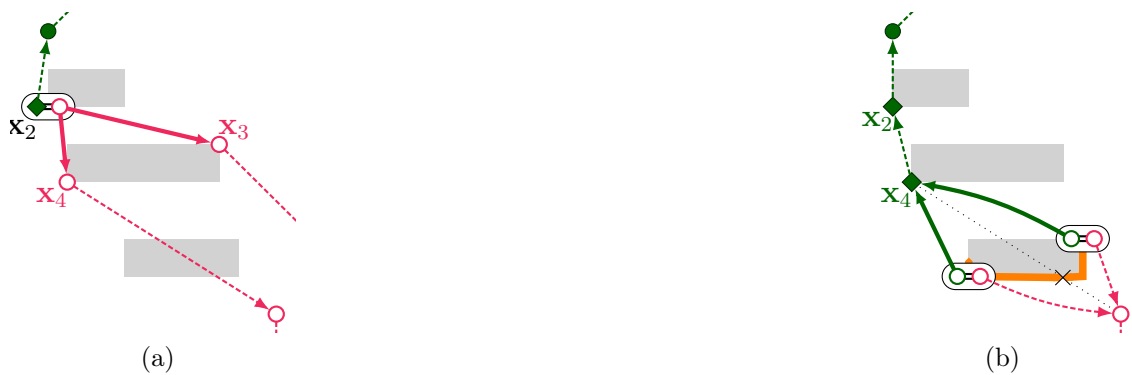
\begin{figure}[!ht]
\centering
\subfloat[\label{r2p:fig:overlap4a} ] {%
    \centering
    \begin{tikzpicture}[]
        \clip (0, -\u) rectangle ++(7.5*\ul, 8.5*\ul);
        \pic at (0, 0) {overlap4};

        \pic at (xc1) {merge grp={shift={(-3.5mm, -3mm)}}{center:$\mx_2$}{nc1 tey}{tey pt}{nc1 svu}{svu pt}};

        \node (nb1) at (xb1) [svu pt={shift={(-2mm, -2mm)}}{center:$\mx_4$}] {};
        \node (nb3) at (xb3) [svu pt={shift={(2mm, 2mm)}}{center:$\mx_3$}] {};

        \draw [sqlink] 
            (nc1 svu) edge (nb1)
            (nc1 svu) edge (nb3);
        \draw [slink]
            (nb1) edge (nsrc)
            (nb3) edge ++(-45:4*\ul);
        \draw [tlink]
            (nc1 tey) edge (ntgt);

    \end{tikzpicture}
} \hfill
\subfloat[\label{r2p:fig:overlap4b}] {%
    \centering
    \begin{tikzpicture}[]
        \clip (0, -\u) rectangle ++(7.5*\ul, 8.5*\ul);
        \pic at (0, 0) {overlap4};

        \begin{pgfonlayer}{background}
            \draw [dotted] (xsrc) -- (nb1)
                node (xcol) [coordinate, pos=2/7] {};
            \draw [trace] (xcol) -- (xa4) -- (xa3) -- ++(180:\u);
            \draw [trace] (xcol) -- (xa1) -- ++(90:\u);
            \node at (xcol) [cross pt] {};
        \end{pgfonlayer}
        
        \node (nc1) at (xc1) [tey pt={shift={(-2mm, -2mm)}}{center:$\mx_2$}] {};
        \node (nb1) at (xb1) [tey pt={shift={(-2mm, -2mm)}}{center:$\mx_4$}] {};

        \pic at (xa1) {merge grp={}{}{na1 tvu}{tvu pt}{na1 svu}{svu pt}};
        \pic at (xa3) {merge grp={}{}{na3 tvu}{tvu pt}{na3 svu}{svu pt}};

        \draw [slink]
            (na1 svu) edge[bend right=10] (nsrc)
            (na3 svu) edge (nsrc);
        \draw [tlink]
            (nc1) edge (ntgt)
            (nb1) edge (nc1);
        \draw [tqlink]
            (na1 tvu) edge (nb1)
            (na3 tvu) edge[bend right=10] (nb1);

    \end{tikzpicture}
}    
\caption[Case O4 of \rtwop{}'s overlap rule.]
{
    Case O4 extends Case O2 for cost-to-go.
    (a) In a successful cast, Case O4 is triggered when the cost-to-go is larger than the minimum at the source node, causing the source node to be replaced by a $T$-tree $\mney$ node.
    A successful cast to a $T$-tree $\mney$ node will cause the cast's source node to be replaced by an $\mney$ node regardless of the cost ($\mx_3, \mx_4$).
    A consecutive pair of $\mney$ nodes with different sides will cause the path passing through the nodes to be discarded ($\mx_3$).
    (b) Unlike Case O2, there are no restrictions to traces, and Case O1 will not be called.
}
\label{r2p:fig:overlap4}
\end{figure}
\tikzset{
    pics/overlap5/.style n args={1}{ code={ 
        \begin{pgfonlayer}{background}
            \fill [swatch_obs] 
                (4*\ul, 0*\ul) coordinate (xa1) -- ++(90:\ul) -- ++(0:2.5*\ul) coordinate (xa3) -- ++(-90:\ul) coordinate (xa4);
            \fill [swatch_obs] 
                (2.5*\ul, 2.5*\ul) coordinate (xb1) -- ++(90:\ul) -- ++(0:3.5*\ul) coordinate (xb3) -- ++(-90:\ul);
            \fill [swatch_obs]
                (2*\ul, 5.5*\ul) coordinate (xc1) -- ++(90:\ul) -- ++(0:2*\ul) coordinate (xc3) -- ++(-90:\ul);
        \end{pgfonlayer}

        \node [separate={shift={(5mm, -3.5mm)}}{center:$\mx_2$}, inner xsep=5mm] at (xc1) {};
        \draw [merge] ($(xc1) + (2.5mm, 0)$) -- (xc1);
        \node (nc1 tvy) at (xc1) [tvy pt, shift={(0, 0)}] {};
        \node (nc1 svu) at (xc1) [svu pt, shift={(3mm, 0)}] {};
        \node (nc1 tey) at (xc1) [#1, shift={(-3mm, 0)}]  {};
        
        \path (2*\ul, 7.5*\ul) coordinate (xtgt);
        \node (ntgt) at (xtgt) [tvy pt] {};
        \node (ntgt2) at (1.2*\ul, 7*\ul) [tvy pt] {};
        \draw [tlink] 
            (ntgt) edge ++(90:2*\ul)
            (ntgt2) edge ++(90:2*\ul)
            (nc1 tvy) edge (ntgt);
        \draw [slink]
            (nc1 svu) -- ++(-10:8*\ul);

        \node (nb1) at (xb1) [#1={shift={(-2mm,-2mm)}}{center:$\mx_4$}] {};
        \node (na3) at (xa3) [tvu pt={shift={(2mm,2mm)}}{center:$\mx_5$}] {};
        \node (na1) at (xa1) [#1={shift={(-2mm,-2mm)}}{center:$\mx_6$}] {};
        \draw [tlink]
            (na3) +(-30:6*\ul) edge (na3)
            (na1) +(-20:6*\ul) edge (na1);
        \draw [tlink]
            (na1) edge[bend left=10] (nb1)
            (na3) edge (nb1)
            (nb1) edge[bend left=10] (nc1 tey)
            (nc1 tey) edge (ntgt2);
    }},
}

\begin{figure}[!ht]
\centering
\subfloat[\label{r2p:fig:overlap5a} ] {%
    \centering
    \begin{tikzpicture}[]
        \clip (0, -\ul) rectangle ++(7.5*\ul, 9*\ul);
        \pic at (0, 0) {overlap5={tvy pt}};

        \node (nb3) at (xb3) [tvy pt={shift={(2mm,2mm)}}{center:$\mx_3$}] {};
        \draw [tlink]
            (nb3) +(-30:6*\ul) edge (nb3);
        \draw [tlink]
            (nb3) edge[bend left=20] (nc1 tey);
    \end{tikzpicture}
} \hfill
\subfloat[\label{r2p:fig:overlap5b}] {%
    \centering
    \begin{tikzpicture}[]
        \clip (0, -\ul) rectangle ++(7.5*\ul, 9*\ul);
        \pic at (0, 0) {overlap5={tey pt}};
    \end{tikzpicture}
}    
\caption[Case O5 of \rtwop{}'s overlap rule.]
{
Case O5 extends Case O3 to cost-to-go.
(a) A successful cast finds the smallest cost-to-go at the source node's corner ($\mx_2$). More expensive cost-to-go paths at $\mx_2$ are scanned, and the relevant $T$-tree $\mnvy$ nodes along the path are converted to $T$-tree $\mney$ nodes.
(b) A path will be discarded if it passes through a consecutive pair of $\mney$ nodes with different sides ($\mx_3$). 
Unlike Case O3, Case O5 does not call Case O1.
}
\label{r2p:fig:overlap5}
\end{figure}
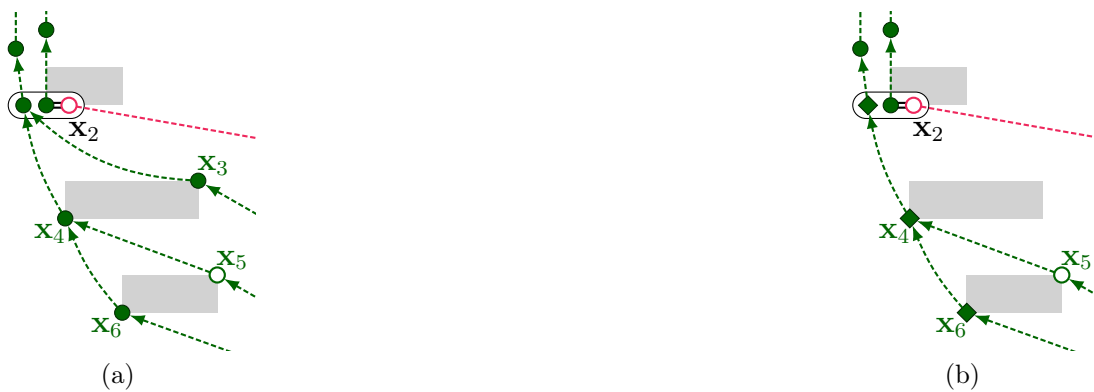


Conditions \textbf{O6} and \textbf{O7} are satisfied when an expensive $\mney$ link is identified, and which describes a path that lies closer to the obstacle than the cheapest path that passes through the anchored point. 
Condition \textit{O6} is triggered for $S$-tree $\mney$ links, and condition \textit{O7} is triggered for $T$-tree $\mney$ links, and the paths described by the links are discarded.
Fig. \ref{r2p:fig:overlap6and7a} illustrates condition \textit{O6}, and Fig. \ref{r2p:fig:overlap6and7b} illustrates condition \textit{O7}. 

For both cases, suppose the path segments $(\mx_e, \mx_a)$ and $(\mx_c, \mx_a)$ are on different paths, and the point at $\mx_a$ has cumulative visibility via both paths.
Suppose the path passing through $\mx_c$ is the cheapest cost known so far to reach $\mx_a$, and the path via $\mx_e$ is described by the more expensive $\mney$ link.
The condition for safely discarding the expensive path is
\begin{equation}
    \mtdir\mside(\mv_e \times \mv_c) < 0, \label{r2p:eq:newex}
\end{equation}
where $\mv_e = \mx_a - \mx_e$, $\mv_c = \mx_a - \mx_c$. 
$\mtdir=S$ or $\mtdir=T$ if the $\mney$ link is in the $S$-tree or $T$-tree respectively.
$\mside$ refers to the side of the turning point at $\mx_a$.
Theorem \ref{r2p:thm:newex} provides a proof for discarding the paths.

\input{chap_r2p/fig_overlap6and7}


\begin{theorem}
Suppose a path $P_a = (\mx_0, \cdots, \mx_c, \mx_a, \cdots)$ has the shortest path known so far at $\mx_a$.
Consider another path $P_b = (\mx_0, \cdots, \mx_e, \mx_a, \cdots)$, which has a longer path to $\mx_a$ than $P_a$.
For both paths, there is cumulative visibility from $\mx_a$ to $\mx_0$.
Let $\mtdir=S$ or $\mtdir=T$ if the start point or goal point is at $\mx_0$ respectively.
Let $\mside$ be the side of the turning point at $\mx_a$.
The longer path $P_b$ can be discarded and \rtwop{} remains complete if 
\begin{equation*}
    \mtdir\mside(\mv_e \times \mv_c) < 0,
\end{equation*}
where $\mv_e = \mx_a - \mx_e$ and $\mv_c = \mx_a - \mx_c$.
\label{r2p:thm:newex}
\end{theorem}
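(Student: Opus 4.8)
The plan is to argue by contradiction, reducing the claim to a single geometric fact about turning at a convex corner. Assume some shortest path $P^\ast$ from $\mnode_\mstart$ to $\mnode_\mgoal$ uses the expensive $\mney$ link, i.e. it contains $\mx_e$ as the predecessor of $\mx_a$ (in the $\mtdir$ direction) and then continues along a suffix whose first edge leaves $\mx_a$ in some direction $\mathbf{w}$. Splice the prefix of $P_a$ (the cheaper path, ending with the edge $\mx_c\to\mx_a$) onto that suffix to obtain $P' = (\mx_0,\ldots,\mx_c,\mx_a,\ldots)$. Then $|P'|<|P^\ast|$ because $P_a$ reaches $\mx_a$ more cheaply than $P_b$; every edge of $P'$ still has line-of-sight, since the spliced-in edges are those of the cumulatively-visible prefix of $P_a$ and the remaining edges are unchanged; and $P'$ is taut at every vertex except possibly $\mx_a$ — tautness on $P_a$'s prefix is inherited from $P_a$ (R2+ only keeps taut paths), and tautness at each suffix vertex other than $\mx_a$ is inherited from $P^\ast$ because neither its incoming nor its outgoing edge changes. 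So it suffices to prove $P'$ is taut at $\mx_a$; $P'$ then contradicts the optimality of $P^\ast$. Since every branch discarded under conditions \textit{O6} and \textit{O7} extends the segment $\mx_e\to\mx_a$, no path avoiding that segment is affected, so Theorems \ref{r2:thm:complete} and \ref{r2:thm:optimal} continue to hold.

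Take $\mtdir = S$ without loss of generality (the $\mtdir=T$ case follows by exchanging start and goal, which flips every relevant cross-product sign consistently with the factor $\mtdir$). By the tautness test of Sec. \ref{r2:sec:prune} specialised to a $\mside$-sided source node with $\mtdir = S = -1$, a turn from incoming direction $\mathbf{u}$ to outgoing direction $\mathbf{w}$ is taut exactly when $\mside(\mathbf{u}\times\mathbf{w})\ge 0$, strict for a genuine turn, with equality handled by the dot-product branch of ${isTaut}$ exactly as in Lemma \ref{ncv:lem:tprune}. Since $P^\ast$ is a shortest path it is taut at $\mx_a$, so $\mside(\mv_e\times\mathbf{w})\ge 0$. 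The hypothesis $\mtdir\mside(\mv_e\times\mv_c)<0$ reads $\mside(\mv_e\times\mv_c)>0$. The target inequality is $\mside(\mv_c\times\mathbf{w})\ge 0$, which gives tautness of $P'$ at $\mx_a$.

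The heart of the proof — and the step I expect to be the main obstacle — is a geometric separation fact at the convex corner $\mx_a$. Because $P_a$ is a taut path that genuinely turns at the convex corner $\mx_a$ on side $\mside$, its incoming edge lies along the supporting line $\ell$ of one of the two obstacle edges meeting at $\mx_a$ (here the contour assumption of Sec. \ref{line:sec:contour} is what makes this tangency unambiguous). Turning around the corner on side $\mside$ means precisely that every admissible incoming direction of a source-side taut approach — in particular both $\mv_c$ (which lies on $\ell$) and $\mv_e$ — sits on the incoming side of $\ell$, while every admissible outgoing direction, in particular $\mathbf{w}$, sits on the opposite side. Hence $\mv_e$ is $(-\mside)$-ward of $\mv_c$ (consistent with the hypothesis) and $\mathbf{w}$ is $\mside$-ward of $\mv_c$, i.e. $\mathbf{w}$ does not lie inside the closed angular wedge spanned $\mside$-ward from $\mv_e$ to $\mv_c$. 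Combining this with $\mside(\mv_e\times\mathbf{w})\ge 0$ and $\mside(\mv_e\times\mv_c)>0$, and using that the exterior angle of a convex corner is below $\pi$ so that $\mv_e$, $\mv_c$, $\mathbf{w}$ lie within a common half-plane, the three cross-product inequalities compose transitively to yield $\mside(\mv_c\times\mathbf{w})\ge 0$.

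Making the separation fact rigorous is the work: I would (i) identify the two obstacle-edge directions at $\mx_a$ and show a taut turning path must run tangent to one of them there; (ii) show the source-side free region at $\mx_a$ lies on the incoming side of $\ell$, so that $\mx_e$, hence $\mv_e$, does; and (iii) verify the common-half-plane claim and the non-wrapping of $\mathbf{w}$, organised as a short case analysis on which edge is grazed and on whether the turn at $\mx_a$ (or the prior/subsequent turn on $P^\ast$) is degenerate, the degenerate cases being dispatched by the dot-product branch of ${isTaut}$. Assembling these gives tautness of $P'$ at $\mx_a$, completing the contradiction and establishing that $P_b$ may be discarded while $\rtwop{}$ remains complete (and, with Theorem \ref{r2:thm:optimal}, optimal).
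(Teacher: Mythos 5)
There is a genuine gap, and it lies in what ``discarding $P_b$'' removes from the search. Your contradiction only considers continuations of $P_b$ that keep $\mx_a$ as a turning point: a shortest path $P^\ast$ containing the turn $(\mx_e,\mx_a,\mathbf{w})$. But in \rtwop{} the discarded branch also owns every future continuation in which the point at $\mx_a$ (and possibly $\mx_e$ and further ancestors) is later \emph{pruned}, so that the produced path bypasses $\mx_a$ entirely and runs from $\mx_e$ (or an exposed ancestor such as $\mx_d$) directly to corners beyond. Your closing claim that ``every branch discarded under O6 and O7 extends the segment $\mx_e\to\mx_a$'' is exactly where this is lost: a query lineage descending from the link $(\mx_e,\mx_a)$ need not produce paths containing that segment. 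The paper's proof spends essentially all of its effort (Cases 1.1--4) on these pruned continuations, showing that the hypothesis $\mtdir\mside(\mv_e\times\mv_c)<0$ forces every such continuation either to cross the cheaper segment $(\mx_c,\mx_a)$ or an earlier segment of $P_a$ (where a cost-domination argument at the intersection point applies), or to loop around the root; it also needs a separate structural argument (existence and non-prunability of a $(-\mside)$-sided ancestor for $S$-tree links) to ensure the unwinding stops. A symptom of the gap is that your argument barely uses the hypothesis: the spliced path $P'$ is strictly shorter than $P^\ast$ whether or not it is taut at $\mx_a$ (a non-taut $P'$ can be string-pulled into something even shorter), so your contradiction goes through unconditionally --- yet the condition is genuinely necessary, as the paper's $\mx_{e,3}$ example shows.

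Secondarily, the ``separation fact'' you lean on for the tautness step is false as stated. A taut path turning at a convex corner need not arrive tangent to an obstacle edge, so $\mv_c$ need not lie on the supporting line $\ell$, and the three directions are not forced into a configuration where the cross-product signs compose: with $\mside=1$, $\mv_e=(1,0)$, $\mv_c=(0,1)$, $\mathbf{w}=(1,1)/\sqrt{2}$ one has $\mside(\mv_e\times\mv_c)>0$ and $\mside(\mv_e\times\mathbf{w})>0$ but $\mside(\mv_c\times\mathbf{w})<0$. This secondary issue is moot for the length contradiction, but it matters because tautness at $\mx_a$ was the only place your proof attempted to use the theorem's hypothesis at all.
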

\begin{proof}



Let the notation $c_{j\mid k}$ describe the length of the unobstructed path $(\mx_0, \cdots, \mx_k, \mx_j)$ at $\mx_j$ from $\mx_0$, that crosses $\mx_k$ immediately before reaching $\mx_j$.

\input{chap_r2p/fig_thm1case1}
In \textbf{Case 1.1}, the expensive query that passes through $\mx_e$ continues past $\mx_a$, causing the point at $\mx_a$ to be pruned, and the resulting path to intersect the cheaper path segment $(\mx_c, \mx_a)$. 
From a proof of contradiction, the resulting path will be more expensive if it intersects the cheaper path segment $(\mx_c, \mx_a)$.
Let the point of intersection be $\mx_i$.
The segment $(\mx_e, \mx_i)$ is assumed to be unobstructed, as this is the shortest possible distance from $\mx_e$ to $\mx_i$ on $(\mx_c, \mx_a)$.
If $c_{i \mid c} \ge c_{i \mid e}$, then $c_{a \mid i} + c_{i \mid c} \ge c_{a \mid i} + c_{i \mid e}$, which is a contradiction as $c_{a \mid c} < c_{a \mid e}$, $c_{a \mid c} = c_{a \mid i} + c_{i \mid e}$, and $c_{a \mid e} < c_{a \mid i} + c_{i \mid e}$.
As such, $c_{i \mid c} < c_{i \mid e}$, and a longer, unobstructed path found at $\mx_a$ that intersects the cheaper path segment $(\mx_c, \mx_a)$ has to be expensive.
Case 1.1 is illustrated in Fig. \ref{r2p:fig:thm1case1a}.

In \textbf{Case 1.2}, the longer path intersects the shorter path at the other segments beyond $\mx_c$. 
From Case 1.1, $c_{c \mid c} < c_{c \mid e}$ and it is costlier to reach $\mx_c$ from $\mx_e$. 
By applying the proof of contradiction recursively over the segments beyond $\mx_c$, any path from $\mx_e$ can be shown to be longer when it arrives at the intersection with the shorter path.
Case 1.2 is illustrated in Fig. \ref{r2p:fig:thm1case1b}.

\input{chap_r2p/fig_thm1case2}
Consider the cases where the point at $\mx_e$ is subsequently pruned, causing a point at $\mx_d$ to be exposed. 
For \textbf{Case 2.1}, let the intersection of the line colinear to $(\mx_d, \mx_e)$ with the cheaper path segment $(\mx_c, \mx_a)$ be at the point $\mx_j$; and the intersection of the path with the segment $(\mx_c, \mx_a)$ be at $\mx_i$.
From Case 1.1, $c_{j \mid c} < c_{j \mid e}$, and since $\mx_d$,  $\mx_e$, and $\mx_j$ are colinear, $c_{j \mid c} < c_{j \mid d}$.
By applying a proof of contradiction, $c_{i \mid c} < c_{i \mid d}$, and
any subsequent path from $\mx_d$ that crosses $(\mx_c, \mx_a)$ will be longer.
Case 2.1 is illustrated in Fig. \ref{r2p:fig:thm1case2a}.

Consider \textbf{Case 2.2}, where the longer path from $\mx_d$ intersects the shorter path beyond $\mx_c$. By applying proofs of contradictions from Cases 1.1, 1.2 and 2.1, any subsequent path from $\mx_d$ that crosses the shorter path will be longer.
Case 2.2 is illustrated in Fig. \ref{r2p:fig:thm1case2b}.

Consider \textbf{Case 3}, where more points are pruned from the longer path.
Repeating the proofs of Cases 2.1 and 2.2, any path originating from the pruned path will be longer at the intersection with the shorter path, provided that pruning stops at a point before the root point.
Case 3 is applicable for $S$-tree links as the pruning of the longer path will stop at a $(-\mside)$-sided point.
Case 3 is not applicable for $T$-tree links, but is admissible to discard a more expensive cost-to-go path at $\mx_a$ as \rtwop{} is complete.

\input{chap_r2p/fig_thm1case3A}
For Case 3, pruning will stop at a $(-\mside)$-sided turning point if the path is on the $S$-tree, where the root point is the start point. 
$\mnode_{-\mside}$ is first shown to exist.
From a proof of contradiction, suppose that a $(-\mside)$-sided point does not exist and all turning points along the longer path are $\mside$-sided.
If all turning points are $\mside$-sided, the unobstructed longer path has to be a straight path, or bend monotonically to the $\mside$-side from the start point before reaching $\mx_a$ (see Fig. \ref{r2p:fig:thm1case3A}).
Since $(\mx_c, \mx_a)$ lies on the $\mside$-side of the longer path, the shorter path has to lie on the $\mside$ of the longer path when viewed from the start point.
However, it is impossible for a shorter unobstructed path to $\mx_a$ to exist on the $\mside$-side of a longer path that is straight or bends to the $(-\mside)$-side, and the longer path has to contain at least one $(-\mside)$-sided turning point. Let this $(-\mside)$-sided turning point be $\mnode_{-\mside}$.

\input{chap_r2p/fig_thm1case3B}
$\mnode_{-\mside}$ on the longer path cannot be pruned if it is in the $S$-tree.
For $\mnode_{-\mside}$ to be pruned, a trace has to be $(-\mside)$-sided.
Before the prune can occur, 
the $(-\mside)$-sided trace will have to cross the $(-\mside)$-sided sector-ray of $\mnode_{-\mside}$, which points to a previously pruned $(-\mside)$-sided turning point along the longer path (e.g. $\mx_d$).
The sector-ray is formed when a cast from $\mnode_{-\mside}$ had reached the pruned point.
The trace may be discarded, or be interrupted by a recursive angular sector trace, causing $\mnode_{-\mside}$ to be preserved (see Fig. \ref{r2p:fig:thm1case3Ba}).

For $T$-tree nodes in Case 3, a $(-\mside)$-sided turning point can be similarly shown to exist, but unlike the $S$-tree, the turning point can be pruned as sector-rays cannot be defined for nodes in the target direction.
Case 3 is not a problem for $T$-tree nodes, as \rtwop{} will be able to find the shortest path from the $(-\mside)$-sided turning point in another query even if it is pruned by the current query (see Fig. \ref{r2p:fig:thm1case3Bb}).

Consider \textbf{Case 4}, where a subsequent query reaches a point that causes the longer path to sweep past the root point and not intersect with the cheaper path. 
As such a path causes a loop, the longer path can be discarded if it does not fulfill Eq. \ref{r2p:eq:newex} at $\mx_a$.
\end{proof}



\section{\rtwop{} Algorithm}

The pseudocode in this section shows only the noteworthy steps in the algorithm.
A more detailed version is available in Appendix \ref{chap:suppr2p}, which describes how the tree is managed to avoid data races and limit the number of link connections for each link.
In the pseudocode, ``source" and ``target'' are abbreviated to ``src" and ``tgt" respectively.
Rays are merged only if the resulting angular sector shrinks.

\rtwop{} is run from Alg. \ref{r2p:alg:run}.
Alg. \ref{r2p:alg:caster} handles casts, while Alg. \ref{r2p:alg:tracer} handles traces. 
Alg. \ref{r2p:alg:casterreached} and Alg. \ref{r2p:alg:castercollided} are helper functions that manages a successful cast and collided cast respectively, and Alg. \ref{r2p:alg:tracerproc} is a helper function that manages nodes and links in the source or target direction of the trace.

\begin{algorithm}[!ht]
\begin{algorithmic}[1]
\caption{Main \rtwop{} algorithm.}
\label{r2p:alg:run}
\Function{Run}{$\mx_\mathrm{start}, \mx_\mathrm{goal}$}
    \State $\mlink \gets$ link from $\mnode_\mathrm{start}$ to $\mnode_\mathrm{goal}$.
    \State Queue $(\mqcast, \mlink)$.
    \While {open-list is not empty}
        \State Poll query ($\mqtype, \mlink$).
        \If {$\mqtype = \mqcast$}   \Comment{Casting query polled.}
            \IfThen {\Call{Caster}{$\mlink$}} {\Return path}
        \Else   \Comment{Tracing query polled.}
            \State Trace from target point of $\mlink$.
        \EndIf
        \State Do actions for any point where overlap condition \textit{O1} is triggered.
    \EndWhile
    \State \Return $\{\}$ 
    \Comment{No path.}
\EndFunction
\end{algorithmic}
\end{algorithm}
\begin{algorithm}[!ht]
\begin{algorithmic}[1]
\caption{Handles casting queries.}
\label{r2p:alg:caster}
\Function{Caster}{$\mlink$}
    \If {cast from source point of $\mlink$ to target point of $\mlink$ succeeds}
        \IfThen{\Call{CastReached}{$\mlink$}}{\Return $\mtrue$}
    \Else
        \State \Call{CastCollided}{$\mlink$}
    \EndIf
    \State \Return $\mfalse$ 
\EndFunction
\end{algorithmic}
\end{algorithm}

\begin{algorithm}[!ht]
\begin{algorithmic}[1]
\caption{Handles tracing queries.}
\label{r2p:alg:tracer}
\Function{Tracer}{$\tau$}
    \DoWhile
        \Comment{$\tau$ encapsulates a tracing query.}
        \If {traced to source point}
            \State \Break
        \ElsIf {\underline{progression rule} finds no source progression}
            \IfThen {queued cast to phantom pt}{\Break}
        \ElsIf{\Call{TracerProc}{$T, \tau$} discards trace}
            \State \Break
        \ElsIf {\Call{TracerProc}{$S, \tau$} discards trace}
            \State \Break
        \ElsIf {\underline{interrupt rule} queues a tracing query}
            \State \Break
        \ElsIf {\underline{placement rule} has cast to all target nodes}
            \State \Break
        \EndIf
        \State Trace to next corner.
    \EndDoWhile{trace not out of map}
\EndFunction
\end{algorithmic}
\end{algorithm}
\begin{algorithm}[!ht]
\begin{algorithmic}[1]
\caption{Handles successful casting queries.}
\label{r2p:alg:casterreached}
\Function{CastReached}{$\mlink$}
    \If {source and target links of $\mlink$ have cumulative visibility}
        \Comment{$\mnvy$ type.}
        \State Generate path and \Return $\mtrue$.
    \ElsIf {$\mlink_T$ should not be reached}
        \Comment{$\mnun$ type.}
        \State \Return $\mfalse$ 
    \ElsIf {$\mlink_T$ is part of interrupted trace}
        \Comment{$\mntm$ type.}
        \State \parbox[t]{13cm}{%
        Try to place a turning point at target point, and change link type based on cumulative visibility and cost of source link.\strut}
        \State Trace from target point if target links of $\mlink$ are not castable.
    \EndIf
    \If {source and target links of $\mlink$ have no cumulative visibility}
        \Comment{$\mnvu$ type.}
        \State Test overlap condition \textit{O1} at target point.
        \State If no overlap, queue $(\mqcast,\mlink_T)$ for every target link $\mlink_T$ of $\mlink$.
    \ElsIf{source link has cumulative visibility}
        \Comment{$\mnvy$ type.}
        \State Merge sector-ray describing cast to angular sector in $\mlink$.
        \For{each target link $\mlink_T$ of $\mlink$}
            \State Merge sector-ray describing cast to angular sector in $\mlink_T$.
            \State Queue $(\mqcast, \mlink_T)$.
        \EndFor
        \State Test overlap conditions \textit{O2}, \textit{O3}, and \textit{O6} at target point.
    \ElsIf{target link has cumulative visibility}
        \Comment{$\mnvy$ type.}
        \State Queue $(\mqcast,\mlink_S)$ for source link $\mlink_S$ of $\mlink$.
        \State Test overlap conditions \textit{O4}, \textit{O5}, and \textit{O7} at source point.
    \EndIf
    \State \Return $\mfalse$
\EndFunction
\end{algorithmic}
\end{algorithm}
\begin{algorithm}[!ht]
\begin{algorithmic}[1]
\caption{Handles casting queries that collide.}
\label{r2p:alg:castercollided}
\Function{CastCollided}{$\mlink$}
    \State $p_S \gets $ source point of $\mlink$.
    \State merge sector-ray describing cast into angular sector of $\mlink$.
    \If {source link of $\mlink$ is not expensive}
        \Comment{not $\mney$ type.}
        \State Do \underline{minor trace} from collision point, which has different side from $p_S$.
        \If {target point of $\mlink$ is the goal point}
            \State Do \underline{third trace} from source point, which has the same side as $p_S$.
        \EndIf
    \EndIf
    \State Do \underline{major trace} from collision point, which has same side as $p_S$.
\EndFunction
\end{algorithmic}
\end{algorithm}
\begin{algorithm}[!ht]
\begin{algorithmic}[1]
\caption{Processes trace in one tree direction.}
\label{r2p:alg:tracerproc}
\Function{TracerProc}{$\mtdir, \tau$}
    \ForEach{$\mtdir$ link $\mlink_\mtdir$ of $\tau$}
        \State $p_\mtdir \gets $ $\mtdir$ point of link.
        \If {\underline{progression rule} finds no progression for $p_\mtdir$ or if there is a cast}
            \State \Continue
        \ElsIf{$\mtdir=S$ and \underline{angular-sector rule} discards trace}
            \State \Continue
        \ElsIf{$p_\mtdir$ is start or goal point}
            \State \Continue
        \ElsIf{trace has same side as $p_\mtdir$ and \underline{pruning rule} prunes $\mlink$}
            \State \Continue
        \ElsIf{trace has different side from $p_\mtdir$ and \underline{occupied sector rule} generates trace from source point}
            \State \Continue
        \EndIf
    \EndFor
\EndFunction
\end{algorithmic}
\end{algorithm}

\newpage
\phantom{blabla}
\newpage
\section{Methodology of Comparing Algorithms}

The method used is the same as \cite{bib:r2}.
Algorithms are run on benchmarks, which are obtained from \cite{bib:bench}.
Each map in the benchmark contains between a few hundred to several thousand \textit{scenarios}, which are shortest path problems between two points.

As \rtwo{} and \rtwop{} do not pre-process the map and runs on binary occupancy grids, their results are compared with equivalent state-of-the-art algorithms Anya and \rsp{}.
As such, state-of-the-art algorithms that are not online or do not run on binary occupancy grids, such as Polyanya \cite{bib:polyanya} and Visibility Graphs \cite{bib:vg}, are not compared.

For \rsp{}, the skip, bypass, and block extensions are selected as it is the fastest online configuration.
\rsp{} requires a map to be scaled twice and the start and goal points to be shifted by one unit in both dimensions.
As such, the tested maps are scaled twice, and the chosen algorithms are run on the same scenarios as \rsp{}.

Unlike \rtwo{}, \rtwop{} allows a path to pass through a checkerboard corner. 
A checkerboard corner is located at a vertex where the four diagonally adjacent cells have occupancy states resembling a checkerboard.
The passage through a checkerboard corner simplifies the algorithm by avoiding ambiguity when the starting point is located at a checkerboard corner.
To ensure that the returned paths are correct, the costs of \rtwo{} and \rtwop{} are verified against the visibility graph implementations and other algorithms, and the costs are found to agree.

To test the impact of overlap conditions \textit{O6} and \textit{O7} on search time, \rtwop{} is further re-run as the variant ``\rtwop{}N67" with the conditions disabled.

The tests are run on Ubuntu 20.04 in Windows Subsystem for Linux 2 (WSL2) and on a single core of an Intel i9-11900H (2.5 GHz), with Turbo-boost disabled. The machine and software is the same as \cite{bib:r2}.
\rtwo{} and \rtwop{} are available at \cite{bib:r2code}.

\section{Results}
In this section, a \textbf{speed-up} is the ratio of an algorithm's search time to \rtwop{}'s search time.
The speed-ups for selected maps are shown in Fig. \ref{r2p:fig:results}.
The average search times are  shown in Table \ref{r2p:tab:average},
and Table \ref{r2p:tab:points} show the average speed-ups with respect to the 3, 10 and 30 turning points.
As passage through checkerboard corners have negligible impact on search times (see below), the costs and number of turning points used for comparisons are based on paths that can pass through checkerboard corners.
Colinear turning points are removed from all results to avoid double counts in the comparisons.

The middle column of Fig. \ref{r2p:fig:results} shows the average speed-ups with respect to the number of turning points on the shortest path. 
The right column of Fig. \ref{r2p:fig:results} shows the benchmark characteristics by plotting the shortest paths' cost with  the number of turning points.
The correlation between the cost and number of turning points is indicated by $r$, and the ratio of the number of corners to the number of free cells is indicated in $\rho$.

As \rtwo{} and \rtwop{} are exponential in the worst case with respect to the number of collided casts, the algorithms are expected to perform poorly in benchmarks with high $r$.
A high $r$ indicates that paths are likely to turn around more obstacles as they get longer, implying that the maps have highly non-convex obstacles and many disjoint obstacles.
In such maps, collisions are highly likely to occur, and \rtwo{} and \rtwop{} are likely to be slow.

Unlike the other algorithms, \rtwop{} and \rtwop{}N67 allow passage through checkerboard corners.
As such, the shortest paths of the other algorithms are different from \rtwop{} for the maps ``random512-10-1" (25.25\% identical) and ``random512-20-2" (9.38\% identical). 
Coincidentally, \rtwop{} differs significantly from \rtwo{} in the search times for only the two maps (see Table \ref{r2p:tab:points}).
The difference in search time is due primarily to the overlap conditions \textit{O6} and \textit{O7}, as 
As \rtwop{}N67 performs similarly to \rtwo{} for the two maps (see Table \ref{r2p:tab:average}),
and \rtwop{}N67 is \rtwop{} without the conditions.
As such, allowing passage through checkerboard corners have negligible impact on the search times for the maps tested.

The overlap conditions \textit{O6} and \textit{O7} improves search time significantly in maps with many small disjoint obstacles like ``random512-10-1", instead of maps with highly non-convex obstacles  like ``maze512-8-0".
As queries are able to move around small obstacles faster than highly non-convex ones, path costs can be verified more quickly, and more overlapping paths satisfy the overlap conditions.
As such, \rtwop{} to perform significantly faster than \rtwo{} in maps with more disjoint obstacles.

While being simpler than \rtwo{}, \rtwop{} has similar performance to \rtwo{} in other maps.
As such, \rtwop{} preserves the speed advantage that \rtwo{} has over other algorithms when the shortest path is expected to turn around few obstacles, while 
significantly outperforming \rtwo{} in maps with many disjoint obstacles.

\begin{figure*}[!ht]
\centering
\includegraphics[width=\textwidth]{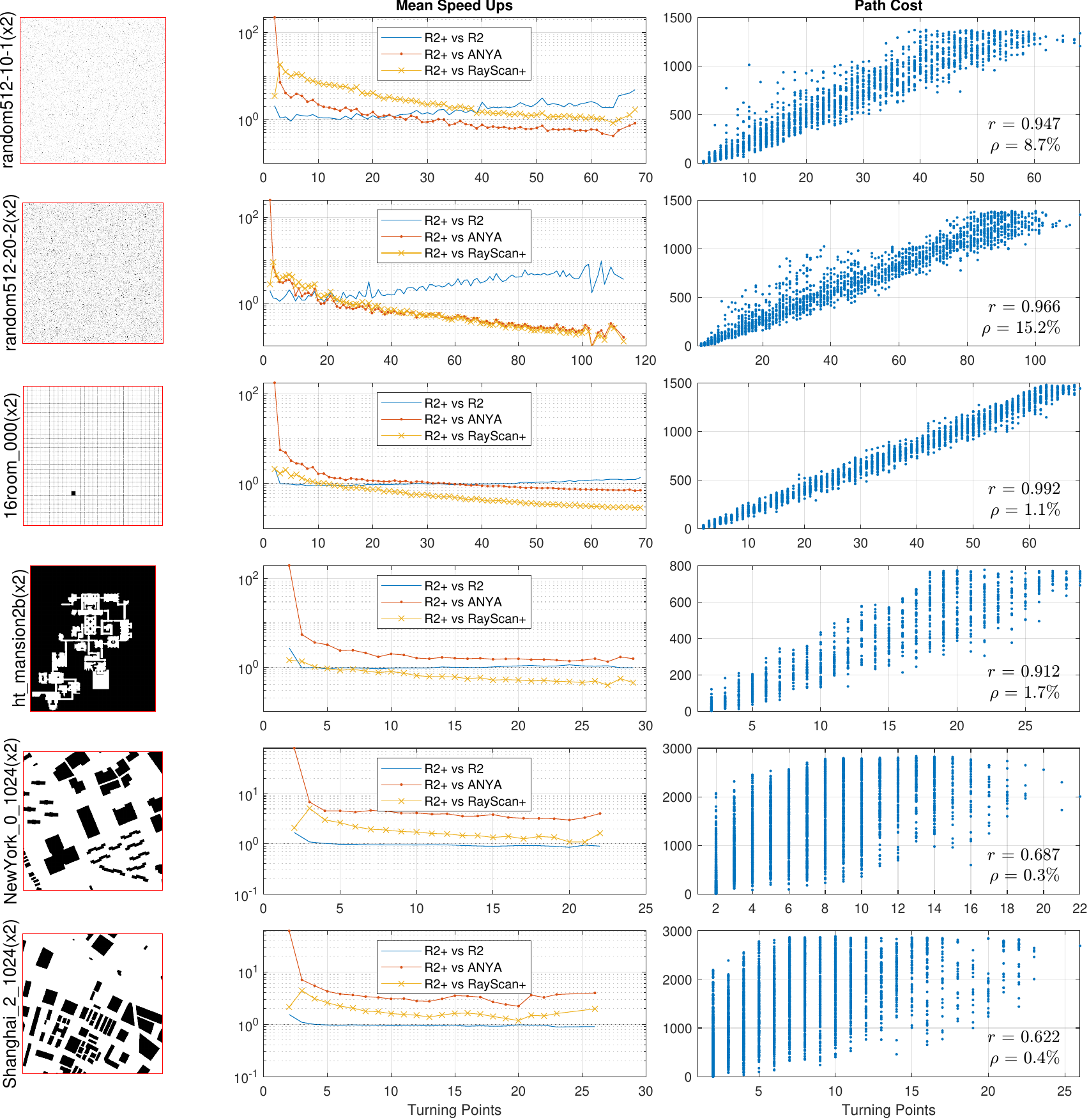}
\caption[Selected results for \rtwop{}]
{
Results for selected maps. All maps are scaled twice, and the start and goal points shifted by one unit to accommodate \rsp{}.
\rtwop{} and \rtwo{} performs well on maps with convex obstacles and few disjoint obstacles, \rtwop{} performs better than \rtwo{} on maps with many disjoint obstacles.
}
\label{r2p:fig:results}
\end{figure*}

\begin{sidewaystable}[!ht]
\centering
\caption{Benchmark characteristics and average search time.}
\setlength{\tabcolsep}{3pt}
\begin{tabular}{ c c c c c c c c c c }
\toprule
Map & $P$ & $G$ & $r$ & $\rho$ (\%) & \rtwop{} & \rtwop{}N7 & R2 & ANYA & RS+ \\
\hline
bg512/AR0709SR & 13 & 953.3 & 0.608 & 0.144 & 38.463 & 38.321 & 35.177 & 204.542 & 54.128 \\
\hline
bg512/AR0504SR & 22 & 1019.0 & 0.792 & 0.570 & 150.338 & 166.378 & 155.799 & 570.845 & 204.597 \\
\hline
bg512/AR0603SR & 42 & 2228.5 & 0.963 & 1.299 & 771.426 & 851.360 & 846.224 & 1040.011 & 335.797 \\
\hline
da2/ht\_mansion2b & 29 & 776.2 & 0.912 & 1.748 & 302.421 & 345.923 & 324.229 & 471.444 & 152.418 \\
\hline
da2/ht\_0\_hightown & 18 & 1061.9 & 0.908 & 0.876 & 251.374 & 301.591 & 288.901 & 1031.213 & 273.415 \\
\hline
dao/hrt201n & 31 & 905.8 & 0.942 & 2.751 & 427.294 & 477.768 & 442.440 & 634.030 & 193.464 \\
\hline
dao/arena & 5 & 100.5 & 0.428 & 1.315 & 4.289 & 4.215 & 4.403 & 98.121 & 2.750 \\
\hline
maze/maze512-32-0 & 56 & 4722.4 & 0.987 & 0.037 & 287.388 & 287.549 & 292.550 & 904.761 & 132.083 \\
\hline
maze/maze512-16-0 & 145 & 6935.0 & 0.994 & 0.143 & 2398.408 & 2395.561 & 2413.348 & 1708.210 & 369.548 \\
\hline
maze/maze512-8-0 & 205 & 4792.2 & 0.992 & 0.511 & 9942.258 & 9923.029 & 10443.452 & 2836.640 & 895.688 \\
\hline
random/random512-10-1 & 68 & 1372.1 & 0.947 & 8.667 & 13894.789 & 28495.995 & 28860.096 & 9175.201 & 19476.182 \\
\hline
random/random512-20-2 & 113 & 1386.8 & 0.966 & 15.219 & 113042.158 & 481993.141 & 480362.351 & 32668.658 & 29605.503 \\
\hline
room/32room\_000 & 41 & 1579.2 & 0.989 & 0.272 & 1003.133 & 1220.866 & 1021.478 & 1656.847 & 558.525 \\
\hline
room/16room\_000 & 69 & 1477.7 & 0.992 & 1.065 & 4671.431 & 6415.172 & 5411.462 & 3713.668 & 1641.957 \\
\hline
street/Denver\_2\_1024 & 16 & 2835.8 & 0.770 & 0.028 & 96.485 & 102.252 & 91.920 & 910.048 & 416.744 \\
\hline
street/NewYork\_0\_1024 & 22 & 2834.8 & 0.687 & 0.310 & 316.994 & 324.427 & 299.855 & 1273.206 & 511.943 \\
\hline
street/Shanghai\_2\_1024 & 26 & 2885.7 & 0.622 & 0.404 & 508.290 & 541.504 & 491.929 & 1520.395 & 750.569 \\
\hline
street/Shanghai\_0\_1024 & 22 & 2816.5 & 0.511 & 0.258 & 266.808 & 267.314 & 256.614 & 973.980 & 265.440 \\
\hline
street/Sydney\_1\_1024 & 24 & 2844.5 & 0.698 & 0.128 & 159.619 & 164.025 & 152.603 & 958.804 & 368.688 \\
\bottomrule
\multicolumn{10}{p{19cm}}{
\footnotesize
All maps are scaled twice and start and goal coordinates shifted by one unit to accommodate \rsp{} (RS+).
$r$ is the correlation coefficient between the number of turning points and the shortest path cost for all scenarios in each map. 
$\rho$ is the ratio of the number of corners to the number of free cells on the map. 
$P$ is the largest number of turning points and $G$ is the largest path cost among all scenarios.
}
\end{tabular}
\label{r2p:tab:average}
\end{sidewaystable}
\begin{sidewaystable}[!ht]
\centering
\caption{Average speed-ups for 3, 10, and 30 turning points.}
\setlength{\tabcolsep}{3pt}
\begin{tabular}{ c c c c c | c c c c | c c c c }
\hline
\multirow{2}{*}{Map} & \multicolumn{4}{c | }{3 Turning Pts.} & \multicolumn{4}{c | }{10 Turning Pts.} & \multicolumn{4}{c}{30 Turning Pts.} \\
\cline{2-13}
& $g_{3}$ & R2 & ANYA & RS+ & $g_{10}$ & R2 & ANYA & RS+ & $g_{30}$ & R2 & ANYA & RS+ \\
\hline
bg512/AR0709SR & 418.2 & 0.961 & 8.49 & 2.43 & 663.2 & 0.905 & 3.65 & 1.11 & -- & -- & -- & -- \\
\hline
bg512/AR0504SR & 242.3 & 1.11 & 7.12 & 3.66 & 747.1 & 1.04 & 4.07 & 1.53 & -- & -- & -- & -- \\
\hline
bg512/AR0603SR & 212.1 & 1.09 & 4.8 & 2.75 & 595.5 & 0.987 & 2.05 & 0.925 & 1645.0 & 1.09 & 1.43 & 0.431 \\
\hline
da2/ht\_mansion2b & 59.4 & 0.954 & 5.46 & 1.35 & 249.1 & 0.944 & 1.99 & 0.79 & -- & -- & -- & -- \\
\hline
da2/ht\_0\_hightown & 134.4 & 0.981 & 5.95 & 2.19 & 584.1 & 1.11 & 4.82 & 1.39 & -- & -- & -- & -- \\
\hline
dao/hrt201n & 81.7 & 1.07 & 5.22 & 1.59 & 285.3 & 1.02 & 1.98 & 0.794 & 848.7 & 0.92 & 1.73 & 0.399 \\
\hline
dao/arena & 69.5 & 0.917 & 9.12 & 0.627 & -- & -- & -- & -- & -- & -- & -- & -- \\
\hline
maze/maze512-32-0 & 206.3 & 0.975 & 6.42 & 0.991 & 807.4 & 1.01 & 4.53 & 0.76 & 2413.4 & 1.02 & 3.68 & 0.536 \\
\hline
maze/maze512-16-0 & 122.2 & 0.964 & 6.5 & 0.952 & 442.5 & 1.01 & 2.57 & 0.829 & 1385.9 & 1.01 & 1.62 & 0.372 \\
\hline
maze/maze512-8-0 & 55.1 & 1.03 & 4.81 & 0.963 & 226.8 & 0.951 & 1.83 & 0.57 & 748.0 & 0.991 & 0.965 & 0.336 \\
\hline
random/random512-10-1 & 42.2 & 1.09 & 7.21 & 18.4 & 264.6 & 1.12 & 2.22 & 7.01 & 791.9 & 1.44 & 0.882 & 2.22 \\
\hline
random/random512-20-2 & 16.6 & 1.3 & 7.05 & 9.03 & 143.8 & 1.11 & 1.52 & 2.5 & 476.6 & 1.87 & 0.817 & 0.932 \\
\hline
room/32room\_000 & 79.0 & 0.979 & 6.23 & 1.4 & 344.8 & 0.911 & 2.31 & 1.03 & 1137.8 & 1.01 & 1.71 & 0.603 \\
\hline
room/16room\_000 & 42.9 & 0.995 & 5.62 & 1.69 & 184.7 & 0.901 & 1.65 & 1.01 & 614.8 & 0.98 & 1.06 & 0.562 \\
\hline
street/Denver\_2\_1024 & 774.3 & 0.962 & 10.8 & 5.85 & 2329.0 & 0.956 & 9 & 4.27 & -- & -- & -- & -- \\
\hline
street/NewYork\_0\_1024 & 865.3 & 1.09 & 6.84 & 5.13 & 1846.0 & 0.95 & 4.14 & 1.72 & -- & -- & -- & -- \\
\hline
street/Shanghai\_2\_1024 & 1025.3 & 1.09 & 7.05 & 4.38 & 1891.7 & 0.936 & 3.06 & 1.6 & -- & -- & -- & -- \\
\hline
street/Shanghai\_0\_1024 & 1371.7 & 1.13 & 7.34 & 2.71 & 1558.5 & 0.961 & 3.21 & 0.95 & -- & -- & -- & -- \\
\hline
street/Sydney\_1\_1024 & 878.0 & 1.06 & 8.98 & 4.32 & 1995.3 & 0.93 & 5.17 & 2.15 & -- & -- & -- & -- \\
\hline
\multicolumn{13}{p{19cm}}{
\footnotesize
Maps are scaled twice and points shifted by one unit to accommodate \rsp{}.
$g_i$ refers to the average path cost for the shortest paths with $i$ turning points.
``R2", ``ANYA" and ``RS+" (\rsp{}) are the speedups of \rtwop{} with respect to the algorithms.
The higher the ratio, the faster \rtwop{} is compared to an algorithm.
}
\end{tabular}
\label{r2p:tab:points}
\end{sidewaystable}
\clearpage


\section{Conclusion}
In this work, \rtwo{}, a vector-based any-angle path planner, is evolved into \rtwop{}.
Novel mechanisms are introduced in \rtwop{} to simplify the algorithm, and allow \rtwop{} to perform faster than \rtwo{} in maps with many disjoint obstacles while preserving the performance of \rtwo{} in other maps.

\rtwo{} and \rtwop{} are able to outperform state-of-the-art algorithms like Anya and \rsp{} when paths are expected to have few turning points.
\rtwo{} and \rtwop{} are fast due to delayed line-of-sight checks to expand the most promising turning points, which are points that deviate the least from the straight line between the start and goal points.

While fast when the shortest paths are expected to have few turning points, \rtwo{} and \rtwop{} are exponential in the worst case with respect to collided line-of-sight checks in the worst case. 
To improve average search time, \rtwo{} discards paths that have expensive nodes that cannot be pruned.
\rtwop{} improves upon \rtwo{} by discarding paths that intersect cheaper paths, allowing \rtwop{} to outperform \rtwo{} in maps with many disjoint obstacles.

Future works may investigate ways to improve the speed of \rtwop{} in maps with highly non-convex obstacles, and improve the algorithm's complexity with respect to collided casts. 
In addition, the interminability  of \rtwop{} and \rtwo{} when no path can be found should be addressed.

\chapter{Future Works and Conclusion}
\label{chap:conc}
This chapter describes possible future extensions and concludes the thesis.

\section{Future Work}
\label{conc:sec:future}
This section describes future work directions that can be undertaken, mainly focusing on a novel extension of the two-dimensional angular sector to the three-dimensional one.
There are other ideas, but as the ideas are not well tested, they will not be described in detail.
The undescribed ideas include
\begin{enumerate}
    \item Merging searches along links that cannot be pruned. For example, if a collided cast has occurred before, the current search can simply connect the source link of a new minor trace to an existing link arising from a prior minor trace.
    In such an algorithm, a trace can have multiple source nodes and only one target node, which can potentially improve the search time to polynomial with respect to the number of collided casts.
    
    \item A maximum cost measurement based on the length of prior traces, allowing a search with a larger minimum cost than another search with smaller maximum cost to be discarded.
    
    \item Combining the angular sectors to a multiple cost grid based on refractive indices. While likely to be slow, it can potentially be a basis for the first any-angle algorithm that can work on a multi-cost occupancy grid.
\end{enumerate}
\subsection{Angular Sectors in Three Dimensions}
In three dimensions, the number of possible turning points is uncountably infinite.
A shortest path is a taut path,  a taut path has to bend around convex edges of obstacles, and a turning point of a taut path can lie anywhere along the edge.
As the shortest path is evaluated based on the convex Euclidean distance, a quadratic program can be implemented to find the shortest path, after the convex edges are identified.

As a quadratic program is slow and complicated, a reasonable approximation of the shortest path is acceptable. 
By assuming that the shortest paths has to pass through the vertices of the grid along each obstacle edge, the number of possible paths becomes countable and finite.
The assumption is used by Theta* and its derived algorithms to find paths in a three-dimensional occupancy grid.
However, a planner relying on such an assumption is still slow, as the search space grows exponentially with respect to the number of dimensions (curse of dimensionality).

To reduce the search space in three dimensions, angular sectors in two dimensions can be extended to three dimensions to discard paths that are not taut.
In two dimensions, angular sectors are conical areas originating from a turning point.
A sector is bounded in at most two sides by a ray parallel to an adjacent obstacle edge, and a ray that points from the turning point's parent node to the turning point.
Each ray can be extruded into three dimensions, forming a plane that intersects the convex obstacle edge where the turning point is located.

The extruded sector is unbounded along the axis of the convex edge, and can be bounded by examining the cost function.
The Euclidean distance between two points on different convex edges $\medge_i$ and $\medge_{i-1}$ can be represented by
\begin{equation}
    J_i = \lVert (\mx_{i-1} + k_{i-1} \mv_{i-1}) - (\mx_i + k_i \mv_i) \rVert
    \label{conc:eq:cost}
\end{equation}
where, $\mx$ is a point at an end of an edge, $\mv$ is the vector parallel to the edge, and $k$ is a scalar.
For a taut path that passes through $n$ edges, the total cost is
\begin{equation}
    J = \lVert \mx_S - (\mx_1 + k_1 \mv_1) \rVert 
        + \sum_{i=2}^{n} J_i
        +\lVert (\mx_n + k_n \mv_n) - \mx_T \rVert,
    \label{conc:eq:totalcost}
\end{equation}
where $\mx_S$ and $\mx_G$ is the start and goal points of the planner respectively.
$J$ is a three-dimensional convex function where a non-unique minimum can be found.

Suppose that every edge is an finite long line, and suppose that the shortest path passes through edges in the order $(\medge_1, \medge_2, \cdots, \medge_n)$. 
Let $K^*=\{k_1^*, k_2^*, \cdots \}$  be the values of $k$ that yield the shortest paths.
Consider a path segment lying between the edges $\medge_{i-1}$, $\medge_{i}$ and $\medge_{i+1}$. If the segment is part of a shortest path, perturbing $k_i$ from $k^*_i$ while keeping $k_{i-1}$ and $k_{i+1}$ constant will always yield a longer path; otherwise, $k^*_i$ will not be the solution.
As such, at $k^*_i$, the derivative of the cost $\partial J / \partial k_i$ around the edge $\medge_i$ has to be zero.
The partial derivative around $k_i$ is
\begin{equation}
    \frac{\partial J}{\partial k_i} = 
        \frac{[ (\mx_i + k_i \mv_i) - (\mx_{i-1} + k_{i-1} \mv_{i-1}) ]^\top \mv_i}
             {\lVert (\mx_i + k_i \mv_i) - (\mx_{i-1} + k_{i-1} \mv_{i-1}) \rVert}
        + \frac{[ (\mx_i + k_i \mv_i) - (\mx_{i+1} + k_{i+1} \mv_{i+1}) ]^\top \mv_i}
             {\lVert (\mx_i + k_i \mv_i) - (\mx_{i+1} + k_{i+1} \mv_{i+1}) \rVert}.
         \label{conc:eq:angsec}
\end{equation}
Let the turning point at $\medge_{i-1}$, $\medge_{i}$, and $\medge_{i+1}$ be $\mx_{i-1}^*$, $\mx_i^*$, and $\mx_{i+1}^*$ respectively.
Let $\mv_{i-1}^* = \mx_i^* - \mx_{i-1}^*$ and $\mv_{i+1}^* = \mx_i^* - \mx_{i+1}^*$.
Rearranging Eq. (\ref{conc:eq:angsec}) and equating to zero,
\begin{equation}
    \begin{aligned}
    \left. \frac{\partial J}{\partial k_i} \right|_{K=K^*} 
        &= \frac{v_{i-1}^{*\top} \mv_i }{\lVert v_{i-1}^* \rVert} + \frac{v_{i+1}^{*\top} \mv_i }{\lVert v_{i+1}^* \rVert}
        \\
        &= \lVert v_i \rVert \left(\frac{v_{i-1}^{*\top} \mv_i }{\lVert v_{i-1}^* \rVert \lVert v_i \rVert} + \frac{v_{i+1}^{*\top} \mv_i }{\lVert v_{i+1}^* \rVert \lVert v_i \rVert} \right)
        \\
        &= \lVert v_i \rVert \left( \cos(\theta_{i-1}) + \cos(\theta_{i+1}) \right)
        \\
        &= 0,
    \end{aligned}
\end{equation}
where $\theta_{i-1}$ and $\theta_{i+1}$ are the angles the edge $\medge_i$ make with $\mv_{i-1}*$, and $\mv_{i+1}^*$ respectively.
As $\lVert \mv_i \rVert \ne 0$ for any edge, the following geometric property can be found for any taut segment of the path:
\begin{equation}
    \cos(\theta_{i-1}) + \cos(\theta_{i+1}) = 0.
    \label{conc:eq:angsec2}
\end{equation}
The relation is akin to drawing a straight line on a piece of paper, and folding the paper in half. The straight line is the taut path, and the crease at which the paper is folded is the edge.

Eq. (\ref{conc:eq:angsec2}) can be extended to constrain the three-dimensional angular sector for a path planner that only finds turning points on vertices. 
Let $\mx_h^* = \mx_i^* + k_h\mv_i$ be a point that lies halfway between $\mx_i^*$ and an adjacent vertex, such that $\lVert k_h\mv_i \rVert = 0.5$.
Let $\mv_{h,i-1}^* = \mx_h^* - \mx^*_{i-1}$ and  $\mv_{h,i+1}^* = \mx_h^* - \mx^*_{i+1}$.
A taut path that passes through $\mx_h^*$ has to obey Eq. (\ref{conc:eq:angsec2}), implying that the acceptable path at $\mx^*_i$ cannot pass through the conical surface
\begin{equation}
    \begin{aligned}
    0 &= \frac{v_{h,i-1}^{*\top} \mv_i }{\lVert v_{h,i-1}^* \rVert \lVert v_i \rVert} + \frac{v_{h,i+1}^{*\top} \mv_{i+1} }{\lVert v_{h,i+1}^* \rVert \lVert v_i \rVert} 
        \\
    &= \cos(\theta_{h,i-1}) + \cos(\theta_{h,i+1}),
    \end{aligned},
    \label{conc:eq:angsec3}
\end{equation}
where $\theta_{h,i-1}$ and $\theta_{h,i+1}$ are the angles the edge $\medge_i$ respectively makes with $\mv_{h,i-1}^*$ and $\mv_{h,i+1}^*$ at $\mx_h^*$.
Two conical surfaces described by Eq. (\ref{conc:eq:angsec3}) appear on both sides of a vertex along an edge, and forms the final boundaries for the three-dimensional angular sector, on top of the planes extruded from a two-dimensional angular sector.

\section{Conclusion}
\label{conc:sec:conc}
In this work, several methods to navigate non-convex obstacles are introduced for vector-based algorithms that delay \gls{los} checks.
Such methods include the source-pledge and target-pledge algorithms, and the source progression and target progression methods. 
Several lower-level mechanisms are introduced, such as the contour assumption, the phantom points, the best hull, and a versatile multi-dimensional ray tracer for collision detection in a binary occupancy grid.
By combining the mechanisms and methods, a novel vector-based path planner that delay \gls{los} checks, \rtwo{}, is introduced. \rtwo{} is subsequently evolved to \rtwop{}. 
\rtwop{} is simpler, resolves interminable searches when no paths exist, and is faster in maps with many disjoint obstacles.

Vector-based algorithms belong to a novel class of any-angle path planners that accelerates searches by rapidly moving towards a goal point.
A vector-based algorithm that delays \gls{los} checks can find the shortest path much faster than state-of-the-art when paths are expected to have few turning points.
By moving rapidly towards the goal and delaying line-of-sight checks, the time complexity is largely invariant to the distance between the turning points and the free space between the turning points, and largely dependent on the number of collided line-of-sight checks within a search.
As such, vector-based planners that delay \gls{los} checks are extremely fast if the path is expected to have few turning points regardless of the length.
Such a path is likely to occur in a map which is sparse and large, have few disjoint obstacles, and few highly non-convex obstacles with intersecting convex hulls.

While vector-based algorithms that delay \gls{los} checks are fast, they have exponential time complexity in the worst case.
As \gls{los} checks are delayed, the costs between turning points cannot be verified, and paths cannot be discarded by comparing costs like A*.
Several novel methods are introduced in the overlap rules of \rtwo{} and \rtwop{} to improve the average search complexity, by discarding searches that are guaranteed to yield longer paths.

Future works may improve upon the algorithms by introducing additional methods, and extend from the ideas stated in Sec. \ref{conc:sec:future}.

    \printbibliography[title=References]

@preamble{ " \newcommand{\noop}[1]{} " }

@article{bib:phistar,
author = {Nash, Alex and Koenig, Sven and Likhachev, Maxim},
year = {2009},
month = {07},
pages = {},
title = {Incremental Phi*: Incremental Any-Angle Path Planning on Grids},
journal = {Lab Papers (GRASP)}
}

@article{bib:3ddda,
  author={Fujimoto, Akira and Tanaka, Takayuki and Iwata, Kansei},
  journal={IEEE Computer Graphics and Applications}, 
  title={ARTS: Accelerated Ray-Tracing System}, 
  year={1986},
  volume={6},
  number={4},
  pages={16-26},
  doi={10.1109/MCG.1986.276715}
}

@article{bib:octree2017,
  author       = {Maxim Tatarchenko and
                  Alexey Dosovitskiy and
                  Thomas Brox},
  title        = {Octree Generating Networks: Efficient Convolutional Architectures
                  for High-resolution 3D Outputs},
  journal      = {CoRR},
  volume       = {abs/1703.09438},
  year         = {2017},
  url          = {http://arxiv.org/abs/1703.09438},
  eprinttype    = {arXiv},
  eprint       = {1703.09438},
  timestamp    = {Mon, 13 Aug 2018 16:48:50 +0200},
  biburl       = {https://dblp.org/rec/journals/corr/TatarchenkoDB17.bib},
  bibsource    = {dblp computer science bibliography, https://dblp.org}
}

@article{bib:rayslater,
  title={Tracing a ray through uniformly subdivided n-dimensional space},
  author={Slater, Mel},
  journal={The Visual Computer},
  volume={9},
  pages={39--46},
  year={1992},
  publisher={Springer}
}

@article{bib:raycleary,
  title={Analysis of an algorithm for fast ray tracing using uniform space subdivision},
  author={Cleary, John G and Wyvill, Geoff},
  journal={The Visual Computer},
  volume={4},
  pages={65--83},
  year={1988},
  publisher={Springer}
}

@inproceedings{bib:rayama,
  title={A fast voxel traversal algorithm for ray tracing.},
  author={Amanatides, John and Woo, Andrew and others},
  booktitle={Eurographics},
  volume={87},
  number={3},
  pages={3--10},
  year={1987},
  organization={Citeseer}
}

@article{bib:r2p,
title = {Evolving R2 to R2+: Optimal, Delayed Line-of-sight Vector-based Path Planning},
journal = {Robotics and Autonomous Systems},
year = {\noop{2024}(Submitted)},
author = {Yan Kai Lai and Prahlad Vadakkepat and Cheng Xiang},
}

@article{bib:r2,
title = {R2: Optimal vector-based and any-angle 2D path planning with non-convex obstacles},
journal = {Robotics and Autonomous Systems},
volume = {172},
pages = {104606},
year = {2024},
issn = {0921-8890},
doi = {https://doi.org/10.1016/j.robot.2023.104606},
url = {https://www.sciencedirect.com/science/article/pii/S0921889023002452},
author = {Yan Kai Lai and Prahlad Vadakkepat and Cheng Xiang},
keywords = {Any-angle, Binary occupancy grid, Euclidean shortest paths, Path planning, Vector-based},
abstract = {A novel vector-based path planner, R2 (R in two dimensions), is introduced in this paper. R2 is optimal and online, returning any-angle paths by applying heuristic costs to vector-based searches. R2 delays line-of-sight checks to expand the most promising path that has the least detours from the start and goal points. As delayed checks can cause severe path cost underestimates, R2 infers the smallest known convex hull, the best hull, of obstacles while moving around them. To construct the best hull, phantom points are introduced, which are imaginary turning points lying on non-convex corners to guide future searches. Tracing rules are introduced to ensure that the estimated path cost from the best hull increases admissibly and monotonically between queues to the open-list. The distance between the start and goal points have little impact on R2’s performance when compared to the number of line-of-sight checks that collide with obstacles. While having an exponential search time in the worst case with respect to the number of collided line-of-sight checks, R2 is much faster than state-of-the-art when the optimal path is expected to turn around few obstacles, especially on large maps with few disjoint and non-convex obstacles.}
}

@inproceedings{bib:dps,
  title={Dual Euclidean Shortest Path Search},
  author={Hechenberger, Ryan and Stuckey, Peter J and Le Bodic, Pierre and Harabor, Daniel D},
  booktitle={Proceedings of the International Symposium on Combinatorial Search},
  volume={15},
  number={1},
  pages={285--287},
  year={2022}
}

@phdthesis{bib:rpf,
  author        = "Paul Oprea",
  title         = "A Novel Online Any-Angle Path Planning Algorithm",
  school        = "Univ. of Kent",
  address       = "Kent",
  year          = "2017",
  url           = "https://kar.kent.ac.uk/71757/"
}

@article{bib:rayscan, 
  title={Online Computation of Euclidean Shortest Paths in Two Dimensions},
  volume={30}, 
  url={https://ojs.aaai.org/index.php/ICAPS/article/view/6654}, 
  number={1}, 
  journal={Proc. of the Int. Conf. on Automated Planning and Scheduling}, 
  author={Hechenberger, Ryan and Stuckey, Peter J and Harabor, Daniel and Le Bodic, Pierre and Cheema, Muhammad Aamir}, 
  year={2020},
  month={6}, 
  pages={134-142}
}

@article{bib:rayscanp,
      title={Multi-Target Search in Euclidean Space with Ray Shooting (Full Version)}, 
      author={Ryan Hechenberger and Daniel Harabor and Muhammad Aamir Cheema and Peter J Stuckey and Pierre Le Bodic},
      year={2022},
      eprint={2207.02436},
      archivePrefix={arXiv},
      primaryClass={cs.CG}
}

@misc{bib:my2dcode,
    author = {Lai, Yan Kai},
    title = {{2D Path Planners for Binary Occupancy Grids}},
    howpublished = "\url{https://github.com/LaiYanKai/2D}",
    year = 2023,
    note = "[Online; accessed 29-Oct-2023]"
}

@misc{bib:my2dvis,
    author = {Lai, Yan Kai},
    title = {{2D Planners}},
    howpublished = "\url{https://laiyankai.github.io/PlannersJS}",
    year = 2023,
    note = "[Online; accessed 15-Jun-2024]"
}

@article{bib:astar,
  author = {Hart, P. E. and Nilsson, N. J. and Raphael, B.},  
  doi = {10.1109/TSSC.1968.300136},  
  issn = {0536-1567},
  journal = {IEEE Trans. on Syst. Sci. and Cybern.},
  keywords = {A* Route Optimisation autonomy navigation search},
  month = {7},
  number = 2,
  pages = {100-107},
  timestamp = {2017-03-21T11:29:18.000+0100},
  title = {A Formal Basis for the Heuristic Determination of Minimum Cost Paths},
  volume = 4,
  year = 1968
}

@book{bib:turtle,
  address = {Cambridge, MA},
  author = {Abelson, H. and DiSessa, A.},
  publisher = {MIT Press},
  timestamp = {2016-11-26T13:20:49.000+0100},
  title = {Turtle Geometry: The Computer as a Medium for Exploring Mathematics},
  year = 1981
}

@article{bib:anya,
  author    = {Daniel Damir Harabor and
               Alban Grastien and
               Dindar {\"{O}}z and
               Vural Aksakalli},
  title     = {Optimal Any-Angle Pathfinding In Practice},
  journal   = {J. Artif. Intell. Res.},
  volume    = {56},
  pages     = {89--118},
  year      = {2016},
  url       = {https://doi.org/10.1613/jair.5007},
  doi       = {10.1613/jair.5007},
  timestamp = {Sun, 02 Jun 2019 21:09:40 +0200},
  biburl    = {https://dblp.org/rec/bib/journals/jair/HaraborGOA16},
  bibsource = {dblp computer science bibliography, https://dblp.org}
}

@inproceedings{bib:uras,
  title={An empirical comparison of any-angle path-planning algorithms},
  author={Uras, Tansel and Koenig, Sven},
  booktitle={Eighth Annu. Symp. on Combinatorial Search},
  year={2015},
  url={http://idm-lab.org/anyangle}
}

@article{bib:thetastar,
  title={Theta*: Any-angle path planning on grids},
  author={Daniel, Kenny and Nash, Alex and Koenig, Sven and Felner, Ariel},
  journal={J. of Artif. Intell. Res.},
  volume={39},
  pages={533--579},
  year={2010}
}

@incollection{bib:fielddstar,
  title={Field D*: An interpolation-based path planner and replanner},
  author={Ferguson, Dave and Stentz, Anthony},
  booktitle={Robot. research},
  pages={239--253},
  year={2007},
  publisher={Springer}
}

@inproceedings{bib:subgoal,
  title={Subgoal graphs for optimal pathfinding in eight-neighbor grids},
  author={Uras, Tansel and Koenig, Sven and Hern{\'a}ndez, Carlos},
  booktitle={Proc. of the Int. Conf. on Automated Planning and Scheduling},
  volume={23},
  number={1},
  year={2013}
}

@inproceedings{bib:blockastar,
  title={Block A*: Database-driven search with applications in any-angle path-planning},
  author={Yap, Peter and Burch, Neil and Holte, Robert and Schaeffer, Jonathan},
  booktitle={Proc. of the AAAI Conf. on Artif. Intell.},
  volume={25},
  number={1},
  year={2011}
}

@inproceedings{bib:lazythetastar,
  title={Lazy Theta*: Any-angle path planning and path length analysis in 3D},
  author={Nash, Alex and Koenig, Sven and Tovey, Craig},
  booktitle={Proc. of the AAAI Conf. on Artif. Intell.},
  volume={24},
  number={1},
  year={2010}
}

@article{bib:bug,
  title={Path-planning strategies for a point mobile automaton moving amidst unknown obstacles of arbitrary shape},
  author={Lumelsky, Vladimir J and Stepanov, Alexander A},
  journal={Algorithmica},
  volume={2},
  number={1},
  pages={403--430},
  year={1987},
  publisher={Springer}
}

@article{bib:tangentbug,
  title={Tangentbug: A range-sensor-based navigation algorithm},
  author={Kamon, Ishay and Rimon, Elon and Rivlin, Ehud},
  journal={The Int. J. of Robot. Res.},
  volume={17},
  number={9},
  pages={934--953},
  year={1998},
  publisher={Sage Publications Sage CA: Thousand Oaks, CA}
}

@inproceedings{bib:polyanya,
  author    = {Michael Cui and Daniel D. Harabor and Alban Grastien},
  title     = {Compromise-free Pathfinding on a Navigation Mesh},
  booktitle = {Proc. of the Twenty-Sixth Int. Joint Conf. on Artif. Intell., {IJCAI-17}},
  pages     = {496--502},
  year      = {2017},
  doi       = {10.24963/ijcai.2017/70},
  url       = {https://doi.org/10.24963/ijcai.2017/70},
}

@inproceedings{bib:ohleong,
  title={Edge n-level sparse visibility graphs: Fast optimal any-angle pathfinding using hierarchical taut paths},
  author={Oh, Shunhao and Leong, Hon Wai},
  booktitle={Proceedings of the International Symposium on Combinatorial Search},
  volume={8},
  number={1},
  pages={64--72},
  year={2017}
}

@article{bib:bench,
  title={Benchmarks for Grid-Based Pathfinding},
  author={Sturtevant, N.},
  journal={Trans. on Comput. Intell. and AI in Games},
  volume={4},
  number={2},
  pages={144 -- 148},
  year={2012},
  url = {http://web.cs.du.edu/~sturtevant/papers/benchmarks.pdf},
}

@phdthesis{bib:nashthesis,
  author        = {Alex Nash},
  title         = {Any-angle Path Planning},
  school        = {Univ. of Southern California},
  address       = {Los Angeles},
  month         = {8},
  year          = {2012},
  url           = {http://idm-lab.org/bib/abstracts/papers/dissertation-nash.pdf}
}

@article{bib:dijkstra,
  title={A note on two problems in connexion with graphs},
  author={Dijkstra, Edsger W and others},
  journal={Numerische mathematik},
  volume={1},
  number={1},
  pages={269--271},
  year={1959}
}

@article{bib:bresenham,
  author={J. E. {Bresenham}},
  journal={IBM Syst. J.}, 
  title={Algorithm for computer control of a digital plotter}, 
  year={1965},
  volume={4},
  number={1},
  pages={25-30},
  doi={10.1147/sj.41.0025}
}

@article{bib:oneshot,
  author={Kulvicius, Tomas and Herzog, Sebastian and Lüddecke, Timo and Tamosiunaite, Minija and Wörgötter, Florentin},    
  title={One-Shot Multi-Path Planning Using Fully Convolutional Networks in a Comparison to Other Algorithms},      
	journal={Frontiers in Neurorobotics},      
	volume={14},      
  pages={115},     
	year={2021},      
	url={https://www.frontiersin.org/article/10.3389/fnbot.2020.600984},       
	doi={10.3389/fnbot.2020.600984},      
	issn={1662-5218}
}

@article{bib:probrob,
  title={Probalistic robotics},
  author={Thrun, Sebastian and Burgard, Wolfram and Fox, Dieter},
  journal={Kybernetes},
  year={2006},
  publisher={Emerald Group Publishing Limited}
}

@inproceedings{bib:bfs,
  title={The shortest path through a maze},
  author={Moore, Edward F},
  booktitle={Proc. Int. Symp. Switching Theory, 1959},
  pages={285--292},
  year={1959}
}

@book{bib:dfs,
  title={Graph algorithms},
  author={Even, Shimon},
  year={2011}, 
  publisher={Cambridge University Press}
}

@inproceedings{bib:multires1,
  title={Local multiresolution path planning},
  author={Behnke, Sven},
  booktitle={Robot Soccer World Cup},
  pages={332--343},
  year={2003},
  organization={Springer}
}

@article{bib:multires0,
  title={Multiresolution path planning for mobile robots},
  author={Kambhampati, Subbarao and Davis, Larry},
  journal={IEEE Journal on Robotics and Automation},
  volume={2},
  number={3},
  pages={135--145},
  year={1986},
  publisher={IEEE}
}

@article{bib:hpastar,
  title={Near optimal hierarchical path-finding},
  author={Botea, Adi and M{\"u}ller, Martin and Schaeffer, Jonathan},
  journal={J. Game Dev.},
  volume={1},
  number={1},
  pages={1--30},
  year={2004},
  publisher={Citeseer}
}

@inproceedings{bib:fielddstarmars,
  title={Global path planning on board the mars exploration rovers},
  author={Carsten, Joseph and Rankin, Arturo and Ferguson, Dave and Stentz, Anthony},
  booktitle={2007 IEEE Aerospace Conference},
  pages={1--11},
  year={2007},
  organization={IEEE}
}

@article{bib:vg,
author = {Lozano-P\'{e}rez, Tom\'{a}s and Wesley, Michael A.},
title = {An Algorithm for Planning Collision-Free Paths among Polyhedral Obstacles},
year = {1979},
issue_date = {Oct. 1979},
publisher = {Association for Computing Machinery},
address = {New York, NY, USA},
volume = {22},
number = {10},
issn = {0001-0782},
url = {https://doi.org/10.1145/359156.359164},
doi = {10.1145/359156.359164},
journal = {Commun. ACM},
month = oct,
pages = {560–570},
numpages = {11},
}

@inproceedings{bib:voronoi0,  
  author={Bhattacharya, Priyadarshi and Gavrilova, Marina L.},  
  booktitle={4th International Symposium on Voronoi Diagrams in Science and Engineering (ISVD 2007)},   
  title={Voronoi diagram in optimal path planning},   
  year={2007},  
  volume={},  
  number={},  
  pages={38-47},  
  doi={10.1109/ISVD.2007.43}
}

@inproceedings{bib:vg3d0,  
  author={Jiang, K. and Seneviratne, L.S. and Earles, S.W.E.},  
  booktitle={Proceedings of 1993 IEEE/RSJ International Conference on Intelligent Robots and Systems (IROS '93)},   
  title={Finding the 3D shortest path with visibility graph and minimum potential energy},   
  year={1993},  
  volume={1},  
  number={},  
  pages={679-684 vol.1},  
  doi={10.1109/IROS.1993.583190}
}

@inproceedings{bib:vg3d1,  
  author={Huang, Sunan and Teo, Rodney Swee Huat},  
  booktitle={2019 International Conference on Unmanned Aircraft Systems (ICUAS)},   
  title={Computationally Efficient Visibility Graph-Based Generation Of 3D Shortest Collision-Free Path Among Polyhedral Obstacles For Unmanned Aerial Vehicles},   
  year={2019},  
  volume={},  
  number={},  
  pages={1218-1223},  
  doi={10.1109/ICUAS.2019.8798322}
}

@inproceedings{bib:vg3d2,
  author = {You, Yangwei and Cai, Caixia and Wu, Yan},
  title = {3D Visibility Graph Based Motion Planning and Control},
  year = {2019},
  isbn = {9781450372350},
  publisher = {Association for Computing Machinery},
  address = {New York, NY, USA},
  url = {https://doi.org/10.1145/3373724.3373735},
  doi = {10.1145/3373724.3373735},
  booktitle = {Proceedings of the 2019 5th International Conference on Robotics and Artificial Intelligence},
  pages = {48–53},
  numpages = {6},
  keywords = {Collision Avoidance, Motion Planning, Visibility Graph, Robotics},
  location = {Singapore, Singapore},
  series = {ICRAI '19}
}

@inproceedings{bib:2dpolyphard,
  author={Reif, John H.},
  booktitle={20th Annual Symposium on Foundations of Computer Science (sfcs 1979)}, 
  title={Complexity of the mover's problem and generalizations}, 
  year={1979},
  volume={},
  number={},
  pages={421-427},
  doi={10.1109/SFCS.1979.10}
}

@inproceedings{bib:rrtconnect,
  author={Kuffner, J.J. and LaValle, S.M.},
  booktitle={Proceedings 2000 ICRA. Millennium Conference. IEEE International Conference on Robotics and Automation. Symposia Proceedings (Cat. No.00CH37065)}, 
  title={RRT-connect: An efficient approach to single-query path planning}, 
  year={2000},
  volume={2},
  number={},
  pages={995-1001 vol.2},
  doi={10.1109/ROBOT.2000.844730}
}

@ARTICLE{bib:3dpolynphard,
  author={Gewali, L.P. and Ntafos, S. and Tollis, I.G.},
  journal={IEEE Transactions on Robotics and Automation}, 
  title={Path planning in the presence of vertical obstacles}, 
  year={1990},
  volume={6},
  number={3},
  pages={331-341},
  doi={10.1109/70.56665}
}

@inproceedings{bib:3dpolynphard2,
author = {Canny, John and Reif, John},
title = {New Lower Bound Techniques for Robot Motion Planning Problems},
year = {1987},
isbn = {0818608072},
publisher = {IEEE Computer Society},
address = {USA},
url = {https://doi.org/10.1109/SFCS.1987.42},
doi = {10.1109/SFCS.1987.42},
pages = {49–60},
numpages = {12},
series = {SFCS '87}
}

@article{bib:karman&frazzoli,
  author = {Sertac Karaman and Emilio Frazzoli},
  title ={Sampling-based algorithms for optimal motion planning},
  journal = {The International Journal of Robotics Research},
  volume = {30},
  number = {7},
  pages = {846-894},
  year = {2011},
  doi = {10.1177/0278364911406761},
}

@article{bib:prm,
  author={Kavraki, L.E. and Svestka, P. and Latombe, J.-C. and Overmars, M.H.},
  journal={IEEE Transactions on Robotics and Automation}, 
  title={Probabilistic roadmaps for path planning in high-dimensional configuration spaces}, 
  year={1996},
  volume={12},
  number={4},
  pages={566-580},
  doi={10.1109/70.508439}
}

@inproceedings{bib:rrtstar,
  author={Karaman, Sertac and Walter, Matthew R. and Perez, Alejandro and Frazzoli, Emilio and Teller, Seth},
  booktitle={2011 IEEE International Conference on Robotics and Automation}, 
  title={Anytime Motion Planning using the RRT*}, 
  year={2011},
  volume={},
  number={},
  pages={1478-1483},
  doi={10.1109/ICRA.2011.5980479}
}

@article{bib:rrt,
    author = {Steven M. Lavalle},
    title = {Rapidly-Exploring Random Trees: A New Tool for Path Planning},
    institution = {},
    year = {1998}
}

@inproceedings{bib:bitstar,
  author={Gammell, Jonathan D. and Srinivasa, Siddhartha S. and Barfoot, Timothy D.},
  booktitle={2015 IEEE International Conference on Robotics and Automation (ICRA)}, 
  title={Batch Informed Trees (BIT*): Sampling-based optimal planning via the heuristically guided search of implicit random geometric graphs}, 
  year={2015},
  volume={},
  number={},
  pages={3067-3074},
  doi={10.1109/ICRA.2015.7139620}
}

@inproceedings{bib:apf,
  author={Khatib, O.},
  booktitle={Proceedings. 1985 IEEE International Conference on Robotics and Automation}, 
  title={Real-time obstacle avoidance for manipulators and mobile robots}, 
  year={1985},
  volume={2},
  number={},
  pages={500-505},
  doi={10.1109/ROBOT.1985.1087247}
}

@article{bib:apfplan,
  author={Hwang, Y.K. and Ahuja, N.},
  journal={IEEE Transactions on Robotics and Automation}, 
  title={A potential field approach to path planning}, 
  year={1992},
  volume={8},
  number={1},
  pages={23-32},
  doi={10.1109/70.127236}
}

@inproceedings{bib:apfplan1,
  author={Warren, C.W.},
  booktitle={Proceedings, 1989 International Conference on Robotics and Automation}, 
  title={Global path planning using artificial potential fields}, 
  year={1989},
  volume={},
  number={},
  pages={316-321 vol.1},
  doi={10.1109/ROBOT.1989.100007}
}

@article{bib:drl0,  
  author={Yu, Jinglun and Su, Yuancheng and Liao, Yifan},   
  title={The Path Planning of Mobile Robot by Neural Networks and Hierarchical Reinforcement Learning},      
  journal={Frontiers in Neurorobotics},      
  volume={14},      
  pages={63},     
  year={2020},      
  doi={10.3389/fnbot.2020.00063},      
  issn={1662-5218},     
}

@article{bib:drl1,
  author = {Gao, Junli and Ye, Weijie and Guo, Jing and Li, Zhongjuan},
  title = {Deep Reinforcement Learning for Indoor Mobile Robot Path Planning},
  journal = {Sensors},
  volume = {20},
  year = {2020},
  number = {19},
  url = {https://www.mdpi.com/1424-8220/20/19/5493},
  issn = {1424-8220},
  doi = {10.3390/s20195493}
}

@article{bib:drl2,
  author={Zhu, Kai and Zhang, Tao},
  journal={Tsinghua Science and Technology}, 
  title={Deep reinforcement learning based mobile robot navigation: A review}, 
  year={2021},
  volume={26},
  number={5},
  pages={674-691},
  doi={10.26599/TST.2021.9010012}
}

@article{bib:drl3,
author = {Yu, Xiaoqiang and Wang, Ping and Zhang, Zexu},
title = {Learning-Based End-to-End Path Planning for Lunar Rovers with Safety Constraints},
journal = {Sensors},
volume = {21},
year = {2021},
number = {3},
url = {https://www.mdpi.com/1424-8220/21/3/796},
issn = {1424-8220},
doi = {10.3390/s21030796}
}

@article{bib:drl4,
  title={Dynamic path planning of unknown environment based on deep reinforcement learning},
  author={Lei, Xiaoyun and Zhang, Zhian and Dong, Peifang},
  journal={Journal of Robotics},
  volume={2018},
  year={2018},
  publisher={Hindawi}
}

@article{bib:next,
  title={Learning to plan in high dimensions via neural exploration-exploitation trees},
  author={Chen, Binghong and Dai, Bo and Lin, Qinjie and Ye, Guo and Liu, Han and Song, Le},
  journal={arXiv preprint arXiv:1903.00070},
  year={2019}
}

@article{bib:drl5,
  title = {Grid Path Planning with Deep Reinforcement Learning: Preliminary Results},
  journal = {Procedia Computer Science},
  volume = {123},
  pages = {347-353},
  year = {2018},
  note = {8th Annual International Conference on Biologically Inspired Cognitive Architectures, BICA 2017 (Eighth Annual Meeting of the BICA Society), held August 1-6, 2017 in Moscow, Russia},
  issn = {1877-0509},
  doi = {https://doi.org/10.1016/j.procs.2018.01.054},
  url = {https://www.sciencedirect.com/science/article/pii/S1877050918300553},
  author = {Aleksandr I. Panov and Konstantin S. Yakovlev and Roman Suvorov},
  keywords = {path planning, reinforcement learning, neural networks, Q-learning, convolution networks, Q-network},
  abstract = {Single-shot grid-based path finding is an important problem with the applications in robotics, video games etc. Typically in AI community heuristic search methods (based on A* and its variations) are used to solve it. In this work we present the results of preliminary studies on how neural networks can be utilized to path planning on square grids, e.g. how well they can cope with path finding tasks by themselves within the well-known reinforcement problem statement. Conducted experiments show that the agent using neural Q-learning algorithm robustly learns to achieve the goal on small maps and demonstrate promising results on the maps have ben never seen by him before.}
}

@article{bib:drl6,
  title={Path planning for multi-arm manipulators using deep reinforcement learning: Soft actor--critic with hindsight experience replay},
  author={Prianto, Evan and Kim, MyeongSeop and Park, Jae-Han and Bae, Ji-Hun and Kim, Jung-Su},
  journal={Sensors},
  volume={20},
  number={20},
  pages={5911},
  year={2020},
  publisher={Multidisciplinary Digital Publishing Institute}
}

@book{bib:lavalle,
  title={Planning algorithms},
  author={LaValle, Steven M},
  year={2006},
  publisher={Cambridge university press}
}

@article{bib:petrovic,
  title={Motion planning in high-dimensional spaces},
  author={Petrovi{\'c}, Luka},
  journal={arXiv preprint arXiv:1806.07457},
  year={2018}
}

@inproceedings{bib:rescomprrt,
  title={Resolution complete rapidly-exploring random trees},
  author={Cheng, Peng and LaValle, Steven M},
  booktitle={Proceedings 2002 IEEE international conference on robotics and automation (cat. no. 02CH37292)},
  volume={1},
  pages={267--272},
  year={2002},
  organization={IEEE}
}

@inproceedings{bib:me,
  title={Development and Analysis of an Improved Prototype within a Class of Bug-based Heuristic Path Planners},
  author={Lai, Yan Kai and Vadakkepat, Prahlad and Al Mamun, Abdullah and Xiang, Cheng and Lee, Tong Heng},
  booktitle={2021 IEEE 30th International Symposium on Industrial Electronics (ISIE)},
  pages={1--6},
  year={2021},
  organization={IEEE}
}

@misc{bib:r2code,
  author = {Lai, Yan Kai},
  title = {R2 Github Repository},
  year = {2022},
  url = {\url{https://github.com/LaiYanKai/R2}}
}

@article{bib:dl1,
title = {Safe deep learning-based global path planning using a fast collision-free path generator},
journal = {Robotics and Autonomous Systems},
volume = {163},
pages = {104384},
year = {2023},
issn = {0921-8890},
doi = {https://doi.org/10.1016/j.robot.2023.104384},
url = {https://www.sciencedirect.com/science/article/pii/S0921889023000234},
author = {Shirin Chehelgami and Erfan Ashtari and Mohammad Amin Basiri and Mehdi {Tale Masouleh} and Ahmad Kalhor},
keywords = {Mobile robots, Deep learning in robotics and automation, Recurrent neural network, Fast global path planner, Safe path generator},
abstract = {In this research, a global path planning method based on recurrent neural networks by means of a new Loss function is presented, which regardless of the complexity of the configuration space, generates the path in a relatively constant time. The new Loss function is defined in such a way that in addition to learning the input data of the network, it creates an adjustable safety margin around the obstacles and ultimately creates a safe path. Moreover, a new global path planning method is also introduced, which is used to create the dataset required to train the proposed neural network. The convergence of this method is mathematically proven and it is shown that this method can also produce a suboptimal path in a much shorter time than the common methods of global path planning reported in the literature. In short, the main purpose of this research consists in providing a method which can create a suboptimal, fast and safe path for a mobile robot from any random starting point to any random destination in a known environment. First, the proposed methods will be implemented for different two-dimensional environments consisting of convex and non-convex obstacles, considering the robot as a point-mass, and then it will be implemented in a simulation environment, AI2THOR. Compared to classical global path planning algorithms, such as RRT and A*, the proposed approach demonstrates better performance in complex and challenging environments.}
}

@inproceedings{bib:drl7,
  author={Tai, Lei and Paolo, Giuseppe and Liu, Ming},
  booktitle={2017 IEEE/RSJ International Conference on Intelligent Robots and Systems (IROS)}, 
  title={Virtual-to-real deep reinforcement learning: Continuous control of mobile robots for mapless navigation}, 
  year={2017},
  volume={},
  number={},
  pages={31-36},
  keywords={Mobile robots;Navigation;Training;Robot kinematics;Neural networks},
  doi={10.1109/IROS.2017.8202134}
}
    \appendix
    \fancyhead[RE]{Appendix~\thechapter}
    \chapter{Terms and Conventions in the Thesis}
\label{chap:suppterm}

This section attempts to explain commonly used terms in the thesis for vector-based planning. 
Terms that are commonly used in path planning, such as \textit{tautness}, \textit{admissibility}, \textit{completeness}, and \textit{optimality} are not explained.

\section{Tree Directions and Path}
Consider a path
\begin{equation}
    P = (\mxstart, \cdots, \mxss, \mxs, \mx, \mxt, \mxtt, \cdots ,\mxgoal), \label{suppterm:eq:path}
\end{equation}
illustrated in Fig. \ref{suppterm:fig:path}.
A \textbf{start point} located at $\mxstart$ denotes the point where the algorithm begins searching, and an optimal path has to be found to the \textbf{goal point} at $\mxgoal$.

The \textbf{source direction} and \textbf{target direction} indicate the direction along a searched path.
The source direction leads to the start point, while the target direction leads to the goal point.
In the path $P$, $\mxs$ lies in the source direction of $\mx$, $\mxt$, $\mxtt$, etc.; and in the path $P$, $\mxt$ lies in the target direction of $\mx$, $\mxs$, $\mxss$, etc.
Both directions are referred to as the \textbf{tree directions}.

\def\clipper {\clip (-.5*\ul, -.5*\ul) rectangle ++(20*\ul, 7*\ul)}

\begin{figure}[!ht]
\centering
\begin{tikzpicture}[]
    \clipper;

    \path 
        (1*\ul, 1*\ul) coordinate (xstart)
        (3*\ul, 4*\ul) coordinate (xss)
        (6*\ul, 4*\ul) coordinate (xs)
        (9*\ul, 1*\ul) coordinate (x)
        (12.5*\ul, 2.5*\ul) coordinate (xt)
        (14*\ul, 5*\ul) coordinate (xtt)
        (18*\ul, 6*\ul) coordinate (xgoal)
        ;

    \draw [obs]
        ($(xss) + (-45:{sqrt(2)*\u})$) -- ($(xs) + (-135:{sqrt(2)*\u})$);
    \fill [swatch_obs]
        (xt) rectangle ++(-1*\ul, 1*\ul);
    \fill [swatch_obs]
        (xtt) rectangle ++(1*\ul, -1*\ul);
        
    \node (nstart) [svy pt={shift={(1mm, -3.5mm)}}{center:$\mxstart$}] at (xstart) {};
    \node (nss) [svy pt={shift={(1mm, 3.5mm)}}{center:$\mxss$}] at (xss) {};
    \node (ns) [svy pt={shift={(1mm, 3.5mm)}}{center:$\mxs$}] at (xs) {};
    \node (nx) [black pt={shift={(1mm, -3.5mm)}}{center:$\mx$}] at (x) {};
    \node (nt) [tvy pt={shift={(1mm, -3.5mm)}}{center:$\mxt$}] at (xt) {};
    \node (ntt) [tvy pt={shift={(1mm, 3.5mm)}}{center:$\mxtt$}] at (xtt) {};
    \node (ngoal) [tvy pt={shift={(1mm, -3.5mm)}}{center:$\mxgoal$}] at (xgoal) {};

    \draw [dashed] (nstart) -- (nss);
    \draw [dotted, thick] (nss) -- (ns) -- (nx) -- (nt) -- (ntt);
    \draw [dashed] (ntt) -- (ngoal);

\end{tikzpicture}

\caption[Path and tree directions]{
    A path of points is illustrated.
    The points at $\mxstart$, $\mxss$, and $\mxs$ lie in the source direction of $\mx$, and the points at $\mxt$, $\mxtt$, and $\mxgoal$ lie in the target direction of $\mx$.
    The point at $\mxs$ is the source point of $\mx$, and the point at $\mxt$ is the target point of $\mx$.
}
\label{suppterm:fig:path}
\end{figure}

A \textbf{source point} and \textbf{target point} refers to a point that is \textit{adjacent} to a point along the path. In Eq. (\ref{suppterm:eq:path}), $\mxs$ is a source point of $\mx$, but $\mxss$ is not; $\mxss$ is a source point of $\mxs$. 
Likewise, $\mxt$ is a target point of $\mx$, but $\mxtt$ is not.
The definition can be extended to the fundamental search units \textit{nodes} and \textit{links}.

\section{Search Trees} \label{suppterm:sec:tree}
The \textbf{search tree} are the collection of paths investigated by a path planner.
A path is a \textit{branch} on the search tree, and the \textbf{root point} of a search tree in \rtwo{}, A*, Anya, Theta* etc. is the starting point.
The tree will branch into a different path as the planner searches. For example, a new path $P_2 = (\mxstart, \cdots, \mxss, \mxs, \mx, \mx_{T,2}, \mx_{TT,2}, \allowbreak \cdots, \mxgoal)$ may branch from the path $P$ at $\mx$ for the planner to investigate the new route from $\mx$ that passes through $\mx_{T,2}$.

\input{chap_supp/fig_suppterm_tree}

In \rtwop{}, two search trees are introduced, which are \textbf{Source Tree} ($S$-tree) and \textbf{Target Tree} ($T$-tree). The $S$-tree is rooted at the start point, and the $T$-tree is rooted at the goal point.
Both trees are connected at their leaf points. At a connected leaf point, an enqueued query (Sec. \ref{suppterm:sec:openlist}) can be found. 
The trees are illustrated in Fig. \ref{suppterm:fig:r2ptree}.

\input{chap_supp/fig_suppterm_r2ptree}

\section{Line-of-sight Checks and Visibility}

If two points have \textbf{line-of-sight} or \textbf{visibility}, a straight line can be drawn between the two points, and the line will not intersect any obstacle. 
If the point at $\mx$ in Eq. (\ref{suppterm:eq:path}) has \textbf{cumulative visibility} to $\mxss$, the path segment ($\mxss, \mxs, \mx$) can be drawn without obstruction.

\input{chap_supp/fig_suppterm_vis}

In \rtwop{}, the definition of \textit{cumulative visibility} is overloaded, as only the cumulative visibilities to the root points of the $S$-tree and $T$-tree (start and goal points, respectively) are considered.
If \textit{cumulative visibility} is used in the context of an $S$-tree, the term refers to the cumulative visibility of a currently investigated point $\mx$ to the start point at $\mxstart$. 
If the term is used in the context of a $T$-tree, it refers to the cumulative visibility of a currently investigated point $\mx$ to the goal point at $\mxgoal$.
The concepts of visibility are illustrated in Fig. \ref{suppterm:fig:vis}.

\section{Cast, Projection, and Traces}
A \textbf{cast} refers to a line-of-sight check originating from a point called the \textbf{cast point}, and a \textbf{trace} refers to a search that traces an obstacle contour \cite{bib:rpf}. 
When a cast has reached its destination, the line-of-sight can be \textbf{projected} in the same direction as the cast from the destination \cite{bib:rayscan}.

\input {chap_supp/fig_suppterm_castandtrace}

When a cast \textit{collides}, the line-of-sight check has been obstructed by an obstacle.
In general, two traces will occur on the \textbf{left} and \textbf{right} side of the collision point.
While tracing a contour, the obstacle will lie on the right side of a \textit{left}-sided trace. For a \textit{right}-sided trace, the obstacle will lie on the left side of the trace direction.
In \rtwo{} and \rtwop{}, a \textbf{third trace} will continue from the casting point if the destination is the goal point. 
The cast and traces are illustrated in Fig. \ref{suppterm:fig:castandtrace}.

A similar way to describe traces is to use \textbf{angular directions}.
A trace will travel in the clockwise or anti-clockwise angular direction with respect to a point, or in some cases, the angular direction will not change.
Typically, if travelling across a convex corner causes the angular direction of a trace to reverse, a potential turning point is found \cite{bib:anya, bib:rayscan}.
As the angular direction becomes unreliable while tracing within the convex hull of non-convex obstacles, the \textit{left} or \textit{right} side is  used to describe traces instead.

\input {chap_supp/fig_suppterm_angle}

A trace's \textbf{angular deviation} is used to describe the angle displaced by the trace when viewed from a point. The angle is measured from the start of the trace (at the collision point) to the current point of the trace.
In \rtwo{} and \rtwop{}, the \textbf{maximum angular deviation} of the trace is used by the algorithms' progression rule to infer if the trace is in the convex hull of a non-convex obstacle.
The angular directions and deviation are illustrated in Fig. \ref{suppterm:fig:angle}

\section{Turning Points and Phantom Points}

A \textbf{turning point} refers to a point located at a convex corner that causes the path $P$ to change direction around the point. All points along a taut path are turning points, except for the start and goal points.

A \textbf{phantom point} is an imaginary turning point introduced by the thesis.
A phantom point is located at a non-convex corner that mimics a turning point in the target direction. The path would have to turn around the point in order for the path to be admissible, and for the path length to increase monotonically as the search progresses.
A phantom point will be pruned before a taut path is found, and will not form part of the optimal path solution.
The illustration for a phantom point can be found in Fig. \ref{suppterm:fig:points}.

\input {chap_supp/fig_suppterm_points}

\section{Fundamental Search Units} \label{suppterm:sec:fundamental}
A fundamental search unit refers to the programming object that is used extensively by an algorithm to encapsulate a location being searched. 
For A*, Theta*, the fundamental unit is the \textbf{node}, which is located at a grid cell or grid vertex.  
For \rs{}, \rsp{}, and \rtwo{}, the unit is the \textbf{node}, and is located at a convex corner.
For Anya, the units are the \textbf{cone node} and \textbf{flat node}, and are located at convex corners.

In \rtwop{}, the unit is the \textbf{link}, which is a path segment between two points.
A link forms part of the $S$-tree or $T$-tree (Sec. \ref{suppterm:sec:tree}), and is \textbf{anchored} at a point closer to the leaves of the tree the link is in.
The illustration for links can be found in Fig. \ref{suppterm:fig:r2ptree}.

\section{Rays and Sectors}
A \textbf{ray} describes a directional vector that originates from one point. 
A cast or a projection can be described as a ray.
An \textbf{angular sector} of a turning point is the circular sector where another taut turning point (successor, \cite{bib:rayscan}) can be found, and is bounded by two rays originating from the former turning point.
An \textbf{occupied sector} of a \textit{corner} is the circular sector where the adjacent obstacle can be found.

\input{chap_supp/fig_suppterm_sectors}

\section{Expansion, Query, and Open List}
\label{suppterm:sec:openlist}
The \textbf{open-list} is a common concept in algorithms derived from A* such as any-angle planners.
It is a priority queue that sorts paths being searched by the sum ($f$-cost) of their cost-to-come ($g$-cost) and cost-to-go ($h$-cost) \cite{bib:astar}.
When a search unit (Sec. \ref{suppterm:sec:fundamental}) is \textbf{queued} or \textbf{enqueued} into the open-list, the path that passes through the search unit is being sorted into the open-list.
If a search unit is \textbf{polled} from the open-list, its path is removed from the open-list.
After a poll, the search unit will be \textbf{expanded}, where the algorithm continues searching from the search unit.
The search that is conducted for a trace and a cast is called a \textbf{query}.
When a query is said to be \textit{queued}, the search unit being expanded by the query is queued.









\chapter{Implementation for \rtwo{}}
\label{chap:suppr2}
\setcounter{algorithm}{0}
\renewcommand{\thealgorithm}{\thechapter.\arabic{algorithm}}\setcounter{equation}{0}
\setcounter{figure}{0}
\setcounter{table}{0}
\makeatletter

\section{Detailed Pseudocode for \rtwo{}} 
The detailed implementation of \rtwo{} are in Alg. \ref{suppalgr2:run} to Algorithms \ref{suppalgr2:placenode} below.

The node types for \rtwo{} are described below, and are different as the ones described in \rtwop{}.
A node with $\mnsy$ type is a source node that has cumulative visibility to the start node, and its cost-to-come is known.
An $\mnsu$ node is a source node where cumulative visibility and cost-to-come is unknown.
A source node stores only cost-to-come and a target node stores only cost-to-go.

A $\mnty$ node is a target node has cumulative visibility to the goal node, and its cost-to-go is known.
A $\mntu$ node is a target node where cumulative visibility and cost-to-go is not known.
A $\mntm$ target node is created when traces are interrupted and queued.
A $\mnph$ target node is an ad hoc point or a phantom point. Queries that reach a $\mnph$ node (ad-hoc point) can be discarded. 

An $\mney$ source node is an $\mnsy$ node with more expensive cost-to-come than other nodes at the same location. 
An $\mneu$ node is the same as an $\mnsu$ node, but has a source $\mney$ node along the tree in the source direction. 
A trace that is castable from an $\mneu$ node does not cast, and moves to the most recent $\mney$ ancestor node for casting.

All source nodes point to one source node, and may point to at least one target nodes.
All target nodes may point to at least one source node and at least one target node, except for $\mnty$ node which points to only one target node.
The number of pointers must be maintained to ensure that the path can be found.

The reader may observe that a tracing query continues from a $\mntm$ node which points to one source node and at least one target node. 
A casting query checks line-of-sight between two pairs of nodes, where the source node may point to multiple targets and the target node may point to multiple sources.

\begin{algorithm}
\begin{algorithmic}[1]
\Function{Run}{$\mnode_\mstart$, $\mnode_\mgoal$}
    \State path $\, \gets \varnothing$
    \State \Call{Caster}{$\mnode_\mstart$, $\mnode_\mgoal$}
    \While {open-list $\ne \varnothing$ \An no path found}
        \State Poll search $(\mnode_S, \mnode_T)$ from open-list.
        \If {$\mnode_T$ is $\mntm$} \Comment{Continue interrupted trace}
            \State \Call{Tracer}{$\mside_T$, $\mxt$, $\{ \mnode_S \}$, $\mnodes_{TT}$}, \Comment {$\mnodes_{TT}$ is set of descendants of $\mnode_T$}
        \Else   
            \State \Call{Caster}{$\mnode_S$, $\mnode_T$}
        \EndIf
    \EndWhile
    \State \Return path
\EndFunction
\end{algorithmic}
\caption{Main method for \rtwo{}.}
\label{suppalgr2:run}
\end{algorithm}

\begin{algorithm}
\begin{algorithmic}[1]
\Function{Caster}{$\mnode_S=(\mntype_S, \mside_S, \mxs, \cdots )$, $\mnode_T=(\mntype_T, \mside_T, \mxt, \cdots)$} 
    \If {$\mray = \{\mrtype, \mxs, \mxt, \mxcol \}$ has line-of-sight}
        \State \Call{CasterReached}{$\mray$, $\mnode_S$, $\mnode_T$}
    \Else
        \State \Call{CasterCollided}{$\mray$, $\mnode_S$, $\mnode_T$}
    \EndIf
\EndFunction
\end{algorithmic}
\caption{Caster: handles casts.}
\label{suppalgr2:caster}
\end{algorithm}

\begin{algorithm}
\begin{algorithmic}[1]
\Function{CasterReached}{$\mray$, $\mnode_S=(\mntype_S, \mside_S, \mxs, \cdots )$, $\mnode_T=(\mntype_T, \mside_T, \mxt, \cdots)$} 
    \label{suppalgr2:castertaut}
    \State For each $ \mnode_{TT} \in \mnodes_{TT}$, remove $\mnode_{TT}$ from $\mnodes_{TT}$ if $(\mnode_S, \mnode_T, \mnode_{TT})$ is not taut. 
    \If{$\mnode_T$ is $\mnph$}
        \State \Return \Comment{Reached a phantom point.}
    \ElsIf{($\mnode_S$ is $\mneu$ or $\mney$) \An ($\mside_S \ne \mside_T$ \Or $\mnode_T$ is $\mnty$)}
        \State \Return \Comment{Reject expensive searches from unprunable tgts.}
    \ElsIf {$\mnode_T$ is $\mnty$ \An $\mnode_S$ is $\mnsy$}
        \State \Return shortest path    \Comment{Shortest path}
    \ElsIf {$\mnode_T$ is $\mnty$ \An $\mnode_S$ is $\mnsu$}    
        \State Set $\mnode_S$ to $\mnty$
        \State Queue $(\mnode_{SS}, \mnode_S)$
        \State \Return  \Comment{Move down tree}
    \EndIf
    \State \Comment{Check cost-to-come and overlaps}
    \State $n \gets $ $\mnode_T$'s type
    \If {$\mnode_S$ is $\mnsy$}
        \If{$\mnode_T$ is not the cheapest cost-to-come node at $\mxt$}
            \State Set $\mnode_T$ to $\mney$.
        \Else   \Comment{$\mnode_T$ is cheapest node at $\mxt$}
            \State Set $\mnode_T$ to $\mnsy$. 
            \ForEach{$\mnode$ at $\mxt$ except $\mnode_T$}
                \If {$\mnode$ is $\mnsy$ and costlier than $\mnode_T$}
                    \State Convert $\mnode$ and all $\mnsy$ nodes in target direction of $\mnode$ (descendants) to $\mney$.
                    \State Discard all searches from all descendants.
                    \State Discard any $\mney$ descendant node if it has different side from $\mnode$.
                    \State Queue all pairs of descendant nodes $(\mnode_{p}, \mnode_{q})$ where $\mnode_{p}$ is $\mney$ and $\mnode_{q}$ is not $\mney$.
                \ElsIf{$\mnode$ is $\mnsu$} \Comment{Every node in source direction of $\mnode$ points to one source node.}
                    \State Convert all $\mnsu$ nodes in source direction of $\mnode$ (ancestors) to $\mntu$.
                    \State Remove any searches from all descendants of $\mnode$.
                    \State Queue the ancestor pair $(\mnode_{S,p}, \mnode_{S,q})$ where $\mnode_{S,p}$ is $\mnsy$ and $\mnode_{S,q}$ is not $\mnsy$.
                \EndIf
            \EndFor
        \EndIf
        \State \textsc{MergeRay}($\neg\,\mside_T$, $\mnode_S$, $\mray$)
        \State \textsc{MergeRay}($\mside_T$, $\mnode_T$, $\mray$)
    \EndIf

    \If {$n$ is $\mntm$}
        \If {cast points into obs. at $\mxt$} \Comment{$\mnode_T$ is no longer a valid turning point}
            \State \Call{Tracer} {$\mside_T$, $\mxt$, $\{\mnode_S\}$, $\mnodes_{TT}$}
        \Else
            \State $\mx \gets $ subsequent corner on $\mside_T$ edge of $\mxt$
            \State \Call{Tracer} {$\mside_T$, $\mx$, $\{\mnode_T\}$, $\mnodes_{TT}$}
        \EndIf
    \ElsIf {$n$ is $\mntu$}
        \State Queue $(\mnode_T, \mnode_{TT})$
    \EndIf
\EndFunction
\end{algorithmic}
\caption{CasterReached: handles casts that have line-of-sight.}
\label{suppalgr2:caster-reacned}
\end{algorithm}

\begin{algorithm}
\begin{algorithmic}[1]
\Function{CasterCollided}{$\mray$, $\mnode_S=(\mntype_S, \mside_S, \mxs, \cdots )$, $\mnode_T=(\mntype_T, \mside_T, \mxt, \cdots)$}

    \If {$\mnode_S$ is $\mnsy$ or $\mnsu$}
        \State $\mnode_{S,\mmnr} \gets \mnode_S$  
        \State \textsc{MergeRay}($\mside_S$, $\mnode_{S,\mmnr}$, $\mray$)
        \State \Call{Tracer}{$\neg\,\mside_S$, $\mxcol$, $\{ \mnode_{S,\mmnr} \}$, $\{\mnode_T\}$} \Comment{$\neg\,\mside_S$ trace}
        
        \If {$\mnode_T$ is goal \An $\mnode_S$ is not start} 
            \State $\mnode_{S,\mthird} \gets \mnode_S$ \Comment{Third-trace}
            \State \textsc{MergeRay}($\mside_S$, $\mnode_{S,\mthird}$, $\mray$)
            \State Place ad-hoc point $\mnode_{ad,a}$ with side $\mside_S$ at $\mx_p$
            \State $\mx \gets $ next corner along $\mside_S$ edge of $\mx_p$
            \State \Call{Tracer}{$\mside_S$, $\mx$, $\{ \mnode_{S,\mthird} \}$, $\{\mnode_{ad,a}\}$} \
        \EndIf
    \EndIf
    \State $\mnode_{S,\mmjr} \gets \mnode_S$ 
    \State \textsc{MergeRay}($\neg\,\mside_S$, $\mnode_{S,\mmjr}$, $\mray$)
    \State \Call{Tracer}{$\mside_S$, $\mxcol$, $\{ \mnode_{S,\mmjr} \}$, $\{\mnode_T\}$} \Comment{$\mside_S$ trace}
\EndFunction
\end{algorithmic}
\caption{CasterCollided: handles a collided cast.}
\label{suppalgr2:caster-collided}
\end{algorithm}

\begin{algorithm}
\begin{algorithmic}[1]
\Function{Tracer}{$\msidetrace$, $\mx$, $\mnodes_S$, $\mnodes_T$}
    \State $\mtrace \gets (\msidetrace, \mx, \mnodes_S, \mnodes_T, \cdots)$
    \While{$\mx$ not out-of-map}
        \State \Call{Process}{$\mtrace$, $\mnodes_S$}, return if $\mnodes_S = \varnothing$
        \State \Call{Process}{$\mtrace$, $\mnodes_T$}, return if $\mnodes_T = \varnothing$
        \State \Call{PlaceNode}{$\mtrace$}, return if $\mnodes_T = \varnothing$
        \State Go to next corner and update $\mtrace$.
    \EndWhile
\EndFunction
\Function{Process}{$\mtrace$, $\mnodes$}
    \ForEach{$\mnode_\mtdir \in \mnodes$}
        \If {$\mx_r = \mx$} \Comment {Trace refound node}
            \State Remove $\mnode_\mtdir$ from $\mnodes$ 
            \State \Continue
        \ElsIf {$\mnodes = \mnodes_S$}
            \State Do angular-sector rule, generating recursive traces if required. Replace $\mnode_\mtdir$ with its source if pruned by sector rule, else remove $\mnode_\mtdir$ from $\mnodes$ and return.
            \State Do occupied-sector rule, return if recursive trace called.
        \ElsIf {$\mnodes = \mnodes_T$}
            \State Place ad-hoc points $\mnode_{\mathrm{ad}, b}$ or $\mnode_{\mathrm{ad}, c}$ if necessary.
        \EndIf
        \If {$\mnode_\mtdir$ is prunable}
            \State Remove $\mnode_\mtdir$ from $\mnodes$.
            \State Push all nodes $\mnode_{\mtdir\mtdir}$ of $\mnode_\mtdir$ to back of $\mnodes$.
            \Comment{$\mnode_{\mtdir\mtdir}$ is the source node of $\mnode_\mtdir$ if $\mtdir=S$, or target node if $\mtdir=T$.}
        \EndIf
    \EndFor
\EndFunction
\end{algorithmic}
\caption{Tracer: handles a trace.}
\label{suppalgr2:tracer}
\end{algorithm}

\begin{algorithm}
\begin{algorithmic}[1]
\Function{PlaceNode}{$\mtrace$}
    \If {new turning pt. $\mnode_{S}'$ placed at $\mx$} \Comment{a new turning point replaces the original source point in $\mnodes_S$}
        \If {$\mnode_S$ is $\mney$ or $\mneu$}
            \State Set $\mnode_{S}'$ to $\mneu$.
            \If {$\mnode_S$ is castable to at least one $\mnode_T \in \mnodes_T$}
                \State Search from $\mnode_S$ in source direction and queue source pair $(\mnode_{S,p}, \mnode_{S,q})$ where $\mnode_{S,p}$ is $\mney$ and $\mnode_{S,q}$ is $\mneu$.
                \State $\mnodes_T \gets \varnothing$ 
                \State \Return
            \EndIf
        \Else \Comment{$\mnode_S$ is $\mnsu$ or $\mnsy$}
            \State Set $\mnode_{S}'$ to $\mnsu$.
            \If {multiple nodes exist at $\mx$} \Comment{Overlap rule}
                \ForEach{$\mnode$ at $\mx$}
                    \If {$\mnode$ is $\mnsu$}
                    \State Convert all $\mnsu$ ancestors of $\mnode$ to $\mntu$.
                    \State Remove any searches from descendants of $\mnode$.
                    \State Queue the ancestor pair $(\mnode_{S,p}, \mnode_{S,q})$ where $\mnode_{S,p}$ is $\mnsy$ and $\mnode_{S,q}$ is not $\mnsy$.
                    \EndIf
                \EndFor
                \State $\mnodes_T \gets \varnothing$
                \State \Return
            \EndIf
            
            \ForEach{progressed and castable $\mnode_T \in \mnodes_T$}
                \State Queue $(\mnode_{S}', \mnode_T )$.
                \State Remove $\mnode_T$ from $\mnodes_T$.
            \EndFor 
        \EndIf
    \Else
        \State Try to place phantom point node at $\mx$. \Comment{A new phantom point target node is created in $\mnodes_T$ for all target nodes where the angular progression reverses at $\mx$. The target nodes are removed from $\mnodes_T$ and become the new target nodes of the phantom point.}
    \EndIf
    \If {$>m$ nodes placed \An trace prog. w.r.t. $\mnode_S$ and all $\mnode_T \in \mnodes_T$}
        \State Place $\mntm$ target node $\mnode_T'$ at $\mx$
        \State Queue $(\mnode_S, \mnode_T')$
        \State $\mnodes_T \gets \varnothing$ \Comment{Queue interrupted trace}
    \EndIf
\EndFunction
\end{algorithmic}
\caption{PlaceNode: places turning points and phantom points, and queues a trace.}
\label{suppalgr2:placenode}
\end{algorithm}

\newcommand{\mleaf}{\mathrm{leaf}}
\renewcommand{\mlink}{l}
\renewcommand{\mlinks}{\mathbb{L}}
\newcommand{\mleu}{\textc{Eu}}
\newcommand{\mley}{\textc{Ey}}
\newcommand{\mloc}{\textc{Oc}}
\newcommand{\mltm}{\textc{Tm}}
\renewcommand{\mltype}{y_\mlink}
\newcommand{\mlun}{\textc{Un}}
\newcommand{\mlvu}{\textc{Vu}}
\newcommand{\mlvy}{\textc{Vy}}
\newcommand{\mpoint}{p}
\renewcommand{\mqcast}{\textc{Cast}}
\newcommand{\mqnode}{q}
\renewcommand{\mqtrace}{\textc{Trace}}
\renewcommand{\mqtype}{y_\mqnode}
\renewcommand{\mray}{r}
\newcommand{\mroot}{\mathrm{root}}
\newcommand{\msbest}{\mbest_S}
\newcommand{\mtbest}{\mbest_T}
\renewcommand{\mtdir}{\kappa}
\renewcommand{\mtrace}{\tau}
\renewcommand{\mvprog}{\mv_\mathrm{prg}}
\renewcommand{\mvprogs}{\mv_{\mathrm{prg},S}}
\renewcommand{\mvprogt}{\mv_{\mathrm{prg},T}}
\newcommand{\mvdiff}{\mv_\mathrm{dif}}
\newcommand{\mvdiffs}{\mv_{\mathrm{dif},S}}
\newcommand{\mvdifft}{\mv_{\mathrm{dif},T}}
\renewcommand{\mtrue}{\textc{True}}
\renewcommand{\mfalse}{\textc{False}}
\renewcommand{\mfront}{\textc{Front}}
\renewcommand{\mback}{\textc{Back}}
\newcommand{\mcalc}{\textc{Calc}}
\newcommand{\msort}{\textc{Sort}}
\newcommand{\mpointtrace}{\mpoint_\mtrace}
\newcommand{\mcast}{\mathrm{cast}}
\newcommand{\mlinkcast}{\mlink_\mcast}
\renewcommand{\mnext}{\mathrm{nxt}}
\renewcommand{\mprev}{\mathrm{prv}}
\renewcommand{\mmin}{\mathrm{min}}

\chapter{Implementation for \rtwop{}}
\label{chap:suppr2p}
\renewcommand{\thealgorithm}{\thesubsection}
\renewcommand{\algorithmiccomment}[1]{{\hfill $\triangleright$\ \underline{#1}}}

This supplementary material attempts to describe \rtwop{} in more detail, with a focus on managing the data, particularly pointers. A more brief version of the pseudocode can be found in the underlined comments annotating the pseudocode.

The bracket operator $[\cdot]$ will be used extensively in the pseudocode. $a[b]$ means accessing the property $b$ of the object $a$.

A visualization of \rtwop{} is available at \cite{bib:my2dvis}, and the code, as of writing, will be available at \cite{bib:my2dcode}. This material will be superseded by the version appended with the published paper of \rtwop{}.

\section{Enums}
This section describes enums that are used by \rtwop{}. More details on the underlying concepts can be found in Appendix \ref{chap:suppterm}.

\subsection{Side ($\mside$)}
A \textit{side} is represented by $\mside \in \{L,R\}$. $L=-1$ represents the left side and $R=1$ represents the right side. 
While tracing a contour, the obstacle will be on the right side of an $L$-trace and on the left side of an $R$-trace.
An $L$-trace will place $L$-sided points, and an $R$-trace will place $R$-sided points.

\subsection{Tree-Direction ($\mtdir$)}
A \textit{tree-direction} $\mtdir \in \{S, T\}$ is used to represent the direction of an object with respect to another along a path. 
$S=-1$ represents the \textit{source direction} and $T=1$ represents the \textit{target direction}.
For example, if a point at $\mx_S$ leads to the start point along a path from $\mx$, the point at $\mx_S$ lies in the source direction from the point at $\mx$; if another point at $\mx_T$ leads to the goal point, the point at $\mx_T$ lies in the target direction from the point at $\mx$.

\subsection{Link Type ($\mltype$)}
The \textit{link type}, $\mltype \in \{\mlvu, \mlvy, \mleu, \mley, \mltm, \mlun, \mloc\}$,  determines the actions taken during a cast or a trace.
The link types are described in Table \ref{suppr2p:tab:ltype}.

\begin{table}[!ht]
\centering
\caption{Description of link types.}
\setlength{\tabcolsep}{3pt}
\renewcommand{\arraystretch}{1.5}
\begin{tabular}{ c p{13cm} }
\hline
\textbf{Type}  & \textbf{Description} \\
\hline
$\mlvu$ & Let $\mlink_\mroot$ be the root link of the search tree (start link or goal link), which the link belongs to. The cumulative visibility to $\mlink_\mroot$ from a $\mlvu$ link is not verified.\\
\hline
$\mlvy$ & The link has cumulative visibility to $\mlink_\mroot$. \\
\hline
$\mleu$ & A temporary $S$-tree link that is placed during a trace that has an ancestor $\mley$ link in the source direction. Will be converted to a $T$-tree $\mlvu$ link when the trace can be cast. \\
\hline 
$\mley$ & A $\mlvy$ link that forms a costlier path to its anchored point. \\
\hline
$\mltm$ & A temporary link placed during trace interrupts or recursive traces. Indicates an incomplete trace. \\
\hline
$\mloc$ & A $T$-tree link that is placed after a trace enters the occupied sector of its root point. \\
\hline
$\mlun$ & A $T$-tree link that cannot be reached. A query that reaches the anchored point of the link will be discarded. \\
\hline
\end{tabular}
\label{suppr2p:tab:ltype}
\end{table}

\rtwop{} indirectly constrains the number of links a link can point to because of the way  the link types are handled. 
All $S$-tree links will point to one source link, and $\mley$ and $\mlvy$ links will point to only one root link. $\mloc$ and $\mlun$ links will point to only one target link.

\subsection{Query Type ($\mqtype$)}
The query type, $\mqtype \in \{\mqcast, \mqtrace\}$, determines the query type of a queued link after is polled from the open-list.

\section{Data Structures}
This section describes objects and their suggested properties.

\subsection{Point ($\mpoint$)}
The \textit{Point} object $\mpoint$ is used to encapsulate a physical point or corner, and owns pointers to links. Its properties are described in Table \ref{suppr2p:tab:point}.

\begin{table}[!ht]
\centering
\caption{Suggested properties for a \textit{Point} object ($\mpoint$).}
\setlength{\tabcolsep}{3pt}
\renewcommand{\arraystretch}{1.5}
\begin{tabular}{ c c p{11cm} }
\hline
\textbf{Symbol} & \textbf{Name} & \textbf{Description} \\
\hline
$\mx$ & \verb|coord| & Coordinates of the point. \\
\hline
$\mside$ & \verb|side| & Side of the corner at the point, where $\mside \in \{L, R\}$ \\
\hline
$\mv$ & \verb|diff| & Gives the directional vector bisecting the corner at the point. \\
\hline
$n$ & \verb|convex| & Gives the convexity of the corner at the point. \\
\hline
$\tilde{\mlinks}_S$ & \verb|slinks| & An ordered array of $S$-tree links anchored at the point. \\
\hline
$\tilde{\mlinks}_T$ & \verb|tlinks| & An ordered array of $T$-tree links anchored at the point. \\
\hline
$\msbest$ & \verb|sbest| & Stores the minimum cost-to-come known so far and the corresponding directional vector that is used to reach the point. \\
\hline
$\mtbest$ & \verb|tbest| & Stores the minimum cost-to-go known so far and the corresponding directional vector that is used to reach the point. \\
\hline
\end{tabular}
\label{suppr2p:tab:point}
\end{table}

\clearpage
\subsection{Best ($\mbest$)}
The \textit{Best} object $\mbest$ stores the minimum cost-to-go or cost-to-come so far to reach a corner, and the directional vector of the link responsible for the minimum cost.
Its properties are described in Table \ref{suppr2p:tab:best}.

\begin{table}[!ht]
\centering
\caption{Suggested properties for a \textit{Best} object ($\mbest$).}
\setlength{\tabcolsep}{3pt}
\renewcommand{\arraystretch}{1.5}
\begin{tabular}{ c c p{11cm} }
\hline
\textbf{Symbol} & \textbf{Name} & \textbf{Description} \\
\hline
$c_\mathrm{min}$ & \verb|cost| & The minimum cost-to-come or cost-to-go to reach the current point described by the best object. 
As the point has only one side, the point at the other side and at the same coordinates has to be considered, if the latter point exists. 
The minimum cost refers to the minimum cost to reach both points. \\
\hline
$\mv_\mathrm{best}$ & \verb|diff| & The directional vector of the link responsible for reaching the point with the minimum cost. 
$\mv_\mathrm{best}$ does not consider links at the other point, only the links at the current point. \\
\hline
\end{tabular}
\label{suppr2p:tab:best}
\end{table}

\clearpage
\subsection{Link ($\mlink$)}
The \textit{Link} object $\mlink$ is the fundamental search unit for \rtwop{}.
Its properties are described in Table \ref{suppr2p:tab:link}.

\begin{table}[!ht]
\centering
\caption{Suggested properties for a \textit{Link} object ($\mlink$).}
\setlength{\tabcolsep}{3pt}
\renewcommand{\arraystretch}{1.5}
\begin{tabular}{ c c p{11cm} }
\hline
\textbf{Symbol} & \textbf{Name}  & \textbf{Description} \\
\hline
$\mpoint$ & \verb|point| & The point anchored by the link. \\
\hline
$\mltype$ & \verb|type| & The type of the link, where $\mltype \in \{ \mlvu, \mlvy, \mleu, \mley, \mltm, \allowbreak \mlun, \mloc\}$. \\
\hline
$\mtdir$ & \verb|tdir| & The tree which the link belongs to, where $\mtdir \in \{S, T\}$. \\
\hline
$c$ & \verb|cost| & The cost of the link. Cost-to-come if $\mtdir = S$, or cost-to-go if $\mtdir = T$. \\
\hline
$\mray_L$ & \verb|left_ray| & The left sector-ray from the link's source point, if any. \\
\hline
$\mray_R$ & \verb|right_ray| & The right sector-ray from the link's source point, if any. \\
\hline
$\mvprog$ & \verb|prog_diff| & The directional vector of the progression ray from the link's root point, if any. \\
\hline
$\mlinks_S$ & \verb|src_links| & An array containing the source links of the link. \\
\hline
$\mlinks_T$ & \verb|tgt_links| & An array containing the target links of the link. \\
\hline
$\mqnode$ & \verb|qnode| & The queue node pointing to this link, if the link is queued. \\
\hline
-- & \verb|is_prog| & A boolean indicating if the link is progressed at a traced corner during a trace. \\
\hline
\end{tabular}
\label{suppr2p:tab:link}
\end{table}

A \textit{root link} of a link is the $\mtdir$-link of the link, 
and a \textit{leaf link} of a link is a $(-\mtdir)$-link of a link. 
For example, if the link $\mlink$ is an $S$-tree link, a root link is a source link of $\mlink$, and a leaf link is a target link of $\mlink$.

A \textit{root point} of a link is the $\mtdir$-point of the link, and the \textit{leaf point} or \textit{anchored point} of a link is the $(-\mtdir)$-point of the link.

\clearpage
\subsection{Ray ($\mray$)}
The \textit{Ray} object $\mray$ encapsulates a sector-ray for angular sectors.
Its properties are described in Table \ref{suppr2p:tab:ray}.

\begin{table}[!ht]
\centering
\caption{Suggested properties for a \textit{Ray} object ($\mray$).}
\setlength{\tabcolsep}{3pt}
\renewcommand{\arraystretch}{1.5}
\begin{tabular}{ c c p{11cm} }
\hline
\textbf{Symbol} & \textbf{Name} & \textbf{Description} \\
\hline
$\mv$ & \verb|diff| & The directional vector of the ray. \\
\hline
-- & \verb|closed| & A boolean indicating if the ray cannot be crossed. \\
\hline
\end{tabular}
\label{suppr2p:tab:ray}
\end{table}

\subsection{Trace ($\mtrace$)}
The \textit{Trace} object $\mtrace$ encapsulates a trace.
Its properties are described in Table \ref{suppr2p:tab:trace}.

\begin{table}[!ht]
\centering
\caption{Suggested properties for a \textit{Trace} object ($\mtrace$).}
\setlength{\tabcolsep}{3pt}
\renewcommand{\arraystretch}{1.5}
\begin{tabular}{ c c p{11cm} }
\hline
\textbf{Symbol} & \textbf{Name} & \textbf{Description} \\
\hline
$\mpoint$ & \verb|point| & The traced point. \\
\hline
$m$ & \verb|num_crns| & The number of corners traced. \\
\hline
-- & \verb|refound_src| & A boolean indicating if the trace has exited because it has traced to the root point of the source link. \\
\hline
-- & \verb|has_overlap| & A boolean indicating if the placement rule has encountered overlapping links. \\
\hline
\end{tabular}
\label{suppr2p:tab:trace}
\end{table}

\subsection{Queue Node ($\mqnode$)}
The \textit {Queue Node} object $\mqnode$ encapsulates a queued query by storing the type of the query, total cost, and the link to expand.
Its properties are described in Table \ref{suppr2p:tab:qnode}.

\begin{table}[!ht]
\centering
\caption{Suggested properties for a \textit{Queue node} object ($\mqnode$).}
\setlength{\tabcolsep}{3pt}
\renewcommand{\arraystretch}{1.5}
\begin{tabular}{ c c p{11cm} }
\hline
\textbf{Symbol} & \textbf{Name} & \textbf{Description} \\
\hline
$\mqtype$ & \verb|type| & The query type, where $\mqtype \in \{\mqcast, \mqtrace\}$ \\
\hline
$\mlink$ & \verb|link| & The queued link. \\
\hline
$c_f$ & \verb|f_cost| & The sum of cost-to-go and cost-to-come of the queued link. \\
\hline
\end{tabular}
\label{suppr2p:tab:qnode}
\end{table}

\section{Utility Functions}
This section contains utility functions that are used extensively by \rtwop{}.


\subsection{Trace}
The function traces a contour from the coordinates at $\mx$ on the $\mside$-side, stopping at the first corner it encounters or at the map boundary. A \textit{Point} object at the stopped position is returned.

\begin{algorithm}[!ht]
\begin{algorithmic}[1]
\caption{\textproc{Trace}: Traces an obstacle's contour.}
\label{suppr2p:alg:trace}
\Function{Trace}{$\mx, \mside$}
    \State Do a $\mside$-sided trace from $\mx$ to map boundary or corner at $\mx_\mnext$.
    \State \Return $\Call{GetPoint}{(\mx_\mnext, \mside, \cdots), \mside}$
\EndFunction
\end{algorithmic}
\end{algorithm}

\subsection{Cast}
The function attempts a cast from a point at the coordinates at $\mx_S$ to a point at $\mx_T$.
If a collision occurs, a point at the first corner (or point at the map boundary) on each side of the collision point is returned.
If the cast reaches $\mx_T$, nothing is returned.

\begin{algorithm}[!ht]
\begin{algorithmic}[1]
\caption{\textproc{Cast}: Performs a line-of-sight check.}
\label{suppr2p:alg:cast}
\Function{Cast}{$\mxs, \mxt$}
    \If {cast from $\mxs$ to $\mxt$ collided at $\mxcol$}
        \State $\mpoint_L \gets \Call{Trace}{\mxcol, L}$
        \State $\mpoint_R \gets \Call{Trace}{\mxcol, R}$
        \State \Return $(\mpoint_L, \mpoint_R)$
    \ElsIf {cast from $\mxs$ can reach $\mxt$}
        \State \Return $\varnothing$
    \EndIf
\EndFunction
\end{algorithmic}
\end{algorithm}

\clearpage 
\subsection{Project}
The function attempts a projection from the point $\mx$ in the direction $\mv_\mathrm{ray}$.
The projection always collides in \rtwop{}, and a point at the first corner (or point at the map boundary) on each side of the collision point is returned.

\begin{algorithm}[!ht]
\begin{algorithmic}[1]
\caption{\textproc{Project}: Extrapolates a line-of-sight check.}
\label{suppr2p:alg:project}
\Function{Project}{$\mx, \mv_\mathrm{ray}$}
    \State $\mxcol \gets $ collision point for a line-of-sight check from $\mx$ in the direction $\mv_\mathrm{ray}$.
    \State $\mpoint_L \gets \Call{Trace}{\mxcol, L}$
    \State $\mpoint_R \gets \Call{Trace}{\mxcol, R}$
    \State \Return $(\mpoint_L, \mpoint_R)$
\EndFunction
\end{algorithmic}
\end{algorithm}

\subsection{Queue}
The function enqueues a link to the open-list with a query type and the path cost at the link.

\begin{algorithm}[!ht]
\begin{algorithmic}[1]
\caption{\textproc{Queue}: Queues a link into the open-list.}
\label{suppr2p:alg:queue}
\Function{Queue}{$\mqtype, \mlink, c_f$}
    \State $\mqnode \gets (\mqtype, \mlink, c_f)$
    \State $\mlink[\mqnode] \gets \mqnode$
    \State Sort $\mqnode$ into open-list by $c_f$.
\EndFunction
\end{algorithmic}
\end{algorithm}

\subsection{Unqueue}
The function unqueues a link from the open-list.

\begin{algorithm}[!ht]
\begin{algorithmic}[1]
\caption{\textproc{Unqueue}: Removes a link from the open list.}
\label{suppr2p:alg:unqueue}
\Function{Unqueue}{$\mlink$}
    \If {link $\mlink$ is queued s.t. $\mlink[\mqnode] \ne \varnothing$}
        \State Remove $\mlink[\mqnode]$ from open-list.
        \State $\mlink[\mqnode] \gets \varnothing$
    \EndIf
\EndFunction
\end{algorithmic}
\end{algorithm}

\clearpage
\subsection{Poll}
The function removes the cheapest link from the open-list and returns the link and the query type.

\begin{algorithm}[!ht]
\begin{algorithmic}[1]
\caption{\textproc{Poll}: Removes and returns the cheapest link from the open-list.}
\label{suppr2p:alg:poll}
\Function{Poll}{\null}
    \State $\mqnode_\mathrm{min} \gets $the queue node with smallest $c_f$ in open-list.
    \State $\Call{Unqueue}{\mqnode_\mathrm{min}[\mlink]}$ \Comment{Unqueue the cheapest link}
    \State \Return $(\mqnode_\mathrm{min}[\mqtype], \mqnode_\mathrm{min}[\mlink])$
\EndFunction
\end{algorithmic}
\end{algorithm}

\subsection{Disconnect}
The function disconnects two links $\mlink$ and $\mlink_\mtdir$.
$\mlink_\mtdir$ must be a $\mtdir$ link of $\mlink$, and $\mlink$ must be a $(-\mtdir)$ link of $\mlink_\mtdir$.

\begin{algorithm}[!ht]
\begin{algorithmic}[1]
\caption{\textproc{Disconnect}: Disconnects two links.}
\label{suppr2p:alg:disconnect}
\Function{Disconnect}{$\mtdir, \mlink, \mlink_\mtdir$}
    \State Remove pointer to $\mlink_\mtdir$ from $\mlink[\mlinks_\mtdir]$.
    \State Remove pointer to $\mlink$ from $\mlink_\mtdir[\mlinks_{-\mtdir}]$.
\EndFunction
\end{algorithmic}
\end{algorithm}

In Alg. \ref{suppr2p:alg:disconnect}, $\mlink[\mlinks_\mtdir]$ refers to the array of $\mtdir$ link pointers in $\mlink$, and $\mlink_\mtdir[\mlinks_{-\mtdir}]$ refers to the array of $(-\mtdir)$ link pointers in $\mlink_\mtdir$. The arrays are either $\mlinks_S$ or $\mlinks_T$ in Table \ref{suppr2p:tab:link}.

\subsection{Connect}
The function connects two links $\mlink$ and $\mlink_\mtdir$, such that $\mlink_\mtdir$ becomes a $\mtdir$ link of $\mlink$, and $\mlink$ becomes a $(-\mtdir)$ link of $\mlink_\mtdir$.

\begin{algorithm}[!ht]
\begin{algorithmic}[1]
\caption{\textproc{Connect}: Connects two links.}
\label{suppr2p:alg:connect}
\Function{Connect}{$\mtdir, \mlink, \mlink_\mtdir$}
    \State Add pointer to $\mlink_\mtdir$ to $\mlink[\mlinks_\mtdir]$.
    \State Add pointer to $\mlink$ to $\mlink_\mtdir[\mlinks_{-\mtdir}]$.
\EndFunction
\end{algorithmic}
\end{algorithm}

In Alg. \ref{suppr2p:alg:connect}, $\mlink[\mlinks_\mtdir]$ refers to the array of $\mtdir$ link pointers in $\mlink$, and $\mlink_\mtdir[\mlinks_{-\mtdir}]$ refers to the array of $(-\mtdir)$ link pointers in $\mlink_\mtdir$. The arrays are either $\mlinks_S$ or $\mlinks_T$ in Table \ref{suppr2p:tab:link}.

\clearpage
\subsection{Isolate}
The function first checks that the link $\mlink$ is connected to one $\mtdir$ link that is $\mlink_\mtdir$. 
If $\mlink_\mtdir = \varnothing$, $\mlink$ cannot be connected to any $\mtdir$ link.
If the condition is satisfied, $\mlink$ is returned.
If the condition is not satisfied, such that $\mlink$ is connected to other $\mtdir$ links, $\mlink$ is duplicated to a new link that fulfills the condition.

\begin{algorithm}[!ht]
\begin{algorithmic}[1]
\caption{\textproc{Isolate}: Isolates a link connection.}
\label{suppr2p:alg:isolate}
\Function{Isolate}{$\mtdir, \mlink, \mlink_\mtdir$}
    \If {$\mlink_\mtdir = \varnothing$} \Comment{$\mlink$ cannot be connected to $\mtdir$ links.}
        \If {$\mlink_\mtdir$ has no $\mtdir$ links}
            \State \Return $\mlink$
        \Else 
            \State new $\mlinks_S \gets \{\} $ \textbf{if} $\mtdir=S$ \textbf{else} $\mlink[\mlinks_S]$
            \State new $\mlinks_T \gets \{\} $ \textbf{if} $\mtdir=T$ \textbf{else} $\mlink[\mlinks_T]$
        \EndIf
    \Else \Comment{$\mlink$ must only be connected to $\mtdir$ link $\mlink_\mtdir$}
        \If {$\mlink$ is only connected to one $\mtdir$ link that is $\mlink_\mtdir$]}
            \State \Return $\mlink$
        \Else
            \State new $\mlinks_S \gets \{\mlink_\mtdir\} $ \textbf{if} $\mtdir=S$ \textbf{else} $\mlink[\mlinks_S]$
            \State new $\mlinks_T \gets \{\mlink_\mtdir\} $ \textbf{if} $\mtdir=T$ \textbf{else} $\mlink[\mlinks_T]$
        \EndIf
    \EndIf
    \State new $\mlink \gets $ create new link.
    \State $\Call{Change}{\text{new } \mlink, \mlink[\mpoint], \mlink[\mltype], \allowbreak
        \; \mlink[\mray_L], \mlink[\mray_R], \mlink[\mvprog], \allowbreak 
        \; \text{new } \mlinks_S, \text{new } \mlinks_T, \mcalc, \msort}$
    \State \Return new $\mlink$
\EndFunction
\end{algorithmic}
\end{algorithm}

\clearpage
\subsection{Change}
The function modifies the link $\mlink$ and ensures that pointers are correctly handled.

\begin{algorithm}[!ht]
\begin{algorithmic}[1]
\caption{\textproc{Change}: Modifies a link.}
\label{suppr2p:alg:change}
\Function{Change}{$\mlink, \mpoint, \mltype, \mtdir, \mray_L, \mray_R, \mvprog, \mlinks_S, \mlinks_T, cost, sort\_to$}
    \If {$\mlink$ is anchored to a point s.t. $\mlink[\mpoint] \ne \varnothing$}
        \State old $\mtdir \gets \mlink[\mtdir]$
        \State Un-anchor $\mlink$ by removing $\mlink$ from $\mlink[\mpoint][\mlinks_{(\text{old } \mtdir)}]$.
    \EndIf
    \State Assign $\mtdir, \mltype, \mpoint, \mray_L, \mray_R, \mvprog$ to $\mlink$.
    \State Anchor $\mlink$ to $\mpoint[\mlinks_\mtdir]$ with $sort\_to$.
    \State Disconnect all source links of $\mlink$ in $\mlink[\mlinks_S]$.
    \State Connect $\mlink$ to all new source links in $\mlinks_S$.
    \State Disconnect all target links of $\mlink$ in $\mlink[\mlinks_T]$.
    \State Connect $\mlink$ to all new target links in $\mlinks_T$.
    \If {$cost = \mcalc$}
        \State $\mlink[c] \gets $ length of $\mlink$ $+$ cost of cheapest $\mtdir$ link.
    \Else
        \State $\mlink[c] \gets cost$
    \EndIf
\EndFunction
\end{algorithmic}
\end{algorithm}

$sort\_to \in \{\msort, \mfront, \mback\}$ refers to the position in which the link $\mlink$ appears in the anchored point's ordered array of pointers $\tilde{\mlinks}_S$ or $\tilde{\mlinks}_T$ (see Table \ref{suppr2p:tab:point}). 
If $\mtdir = S$, $\tilde{\mlinks}_S$ is selected; if $\mtdir=T$, $\tilde{\mlinks}_T$ is selected. 
$sort\_to = \mfront$ places the link at the front of the selected array, while $sort\_to = \mback$ places the link at the back. 
The front and back positioning are \textit{required} by the algorithm to determine which link has been processed during a trace.
$sort\_to = \msort$ is optional to implement, and sorts the link based on the link type $\mltype$ for faster lookups by the overlap rule.

\clearpage
\subsection{GetPoint}
The function retrieves an existing point at the trace point's $\mpoint_\mtrace$ corner that has the same side, creating a new point if no point exists.
The returned point is permanent, unlike the moving trace point $\mpointtrace$ that is created for every trace.

\begin{algorithm}[!ht]
\begin{algorithmic}[1]
\caption{\textproc{GetPoint}: Retrieves or create a new point from a trace point.}
\label{suppr2p:alg:getpoint}
\Function{GetPoint}{$\mpoint_i, \mside$}
    \State $\mpoint \gets $ globally accessible point that is at the same corner as $\mpoint_i$ and that has side $\mside$.
    \If {$\mpoint$ does not exist}
        \State $\mpoint \gets $ new point that is at the same corner as $\mpoint_i$ and that has side $\mside$.
    \EndIf
    \State \Return $\mpoint$
\EndFunction
\end{algorithmic}
\end{algorithm}

\subsection{Erase}
The function deletes the link $\mlink$ and removes pointers to itself from other objects.

\begin{algorithm}[!ht]
\begin{algorithmic}[1]
\caption{\textproc{Erase}: Deletes a links.}
\label{suppr2p:alg:erase}
\Function{Erase}{$\mlink$}
    \State \Call{Unqueue}{\mlink}
    \If {$\mlink$ is anchored s.t. $\mlink[\mpoint] \ne \varnothing$}
        \State old $\mtdir \gets \mlink[\mtdir]$
        \State Un-anchor $\mlink$ by removing $\mlink$ from $\mlink[\mpoint][\mlinks_{(\text{old } \mtdir)}]$.
    \EndIf
    \State Disconnect all source links of $\mlink$.
    \State Disconnect all target links of $\mlink$.
\EndFunction
\end{algorithmic}
\end{algorithm}

\clearpage

\subsection{EraseTree}
The function recursively deletes a link $\mlink$ and all connected $\mtdir$-links if the links do not have any more $(-\mtdir)$ links.

\begin{algorithm}[!ht]
\begin{algorithmic}[1]
\caption{\textproc{EraseTree}: Deletes a links.}
\label{suppr2p:alg:erasetree}
\Function{EraseTree}{$\mtdir, \mlink$}
    \If {$\mlink$ has no more $-\mtdir$ links s.t. $\mlink[\mlinks_{-\mtdir}]$ is empty}
        \State $\mlinks_\mtdir \gets $ copy of $\mlink[\mlinks_{\mtdir}]$
        \State $\Call{Erase}{\mlink}$
        \For {$\mlink_\mtdir \in \mlinks_\mtdir$}
            \State $\Call{EraseTree}{\mtdir, \mlink_\mtdir}$
        \EndFor
    \EndIf
\EndFunction
\end{algorithmic}
\end{algorithm}

\subsection{MergeRay}
Replaces the $\mside$-side ray of a link $\mlink$ if the new ray $\mray$ shrinks the angular sector at the source point of the link.

\begin{algorithm}[!ht]
\begin{algorithmic}[1]
\caption{\textproc{MergeRay}: Modifies a link.}
\label{suppr2p:alg:mergeray}
\Function{MergeRay}{$\mlink, \mside, \mray$}
    \If {link has no $\mside$-side ray s.t. $\mlink[\mray_\mside] = \varnothing$ }
        \State $\mlink[\mray_\mside] \gets \mray$
    \Else
        \State $\mv_\mathrm{old} \gets \mlink[\mray_\mside][\mv]$
        \State $\mv_\mathrm{new} \gets \mray[\mv]$
        \If {$\mside$-side ray of $\mlink$ lies to $\mside$ side of new ray s.t. $\mside(\mv_\mathrm{old} \times \mv_\mathrm{new}) > 0$}
            \State $\mlink[\mray_\mside] \gets \mray$
        \EndIf
    \EndIf
\EndFunction
\end{algorithmic}
\end{algorithm}

\clearpage
\subsection{CrossedRay}
The function determines if a ray has been crossed during a trace by checking against the contour assumption. It is used by the pruning rule and angular-sector rule.

\begin{algorithm}[!ht]
\begin{algorithmic}[1]
\caption{\textproc{CrossedRay}: Determines if a ray has been crossed.}
\label{suppr2p:alg:crossedray}
\Function{CrossedRay}{$\mside, \mlink, \mv_\mathrm{ray}$}
    \State $\mvdiff \gets \mlink's$ anchor coordinates $-$ $\mlink$'s root coordinates.
    \State $\mpoint \gets \mlink$'s anchor point.
    \State $d \gets \mv_\mathrm{ray} \times \mvdiff$
    \If {$d = 0$}
        \State $\mvdiff \gets $ directional vector that bisects the corner at $\mpoint$.
        \State $d \gets \mv_\mathrm{ray} \times \mvdiff$
    \EndIf
    \State $\mtdir \gets \mlink[\mtdir]$
    \State \Return $\mtdir \mside d > 0$
\EndFunction
\end{algorithmic}
\end{algorithm}

\section{Main Function and Initial Cast}

\subsection{Run}
The function is used to run the \rtwop{} algorithm.

\begin{algorithm}[!ht]
\begin{algorithmic}[1]
\caption{\textproc{Run}: Main function for \rtwop{}}
\label{suppr2p:alg:run}
\Function{Run}{$\mxstart, \mxgoal$}
    \State $\Call{Initial}{\mxstart, \mxgoal}$

    \While (open-list is not empty)
        \State $(\mqtype, \mlink) \gets \Call{Poll}{}$
        \If {$\mqtype = \mqcast$}
            \If {$\Call{Caster}{\mlink}$}
                \State \Break
            \EndIf
        \Else
            \State $\Call{SetupTracerFromLink}{\mlink}$
        \EndIf

        \State $\Call{OverlapRule}{}$
    \EndWhile

    \State \Return path
\EndFunction
\end{algorithmic}
\end{algorithm}

\subsection{Initial}
The function is called to initialize \rtwop{} and attempt the first cast.

\begin{algorithm}[!ht]
\begin{algorithmic}[1]
\caption{\textproc{Initial}: Initializes \rtwop{} and conducts the first cast.}
\label{suppr2p:alg:initial}
\Function{Initial}{$\mxstart, \mxgoal$}

    \Comment{No path if the start point is trapped.}
    \If {$\mxstart$ is surrounded by occupied cells}
        \State path $\gets ( )$
        \State \Return
    \EndIf

    \Comment{Return path immediately if start and goal points have line-of-sight.}
    \State result $\gets \Call{Cast}{\mxstart, \mxgoal}$
    \If {result $ = \varnothing$}
        \State path $\gets (\mxgoal, \mxstart)$
        \State \Return
    \EndIf

    \Comment {If collision has occurred, prepare traces.}
    \State $(\mpoint_L, \mpoint_R) \gets $ result
    \State $\mvdiff \gets \mxgoal - \mxstart$

    \State $\mpoint_T \gets \Call{GetPoint}{(\mxgoal, L, \cdots), L}$
    \Comment{Create start and goal points (can be $L$ or $R$).}
    \State $\mpoint_S \gets \Call{GetPoint}{(\mxstart, L, \cdots), L}$

    \State $\mlink_{TT} \gets $ a new link.
    \Comment{Create goal link.}
    \State $\Call{Change}{\mlink_T, \mpoint_T, \mlvy, T, \allowbreak
        \varnothing, \varnothing, \varnothing, \allowbreak
        \{\}, \{\}, 0, \msort
    }$
    \State $\mtrace_L \gets \varnothing$
    \Comment{Prepare links for left trace.}
    \If {$\mpoint_L$ is not at map boundary}
        \State $\mpointtrace \gets $ copy of $\mpoint_L$ with corner and side information only.
        
        \State $\Call{Change}{\text{new link}, \mpointtrace, \mltm, T, \allowbreak
            \varnothing, \varnothing, -\mvdiff, \allowbreak
            \{\}, \{\mlink_TT\}, \infty, \mback
        }$

        \State $\mlink_S \gets $ a new link.
        \State $\Call{Change}{\mlink_S, \mpointtrace, \mltm, S, \allowbreak
            (-\mvdiff, \mfalse), (\mvdiff, \mtrue), \mvdiff, \allowbreak
            \{\}, \{\}, \infty, \mback
        }$

        \State $\mlink_{SS} \gets $ a new link
        \State $\Call{Change}{\mlink_{SS}, \mpoint_S, \mlvy, S, \allowbreak
            (\mvdiff, \mtrue), (-\mvdiff, \mtrue), \varnothing, \allowbreak
            \{\}, \{\mlink_S\}, 0, \msort
        }$

        \State $\Call{Change}{\text{new link}, \mpoint_S, \mlvy, S, \allowbreak
            \varnothing, \varnothing, \varnothing, \allowbreak,
            \{\}, \{\mlink_{SS}\}, 0, \msort
        }$

        \State $\mtrace_L \gets (\mpointtrace, \cdots)$
    \EndIf
    \State $\mtrace_R \gets \varnothing$
    \Comment{Prepare links for right trace.}
    \If {$\mpoint_R$ is not at map boundary}
        \State $\mpointtrace \gets $ copy of $\mpoint_R$ with corner and side information only.
        
        \State $\Call{Change}{\text{new link}, \mpointtrace, \mltm, T, \allowbreak
            \varnothing, \varnothing, -\mvdiff, \allowbreak
            \{\}, \{\mlink_TT\}, \infty, \mback
        }$

        \State $\mlink_S \gets $ a new link.
        \State $\Call{Change}{\mlink_S, \mpointtrace, \mltm, S, \allowbreak
            (\mvdiff, \mtrue), (-\mvdiff, \mfalse), \mvdiff, \allowbreak
            \{\}, \{\}, \infty, \mback
        }$

        \State $\mlink_{SS} \gets $ a new link
        \State $\Call{Change}{\mlink_{SS}, \mpoint_S, \mlvy, S, \allowbreak
            (-\mvdiff, \mtrue), (\mvdiff, \mtrue), \varnothing, \allowbreak
            \{\}, \{\mlink_S\}, 0, \msort
        }$

        \State $\Call{Change}{\text{new link}, \mpoint_S, \mlvy, S, \allowbreak
            \varnothing, \varnothing, \varnothing, \allowbreak,
            \{\}, \{\mlink_{SS}\}, 0, \msort
        }$

        \State $\mtrace_R \gets (\mpointtrace, \cdots)$
    \EndIf

    \If {$\mtrace_L \ne \varnothing$}
    \Comment{Begin left trace if left point is not at map boundary.}
        \State $\Call{Tracer}{\mtrace_L}$
    \EndIf
    
    \If {$\mtrace_R \ne \varnothing$}
    \Comment{Begin right trace if right point is not at map boundary.}
        \State $\Call{Tracer}{\mtrace_R}$
    \EndIf
\EndFunction
\end{algorithmic}
\end{algorithm}

\clearpage
\section{Functions for Casting}
This section describes functions for cast queries.

\subsection{Caster}
The caster function handles a cast query.

\begin{algorithm}[!ht]
\begin{algorithmic}[1]
\caption{\textproc{Caster}: Handles cast queries.}
\label{suppr2p:alg:caster}
\Function{Caster}{$\mlinkcast$}
    \State $\mpoint_S \gets $ source point of $\mlinkcast$.
    \State $\mpoint_T \gets $ target point of $\mlinkcast$.
    \State result $\gets \Call{Cast}{\mpoint_S[\mx], \mpoint_T[\mx]}$
    \If {result $ = \varnothing$}
        \State $\Call{CasterReached}{\mlinkcast}$
    \Else
        \State $(\mpoint_L, \mpoint_R) \gets $ result
        \State $\Call{CasterCollided}{\mlinkcast, \mpoint_L, \mpoint_R}$
    \EndIf
\EndFunction
\end{algorithmic}
\end{algorithm}

\subsection{CasterReached}
The function handles the case when a cast is successful.

\begin{algorithm}[!ht]
\begin{algorithmic}[1]
\caption{\textproc{CasterReached}: Handles a successful cast.}
\label{suppr2p:alg:casterreached}
\Function{CasterReached}{$\mlinkcast$}
    \State $\mlink_S \gets $ any source link of $\mlinkcast$, from $\mlinkcast[\mlinks_S]$.
    \State $\mlink_T \gets $ any target link in $\mlinkcast$, from $\mlinkcast[\mlinks_T]$.

    \Comment{Found path if source link and target link are $\mlvy$ type.}
    \If {$\mlink_S$ is $\mlvy$ type \An $\mlink_T$ is $\mlvy$ type}
        \State $\Call{CasterReachedFoundPath}{\mlinkcast}$

    \Comment{Discard if target link is $\mlun$ type.}
    \ElsIf {$\mlink_T$ is $\mlun$ type}
        \State $\mlinks_T \gets $ copy of $\mlinkcast[\mlinks_T]$.
        \For {$\mlink_T \in \mlinks_T$}
            \State $\Call{Disconnect}{T, \mlinkcast, \mlink_T}$
            \State $\Call{EraseTree}{T, \mlink_T}$
        \EndFor
        \State $\Call{EraseTree}{S, \mlinkcast}$

    \Comment{Try to continue interrupted trace if target link is $\mltm$ type.}
    \ElsIf {$\mlink_T$ is $\mltm$ type}
        \State $\Call{CasterReachedTm}{\mlinkcast}$

    \Comment{$\mlinkcast$ is connected to a link with cumulative visibility.}
    \ElsIf {$\mlink_S$ is $\mlvy$ or $\mley$ type \Or $\mlink_S$ is $\mlvy$ or $\mley$ type}
        \State $\Call{CasterReachedWithCmlVis}{\mlinkcast}$

    \Comment{$\mlinkcast$ is not connected to a link with cumulative visibility.}
    \Else
        \State $\Call{CasterReachedWithoutCmlVis}{\mlinkcast}$
    \EndIf
\EndFunction
\end{algorithmic}
\end{algorithm}

\subsection{CasterReachedFoundPath}
The function is called to generate the optimal path when a cast is successful for $\mlinkcast$, and its source link and target link have cumulative visibility.

\begin{algorithm}[!ht]
\begin{algorithmic}[1]
\caption{\textproc{CasterReachedFoundPath}: Generates the optimal path.}
\label{suppr2p:alg:casterreachedfoundpath}
\Function{CasterReachedFoundPath}{$\mlinkcast$}

    \Comment{Iterate through $\mlvy$ source links.}
    \State $\mlink \gets \mlinkcast$'s source link.
    \State path $\gets \{\mlink\text{'s anchor coordinates}\}$
    \While {$\mlink$ is not anchored at start point}
        \State $\mlink \gets \mlink$'s source link.
        \State Insert $\mlink$'s anchor coordinates to back of path.
    \EndWhile

    \Comment{Iterate through $\mlvy$ target links.}
    \State $\mlink \gets \mlinkcast$'s target link.
    \State Insert $\mlink$'s anchor coordinates to front of path.
    \While {$\mlink$ is not anchored at goal point}
        \State $\mlink \gets \mlink$'s target link.
        \State Insert $\mlink$'s anchor coordinates to front of path.
    \EndWhile
\EndFunction
\end{algorithmic}
\end{algorithm}

If the function is called, the cast link $\mlinkcast$ will have exactly \textit{one} source link and exactly \textit{one} target link. 
The variable \textit{path} is globally accessible and can be read by Alg. \ref{suppr2p:alg:run}.

\subsection{CasterReachedTm}
The function is called when a cast reaches an interrupted trace at the target point of $\mlinkcast$.
If a turning point can be placed at the target point, the function attempts to queue casts for the $\mlinkcast$'s target links.
If no turning point can be placed, or if there are target links that cannot be cast, the trace from the target point continues.

\begin{algorithm}[!ht]
\begin{algorithmic}[1]
\caption{\textproc{CasterReachedTm}: Handles a cast that reached an interrupted trace.}
\label{suppr2p:alg:casterreachedtm}
\Function{CasterReachedTm}{$\mlinkcast$}
    \State $\mpoint_T \gets $ target point of $\mlinkcast$.
    \State $\mside \gets $ side of $\mpointtrace$, which is $\mpointtrace[\mside]$.
    \State $\mv_\mnext \gets $ directional vector pointing away from $\mpoint_T$ and along $\mside$-side edge of $\mpoint_T$.
    \State $\mvdiff \gets $ $\mlinkcast$'s target coordinates $-$ $\mlinkcast$'s source coordinates.

    \Comment{If a turning point can be placed at $\mpoint_T$, identify target links that can cast.}
    \If {$\mpoint_T$ is convex \An $\mside (\mvdiff, \mv_\mnext) > 0$}
    
        \State $\mlinks_T \gets \{\}$
        \For ($\mlink_T \in \{\mlinkcast[\mlinks_T]\}$)
            \State $\mvdifft \gets \mlink_T$'s anchor coordinates $-$ $\mlink_T$'s root coordinates. 
            \If {$\mside (\mv_\mnext \times \mvdifft) \ge 0$ }
                \State Push $\mlink_T$ into $\mlinks_T$.
            \EndIf
        \EndFor

        \Comment{If target links can be cast, isolate $\mlinkcast$ and handle according to cml. vis.}
        \If {$\mlinks_T$ is not empty}
            \State $\text{new } \mlinkcast \gets $ a new link.
            \State $\Call{Change}{\text{new } \mlinkcast, \mpoint_T, \mlvu, T, \allowbreak
                \mlinkcast[\mray_L], \mlinkcast[\mray_R], \varnothing, 
                \mlinkcast[\mlinks_S], \mlinks_T, \infty, \msort
            }$

            \For {$\mlink_T \in \mlinks_T$}
                \State $\Call{Disconnect}{T, \mlinkcast, \mlink_T}$
            \EndFor

            \If {source link of $\mlinkcast$ is $\mlvy$ or $\mley$ type}
                \State $\Call{CasterReachedWithCmlVis}{\text{new }\mlinkcast}$
            \Else
                \State $\Call{CasterReachedWithoutCmlVis}{\text{new }\mlinkcast}$
            \EndIf
        \EndIf

        \Comment {Discard the query if all target links can be cast.}
        \If {$\mlinkcast$ has no more target links s.t. $\mlinkcast[\mlinks_T]$ is empty}
            \State $\Call{EraseTree}{S, \mlinkcast}$
            \State \Return
        \EndIf
    \EndIf

    \Comment{Continue trace if no point can be placed, or if there are non-castable target links.}
    \State $\Call{MergeRay}{-\mside, \mlinkcast, (\mvdiff, \mtrue)}$
    \State $\Call{SetupTraceFromLink}{\mlinkcast}$
    
\EndFunction
\end{algorithmic}
\end{algorithm}

\subsection{CasterReachedWithCmlVis}
The function is called when a cast is successful and the source link or target link of the cast link has cumulative visibility.
Links will be checked by the overlap rule, and a cast is prepared for each adjacent link that has no verified cumulative visibility.

\begin{algorithm}[!ht]
\begin{algorithmic}[1]
\caption{\textproc{CasterReachedWithCmlVis}: Handles a successful cast when a source or target link has cumulative visibility.}
\label{suppr2p:alg:casterreachedwithcmlvis}
\Function{CasterReachedWithCmlVis}{$\mlinkcast$}
    \Comment{Retrieve information about the cast.}
    \State $\mlink_S \gets $ source link of $\mlinkcast$, in $\mlinkcast[\mlinks_S]$.
    \State $\mlink_T \gets $ any target link of $\mlinkcast$, in $\mlinkcast[\mlinks_T]$.
    \State $\mtdir \gets T $ \textbf{if} $\mlink_S$ is $\mley$ or $\mlvy$ \textbf{else} $S$
    \State $\mlink \gets $ $\mlink_S$ \textbf{if} $\mtdir = S$ \textbf{else} $\mlink_T$
    \State $\mlink' \gets $ $\mlink_T$ \textbf{if} $\mtdir = S$ \textbf{else} $\mlink_S$
    \State $\mpoint \gets $ anchor point of $\mlink$.
    \State $\mpoint' \gets $ anchor point of $\mlink'$.
    \State $\mside \gets \mpoint[\mside]$
    \State $\mside' \gets \mpoint'[\mside]$
    \State $\mvdiff \gets \mpoint$'s coordinates $-$ $\mpoint'$'s coordinates.

    \Comment{Retrieve best information at the anchor point (both sides) of next link to cast.}
    \State $\mbest \gets \mpoint[\mbest_T]$ \textbf{if} $\mtdir = S$ \textbf{else} $\mpoint[\mbest_S]$
    \State $\mpoint_o \gets \Call{GetPoint}{\mpoint, -\mpoint[\mside]}$.
    \State $\mbest_o \gets \mpoint_o[\mbest_T]$ \textbf{if} $\mtdir = S$ \textbf{else} $\mpoint_{o}[\mbest_S]$

    \Comment{Erase $\mlinkcast$ and connected links if condition O2 of overlap rule is satisfied.}
    \If {$\mlink'$ is $\mley$ type \An $\mside \ne \mside'$}
        \State $\Call{Disconnect}{S, \mlinkcast, \mlink_S}$
        \State $\Call{EraseTree}{T, \mlinkcast}$
        \State $\Call{EraseTree}{S, \mlink_S}$
        \State \Return
    \EndIf  

    \State $c \gets $ (cost of cheapest $-\mtdir$ link of $\mlinkcast$) $+$ (length of $\mlinkcast$). 

    \Comment{Update best cost and best ray if cheapest so far to reach next point.}
    \State $\Call{CasterReachedWithCmlVisUpdateBest}{\mtdir, \mside, \mvdiff, c, \mbest, \mbest_o}$

    \Comment{Prepare the cast link for subsequent casts.}
    \If {$\Call{CasterReachedWithCmlVisChangeLink}{\mtdir \mside, \mvdiff, c, \mlinkcast, \mpoint, \mbest, \mbest_o}$}

        \Comment{If not discarded, merge sector rays and queue subsequent links.}
        \State $\Call{CasterReachedWithCmlVisQueue}{\mtdir, \mside, \mvdiff, \mlinkcast}$
    \EndIf

\EndFunction
\end{algorithmic}
\end{algorithm}

The point $\mpoint$ anchors links, connected to $\mlinkcast$, which the algorithm will cast next. 
The point $\mpoint'$ anchors a link, connected to $\mlinkcast$, that has cumulative visibility. 
The point $\mpoint_o$ is at the same corner as $\mpoint$, but has a different side from $\mpoint$.

\subsection{CasterReachedWithCmlVisUpdateBest}
If the cast in (Alg. \ref{suppr2p:alg:casterreachedwithcmlvis}) is the cheapest to reach $\mpoint$ with cumulative visibility, the function tries to update the best ray pointing to the point at $\mpoint$ and the best cost for doing so.

\begin{algorithm}[!ht]
\begin{algorithmic}[1]
\caption{\textproc{CasterReachedWithCmlVisUpdateBest}: Updates the best ray and best cost at the next point if cast link is cheapest so far.}
\label{suppr2p:alg:casterreachedwithcmlvisupdatebest}
\Function{CasterReachedWithCmlVisUpdateBest}{$\mtdir, \mside, \mvdiff, c, \mbest, \mbest_o$}
    \Comment{Update best cost for points on both sides.}
    \State $\mvdiff' \gets \mbest[\mvdiff]$
    \If {$c \le \mbest[c_\mmin]$}
        \State $\mbest[c_\mmin] \gets c$

        \Comment{Update best ray if links are likelier to satisfy condition O6 or O7 of the overlap rule.}
        \If {$\mvdiff' = \varnothing$ \Or $\mtdir \mside (\mvdiff' \times \mvdiff) > 0$} 
        
            \State $\mbest[\mvdiff] \gets \mvdiff$ 
        \EndIf
    \EndIf
    \If {$c < \mbest_o[c_\mmin]$}
        \State $\mbest_o[c_\mmin] \gets c$
    \EndIf
\EndFunction
\end{algorithmic}
\end{algorithm}

The next point $\mpoint$ has a complimentary point $\mpoint_o$ with a different side.
While the function will update the best cost at \textit{both} points, which are $\mbest[c_\mmin]$ and $\mbest_o[c_\mmin]$,
the function will \textit{only} update the best ray for $\mpoint$, which is $\mbest[\mvdiff]$.

\subsection{CasterReachedWithCmlVisChangeLink}
The function anchors the visible cast link $\mlinkcast$ at the next point $\mpoint$ (Alg. \ref{suppr2p:alg:casterreachedwithcmlvis}) and modifies the link based on the cost of reaching $\mpoint$.
The function will discard the query if reaching $\mpoint$ is expensive and if conditions O6 and O7 of the overlap rule are satisfied.

\begin{algorithm}[!ht]
\begin{algorithmic}[1]
\caption{\textproc{CasterReachedWithCmlVisChangeLink}: Changes the cast link based on the its cost at the next point.}
\label{suppr2p:alg:casterreachedwithcmlvischangelink}
\Function{CasterReachedWithCmlVisChangeLink}{$\mtdir \mside, \mvdiff, c, \mlinkcast, \mpoint, \mbest, \allowbreak \mbest_o$}
    \If {$c > \mbest[c_\mmin]$ \Or $c > \mbest_o[c_\mmin]$ }
        \State $\mvdiff' \gets \mbest[\mvdiff]$
        
        \Comment{Discard cast link if condition O6 or O7 of the overlap rule is satisfied.}
        \If {$\mvdiff' \ne \varnothing$ \An  $\mtdir \mside (\mvdiff \times \mvdiff') > 0$}
            \State $\Call{Disconnect}{S, \mlinkcast, \mlink_S}$
            \State $\Call{EraseTree}{T, \mlinkcast}$
            \State $\Call{EraseTree}{S, \mlink_S}$
            \State \Return $\mfalse$

        \Comment{Convert cast link to $\mley$ if expensive and not in expensive sector.}
        \Else
            \State $\Call{Change}{\mlinkcast, \mpoint, \mley, -\mtdir, \allowbreak
                \mlinkcast[\mray_L], \mlinkcast[\mray_R], \varnothing, \allowbreak
                \mlinkcast[\mlinks_S], \mlinkcast[\mlinks_T], c, \msort}$
        \EndIf

    \Comment{Convert cast link to $\mlvy$ otherwise, and call overlap rule for other links.}
    \Else
        \State $\Call{OverlapRuleConvToEy}{-\mtdir, \mpoint}$
        \State $\Call{Change}{\mlinkcast, \mpoint, \mlvy, -\mtdir, \allowbreak
            \mlinkcast[\mray_L], \mlinkcast[\mray_R], \varnothing, \allowbreak
            \mlinkcast[\mlinks_S], \mlinkcast[\mlinks_T], c, \msort}$
    \EndIf
    \State \Return $\mtrue$
\EndFunction
\end{algorithmic}
\end{algorithm}

\subsection{CasterReachedWithCmlVisQueue}
The function queues the links of the visible cast link $\mlinkcast$ that has no known cumulative visibility.
If $\mlinkcast$ has cumulative visibility to the start point, sector rays are merged into the the link and the queued links.

\begin{algorithm}[!ht]
\begin{algorithmic}[1]
\caption{\textproc{CasterReachedWithCmlVisQueue}: Merge sector rays and queue the subsequent links.}
\label{suppr2p:alg:casterreachedwithcmlvisqueue}
\Function{CasterReachedWithCmlVisQueue}{$\mtdir, \mside, \mvdiff, \mlinkcast$}
    \If {$\mtdir = T$}
        \State $\Call{MergeRay}{-\mside, \mlinkcast, (\mvdiff, \mtrue)}$

        \State $\mlinks_T \gets $ copy of $\mlinkcast[\mlinks_T]$
        \For {$\mlink_T \in \mlinks_T$}
            \State $\text{new } \mlink_T \gets \Call{Isolate}{S, \mlink_T, \mlinkcast}$
            \State $\Call{MergeRay}{\mside, \text{new } \mlink_T, (\mvdiff, \mfalse)}$
            \State $\Call{Queue}{\mqcast, \text{new } \mlink_T, \mlinkcast[c] + (\text{new } \mlink_T)[c]}$
        \EndFor
    \Else
        \State $\mlink_S \gets $ source link of $\mlinkcast$
        \State $\text{new } \mlink_S \gets \Call{Isolate}{T, \mlink_S, \mlinkcast}$
        \State $\Call{Queue}{\mqcast, \text{new } \mlink_S, \mlinkcast[c] + (\text{new } \mlink_T)[c]}$
    \EndIf
\EndFunction
\end{algorithmic}
\end{algorithm}

\subsection{CasterReachedWithoutCmlVis}
The function is called when a cast is successful for the link $\mlinkcast$, and when $\mlinkcast$ is not connected to links which have cumulative visibility. 

\begin{algorithm}[!ht]
\begin{algorithmic}[1]
\caption{\textproc{CasterReachedWithoutCmlVis}: Handles a successful cast on a link that has no cumulative visibility.}
\label{suppr2p:alg:casterreachedwithoutcmlvis}
\Function{CasterReachedWithoutCmlVis}{$\mlinkcast$}
    \State $\mpoint_T \gets \mlinkcast$'s target point.
    \State $\mvdiff \gets \mlinkcast$'s target coordinates $-$ $\mlinkcast$'s source coordinates.
    \State $\mside \gets \mpoint_T[\mside]$ (side of target point).

    \Comment{Anchor cast link to target point and merge ray.}
    \State $\Call{Change}{\mlinkcast, \mpoint_T, \mlvy, S, \allowbreak
        \mlinkcast[\mray_L], \mlinkcast[\mray_R], \varnothing, 
        \mlinkcast[\mlinks_S], \mlinkcast[\mlinks_T], \mcalc, \msort, 
    }$
    \State $\Call{MergeRay}{-\mside, \mlinkcast, (\mvdiff, \mtrue)}$

    \Comment{Isolate target links and merge ray.}
    \State $\mlinks_T \gets $ copy of $\mlinkcast[\mlinks_T]$.
    \For {$\mlink_T \in \mlinks_T$}
        \State new $\mlink_T \gets \Call{Isolate}{S, \mlink_T, \mlinkcast}$
        \State $\Call{MergeRay}{-\mside, \mlinkcast, (\mvdiff, \mfalse)}$
    \EndFor

    \Comment{If other links are anchored at the target point, mark for check by overlap rule.}
    \State $\mpoint_o \gets \Call{GetPoint}{\mpoint_T, -\mside}$
    \If {$\mpoint$ and $\mpoint_o$ anchors links other than $\mlinkcast$ and its target links}
        \State Add $\mpoint$ to overlap-buffer.

    \Comment{Otherwise, queue the target links.}
    \Else
        \For {$\mlink_T \in \mlinkcast[\mlinks_T]$}
            \State $\Call{Queue}{\mqcast, \mlink_T, \mlinkcast[c] + \mlink_T[c]}$
        \EndFor
    \EndIf
\EndFunction
\end{algorithmic}
\end{algorithm}

\subsection{CasterCollided}
The function creates traces after a cast collides. A \textit{major} trace has the same side as the cast link $\mlinkcast$'s source point. A \textit{minor} trace has the opposite side. A \textit{third} trace occurs if $\mlinkcast$'s target point is the goal point.

\begin{algorithm}[!ht]
\begin{algorithmic}[1]
\caption{\textproc{CasterCollided}: Handles collided cast queries.}
\label{suppr2p:alg:castercollided}
\Function{CasterCollided}{$\mlinkcast, \mpoint_L, \mpoint_R$}
    \State $\mside_\mmjr \gets $ side of source point of $\mlinkcast$.
    \State $\mside_\mmnr \gets -\mside_\mmjr$
    \State $\mpoint_\mmjr \gets \mpoint_L$ \textbf{if} $\mside_\mmjr = L$ \textbf{else} $\mpoint_R$
    \State $\mpoint_\mmnr \gets \mpoint_L$ \textbf{if} $\mside_\mmnr = L$ \textbf{else} $\mpoint_R$
    \State $\mlink_S \gets $ source link of $\mlinkcast$.

    \Comment{Initialize third and minor traces if source link is not $S$-tree $\mley$ link.}
    \State $\mtrace_\mthird \gets \varnothing$
    \State $\mtrace_\mmnr \gets \varnothing$
    \If {$\mlink_S$ is not $\mley$ type}
        \State $\mtrace_\mthird \gets \Call{CasterCollidedThirdTrace}{\mlinkcast}$
        \State $\mtrace_\mmnr \gets \Call{CasterCollidedMjrMnrTrace}{\mlinkcast, \mpoint_\mmnr}$
    \EndIf

    \Comment{Initialize major trace.}
    \State $\mtrace_\mmjr \gets \Call{CasterCollidedMjrMnrTrace}{\mlinkcast, \mpoint_\mmjr}$

    \Comment{Erase cast link and try to run traces.}
    \State $\Call{Erase}{\mlinkcast}$
    \If {$\mtrace_\mmnr \ne \varnothing$}
        \State $\Call{Tracer}{\mtrace_\mmnr}$
    \EndIf
    \If {$\mtrace_\mthird \ne \varnothing$}
        \State $\Call{Tracer}{\mtrace_\mthird}$
    \EndIf
    \If {$\mtrace_\mmjr \ne \varnothing$}
        \State $\Call{Tracer}{\mtrace_\mmjr}$
    \EndIf
\EndFunction
\end{algorithmic}
\end{algorithm}

The third trace can be discarded if the minor trace traces back to the source point (not shown in Alg. \ref{suppr2p:alg:castercollided}).
This can be done by examining $\mtrace_\mmnr[refound\_src]$.

\subsection{CasterCollidedThirdTrace}
If the target point of a collided cast is the goal point, the function tries to create a third trace from the source point of the collided cast.

\begin{algorithm}[!ht]
\begin{algorithmic}[1]
\caption{\textproc{CasterCollidedThirdTrace}: Generates a third trace.}
\label{suppr2p:alg:castercollidedthirdtrace}
\Function{CasterCollidedThirdTrace}{$\mlink_\mathrm{cast}$}

    \Comment{Return nothing if target point is not the goal point}
    \If {target point of $\mlink_\mathrm{cast}$ is not the goal point}
        \State \Return $\varnothing$
    \EndIf
    \State $\mpoint_\mathrm{un} \gets $ source point of $\mlink_\mathrm{cast}$.
    \State $\mside \gets \mpoint_\mathrm{un}[\mside]$
    
    \Comment{Return nothing if the corner before the source point is at the map boundary.}
    \State $\mpoint_\mathrm{oc} \gets \Call{Trace}{\mpoint_\mathrm{un}[\mx], -\mside}$
    \State $\mpoint_\mathrm{oc} \gets \Call{GetPoint}{\mpoint_\mathrm{oc}, \mside}$
    \If {$\mpoint_\mathrm{oc}$ is at map boundary}
        \State \Return $\varnothing$
    \EndIf

    \Comment{Return nothing if the corner after the source point is at the map boundary.}
    \State $\mpointtrace \gets \Call{Trace}{\mpoint_\mathrm{un}[\mx], \mside}$
    \State $\mpointtrace \gets $ copy of $\mpointtrace$ with side and corner information only.
    \If {$\mpointtrace$ is at map boundary}
        \State \Return $\varnothing$
    \EndIf

    \Comment{Create an $\mlun$ link and $\mloc$ link to guide the trace around the obstacle.}
    \State $\mlink_\mathrm{un} \gets $ a new link.
    \State $\Call{Change}{\mlink_\mathrm{un}, \mpoint_\mathrm{un}, \mlun, T, \allowbreak
        \varnothing, \varnothing, \varnothing, \allowbreak
        \{\}, \mlink_\mathrm{cast}[\mlinks_T], \mcalc, \msort
    }$

    \Comment {Create a target link for the trace.}
    \State $\mvprogt \gets \mpointtrace$'s coordinates $-$ $\mpoint_\mathrm{un}$'s coordinates.
    \State $\Call{Change}{\text{new link}, \mpointtrace, \mltm, T, \allowbreak
        \varnothing, \varnothing, \mvprogt, \allowbreak
        \{\}, \{\mlink_\mathrm{un}\}, \infty, \mback
    }$

    \Comment{Create a source link for the trace and merge the cast ray.}
    \State $\mv_\mathrm{cast} \gets \mlink_\mathrm{cast}$'s target coordinates $-$ $\mlink_\mathrm{cast}$'s source coordinates.
    \State $\mray_\mathrm{cast} \gets (\mv_\mathrm{cast}, \mtrue)$
    \State $\mvprogs \gets \mpointtrace$'s coordinates $-$ $\mpoint_\mathrm{un}$'s coordinates.
    \State $\mlink_S \gets $ a new link.
    \State $\Call{Change}{\mlink_S, \mpointtrace, \mltm, S, \allowbreak
        \mlink_\mathrm{cast}[\mray_L], \mlink_\mathrm{cast}[\mray_R], \mvprogs, \allowbreak
        \mlink_\mathrm{cast}[\mlinks_S], \{\}, \infty, \mback
    }$
    \State $\Call{MergeRay}{\mside, \mlink_S, \mray_\mathrm{cast}}$

    \Comment{Return Trace object.}
    \State $\mtrace \gets (\mpointtrace, \cdots)$
    \State \Return $\mtrace$
\EndFunction
\end{algorithmic}
\end{algorithm}

\subsection{CasterCollidedMjrMnrTrace}
The function tries to create and return a trace from the collision point.
The side of the created trace is obtained from the side of $\mpoint$.

\begin{algorithm}[!ht]
\begin{algorithmic}[1]
\caption{\textproc{CasterCollidedMjrMnrTrace}: Generates traces after a cast collides.}
\label{suppr2p:alg:castercollidedmjrmnrtrace}
\Function{CasterCollidedMjrMnrTrace}{$\mlink_\mathrm{cast}, \mpoint$}

    \Comment{Return nothing if point is at map boundary; otherwise, create Trace object.}
    \If {$\mpoint$ at map boundary}
        \State \Return $\varnothing$
    \EndIf

    \Comment{Create a closed ray based on the cast.}
    \State $\mside \gets \mpoint[\mside]$
    \State $\mpointtrace \gets $ copy of $\mpoint$ with side and corner information only.
    \State $\mv_\mathrm{cast} \gets \mlink_\mathrm{cast}$'s target coordinates $-$ $\mlink_\mathrm{cast}$'s source coordinates.
    \State $\mray_\mathrm{cast} \gets (\mv_\mathrm{cast}, \mtrue)$

    \Comment{Create new source link for trace and merge the cast ray into it.}
    \State new $\mlink_S \gets $ a new link.
    \State $\Call{Change}{\text{new } \mlink_S, \mpointtrace, \mltm, S, \allowbreak
        \mlink_\mathrm{cast}[\mray_L], \mlink_\mathrm{cast}[\mray_R], \mv_\mathrm{cast}, 
        \mlink_\mathrm{cast}[\mlinks_S], \{\}, \infty, \mback
    }$
    \State $\Call{MergeRay}{-\mside, \text{new } \mlink_S, \mray_\mathrm{cast}}$

    \Comment{Create new target link for trace.}
    \State $\Call{Change}{\text{new link}, \mpointtrace, \mltm, T, \allowbreak
        \varnothing, \varnothing, -\mv_\mathrm{cast}, 
        \{\}, \mlink_\mathrm{cast}[\mlinks_T], \infty, \mback
    }$

    \Comment{Return Trace object.}
    \State $\mtrace \gets (\mpointtrace, \cdots)$
    \State \Return $\mtrace$
\EndFunction
\end{algorithmic}
\end{algorithm}

\clearpage
\section{Functions for Tracing}
This section describes functions for trace queries.

\subsection{SetupTracerFromLink}
The function initializes a trace query from a $S$-tree $\mltm$ link $\mlink$.
The function is called when a trace query is polled from the open-list, or when a cast reaches an interrupted trace.

\begin{algorithm}[!ht]
\begin{algorithmic}[1]
\caption{\textproc{SetupTracerFromLink}: Initializes a trace from a link.}
\label{suppr2p:alg:setuptracerfromlink}
\Function{SetupTracerFromLink}{$\mlink$}
    
    \Comment{Prepare trace point $\mpoint_\mtrace$}
    \State $\mpoint_\mathrm{tm} \gets $ anchored point of $\mlink$. 
    \State $\mpoint_\mtrace \gets$ copy of $p$ with corner and side information only.
    \State $\mlinks_T \gets $ copy of $\mlink$'s target links

    \Comment {Re-anchor the target links of $\mlink$ to $\mpoint_\mtrace$.}
    \For {$\mlink_T \in \mlinks_T$} 
        \State $\mv_{\mathrm{prog},T} \gets \mlink_T$'s anchor coordinates $-$ $\mlink_T$'s root coordinates. 
        \State new $\mlink_T \gets \Call{Isolate}{S, \mlink_T, \mlink}$
        \State $\Call{Change}{\text{new } \mlink_T, \mpoint_\mtrace, \mltm, T, \allowbreak
            \varnothing, \varnothing, \mv_{\mathrm{prog},T}, \allowbreak
            \{\}, \mlink_T[\mlinks_T], \infty, \mback
        }$
    \EndFor
    
    \Comment {Re-anchor $\mlink$ to $\mpoint_\mtrace$.}
    \State $\mv_{\mathrm{prog},S} \gets \mlink$'s anchor coordinates $-$ $\mlink$'s root coordinates. 
    \State $\Call{Change}{\mlink, \mpoint_\mtrace, \mltm, S, \allowbreak,
        \mlink[\mray_L], \mlink[\mray_R], \mv_{\mathrm{prog},S}, \allowbreak,
        \mlink[\mlinks_S], \{\}, \infty, \mback
        }$

    \Comment{Begin trace query.}
    \State $\mtrace \gets (\mpoint_\mtrace, \cdots)$ 
    \State $\Call{Tracer}{\mtrace}$
\EndFunction
\end{algorithmic}
\end{algorithm}

\clearpage
\subsection{Tracer}
The main function that handles a trace query.

\begin{algorithm}[!ht]
\begin{algorithmic}[1]
\caption{\textproc{Tracer}: Handles a trace query.}
\label{suppr2p:alg:tracer}
\Function{Tracer}{$\mtrace$}

    \Comment{Mark all links anchored at trace point as progressed.}
    \For {each link $\mlink$ anchored at $\mtrace_\mpoint$}
        \State $\mlink[is\_prog] \gets \mtrue$
    \EndFor

    \Comment{Apply rules to all links for each corner traced.}
    \While {$\mtrue$}
        \If {$\Call{TracerRefoundSrc}{\mtrace}$}
            \State \Break
        \ElsIf {$\Call{TracerProcess}{\mtrace, S}$}
            \State \Break
        \ElsIf {$\Call{TracerProcess}{\mtrace, T}$}
            \State \Break
        \ElsIf {$\Call{TracerInterruptRule}{\mtrace}$}
            \State \Break
        \ElsIf {$\Call{TracerPlaceRule}{\mtrace}$}
            \State \Break
        \EndIf

        \Comment {Go to next corner, or stop if at map boundary.}
        \State $\mpoint_\mathrm{next} \gets \Call{Trace}{\mtrace[\mpoint][\mx], \mtrace[\mpoint][\mside]}$ 
        \If{$\mpoint_\mathrm{next}$ is at map boundary}
            \State \Break
        \Else
            \State $\mtrace[\mpoint] \gets $ copy of $\mpoint_\mnext$ with side and corner information only.
            \State $\mtrace[m] \gets \mtrace[m] + 1$
        \EndIf
    \EndWhile
    
    \Comment{Discard branch of links still anchored at trace point.}
    \For {every anchored link $\mlink$ of $\mtrace[\mpoint]$} 
        \State $\Call{EraseTree}{\mlink[\mtdir], \mlink}$
    \EndFor
\EndFunction
\end{algorithmic}
\end{algorithm}

\clearpage
\subsection{TracerRefoundSrc}
The function returns $\mtrue$ if the trace query has traced back to the source point.

\begin{algorithm}[!ht]
\begin{algorithmic}[1]
\caption{\textproc{TracerRefoundSrc}: Indicates if a trace has traced back to the source point.}
\label{suppr2p:alg:tracerrefoundsrc}
\Function{TracerRefoundSrc}{$\mtrace$}
    \State $\mlink_S \gets $ $S$-tree link anchored at $\mtrace[\mpoint]$.
    \State $\mtrace[refound\_src] \gets \mlink_S$'s anchored coordinates $=$ $\mlink_S$'s root coordinates.
    \State \Return $\mtrace[refound\_src]$
\EndFunction
\end{algorithmic}
\end{algorithm}

There is only one $S$-tree link (source link of the trace) anchored at the trace point $\mtrace[\mpoint]$ at all times during a trace.

\clearpage
\subsection{TracerProcess}
The function processes $\mtdir$-tree links of the trace by subjecting each link to the trace rules. $\mtrue$ is returned if the trace has no more $\mtdir$-tree links, $\mfalse$ otherwise.

\begin{algorithm}[!ht]
\begin{algorithmic}[1]
\caption{\textproc{TracerProcess}: Examines a link during a trace.}
\label{suppr2p:alg:tracerprocess}
\Function{TracerProcess}{$\mtrace, \mtdir$}
    \State $i \gets 0$
    \While {$i < $ number of $\mtdir$-tree links anchored at $\mtrace[\mpoint]$}
        \State $\mlink \gets i\textsuperscript{th}$ $\mtdir$-tree link anchored at $\mtrace[\mpoint]$.
        \If {$\Call{TracerProgRule}{\mtrace, \mlink}$}
            \State $i \gets i + 1$
        \ElsIf {$\Call{TracerAngSecRule}{\mtrace, \mlink}$}
            \State \Continue
        \ElsIf {root point of $\mlink$ is start point or goal point}
            \State $i \gets i + 1$
        \ElsIf {$\Call{TracerOcSecRule}{\mtrace, \mlink}$}
            \State $i \gets i + 1$
        \ElsIf {$\Call{TracerPruneRule}{\mtrace, \mlink}$}
            \State \Continue
        \EndIf
    \EndWhile

    \State \Return $\mtrue$ \textbf{if} no more $\mtdir$-tree links at trace point $\mtrace[\mpoint]$ \textbf{else} $\mfalse$.
\EndFunction
\end{algorithmic}
\end{algorithm}

\clearpage
\subsection{TracerProgRule}
The function implements the progression rule, and updates progression ray of the link $\mlink$ if the trace's angular deviation (progression) increases when viewed from $\mlink$'s root point.
Additionally, if the angular deviation for the source link (source progression) decreases by more than $180^\circ$, a cast from the source point is queued.

\begin{algorithm}[!ht]
\begin{algorithmic}[1]
\caption{\textproc{TracerProgRule}: Implements the progression rule.}
\label{suppr2p:alg:tracerprogrule}
\Function{TracerProgRule}{$\mtrace, \mlink$}
    \State $\mvdiff \gets \mlink$'s anchored coordinates $-$ root coordinates.
    \If {$\mvdiff$ is zero}
        \State $\mvdiff \gets $ directional vector bisecting corner at $\mtrace[\mpoint]$.
    \EndIf
    \State $(\mtdir, \mside, \mvprog') \gets (\mlink[\mtdir], \mtrace[\mside], \mlink[\mvprog])$ 
        
    \Comment{Return $\mtrue$ if there is no source or target progression.}
    \If {$\mtdir\mside(\mvdiff \times \mvprog') > 0$}
        \State $\mlink[is\_prog] \gets \mfalse$
        \State \Return $\mtrue$ 
    \Else
        
        \Comment{Queue a cast and return $\mtrue$ if source progression has decreased by $> 180^\circ$.}
        \If{$\mlink$ is $S$-tree link 
            \An $\mlink[is\_prog] = \mfalse$ 
            \An $\Call{TracerProgRuleCast}{\mtrace, \mlink}$}
                \State \Return $\mtrue$ 
        \EndIf

        \Comment{Return $\mfalse$ and update progression ray if there is source or target progression.}
        \State $\mlink[is\_prog] \gets \mtrue$
        \State $\mlink[\mvprog] \gets \mvdiff$
        \State \Return $\mfalse$
    \EndIf
\EndFunction
\end{algorithmic}
\end{algorithm}

\clearpage
\subsection{TracerProgRuleCast}
If the source progression has decreased by more than $180^\circ$, the function queues a cast query from the source point to the phantom point where the source progression was the largest. 

\begin{algorithm}[!ht]
\begin{algorithmic}[1]
\caption{\textproc{TracerProgRuleCast}: Queues a cast when the source progression decreases by more than $180^\circ$.}
\label{suppr2p:alg:tracerprogrulecast}
\Function{TracerProgRuleCast}{$\mtrace, \mlink$}
    \State $\mv_\mathrm{prev} \gets $ directional vector of trace before reaching the current corner.
    \State $\mvdiff \gets $ $\mlink$'s anchor coordinates $-$ $\mlink$'s root coordinates.
    \State $\mvprog' \gets \mlink[\mvprog]$
    
    \Comment{Queue a cast if source progression has decreased by $>180^\circ$}
    \If {$(\mvdiff \times \mv_\mathrm{prev}) (\mv_\mathrm{prev} \times \mvprog') > 0$}
        \State $\mlink_T \gets $ $T$-tree link anchored at $\mtrace[\mpoint]$. 
        \State $\mlink_S \gets $ source link of $\mlink$.
        \State $\mpoint_S \gets $ anchored point of $\mlink_S$.
        \State $\Call{Change}{\mlink_T, \mpoint_S, \mlvu, T, \allowbreak
            \varnothing, \varnothing, \varnothing, \allowbreak
            \{\mlink_S\}, \mlink_T[\mlinks_T], \mcalc, \msort
            }$
        \State $\Call{Erase}{\mlink}$
        \State $\Call{Queue}{\mqcast, \mlink_T, \mlink_S[c] + \mlink_T[c]}$
    \EndIf 
\EndFunction
\end{algorithmic}
\end{algorithm}

The cast is a \textit{necessary} step to guarantee source and target progression when all trace queries begin, but is not a \textit{sufficient} one.

As the maximum source progression can only occur at a phantom point,
$\mlink_T$ in Alg. \ref{suppr2p:alg:tracerprogrulecast} is the only $T$-tree link that is anchored at the moving trace point $\mtrace[\mpoint]$. 
The link's root point is the phantom point, and the link is connected to at least one $\mlun$ target link.

\clearpage
\subsection{TracerAngSecRule}
The function implements the angular sector rule.

\begin{algorithm}[!ht]
\begin{algorithmic}[1]
\caption{\textproc{TracerAngSecRule}: Implements the angular-sector rule.}
\label{suppr2p:alg:tracerangsecrule}
\Function{TracerAngSecRule}{$\mtrace, \mlink$}

    \Comment{Return if $\mlink$ is not $S$-tree link.}
    \If {$\mlink$ is $T$-tree link}
        \State \Return $\mfalse$
    \EndIf

    \Comment{Return if there is no sector-ray on same side as trace.}
    \State $\mpoint_\mtrace \gets \text{trace point } \mtrace[\mpoint]$
    \State $\mside \gets \text{trace side } \mpoint_\mtrace[\mside])$
    \State $\mray \gets \mside$-side ray of $\mlink$.
    \If {$\mray$ does not exist s.t. $\mray = \varnothing$}
        \State \Return $\mfalse$
    \EndIf

    \Comment{If sector-ray $\mray$ is crossed...}
    \State $\mv_\mathrm{ray} \gets \mray[\mv]$
    \If {$\Call{CrossedRay}{\mside, \mlink, \mv_\mathrm{ray}}$}
        \State $ray\_was\_closed \gets \mray[$closed$]$
        \State $\mray[closed] \gets \mtrue$

        \Comment{Generate recursive ang. sec. trace if projected ray collides at different obstacle edge.}
        \State $\Call{TraceAngSecRuleRecur}{\mtrace, \mlink, \mv_\mathrm{ray}}$

        \Comment{Prune $\mlink$ from trace if ray is not closed.}
        \If {\Not $ray\_was\_closed$}
            \State $\mlink_S \gets $ source link of $\mlink$.
            \State $\mlink_\mnew \gets \Call{Isolate}{T, \mlink_S, \mlink}$
            \State $\mvprog \gets \mpoint_\mtrace$'s coordinates - $\mlink_S$'s root coordinates.
            \State $\Call{Change}{\mlink_\mnew, \mpoint_\mtrace, \mltm, S, \allowbreak
                \mlink_\mnew[\mray_L], \mlink_\mnew[\mray_R], \mvprog,
                \mlink_\mnew[\mlinks_S], \{\}, \infty, \mback
                }$
        \EndIf

        \Comment{Erase $\mlink$ if no more target links}
        \State $\Call{EraseTree}{S, \mlink}$
        \State \Return $\mtrue$
    \EndIf
    \State \Return $\mfalse$
\EndFunction
\end{algorithmic}
\end{algorithm}

\clearpage
\subsection{TracerAngSecRuleRecur}
The function calls a recursive trace if the projection of the crossed sector-ray collides with a different obstacle edge as the trace.

\begin{algorithm}[!ht]
\begin{algorithmic}[1]
\caption{\textproc{TracerAngSecRuleRecur}: Implements the angular-sector rule.}
\label{suppr2p:alg:tracerangsecrulerecur}
\Function{TracerAngSecRuleRecur}{$\mtrace, \mlink, \mv_\mathrm{ray}$}

    \Comment{Project the ray.}
    \State $\mpoint_\mtrace \gets $ trace point $\mtrace[\mpoint]$
    \State $\mside \gets $ trace side $ \mpoint_\mtrace[\mside]$
    \State $(\mpoint_L, \mpoint_R) \gets \Call{Project}{\mlink\text{'s root coordinates}, \mv_\mathrm{ray}}$
    \State $\mpoint_\mside \gets \mpoint_L$ \textbf{if} $\mside = L$ \textbf{else} $\mpoint_R$ 
    \State $\mpoint_{-\mside} \gets \mpoint_R$ \textbf{if} $\mside = L$ \textbf{else} $\mpoint_L$ 
    
    \Comment{Generate recursive ang-sec trace if projection hits a different obstacle edge.}
    \If {trace point $\mtrace[\mpoint]$ is not at same corner as $\mpoint_\mside$}
    
        \Comment{Copy all target links of trace.}
        \State $\mlink_\mathrm{un} \gets $ a new link.
        \State $\mpoint_\mathrm{tm} \gets \Call{GetPoint}{\mpoint_\mtrace, \mside}$ 
        \For{ each $T$-tree link $\mlink_T$ anchored at $\mpoint_\mtrace$}
            \State $\mlinks_{TT} \gets $ $\mlink_T$'s target links $\mlink_T[\mlinks_T]$.
            \State $\Call{Change}{\text{new link}, \mpoint_\mathrm{tm}, \mltm, T, \allowbreak
                \varnothing, \varnothing, \varnothing, \allowbreak
                \{\mlink_\mathrm{un}\}, \mlinks_{TT}, \mcalc, \msort
                }$
        \EndFor 

        \Comment{Create unreachable target link.}
        \State $\mpoint_\mathrm{un} \gets \Call{GetPoint}{\mpoint_\mside, -\mside}$
        \State $\Call{Change}{\mlink_\mathrm{un}, \mpoint_\mathrm{un}, \mlun, T, \allowbreak
            \varnothing, \varnothing, \varnothing, 
            \{\}, \mlink_\mathrm{un}[\mlinks_T], \mcalc, \msort
        }$

        \Comment{Create target link of recur. trace.}
        \State $\mlink_T \gets $ a new link.
        \State new $\mpoint_\mtrace \gets $  copy of $\mpoint_{-\mside}$ with side and corner information only.
        \State $\mv_{\mathrm{prog},T} \gets $ new $ \mpoint_\mtrace$'s coordinates $-$ $\mpoint_\mathrm{un}$'s coordinates.
        \State $\Call{Change}{\mlink_T, \text{new } \mpoint_\mtrace, \mltm, T, \allowbreak
            \varnothing, \varnothing, \mv_{\mathrm{prog},T}, \allowbreak
            \{\}, \{\mlink_\mathrm{un}\}, \infty, \mback
            }$

        \Comment{Create source link of recur. trace by copying from $\mlink$}
        \State $\mlink_S \gets $ a new link.
        \State $\mv_{\mathrm{prog},S} \gets $ new $ \mpoint_\mtrace$'s coordinates $-$ $\mlink$'s root coordinates.
        \State $\Call{Change}{\mlink_S, \text{new } \mpoint_\mtrace, \mltm, S, \allowbreak
            \mlink[\mray_L], \mlink[\mray_R], \mv_{\mathrm{prog},S}, \allowbreak
            \mlink[\mlinks_S], \{\}, \infty, \mback
            }$

        \Comment{Begin recur. trace.}
        \State new $\mtrace \gets (\text{new } \mpoint_\mtrace, \cdots)$
        \State $\Call{Tracer}{\text{new } \mtrace}$
        
    \EndIf
\EndFunction
\end{algorithmic}
\end{algorithm}

\clearpage
\subsection{TracerOcSecRule}
The function implements the occupied sector rule.

\begin{algorithm}[!ht]
\begin{algorithmic}[1]
\caption{\textproc{TracerOcSecRule}: Implements the occupied-sector rule.}
\label{suppr2p:alg:tracerocsecrule}
\Function{TracerOcSecRule}{$\mtrace, \mlink$}
    \State $\mpoint_\mtrace \gets $ trace point $\mtrace[\mpoint]$
    \State $\mpoint_\mtdir \gets \mlink$'s root point.
    \If {side of $\mpoint_\mtrace \ne$ side of $\mpoint_\mtdir$}
        \State \Return $\mfalse$
    \EndIf
    
    \Comment{If target point anchors an $\mloc$ link...}
    \State $\mlink_\mtdir \gets $ any of $\mlink$'s root links.
    \State $\mvdiff \gets \mlink's$ anchor coordinates $-$ $\mlink$'s root coordinate.
    \If {$\mlink_\mtdir$ is $\mloc$ or $\mlun$ type}
        \State $\mside \gets $ trace side $\mpoint_\mtrace[\mside]$
        \State $\mv_{TT} \gets \mlink_\mtdir's$ anchor coordinates $-$ $\mlink_\mtdir$'s root coordinate.
        
        \Comment{Discard trace if moved $180^\circ$ around target point's oc. sec.}
        \If {$\mside (\mv_{TT} \times \mvdiff) > 0$}
            \State $\Call{EraseTree}{T, \mlink}$
            \State \Return $\mtrue$

        \Comment{Continue trace if not moved $180^\circ$ around target point's oc. sec.}
        \Else
            \State \Return $\mfalse$
        \EndIf
    \EndIf

    \Comment{Otherwise, check if trace has entered oc. sec. of root (source/target) point.}
    \State $\mtdir \gets \mlink[\mtdir]$
    \State $\mside_\mtdir \gets $ side of root point $ \mpoint_\mtdir[\mside]$
    \State $\mside_\mathrm{edge} \gets -\mtdir\mside_\mtdir$, which is side of edge at root point that is nearer to $\mlink$.
    \State $\mv_\mathrm{edge} \gets $ directional vector pointing away from $\mpoint_\mtdir$ and parallel to $\mside_\mathrm{edge}$ edge.
    \If {$\mside_\mathrm{edge} (\mv_\mathrm{edge} \times \mvdiff) > 0$}
        \State $\mpoint_\mathrm{edge} \gets \Call{Trace}{\mpoint_\mtdir[\mx], \mside_\mathrm{edge}}$
        
        \Comment{Generate recur. trace. if in oc. sec. of source point.}    
        \If {$\mtdir = S$}
            \State $\Call{TracerOcSecRuleRecur}{\mtrace, \mlink, \mpoint_\mathrm{edge}}$
        
        \Comment {Place $\mloc$ link if in oc. sec. of target point.}
        \Else
            \State $\mpoint_\mathrm{oc} \gets \Call{GetPoint}{\mpoint_\mathrm{edge}, \mside_\mtdir}$
            \State $\Call{Change}{\mlink, \mpoint_\mathrm{oc}, \mloc, T, \allowbreak
                \varnothing, \varnothing, \varnothing, \allowbreak
                \mlink[\mlinks_S], \mlink[\mlinks_T], \mcalc, \msort
            }$

            \State $\mvprog \gets \mpoint_\mtrace$'s coordinates $-$ $\mpoint_\mathrm{oc}$'s coordinates.
            \State new $\mlink \gets $ a new link.
            \State $\Call{Change}{\text{new } \mlink, \mpoint_\mtrace, \mltm, T, \allowbreak
                \varnothing, \varnothing, \mvprog, \allowbreak
                \{\}, \{\mlink\}, \infty, \mfront
                }$
        \EndIf
    \EndIf

    \State \Return $\mtrue$
    
\EndFunction
\end{algorithmic}
\end{algorithm}

\clearpage
\subsection{TracerOcSecRuleRecur}
The function calls a recursive trace from the source point of the trace.

\begin{algorithm}[!ht]
\begin{algorithmic}[1]
\caption{\textproc{TracerOcSecRuleRecur}: Generates the recursive occupied sector trace.}
\label{suppr2p:alg:tracerocsecrulerecur}
\Function{TracerOcSecRuleRecur}{$\mtrace, \mlink, \mpoint_\mathrm{edge}$}

    \Comment{Re-anchor link $\mlink$ to new trace point of oc. sec. trace.}
    \State new $\mpointtrace \gets $ copy of $\mpoint_\mathrm{edge}$ with side and corner information only.
    \State $\mpoint_\mtdir \gets $ root point of $\mlink$.
    \State $\mvprogs \gets \mpointtrace$ 's coordinates $-$ $\mpoint_\mtdir$'s coordinates.
    \State $\Call{Change}{\mlink, \text{new } \mpointtrace, \mltm, S, \allowbreak
        \mlink[\mray_L], \mlink[\mray_R], \mvprogs, \allowbreak
        \mlink[\mlinks_S], \mlink[\mlinks_T], \infty, \mback
    }$

    \Comment{Re-anchor all target links of current trace.}
    \State new $\mlink_T \gets $ a new link.
    \State $\mpoint_\mathrm{tm} \gets \Call{GetPoint}{\mtrace[\mpoint], \mtrace[\mpoint][\mside]}$
    \For {each target link $\mlink_T$ anchored at trace point $\mtrace[\mpoint]$}
        \State $\Call{Change}{\mlink_T, \mpoint_\mathrm{tm}, \mltm, T, \allowbreak
            \varnothing, \varnothing,   \varnothing, \allowbreak
            \{\text{new } \mlink_T\}, \mlink_T{\mlinks_T}, \mcalc, \msort
        }$
    \EndFor

    \Comment{Create new target link for oc. sec. trace.}
    \State $\mvprogt \gets $ new $\mpointtrace$'s coordinates $-$ $\mpoint_\mathrm{tm}$'s coordinates.
    \State $\Call{Change}{\text{new } \mlink_T, \text{new } \mpointtrace, \mltm, T, \allowbreak
        \varnothing, \varnothing, \mvprogt, \allowbreak
        \{\}, (\text{new } \mlink_T)[\mlinks_T], \infty, \mback
        }$
    
    \Comment{Begin recursive oc. sec. trace.}
    \State $\text{new } \mtrace \gets (\text{new } \mpointtrace, \cdots)$ 
    \State $\Call{Tracer}{\text{new }\mtrace}$
\EndFunction
\end{algorithmic}
\end{algorithm}

\clearpage
\subsection{TracerPruneRule}
The function implements the pruning rule. 
The function returns $\mtrue$ if the link $\mlink$ is fully pruned and erased, or $\mfalse$ otherwise.

\begin{algorithm}[!ht]
\begin{algorithmic}[1]
\caption{\textproc{TracerPruneRule}: Implements the Pruning Rule.}
\label{suppr2p:alg:tracerprunerule}
\Function{TracerPruneRule}{$\mtrace, \mlink$}
    \State $\mpointtrace \gets $ trace point $\mtrace[\mpoint]$.
    \State $\mtdir \gets \mlink[\mtdir]$ 
    \State $\mside_\mtdir \gets $ side of $\mlink$'s root point.

    \Comment{Try to prune link w.r.t. all of its root links.}
    \State $\mlinks_\mtdir \gets $ copy of $\mtdir$ links of $\mlink$. 
    \For {$\mlink_\mtdir \in \mlinks_\mtdir$}
        \State $\mv_{\mathrm{diff},\mtdir} \gets \mlink_\mtdir$'s anchor coordinates $-$ $\mlink_\mtdir$'s root coordinates. 
        \If {$\Call{CrossedRay}{\mside_\mtdir, \mlink_\mtdir, \mv_{\mathrm{diff}, \mtdir}}$}
            \State $\Call{Disconnect}{\mtdir, \mlink, \mlink_\mtdir}$
            \State $\text{new } \mlink \gets \Call{Isolate}{-\mtdir, \mlink_\mtdir, \varnothing}$

            \State $\mray_L \gets (\text{new } \mlink)[\mray_L] $ \textbf{if} $\mtdir = S$ \textbf{else} $\varnothing$
            \State $\mray_R \gets (\text{new } \mlink)[\mray_R] $ \textbf{if} $\mtdir = S$ \textbf{else} $\varnothing$
            \State $\mvprog \gets \mpointtrace$'s coordinates $-$ new $\mlink$'s root coordinates.
            \State $\Call{Change}{\text{new } \mlink, \mpointtrace, \mltm, \mtdir, \allowbreak
                \mray_L, \mray_R, \mvprog, \allowbreak
                (\text{new } \mlink)[\mlinks_S], (\text{new } \mlink)[\mlinks_T], \infty, \mback
            }$
        \EndIf
    \EndFor

    \Comment{Return $\mtrue$ if link is fully pruned; otherwise, return $\mfalse$.}
    \If {$\mlink$ has no more $\mtdir$ links}
        \State $\Call{Erase}{\mlink}$
        \State \Return $\mtrue$
    \Else
        \State \Return $\mfalse$
    \EndIf
\EndFunction
\end{algorithmic}
\end{algorithm}

\clearpage
\subsection{TracerInterruptRule}
The function interrupts the trace if $M$ corners have been traced, and if the trace has progression with respect to all links. The default value of $M$ is ten.

\begin{algorithm}[!ht]
\begin{algorithmic}[1]
\caption{\textproc{TracerInterruptRule}: Implements the interrupt rule.}
\label{suppr2p:alg:tracerinterruptrule}
\Function{TracerInterruptRule}{$\mtrace$}
    \State $\mpointtrace \gets $ trace point $\mtrace[\mpoint]$

    \Comment{Interrupt and return $\mtrue$ if $\ge M$ corners are traced and all links have progression.}
    \If {$\mtrace[m] \ge M $ \An $\mlink[is\_prog] $ for all $\mlink$ anchored at $\mpointtrace$}

        \Comment{Re-anchor source link.}
        \State $\mpoint_\mathrm{tm} \gets \Call{GetPoint}{\mpointtrace, \mpointtrace[\mside]}$
        \State $\mlink_S \gets $ source link of trace, in $\mpointtrace[\mlinks_S]$.
        \State $\Call{Change}{\mlink_S, \mpoint_\mathrm{tm}, \mltm, S, \allowbreak
            \mlink_S[\mray_L], \mlink_S[\mray_R], \varnothing, \allowbreak
            \mlink_S[\mlinks_S], \mlink_S[\mlinks_T], \mcalc, \msort
        }$

        \Comment{Re-anchor target link(s).}
        \State $\mlinks_T \gets $ copy of $ \mpointtrace[\mlinks_T]$
        \For {$\mlink_T \in \mlinks_T$}
            \State $\Call{Change}{\mlink_T, \mpoint_\mathrm{tm}, \mltm, T, \allowbreak
            \varnothing, \varnothing, \varnothing, \allowbreak
            \{\mlink_S\}, \mlink_T[\mlinks_T], \mcalc, \msort}$
        \EndFor

        \Comment{Queue a trace query if there have not been any overlapping links.}
        \If {$\mtrace[has\_overlap]$}
            \State Push $\mpoint_\mathrm{tm}$ to overlap-buffer.
        \Else
            \State $c_f \gets \mlink_S[c]$ $+$ cost of cheapest target link in $\mlink_S[\mlinks_T]$.
            \State $\Call{Queue}{\mqtrace, \mlink_S, c_f}$
        \EndIf
        \State \Return $\mtrue$
        
    \Comment{Return $\mfalse$ if trace cannot be interrupted.}
    \Else 
        \State \Return $\mfalse$
    \EndIf
\EndFunction
\end{algorithmic}
\end{algorithm}

\clearpage
\subsection{TracerPlaceRule}
The function implements the placement rule.

\begin{algorithm}[!ht]
\begin{algorithmic}[1]
\caption{\textproc{TracerPlaceRule}: Implements the placement rule.}
\label{suppr2p:alg:tracerplacerule}
\Function{TracerPlaceRule}{$\mtrace$}
    \State $\mpointtrace \gets $ trace point $\mtrace[\mpoint]$.
    \If {$\mpointtrace$ is convex}
        \State \Return $\Call{TracerPlaceRuleConvex}{\mtrace}$
    \Else
        \State $\Call{TracerPlaceRuleNonconvex}{\mtrace}$
        \State \Return $\mfalse$
    \EndIf
\EndFunction
\end{algorithmic}
\end{algorithm}

\subsection{TracerPlaceRuleNonconvex}
The function tries to place a phantom point at a non-convex corner.

\begin{algorithm}[!ht]
\begin{algorithmic}[1]
\caption{\textproc{TracerPlaceRuleNonconvex}: Tries to place a phantom point.}
\label{suppr2p:alg:tracerplacerulenonconvex}
\Function{TracerPlaceRuleNonconvex}{$\mtrace$}
    \State $\mpointtrace \gets $ trace point of trace $\mtrace[\mpoint]$.
    \State $\mside \gets $ trace side $\mpointtrace[\mside]$.
    \State $\mv_\mnext \gets $ directional vector of next trace from $\mpointtrace$.

    \Comment{Find target links for which a phantom point is placeable at the trace point.}  
    \State $\mlinks_\mathrm{un} \gets \{\}$
    \For {$\mlink_T \in \mpointtrace[\mlinks_T]$}
        \State $\mvdifft \gets \mlink_T$'s anchor coordinates - $\mlink_T$'s root coordinates.
        \If {$\mlink_T[is\_prog]$ \An $\mside (\mv_\mnext \times \mvdifft) \ge 0$}
            \State Push $\mlink_T$ into $\mlinks_\mathrm{un}$
        \EndIf
    \EndFor

    \Comment{Place a phantom point for the target links.}
    \If {$\mlinks_\mathrm{un}$ has links}
        \State $\mpoint_\mathrm{un} \gets \Call{GetPoint}{\mpointtrace, \mside}$
        \State $\text{new } \mlink_T \gets$ a new link.
        \For {$\mlink_\mathrm{un} \in \mlinks_\mathrm{un}$}
            \State $\Call{Change}{\mlink_\mathrm{un}, \mpoint_\mathrm{un}, \mlun, T, \allowbreak
                \varnothing, \varnothing, \varnothing, \allowbreak
                \{\text{new } \mlink_T\}, \mlink_\mathrm{un}[\mlinks_T], \mcalc, \msort
            }$
        \EndFor
        \State $\Call{Change}{\text{new } \mlink_T, \mpointtrace, \mltm, T, \allowbreak
            \varnothing, \varnothing, \mv_\mnext, \allowbreak
            \{\}, (\text{new } \mlink_T)[\mlinks_T], \mback
        }$
    \EndIf
\EndFunction
\end{algorithmic}
\end{algorithm}

\clearpage
\subsection{TracerPlaceRuleConvex}
The function tries to place a turning point at a convex corner.
If a turning point is placed, the function attempts to queue a cast query for each target link.

\begin{algorithm}[!ht]
\begin{algorithmic}[1]
\caption{\textproc{TracerPlaceRuleConvex}: Tries to place a turning point and cast.}
\label{suppr2p:alg:tracerplaceruleconvex}
\Function{TracerPlaceRuleConvex}{$\mtrace$}

    \Comment{Return $\mfalse$ if no source progression or cannot place a turning point.}
    \State $\mpointtrace \gets $ trace point of trace $\mtrace[\mpoint]$.
    \State $\mside \gets $ trace side $\mpointtrace[\mside]$.
    \State $\mv_\mnext \gets $ directional vector of next trace from $\mpointtrace$.
    \State $\mlink_S \gets $ source link of trace in $\mpointtrace[\mlinks_S]$.
    \State $\mvdiffs \gets \mlink_S$'s anchor coordinates $-$ $\mlink_S$'s root coordinates. 
    \If {\Not $\mlink_S[is\_prog]$ \Or $\mside (\mvdiff \times \mv_\mnext) > 0$}
        \State \Return $\mfalse$
    \EndIf

    \Comment{Re-anchor source link to turning point and retype the source link.}
    \State $\mpoint_\mathrm{turn} \gets \Call{GetPoint}{\mpointtrace, \mside}$
    \State $\mlink_{SS} \gets $ source link of $\mlink_S$.
    \State $\mltype \gets \mlvu$ \textbf{if } $\mlink_{SS}$ is $\mlvu$ or $\mlvy$ type \textbf{else } $\mleu$.
    \State $\Call{Change}{\mlink_S, \mpoint_\mathrm{turn}, \mltype, S, \allowbreak
        \mlink_S[\mray_L], \mlink_S[\mray_R], \varnothing, \allowbreak
        \mlink_S[\mlinks_S], \mlink_S[\mlinks_T], \mcalc, \msort
    }$

    \Comment{Mark for overlap rule if other links are encountered.}
    \State $\mpoint_o \gets \Call{GetPoint}{\mpointtrace, -\mside}$
    \If {$\mpoint_\mathrm{turn}$ and $\mpoint_o$ anchors links other than $\mlink_S$}
        \State $\mtrace[has\_overlap] \gets \mtrue$
    \EndIf

    \Comment{If a target link of the trace is castable...}
    \State $\mlinks_T \gets $ copy of target links of trace $\mpointtrace[\mlinks_T]$
    \For {$\mlink_T \in \mlinks_T$}
        \State $\mvdifft \gets \mlink_T$'s anchor coordinates - $\mlink_T$'s root coordinates.
        \If {$\mlink_T[is\_prog]$ \An $\mside(\mvnext \times \mvdifft) >= 0$}
            \State $\Call{Change}{\mlink_T, \mpoint_\mathrm{turn}, \mlvu, T, \allowbreak
                \varnothing, \varnothing, \varnothing, \allowbreak
                \{\mlink_S\}, \mlink_T[\mlinks_T], \mcalc, \msort
            }$

            \Comment{...push to overlap-buffer if there are overlaps and source link is $\mleu$ type, or...}
            \If {$\mtrace[has\_overlap]$ \Or $\mltype = \mleu$}
                \State Push $\mpoint_\mathrm{turn}$ into overlap-buffer.
                
            \Comment{...queue a cast otherwise.}
            \Else 
                \State $\Call{Queue}{\mqcast, \mlink_T, \mlink_S[c] + \mlink_T[c]}$
            \EndIf
        \EndIf
    \EndFor

    \Comment{Stop trace if no more target links...}
    \If {$\mpointtrace[\mlinks_T]$ is empty}
        \State \Return $\mtrue$
    
    \Comment{...or continue trace and create new source link otherwise.}
    \Else
        \State $\Call{Change}{\text{new link}, \mpointtrace, \mltm, S, \allowbreak
            \varnothing, \varnothing, \mv_\mnext,
            \{\mlink_S\}, \{\}, \infty, \mback
        }$
        \State \Return $\mfalse$
    \EndIf
\EndFunction
\end{algorithmic}
\end{algorithm}

\clearpage
\section{Functions for Overlap Rule}
This section describes functions that implement the overlap rule.

\subsection{OverlapRule}
Processes branches of overlapping links which have triggered condition O1 of the overlap rule. 
The function shrinks the $S$-tree and moves forward all affected queries, in order to verify line-of-sight and cost-to-come for the affected links.

\begin{algorithm}[!ht]
\begin{algorithmic}[1]
\caption{\textproc{OverlapRule}: Applies the overlap rules for overlapping links.}
\label{suppr2p:alg:overlaprule}
\Function{OverlapRule}{\null}
    \For {$\mpoint \in $ overlap-buffer}
        \State $\Call{OverlapRuleGotoSrcVyEyFromPoint}{\mpoint}$
        \State $\mpoint_o \gets $ other point that has same coordinates as $\mpoint$ but different side.
        \If {$\mpoint_o$ exists}
            \State $\Call{OverlapRuleGotoSrcVyEyFromPoint}{\mpoint_o}$
        \EndIf
        \State Empty the overlap-buffer.
    \EndFor
\EndFunction
\end{algorithmic}
\end{algorithm}

\clearpage
\subsection{OverlapRuleConvToEy}
Converts branches of $\mlvy$ links to expensive $\mley$ links if conditions O2, O3, O4, and O5 of the overlap rule are satisfied, and deletes links if conditions O6 and O7 of the overlap rule are satisfied.

\begin{algorithm}[!ht]
\begin{algorithmic}[1]
\caption{\textproc{OverlapRuleConvToEy}: Converts all affected branches $\mlvy$ links at a point to $\mley$ links.}
\label{suppr2p:alg:overlapruleconvtoey}
\Function{OverlapRuleConvToEy}{$\mtdir, \mpoint_i$}
    \State $\mpoint_o \gets $ point with the same coordinates as $\mpoint_i$ but different side.
    \For {$\mpoint \in \{\mpoint_i, \mpoint_o\}$}
        \If {$\mpoint$ does not exist \Or $\mpoint[\mbest_\mtdir]$ does not exist \Or $\mpoint[\mbest_\mtdir][\mv_\mathrm{best}]$ does not exist}
            \State \Continue
        \EndIf
        \State $\mv_\mathrm{best} \gets \mpoint[\mbest_\mtdir][\mv_\mathrm{best}]$

        \While  {$\mpoint$ anchors a $\mtdir$-tree $\mlvy$ link}
            \State $\mlink \gets $ $\mtdir$-tree $\mlvy$ link anchored at $\mpoint$.
            \State $\mvdiff \gets \mlink$'s anchored point coordinates - $\mlink$'s root point coordinates.
            \State $\mlink_\mtdir \gets $ root link of $\mlink$ \Comment{A $\mlvy$ link has only one root link.}
            \State $\mside \gets \mpoint[\mside]$
            \If {$\mtdir \mside (\mv_\mathrm{best} \times \mvdiff)$} \Comment{Conditions O6 and O7 of overlap rule.}
                \State $\Call{Disconnect}{\mtdir, \mlink, \mlink_\mtdir}$
                \State $\Call{EraseTree}{-\mtdir, \mlink}$
                \State $\Call{EraseTree}{\mtdir, \mlink_\mtdir}$
            \ElsIf {$\Call{OverlapRuleConvToEyForVyLink}{\mtdir, \mlink, \mlink_\mtdir}$}
                \State $\Call{EraseTree}{\mtdir, \mlink_\mtdir}$
            \EndIf
        \EndWhile
    \EndFor

\EndFunction
\end{algorithmic}
\end{algorithm}

\clearpage
\subsection{OverlapRuleConvToEyForVyLink}
An auxiliary recursive function for \textproc{OverlapRuleConvToEy} (Alg. \ref{suppr2p:alg:overlapruleconvtoey}). 
Converts a branch of $\mlvy$ links to expensive $\mley$ links if conditions O2 and O4 of the overlap rule are satisfied.

\begin{algorithm}[!ht]
\begin{algorithmic}[1]
\caption{\textproc{OverlapRuleConvToEyForVyLink}: Converts a branch of $\mlvy$ links to $\mley$ links.}
\label{suppr2p:alg:overlapruleconvtoeyforvylink}
\Function{OverlapRuleConvToEyForVyLink}{$\mtdir, \mlink, \mlink_\mtdir$}
    \State $\mside_\mathrm{root} \gets $ side of root point pf $\mlink$.
    \State $\mside_\mathrm{anchor} \gets $ side of anchored point of $\mlink$.
    \If {$\mside_\mathrm{root} \ne \mside_\mathrm{anchor}$ \An $\mlink$ is $\mlvy$ or $\mley$ type} 
        \State $\Call{Disconnect}{\mtdir, \mlink, \mlink_\mtdir}$
        \Comment{Discard branch if $\mlink$ connects two points with different sides}
        \State $\Call{EraseTree}{-\mtdir, \mlink}$
        \State \Return $\mtrue$
    \EndIf

    \If {$\mlink$ is not $\mlvy$ type}
        \If {$\mlink$ is $S$-tree $\mlvu$ link}
            \Comment {Move the query forward if $S$-tree $\mlvu$ link encountered}
            \State $\Call{OverlapRuleConvToTgtTree}{\mlink}$
            \State $\Call{Queue}{\mqcast, \mlink, \mlink[c] + \mlink_\mtdir[c]}$
        \EndIf
        \State \Return $\mfalse$
    \EndIf

    \While {$\mlink$ has $(-\mtdir) \; \mlvy$ link}
        \Comment {Recursively inspect the branch of leaf links.}
        \State $\mlink_{-\mtdir} \gets (-\mtdir) \; \mlvy$ link of $\mlink$.
        \State $\Call{OverlapRuleConvToEyForVyLink}{\mtdir, \mlink_{-\mtdir}, \mlink}$
    \EndWhile

    \If {$\mlink$ has no more $-\mtdir$ links}
        \Comment{Delete link if leaf branch is deleted.}
        \State $\Call{Disconnect}{\mtdir, \mlink, \mlink_\mtdir}$
        \State $\Call{Erase}{\mlink}$
        \State \Return $\mtrue$
    \Else 
        \Comment{Convert to $\mtdir$ $\mley$ link if leaf branch exists.}
        \State $\Call{Change}{\mlink, \mlink[\mpoint], \mley, \mtdir, \allowbreak
            \mlink[\mray_L], \mlink[\mray_R], \varnothing, \allowbreak
            \mlink[\mlinks_S], \mlink[\mlinks_T], \mlink[c], \msort
        }$
        \State \Return $\mfalse$
    \EndIf
    
\EndFunction
\end{algorithmic}
\end{algorithm}

\clearpage
\subsection{OverlapRuleGotoSrcVyEyFromPoint}
Shrinks the $S$-tree to verify line-of-sight by bringing forward queries in overlapping branches. 
Executed when condition O1 of the overlap rule is satisfied at the points at $\mpoint$'s coordinates.

\begin{algorithm}[!ht]
\begin{algorithmic}[1]
\caption{\textproc{OverlapRuleGotoSrcVyEyFromPoint}: Identifies the most recent ancestor $S$-tree $\mlvy$ or $\mley$ links for all $S$-tree links anchored at the point.}
\label{suppr2p:alg:overlaprulegotosrcvyeyfrompoint}
\Function{OverlapRuleGotoSrcVyEyFromPoint}{$\mpoint$}
    \While {$\mpoint$ has anchored $S$-tree $\mlvu$, $\mleu$, or $\mltm$ links}
        \State $\mlink \gets $ anchored $S$-tree $\mlvu$, $\mleu$, or $\mltm$ link.
        \State $\mlink_S \gets \varnothing$
        \While {$\mtrue$}
            \State $\mlink_S \gets $ source link of $\mlink$. \Comment{An $S$-tree link has only one source link.}
            \If {$\mlink_S$ is $\mlvy$ or $\mley$ type}
                \State \Break
            \EndIf
            \State $\mlink \gets \mlink_S$
        \EndWhile

        \State $\Call{OverlapRuleConvToTgtTree}{\mlink}$

        \State $\Call{Queue}{\mqcast, \mlink,  \mlink[c] + \mlink_S[c]}$
    \EndWhile
\EndFunction
\end{algorithmic}
\end{algorithm}

\subsection{OverlapRuleConvToTgtTree}
Converts a branch of $S$-tree $\mlvu$ and $\mleu$ links to $T$-tree $\mlvu$ links when conditions O1, O2, O3, O6, and O7 of the overlap rule are satisfied.

\begin{algorithm}[!ht]
\begin{algorithmic}[1]
\caption{\textproc{OverlapRuleConvToTgtTree}: Converts a branch $S$-tree links to $T$-tree links.}
\label{suppr2p:alg:overlapruleconvtotgttree}
\Function{OverlapRuleConvToTgtTree}{$\mlink$}
    \State $\Call{Unqueue}{\mlink}$
    \If {$\mlink$ is $T$-tree link}
        \State \Return
    \EndIf
    \For {each target link $\mlink_T$ of $\mlink$}
        \State $\Call{OverlapRuleConvToTgtTree}{\mlink_T}$
    \EndFor
    \State $\mpoint_S \gets $ source point of $\mlink$.
    \Comment{Convert $\mlink$ to $T$-tree $\mlvu$ link.}
    \State $\Call{Change}{\mlink, \mpoint_S, \mlvu, T, \allowbreak
        \mlink[\mray_L], \mlink[\mray_R], \varnothing, \allowbreak
        \mlink[\mlinks_S], \mlink[\mlinks_T], \mcalc, \msort
        }$
\EndFunction
\end{algorithmic}
\end{algorithm}

    \clearpage
    \chapter*{Publications}
    \section*{Conference}
    
    \begin{enumerate}[label={[\arabic*]}]
        \item  \fullcite{bib:me}
    \end{enumerate}
    
    \section*{Journal}
    \begin{enumerate}[label={[\arabic*]}, resume]
        \item  \fullcite{bib:r2}
        \item \fullcite{bib:r2p}
    \end{enumerate}
\end{document}